\newcommand{\dblsetminus}{\mathbin{{\setminus}\mspace{-9mu}{\setminus}}}
\newtheorem{theorem}{Theorem}
\newtheorem{proposition}[theorem]{Proposition}
\newtheorem{corollary}{Corollary}[theorem]
\newtheorem{lemma}[theorem]{Lemma}
\newtheorem{claim}[theorem]{Claim}
\theoremstyle{definition}
\newtheorem{definition}{Definition}
\newtheorem{example}{Example}
\newcommand{\myvdots}{\ \vdots\ }
\newenvironment{psketch}{%
  \proof}{\endproof}
\definecolor{mossgreen}{RGB}{34,139,34}
\title{Incremental Maintenance of DatalogMTL Materialisations}
\author {
    Kaiyue Zhao\textsuperscript{\rm 1}\equalcontrib,
    Dingqi Chen\textsuperscript{\rm 1}\equalcontrib,
    Shaoyu Wang\textsuperscript{\rm 1, \rm 2}\equalcontrib,
    Pan Hu\textsuperscript{\rm 1}\thanks{Corresponding author}
}
\begin{document}

\maketitle

\begin{abstract}
DatalogMTL extends the classical Datalog language with metric temporal logic (MTL), enabling expressive reasoning over temporal data. While existing reasoning approaches, such as materialisation-based and automata-based methods, offer soundness and completeness, they lack support for handling efficient dynamic updates—a crucial requirement for real-world applications that involve frequent data updates. In this work, we propose DRed$_\text{MTL}$, an incremental reasoning algorithm for DatalogMTL with bounded intervals. Our algorithm builds upon the classical Delete/Rederive (DRed) algorithm, which incrementally updates the materialisation of a Datalog program. Unlike a Datalog materialisation which is in essence a finite set of facts, a DatalogMTL materialisation has to be represented as a finite set of facts plus periodic intervals indicating how the full materialisation can be constructed through unfolding. To cope with this, our algorithm is equipped with specifically designed operators to efficiently handle such periodic representations of DatalogMTL materialisations. We have implemented this approach and tested it on several publicly available datasets. Experimental results show that DRed$_\text{MTL}$ often significantly outperforms rematerialisation, sometimes by orders of magnitude.
\end{abstract}

\begin{links}
\link{Code, datasets and instructions}{github.com/Horizon12275/DREDmtl-for-DatalogMTL}
\link{Extended version with full proof}{arxiv.org/abs/2511.12169}
\end{links}

\section{Introduction}
DatalogMTL extends the well-known rule language Datalog \cite{DBLP:journals/tkde/CeriGT89} with metric temporal logic (MTL) \cite{DBLP:journals/rts/Koymans90}. It has found applications in various domains, including ontology-based query answering \cite{DBLP:journals/jair/BrandtKRXZ18,guzel2018ontop}, stream reasoning \cite{walkega2023stream,DBLP:conf/aaai/WalegaKG19}, and temporal reasoning in the financial sector \cite{DBLP:conf/edbt/ColomboBCL23,DBLP:conf/edbt/NisslS22,mori2022neural}, among others. 

To showcase the capabilities of DatalogMTL, consider an industrial application scenario in which DatalogMTL has been applied by our research group to automatically detect anomaly of power transformers. More concretely, gas concentration data is first collected in real time using existing gas-in-oil sensors and then fed into a DatalogMTL rule engine, which fires alarms whenever appropriate. As an example, the following rule shows how oil thermal faults can be detected:
\begin{equation}\label{rule:ex}
\begin{aligned}
OTF(x) \gets & HasEthylene(x,y) 
\\ & \land HasEthane(x,z) 
\\ & \land \diamondminus_{[0,10]} AboveThirty(y) 
\\ & \land \diamondminus_{[0,10]} AboveSeventy(z)
\end{aligned}
\end{equation}
This rule states that at any time point $t$, if ethylene and ethane have both been detected in the transformer oil, and their gas concentration values surpassed 30 ppm and 70 ppm at any time point in the past ten minutes, respectively, then an oil thermal fault (OTF) is detected.


Reasoning in DatalogMTL can be implemented using top-down or bottom-up approaches, or a combination of the two. One typical top-down approach is based on B\"uchi automata: it ensures correctness but incurs high reasoning costs. Therefore, efforts have been made to devise efficient bottom-up (or materialisation-based) approach for DatalogMTL reasoning. The vadalog system \cite{DBLP:conf/icde/BellomariniNS22} implements such an approach, but it may not terminate due to recursion. The MeTeoR system \cite{DBLP:conf/aaai/WangHWG22} combines bottom-up and top-down approaches, but only resorts to the top-down approach when necessary. More recently, magic set rewriting, which simulates top-down evaluation via bottom-up reasoning, has been extended to support DatalogMTL reasoning \cite{wang2025goal}. While materialisation-based approaches are popular for DatalogMTL reasoning, to the best of our knowledge, no incremental maintenance algorithm for DatalogMTL reasoning has been developed: when the set of explict facts change, the above systems have no choice but to recompute the materialisation from scratch. 

For plain Datalog, many incremental materialisation maintenance algorithms have been developed, including Delete/Rederive (DRed) and its variants \cite{10.1145/170035.170066,10.5555/645922.673479, 10.1145/2063576.2063696, 10.1007/978-3-642-41335-3_41, HuMH18}, Backward/Forward (B/F) \cite{10.5555/2886521.2886537}, FBF~\cite{motik2019maintenance}, among others.
However, adapting an incremental materialisation maintenance algorithm for Datalog to support DatalogMTL reasoning is nontrivial. Unlike Datalog, DatalogMTL supports recursion over time, which easily leads to unbounded time intervals, making termination problematic. As a result, changes to the dataset may not only affect immediate derivations but also propagate across time through periodic patterns. Although it is known that under certain restrictions, materialisations in DatalogMTL exhibit repeating structures that can be finitely represented, how to correctly update such repeating structures is still highly challenging, especially due to the complex interplay between consequence propagation and termination condition checks.

In this paper, building upon the well-known DRed algorithm, we propose DRed$_\text{MTL}$. It replaces standard Datalog materialisation maintenance with a novel set of operations over finite representations of DatalogMTL materialisations called \textit{periodic materialisations}. Periodic materialisations compactly encode infinite sets of temporal facts by capturing their recurring periodic patterns, which allows us to reason over infinite time domains using finite representations. 


To facilitate reasoning over periodic materialisations, we devised a novel semina\"ive evaluation operator specifically designed to efficiently handle the propagation of facts in the new DatalogMTL setting. In addition, we developed a new period identification algorithm that effectively and correctly guarantees termination. Our experimental evaluation demonstrates that, compared to rematerialisation from scratch, our approach achieves significant performance improvements, especially for small updates.

\section{Preliminaries}
In this section, we first briefly recapitulate the syntax and semantics of DatalogMTL. We focus on DatalogMTL with bounded intervals, as this restriction enables materialisation-based reasoning. Then, we briefly discuss how the materialisation of a DatalogMTL program can be finitely represented: our incremental maintenance algorithm will have to incrementally update such finite representations in response to changes in the explicitly given data.
\subsubsection{Syntax of DatalogMTL}
Throughout the paper, we assume that the
timeline consists of rational numbers. 
A time interval is a set of continuous time points $\varrho$ of the form $\langle t_1, t_2\rangle$, where $t_1\in \mathbb{Q}  \cup  \{-\infty\}$, $t_2\in\mathbb{Q}\cup\{ \infty\}$, $\langle$ is $[$ or $($ and likewise $\rangle$ is $]$ or $)$.
An interval is bounded if both of its endpoints are rational numbers, i.e., neither $\infty$ nor $-\infty$. 
When it is clear from the context, we may abuse the distinction between intervals (i.e. sets of time points) and their representation $\langle t_1, t_2\rangle$.
If $\varrho=\langle t_1,t_2\rangle$, let $\varrho^-=t_1$ and $\varrho^+ = t_2$. 

A term is either a variable or a constant. 
A relational atom is an expression $R(\bm{t})$, where $R$ is a predicate and $\bm{t}$ is a tuple of terms of arity matching that of $R$. Metric atoms extend relational atoms by allowing operators from metric temporal logic (MTL), namely $\boxplus_\varrho$, $\boxminus_\varrho$,
$\diamondplus_\varrho$, 
$\diamondminus_\varrho$, 
$\mathcal{U}_\varrho$, and $\mathcal{S}_\varrho$, where $\varrho$ is an interval. Formally, metric atoms, $M$, are   generated by the  grammar
\begin{align*}
    M ::= &\bot \mid \top \mid R(\bm{t}) \mid \boxplus_{\varrho}M \mid \boxminus_{\varrho}M \mid \diamondplus_{\varrho}M \mid \diamondminus_{\varrho}M \mid {}\\ 
    & M\mathcal{U}_{\varrho}M \mid M \mathcal{S}_{\varrho} M,
\end{align*}
where $\top$ and $\bot$ are the constants representing truth and falsehood, respectively, and $\varrho$ is any arbitrary interval containing only nonnegative rationals. 
A DatalogMTL rule, $r$,  is of the form
\begin{align*}
    M' \leftarrow M_1 \land M_2\land \dots \land M_n, \quad \text{for $n \geq 1$},
\end{align*}
where each $M_i$ is a metric atom and $M'$ is a metric atom not mentioning $\bot$, $\diamondplus$, $\diamondminus$, $\mathcal{U}$, and $\mathcal{S}$. 
Metric atom $M'$ and the set ${\{M_i \mid i \in \{1, \dots , n\} \}}$ are the head and body of $r$, denoted as $head(r)$ and $body(r)$, respectively. 

A rule $r$ is safe if each variable in its head appears also in its body. A program $\Pi$ is a finite set of safe rules. A substitution ${\sigma}$ is a mapping of finitely many variables to constants. For $\alpha$ an expression (e.g., an atom, a rule, or a program thereof), ${\alpha\sigma}$ is the result of replacing each occurrence
of a variable $x$ in $\alpha$ with $\sigma(x)$, if the latter is defined.
An expression is ground if it mentions no variable. A fact is of the form $R(\bm{t})@\varrho$, where $R(\bm{t})$ is a ground relational atom, $\varrho$ is an interval, and $@$ indicates that the preceding atom holds over the time interval that follows.
A dataset $\mathcal{D}$ is a finite set of facts. 
If all the intervals a dataset (resp., a program) mentions are bounded, then the dataset (resp., the program) is bounded. Our work focuses on bounded datasets and programs.
\subsubsection{Semantics of DatalogMTL}
A DatalogMTL interpretation $\mathfrak{I}$ is a function that maps each time point $t \in \mathbb{Q}$ to a set of ground relational atoms (essentially to atoms that hold at $t$).
For a time point $t\in\mathbb{Q}$, if $R(\bm{t})$ belongs to this set, we write ${\mathfrak{I}, t \models R(\bm{t})}$. 
This extends to the ground metric atoms as presented in Table~\ref{tab:semantable}.
\begin{table}[t]
    \setlength{\tabcolsep}{3pt}
    \renewcommand{\arraystretch}{1.4}
    \centering
\begin{tabular}{|lll|}
    \hline
    $\mathfrak{I}, t  \vDash \top$& & \text{for every $t \in \mathbb{Q}$}  \\
    $\mathfrak{I}, t  \vDash \bot$ && \text{for no $t \in \mathbb{Q}$}\\
     $\mathfrak{I}, t \vDash \boxplus_\varrho M$ & \text{iff} &
    \text{$\mathfrak{I}, t_1 \vDash M$ for all
    $t_1 $ s.t. $t_1-t \in \varrho$}\\
     $\mathfrak{I}, t \vDash \boxminus_\varrho M$ & \text{iff}&  \text{$\mathfrak{I}, t_1 \vDash M$ for all $t_1$ s.t. $t-t_1 \in \varrho$}\\
     $\mathfrak{I}, t \vDash \diamondplus_\varrho M$ & \text{iff}&  \text{$\mathfrak{I}, t_1 \vDash M$ for some $t_1 $ s.t. $t_1-t \in \varrho$}\\
     $\mathfrak{I}, t \vDash \diamondminus_\varrho M$ & \text{iff}&  \text{$\mathfrak{I}, t_1 \vDash M$ for some $t_1 $ s.t. $t-t_1 \in \varrho$}\\
     $\mathfrak{I}, t \vDash M_2 \mathcal{U}_\varrho M_1$ & \text{iff} &  \text{$\mathfrak{I}, t_1 \vDash M_1$ for some  $t_1$ s.t. $t_1 - t \in \varrho$,} \\
    &&     \text{and $\mathfrak{I}, t_2 \vDash M_2$ for all  $t_2 \in (t, t_1)$}\\    
     $\mathfrak{I}, t \vDash M_2 \mathcal{S}_\varrho M_1$ & \text{iff} &  \text{$\mathfrak{I}, t_1 \vDash M_1$ for some $t_1$ s.t. $t - t_1 \in \varrho$,} \\ 
    &&  \text{and $\mathfrak{I}, t_2 \vDash M_2$ for all $t_2 \in (t_1, t)$}
    \\\hline
\end{tabular}
    \caption{Semantics for ground metric atoms}
    \label{tab:semantable}
\end{table}

An interpretation $\mathfrak{I}$ satisfies a fact $R(\bm{t})@\varrho$, denoted as  ${\mathfrak{I}\models R(\bm{t})@\varrho}$, if ${\mathfrak{I}, t \vDash R(\bm{t})}$ for each ${t \in \varrho}$. An interpretation $\mathfrak{I}$ is a model of a dataset $\mathcal{D}$, written $\mathfrak{I}\models\mathcal{D}$, if it satisfies every fact in $\mathcal{D}$.
An interpretation $\mathfrak{I}$ satisfies a ground rule $r_0$ if ${\mathfrak{I}, t \vDash body(r_0)}$ implies ${\mathfrak{I}, t \vDash head(r_0)}$ for each ${t \in \mathbb{Q}}$. 
A ground rule $r_0$ is an instance of a rule $r$ if there is a substitution $\sigma$ such that $r_0 = r\sigma$. An interpretation $\mathfrak{I}$ satisfies a rule $r$, if it satisfies all ground instances of $r$,
and it is a model of a program $\Pi$, if it satisfies all rules in $\Pi$.
$\mathfrak{I}$ is a model of a pair $(\Pi, \mathcal{D})$ if $\mathfrak{I}$ is both a model of $\mathcal{D}$ and a model of $\Pi$ . 
A program-dataset pair $(\Pi, \mathcal{D})$ entails a fact $R(\bm{t})@\varrho$, written $(\Pi, \mathcal{D}) \models R(\bm{t})@\varrho$, if all models of $(\Pi, \mathcal{D})$ satisfy $R(\bm{t})@\varrho$. 
An interpretation $\mathfrak{I}$ contains another interpretation $\mathfrak{I}'$, written $\mathfrak{I}' \subseteq \mathfrak{I}$,  if $\mathfrak{I},t \models R(\bm{t})$ implies $\mathfrak{I}',t\models R(\bm{t})$ for each ground relational atom $R(\bm{t})$ and each $t\in\mathbb{Q}$; moreover, $\mathfrak{I}=\mathfrak{I'}$ if they contain each other.
An interpretation $\mathfrak{I}$ is the least in a set of interpretations $\bm{\mathfrak{I}}$, if $\mathfrak{I}\in\bm{\mathfrak{I}},\text{ and } \forall\mathfrak{I}'\in\bm{\mathfrak{I}}$, $\mathfrak{I}\subseteq\mathfrak{I}'$. Similarly, $\mathfrak{I}$ is the greatest in $\bm{\mathfrak{I}}$, if $\mathfrak{I}\in\bm{\mathfrak{I}},\text{ and } \forall\mathfrak{I}'\in\bm{\mathfrak{I}}$, $\mathfrak{I}'\subseteq\mathfrak{I}$. 
Each dataset $\mathcal{D}$ has a unique least interpretation
$\mathfrak{I}_{\mathcal{D}}$ such that $\mathfrak{I}_{\mathcal{D}}$ is the least in the set of all models of $\mathcal{D}$.
For interpretations $\mathfrak{I}_1$, $\mathfrak{I}_2$, and $\mathfrak{I}_3$, we write $\mathfrak{I}_3 = \mathfrak{I}_1 \cup \mathfrak{I}_2$ if $\mathfrak{I}_3$ is the least in $\{\mathfrak{I}\mid \mathfrak{I}_1 \subseteq \mathfrak{I} \text{ and } \mathfrak{I}_2 \subseteq \mathfrak{I}\}$, 
and we write $\mathfrak{I}_3 = \mathfrak{I}_1 \cap \mathfrak{I}_2$, if $\mathfrak{I}_3$ is the greatest in $\{\mathfrak{I}\mid \mathfrak{I} \subseteq \mathfrak{I}_1\text{ and } \mathfrak{I} \subseteq \mathfrak{I}_2\}$. 
The empty interpretation $\mathfrak{I}_{\emptyset}$ maps $t$ to $\emptyset$ for each $t\in\mathbb{Q}$, and it is contained by any interpretation. 
Finally, we write $\mathfrak{I}_3 = \mathfrak{I}_1-\mathfrak{I}_2$, if $\mathfrak{I}_3$ is the greatest in $\{\mathfrak{I}\mid \mathfrak{I}\subseteq\mathfrak{I}_1, \mathfrak{I}\cap\mathfrak{I}_2=\mathfrak{I}_{\emptyset}\}$.


\subsubsection{Materialisation for DatalogMTL}


We now briefly discuss how materialisation-based reasoning works for DatalogMTL with bounded intervals. To this end, we first introduce a few additional notations that will facilitate our discussion. For $\mathcal{D}$ a dataset, $t_{\mathcal{D}}^-$ and $t_{\mathcal{D}}^+$ denote the minimal and maximal interval endpoints appearing in $\mathcal{D}$, respectively, and $t_{\mathcal{D}}^- = t_{\mathcal{D}}^+ = 0$ if $\mathcal{D}$ mentions no numbers. Moreover, the depth of a rule $r$, written $\mathsf{depth}(r)$, is defined as the sum of right endpoints of all intervals appearing in the operators of $r$, and $\mathsf{depth}(r) = 0$ if $r$ mentions no intervals; for $\Pi$ a program, $\mathsf{depth}(\Pi)$ is the maximum depth of its rules.

The immediate consequence operator $T_\Pi$ for a program $\Pi$ is the function that maps an interpretation $\mathfrak{I}$ to the least interpretation $T_\Pi(\mathfrak{I})$ containing $\mathfrak{I}$ and satisfying the following: for each $r_0$ a ground rule instance of a rule in ${\Pi}$ and each $t$ a time point, ${\mathfrak{I}, t \vDash body(r_0)}$ implies ${\mathfrak{I}, t \vDash head(r_0)}$. For a program $\Pi$ and a dataset $\mathcal{D}$, a transfinite sequence of interpretations $T^\alpha_\Pi(\mathfrak{I}_\mathcal{D})$ can be defined for ordinals $\alpha$ by successively applying $T_\Pi$, the immediate consequence operator for $\Pi$, to $\mathfrak{I}_\mathcal{D}$, the unique least model for $\mathcal{D}$, as follows:
\begin{enumerate*}[label=(\roman*)]
\item $T^0_\Pi(\mathfrak{I}_\mathcal{D}) = \mathfrak{I}_\mathcal{D}$,
\item $T^{\alpha+1}_\Pi(\mathfrak{I}_\mathcal{D}) = T_\Pi(T^\alpha_\Pi(\mathfrak{I}_\mathcal{D}))$ for $\alpha$ an ordinal, and
\item $T^\alpha_\Pi(\mathfrak{I}_\mathcal{D}) = \bigcup_{\beta<\alpha} T^\beta_\Pi(\mathfrak{I}_\mathcal{D})$ for $\alpha$ a limit ordinal
\end{enumerate*};
the canonical model $\mathfrak{C}_{\Pi,\mathcal{D}}$ of $\Pi$ and $\mathcal{D}$ is the interpretation $T^{\omega_1}_\Pi(\mathfrak{I}_\mathcal{D})$ with $\omega_1$ the first uncountable ordinal. In fact, $\mathfrak{C}_{\Pi,\mathcal{D}}$ is the least model of $\Pi$ and $\mathcal{D}$~\cite{Brandt2017}.

For an interpretation $\mathfrak{I}$ and an interval $\varrho$, the projection $\mathfrak{I}\mid_{\varrho}$ of $\mathfrak{I}$ over $\varrho$ is the interpretation that coincides with $\mathfrak{I}$ on $\varrho$ and maps all relational atoms to false outside $\varrho$. 
Moreover, an interpretation $\mathfrak{I}$ is a shift of another interpretation $\mathfrak{I}'$, if there is a rational number $s$ such that for each $R(\bm{t})$ a ground relational atom and each $t$ a time point, $\mathfrak{I}', t\models R(\bm{t})$ implies $\mathfrak{I}, t+s\models R(\bm{t})$, and vice versa. Finally, for a given program $\Pi$ and a dataset $\mathcal{D}$, we are not interested in dealing with arbitrary rational number time points on the time line; instead, it is sufficient to handle time points that are somewhat related to those appearing in $\Pi$ and $\mathcal{D}$. This is formalised by the notion of $(\Pi, \mathcal{D})$-ruler. Concretely, $(\Pi, \mathcal{D})$-ruler is the set of time points of the value $t + i \times div(\Pi)$, where $t$ is an endpoint mentioned in $\mathcal{D}$ and $i$ is an integer, and $div(\Pi) = 1/k$ , with $k$ being the product of all denominators
in the rational endpoints mentioned in $\Pi$; for generality, $k = 1$ and $div(\Pi) = 1$ if $\Pi$ has no mention of rational endpoints.

We are now ready to define the notions of saturated interpretation and unfolding, which are key components for finitely representing the materialisation of a bounded program-dataset pair. 

\begin{definition} \label{def:saturated}
    For a program $\Pi$ and a dataset ${\mathcal{D}}$, interpretation $\textit{T}_{\Pi}^{k}(\mathfrak{I}_{\mathcal{D}})$ is saturated if there exist closed intervals $\varrho_1,\varrho_2,\varrho_3$ and $\varrho_4$ of length $2\mathsf{depth}(\Pi)$, whose endpoints are located on the $(\Pi,\mathcal{D})$-ruler and satisfy $\varrho_1^+<\varrho_2^+<t_{\mathcal{D}}^-$ and $t_{\mathcal{D}}^+<\varrho_{3}^-<\varrho_{4}^-$, and such that the following properties hold:
    \begin{itemize}
        \item $\textit{T}_{\Pi}^{k}(\mathfrak{I}_{\mathcal{D}})$ satisfies $\Pi$ in $[\varrho_1^-,\varrho_4^+]$;
        \item $\textit{T}_{\Pi}^{k}(\mathfrak{I}_{\mathcal{D}}) \mid_{\varrho_1}$ and $\textit{T}_{\Pi}^{k}(\mathfrak{I}_{\mathcal{D}}) \mid_{\varrho_3}$ are shifts of $\textit{T}_{\Pi}^{k}(\mathfrak{I}_{\mathcal{D}}) \mid_{\varrho_2}$  and $\textit{T}_{\Pi}^{k}(\mathfrak{I}_{\mathcal{D}}) \mid_{\varrho_4}$,respectively.
    \end{itemize}
    $[\varrho_1^-,\varrho_2^-)$ and $(\varrho_3^+,\varrho_4^+]$ are often referred to as the left period, written ${\varrho_\mathsf{left}}$, and the right period, written ${\varrho_\mathsf{right}}$, of the interpretation $\textit{T}_{\Pi}^{k}(\mathfrak{I}_{\mathcal{D}})$, respectively.
\end{definition}
\begin{definition}
    The ($\varrho_\mathsf{left},\varrho_\mathsf{right}$)-unfolding of a saturated interpretation $\textit{T}_{\Pi}^{k}(\mathfrak{I}_{\mathcal{D}})$ with periods ($\varrho_\mathsf{left},\varrho_\mathsf{right}$) is the interpretation $\mathfrak{C}$ such that:
    \begin{itemize}
        \item $\mathfrak{C}\mid_{[\varrho_\mathsf{left}^-,\varrho_\mathsf{right}^+]}=\textit{T}_{\Pi}^{k}(\mathfrak{I}_{\mathcal{D}})\mid_{[\varrho_\mathsf{left}^-,\varrho_\mathsf{right}^+]}$,
        \item $\mathfrak{C}\mid_{\varrho_\mathsf{left}-n\cdot\lvert \varrho_\mathsf{left} \rvert}$ is a shift of $\textit{T}_{\Pi}^{k}(\mathfrak{I}_{\mathcal{D}})\mid_{\varrho_\mathsf{left}}$, for any $n \in \mathbb{N}$,
        \item $\mathfrak{C}\mid_{\varrho_\mathsf{right}+n\cdot\lvert \varrho_\mathsf{right} \rvert}$ is a shift of $\textit{T}_{\Pi}^{k}(\mathfrak{I}_{\mathcal{D}})\mid_{\varrho_\mathsf{right}}$, for any $n \in \mathbb{N}$.
    \end{itemize}
\end{definition}

It has been shown that for an arbitrary pair of bounded program and dataset $(\Pi,\mathcal{D})$, there exists $k \in \mathbb{N}$ and intervals ${\varrho_\mathsf{left}}$ and ${\varrho_\mathsf{right}}$ such that $\textit{T}_{\Pi}^{k}(\mathfrak{I}_{\mathcal{D}})$ is a saturated interpretation with periods $(\varrho_\mathsf{left},\varrho_\mathsf{right})$, and the $(\varrho_\mathsf{left},\varrho_\mathsf{right})$-unfolding of $\textit{T}_{\Pi}^{k}(\mathfrak{I}_{\mathcal{D}})$ coincides with $\mathfrak{C}_{\Pi,\mathcal{D}}$, the canonical model of ${\Pi}$ and ${\mathcal{D}}$~\cite{DBLP:conf/aaai/WalegaZWG23}. Intuitively speaking, a saturated interpretation finitely represents the canonical model of a program and a dataset. The goal of this work is to incrementally update a saturated interpretation in response to changes in the explicitly given dataset $\mathcal{D}$.

\subsubsection{The Delete/Rederive Algorithm}

To incrementally update saturated interpretations for DatalogMTL, we draw inspirations from the Delete/Rederive (DRed) algorithm \cite{10.1145/170035.170066,10.5555/645922.673479}, a well-known technique for maintaining Datalog materialisations. Given a Datalog program, a set of explicitly given facts, the original materialisation, and sets of facts to remove from and to add to the given facts, the DRed algorithm updates the materialisation to reflect changes in the explicitly given facts, without recomputing it from scratch. More specifically, the algorithm operates in three stages: in the overdeletion stage, the algorithm eagerly identifies all facts that depend on the set of deleted facts; it then enters the rederivation stage, in which it recognises which of the overdeleted facts can be rederived in one step; finally, during  insertion, the algorithm computes the consequences of both the rederived facts and the newly inserted ones.

\section{Motivation}
Before we present the technical details of our incrmental update approach for DatalogMTL, in this section, we discuss at a high level challenges that need to be addressed in devising such an incremental update approach. We also provide key insights behind our incremental update algorithm. Finally, we provide a concrete example that highlights the benefits of applying our incremental maintenance algorithm for DatalogMTL reasoning compared with recomputing the materialisation from scratch.








In extending DRed for Datalog to support DatalogMTL reasoning, two major challenges arise. First, DRed relies on the semina\"ive evaluation strategy applied in both the overdeletion and the insertion stages to be efficient in dealing with updates. More specifically, in each round of rule application, at least one body atom is required to be evaluated in the set of facts deleted/inserted in the previous round; this strategy ensures that only consequences dependent on the deleted/inserted facts are considered. Although semina\"ive evaluation has been considered for DatalogMTL materialisation, its interplay with the construction of saturated interpretations in the context of DatalogMTL reasoning has not been studied, let alone the application of semina\"ive evaluation strategy in DatalogMTL materialisation maintenance. Second, for the Datalog setting, the bulk of the work is devoted to iterative rule application, so it is sufficient to make rule application process `incremental' in order to obtain an algorithm that is efficient for updates; however, in the DatalogMTL setting, a significant amount of time needs to be dedicated to periods identification (i.e., termination checks). Intuitively, after each round of rule application, the period identification procedure enumerates all pairs of intervals of a certain length, and then it compares pairwisely the facts in these intervals to detect the presence of a repeating pattern. Once two intervals coincide in their content, a period is identified. As such, the cost of this procedure naturally depends on the number of facts inside these intervals. Therefore, it is essential that the periods identification process is also designed to be somewhat `incremental', so that the overall processing time aligns with the size of the updates rather than the size of the entire materialisation; this would require thorough understanding and careful treatment of the periodic structure of DatalogMTL materialisations.

Our incremental maintenance technique addresses both of these two challenges. For the first challenge, we devise a novel semina\"ive evaluation operator that is tailored for incremental updates in the DatalogMTL setting: compared with the existing semina\"ive operator, our new operator identifies more accurately the consequences affected by the update and also facilitates period identification. To tackle the second challenge, our algorithm is designed to identify periods in the updated facts and their consequences, rather than in the entire materialisation. Typically, deletions and insertions are small compared with the entire materialisation. As such, computing periods over the updated facts tend to be less costly than doing so for the entire materialisation; thus, our approach has the potential to significantly reduce redundant computations, especially in scenarios where updates only affect a limited portion of the materialisation. We use the following example to highlight the benefits of our approach; throughout the technical section, this example will be expanded to illustrate key operations in our algorithm.

\begin{example}
\label{ex1}

Let program $\Pi$ consist of a single rule~\eqref{ex1-1}, and let $E$, $E^-$, and $E^+$ be the original dataset, the set of facts to remove from $E$, and set of facts to add to $E$, respectively. Now consider the materialisation of $\Pi$ over $E$. According to the definition of $\mathsf{depth}(\Pi)$ and Definition~\ref{def:saturated}, it is easy to verify that $\mathsf{depth}(\Pi) = 11$, and that when $k = 4$, interpretation $\textit{T}_{\Pi}^{k}(\mathfrak{I}_E)$ is saturated, with ${\varrho_\mathsf{left} = [-24,-23)}$ and ${\varrho_\mathsf{right} = [24,34)}$ being possible left and right periods, respectively. To arrive at the conclusion that ${\varrho_\mathsf{right} = [24,34)}$, the materialisation procedure needs to verify that $\textit{T}_{\Pi}^{k}(\mathfrak{I}_{E}) \mid_{[2,24]}$ is a shift of $\textit{T}_{\Pi}^{k}(\mathfrak{I}_{E})\mid_{[12,34]}$, and this clearly requires comparing $O(n)$ facts. To summarise, the materialisation procedure requires $O(n)$ time.

\begin{equation}\label{ex1-1}
    \boxplus_{[0,1]} R(x)\gets \boxminus_{[9,10]}R(x)
\end{equation}
\begin{equation*}
    E=\{R(a_i)@[0,1]\mid 0<i<n \}
\end{equation*}
\begin{equation*}
    E^{-}=\{R(a_1)@[0,1] \}
\end{equation*}
\begin{equation*}
    E^{+}=\{R(a_n)@[0,1] \}
\end{equation*}

Now consider the removal of $E^-$ and the insertion of $E^+$. If we recompute the materialisation from scratch over the entire set $(E \setminus E^-) \cup E^{+}$, the procedure again requires $O(n)$ time. In contrast, our incremental update approach restricts the attention to only the updated facts and their consequences. For deletion, our approach tries to identify periodic structure only among facts of the form ${R(a_1) @ \varrho}$. In our example, after a constant number of rule application, our algorithm will be able to identify the relevant left and right periods by only comparing $O(1)$ facts instead of $O(n)$ facts. In other words, (over)deletion requires only $O(1)$ time. Insertion follows the same principle, and so we omit the details.

\end{example}


\section{DRed for DatalogMTL}
Before we present the details of our algorithm, we first introduce the notion of periodic materialisations, which are finite representations of possibly unbounded models of DatalogMTL programs. Our update algorithm will have to incrementally maintain periodic materialisations in response to changes in the explicitly given data.



\begin{definition}~\label{def:periodicmat}
For $(\Pi, E)$ a bounded DatalogMTL program-dataset pair, a periodic materialisation of $(\Pi, E)$ is a triple $\mathds{I}$ of the form $\langle I, \varrho_{\mathsf{L}}, \varrho_{\mathsf{R}} \rangle$, where $I$ is a saturated interpretation of $\Pi$ and $E$, and $\varrho_{\mathsf{L}}$ and $\varrho_{\mathsf{R}}$ are periods of this saturated interpretation; moreover, the unfolding of ${\mathds{I}}$, written $\mathsf{unfold}(\mathds{I})$, is defined as the $(\varrho_\mathsf{L},\varrho_\mathsf{R})$-unfolding of $I$.
\end{definition}

By Definition~\ref{def:periodicmat} and properties already discussed in the Preliminary Section, if $\mathds{I}$ is a periodic materialisation of a program-dataset pair $(\Pi, E)$, then $\mathsf{unfold}(\mathds{I})$ coincides with $\mathfrak{C}_{\Pi,E}$, the canonical model of $\Pi$ and $E$.
Note that $\mathds{I}$ can capture the interpretation of a bounded dataset $I$, in which case $\mathds{I}=\langle I, \varrho_{\mathsf{L}}, \varrho_{\mathsf{R}} \rangle$ such that $I\mid_{\varrho_{\mathsf{L}}}=I\mid_{\varrho_{\mathsf{R}}} = \mathfrak{I}_{\emptyset}$, and  $\mathsf{unfold}(\mathds{I})=I$.

\begin{algorithm}[!htb]
\DontPrintSemicolon
\caption{\textbf{\textsc{DRed}$_{MTL}$$(\Pi,E, \mathds{I}, E^{-},E^{+})$}}
\label{algo:DREDMTL}
\SetAlgoNoLine
\SetAlgoNoEnd
\SetKwProg{Procedure}{procedure}{}{}
\SetKwProg{Loop}{loop}{}{}
\SetKw{Break}{break}
\KwIn{program $\Pi$, original dataset $E$, periodic materialisation $\mathds{I}$ of $(\Pi, E)$, dataset to remove $E^{-}$, dataset to insert $E^{+}$}

\vspace{0.5cm}  

$D\coloneqq R\coloneqq A\coloneqq \emptyset, \; E^{-} \coloneqq (E^{-} \cap E) \setminus E^{+}, \; E^{+} \coloneqq E^{+}  \setminus E$

\textsc{Overdelete}

\textsc{Rederive}

\textsc{Insert}

\vspace{0.2cm}  

\Procedure{\textsc{Overdelete}}{
    $N_{D} \coloneqq E^{-}$
    
    \Loop{}{
        $\Delta_D\coloneqq N_{D} \setminus D$
        
        $(\varrho^D_{\mathsf{L}}, \; \varrho^D_{\mathsf{R}}) \coloneqq \textsc{Pds}(\Pi,E,\mathds{I}.\varrho_{\mathsf{L}}, \mathds{I}.\varrho_{\mathsf{R}},D,\Delta_{D})$ 

        \lIf{ ${(\varrho^D_{\mathsf{L}}, \; \varrho^D_{\mathsf{R}})} \neq (\emptyset, \emptyset)$ }{\Break}
        
        $N_{D} \coloneqq \Pi \langle \mathsf{unfold}(\mathds{I}) \setminus D \myvdots \Delta_{D}\rangle $ \label{delete-seminaive-operator}
        
        $D \coloneqq D \cup \Delta_{D}$
    }
    $\mathds{D}\coloneqq \langle D,\varrho^D_{\mathsf{L}},\varrho^D_{\mathsf{R}} \rangle$, \quad $\mathds{I} \coloneqq \mathds{I} \dblsetminus \mathds{D}$
}

\vspace{0.2cm}  

\Procedure{\textsc{Rederive}}{


    ${N_R \coloneqq \emptyset}$, \quad $k = 1$
    
    \Loop{}{

        $t_{\mathsf{L}} = (\mathds{I}.{\varrho_\mathsf{L}})^+ - k \times \mathsf{max}(2\mathsf{depth}(\Pi), |\mathds{I}.\varrho_{\mathsf{L}}|)$

        $t_{\mathsf{R}} = (\mathds{I}.{\varrho_\mathsf{R}})^- + k \times \mathsf{max}(2\mathsf{depth}(\Pi), |\mathds{I}.\varrho_{\mathsf{R}}|)$
        
        $N_R \coloneqq N_R \cup (\mathsf{unfd}(\mathds{D})|_{[t_{\mathsf{L}}, t_{\mathsf{R}}]} \cap T_{\Pi}(\mathsf{unfd}(\mathds{I})))$
        
            $\Delta_{R} \coloneqq N_{R}  \setminus R $
        
        $(\varrho^R_{\mathsf
L},\varrho^R_{\mathsf
R}) \coloneqq \textsc{Pds}(\Pi,E, \mathds{I}.\varrho_{\mathsf{L}}, \mathds{I}.\varrho_{\mathsf{R}},R,\Delta_{R})$

        \lIf{ ${(\varrho^R_{\mathsf{L}}, \; \varrho^R_{\mathsf{R}})} \neq (\emptyset, \emptyset)$ }{\Break}
        
        $R \coloneqq R \cup \Delta_{R}$, \quad $k \coloneqq k + 1$

        $N_{R} \coloneqq \Pi \langle \mathsf{unfold}(\mathds{I}) \cup R \myvdots \Delta_{R}\rangle$ \label{rederive-seminaive-operator}
    }
    $\mathds{R}\coloneqq \langle R,\varrho^{R}_{\mathsf{L}},\varrho^{R}_{\mathsf{R}}\rangle$, \quad $\mathds{I} \coloneqq \mathds{I} \Cup \mathds{R}$
}

\vspace{0.2cm}

\Procedure{\textsc{Insert}}{
    
        

        
        $N_{A} \coloneqq E^{+}$, \quad $E = (E \setminus E^-) \cup E^+$
    
    \Loop{}{
        $\Delta_{A} \coloneqq N_{A}  \setminus (\mathsf{unfold}(\mathds{I}) \cup A)$
        
        $(\varrho^A_{\mathsf
L},\varrho^A_{\mathsf
R}) \coloneqq \textsc{Pds}(\Pi,E, \mathds{I}.\varrho_{\mathsf{L}}, \mathds{I}.\varrho_{\mathsf{R}},A,\Delta_{A})$

        \lIf{ ${(\varrho^A_{\mathsf{L}}, \; \varrho^A_{\mathsf{R}})} \neq (\emptyset, \emptyset)$ }{\Break}
        
        $A \coloneqq A \cup \Delta_{A}$
        
        $N_{A} \coloneqq \Pi \langle \mathsf{unfold}(\mathds{I}) \cup A \myvdots \Delta_{A}\rangle$ \label{insert-seminaive-operator}
    }
    $\mathds{A}\coloneqq \langle A,\varrho^{A}_{\mathsf{L}},\varrho^{A}_{\mathsf{R}}\rangle$, \quad $\mathds{I} \coloneqq \mathds{I} \Cup \mathds{A}$
}
\end{algorithm}

\subsubsection{Algorithm Overview} Our incremental materialisation maintenance algorithm for DatalogMTL is formalised in 
Algorithm~\ref{algo:DREDMTL}. The algorithm takes as input a program $\Pi$, a dataset $E$, a periodic materialisation ${\mathds{I}}$ of ${(\Pi,E)}$, a dataset $E^-$ to be removed from $E$, and a dataset $E^+$ to be added to $E$, and it updates $\mathds{I}$ such that after the execution of the algorithm, $\mathds{I}$ becomes a  periodic materialisation of ${(\Pi, (E\setminus E^-) \cup E^+)}$, and this is achieved without recomputing the periodic materialisation from scratch. 
Similar to the DRed algorithm for plain Datalog, Algorithm~\ref{algo:DREDMTL} consists of three stages, which we outline below.

In the overdeletion stage, the dataset $D$ is extended with all facts that depend on the deleted facts. The algorithm then applies the rules of $\Pi$ iteratively (line~11)  until a pair of periods are found (line~9--10). Line~11 makes use of our newly devised semina\"ive evaluation operator $\Pi\langle I \myvdots \Delta\rangle$, which computes the union of all facts derived by $(r\sigma, t) \langle I \myvdots \Delta \rangle$, where $r \sigma$ is a ground rule instance with $r$ a rule in $\Pi$ and $\sigma$ a substitution mapping each variable appearing in $r$ to a constant appearing in $I$, and $t$ is a rational time point; operator $(r', t) \langle I \myvdots \Delta \rangle$ where $r'$ is a ground rule instance and $t$ is a rational time point is in turn defined as the minimal set of punctual facts $N$ such that $\mathfrak{I}_{I}, t \models body(r')$, $\mathfrak{I}_{I \setminus \Delta}, t \not \models body(r')$, and $\mathfrak{I}_{N}, t \models head(r')$. When $I$ and $\Delta$ are finite, $\Pi\langle I \myvdots \Delta\rangle$ can be efficiently evaluated by instantiating the query from facts inside $\Delta$, which is typically much smaller than $I$. However, in line~21, by slight abuse of notation, we use ${\mathsf{unfold}(\mathds{I}) \setminus D}$ as an operand of the operator: this does not mean that we have to compute the entire unfolding of ${\mathds{I}}$ prior to the execution of the operator. In contrast, the evaluation is still instantiated from from facts in ${\Delta}$, while interpretation ${\mathds{I}}$ can be unfolded lazily as required.


In the rederivation stage, the algorithm recovers the facts that were overdeleted but should actually still hold after the update. It should be noted that overdeleted facts may span the entire timeline, and so it may not be sufficient to recover facts within a fixed interval. Lines~17--19 address this issue, in each round of the loop of lines~16--24, we extend the interval of interest $[t_{\mathsf{L}}, t_{\mathsf{R}}]$ so that overdeleted facts inside this interval could be successfully recovered (line~19); this is done in parallel to the propagation of already recovered facts (line~24). The loop terminates when a pair of periods are successfully identified (lines~21--22), and the periodic materialisation $\mathds{I}$ is updated in line~25.


Insertion is analogous to deletion: the dataset $A$ is populated with new facts that are derivable from $E^+$. In parallel to consequence propagation, the algorithm tries to detect the periodic structure within the dataset $A$. Ultimately, the algorithm updates the periodic materialisation in line~34 to incorporate the inserted facts and their consequences.

\begin{algorithm}[t]
\DontPrintSemicolon
\caption{\textsc{Pds}$(\Pi, E,\varrho_\mathsf{L},\varrho_\mathsf{R}, D, \Delta_D)$}
\label{periods}
\SetAlgoNoLine

\KwIn{program $\Pi$, dataset $E$, intervals $\varrho_\mathsf{L}$ and $\varrho_\mathsf{R}$, dataset $D$ of facts derived so far, and dataset $\Delta_D$ of facts derived in the last round}
\vspace{0.5cm}

$\varrho'_\mathsf{L} \coloneqq \varrho'_\mathsf{R} \coloneqq \emptyset$, \quad $P\coloneqq (\Pi,E)$-ruler

\ForEach{$M\sigma@\varrho \in \Delta_D$}
    {\If{$\varrho \cap [t_E^-, t_E^+] \neq \emptyset$}
        {
        \Return $(\emptyset, \emptyset)$\;
}}

$u = \max(\{x < t_E^- \mid \exists y, M@\langle y,x\rangle \in \Delta_D\}\cup\{-\infty\})$

$v = \min(\{x > t_E^+ \mid \exists y,  M@\langle x,y\rangle \in \Delta_D\}\cup \{\infty\})$

$\gamma_\mathsf{L} \coloneqq (u, \quad \varrho_\mathsf{L}^++2\mathsf{depth}(\Pi)]$

$\gamma_\mathsf{R} \coloneqq [\varrho_\mathsf{R}^--2\mathsf{depth}(\Pi), \quad v)$

\ForEach{$\varrho_1,\varrho_2 \subseteq \gamma_{\mathsf{L}}$ with $|\varrho_1| = |\varrho_2| = 2\mathsf{depth}(\Pi)$}{
    \If{$\varrho_1^-<\varrho_2^-,\varrho_1^-\in P$ and $\varrho_2^-\equiv\varrho_1^- \mod |\varrho_\mathsf{L}|$}{
    \If{$\mathfrak{I}_D \mid_{\varrho_1}$ is a shift of $\mathfrak{I}_D \mid_{\varrho_2}$}{
        $\varrho'_\mathsf{L} \coloneqq [\varrho_1^-, \varrho_2^-)$\;
}}}

\ForEach{$\varrho_3,\varrho_4 \subseteq \gamma_{\mathsf{R}}$ with $|\varrho_3| = |\varrho_4| = 2\mathsf{depth}(\Pi)$}{
    \If{$\varrho_3^+<\varrho_4^+,\varrho_3^+\in P$ and $\varrho_3^+\equiv\varrho_4^+ \mod |\varrho_\mathsf{R}|$}{
    \If{$\mathfrak{I}_D \mid_{\varrho_3}$ is a shift of $\mathfrak{I}_D \mid_{\varrho_4}$}{
        $\varrho'_\mathsf{R} \coloneqq (\varrho_3^+, \varrho_4^+]$\;
}}}

\lIf*{$\varrho'_\mathsf{L} = \emptyset$ or $\varrho'_\mathsf{R} = \emptyset$}{
    \Return $(\emptyset, \emptyset)$\;
}
\lElse*{
    \Return $(\varrho'_\mathsf{L}, \varrho'_\mathsf{R})$\;
}

\end{algorithm}


\subsubsection{Period Identification} The \textsc{Pds} procedure formalised in Algorithm~\ref{periods} identifies the repeating patterns within a dataset being populated during iterative application of rules. More specifically, it takes as input a program $\Pi$, a dataset $E$, intervals $\varrho_\mathsf{L}$ and $\varrho_\mathsf{R}$, the dataset being popuplated $D$, and a set of new facts $\Delta_D$ that are to be integrated into $D$ after period detection. The procedure first examines facts in $\Delta_D$: if there is a fact in $\Delta_D$ that overlaps with the interval $[t_E^-, t_E^+]$, it means facts inside this interval have not stabilised, and so the procedure terminates. Otherwise, $u$ and $v$ will be computed separately (lines~5--6) such that $[u,v]$ is largest interval containing $[t^-_E,t_E^+]$ and does not contain a fact satisfied by $\mathfrak{I}_{\Delta_D}$. Now if $[u,v]$ is nonempty, we search for the left and right periods with endpoints on the $(\Pi,E)-$ ruler in $\gamma_\mathsf{L} = (u, \varrho_\mathsf{L}^+ + 2\mathsf{depth}(\Pi)]$ and $\gamma_\mathsf{R} = [\varrho_\mathsf{right}^- - 2\mathsf{depth}(\Pi), v)$, respectively (lines~9--16). 
The procedure returns the pair of periods $(\varrho'_{\mathsf{L}},\varrho'_{\mathsf{R}})$, or $(\emptyset, \emptyset)$ if no valid period is detected at either end (lines~17--18). Intuitively, the algorithm requires $\varrho_1,\varrho_2$ (resp., $\varrho_3,\varrho_4$) to be a multiple of $|\varrho_{\mathsf{L}}|$ (resp., $|\varrho_{\mathsf{R}}|$) apart so that it would be convenient align the new periods with the given ones.

\subsubsection{Implementation of Periodic Operators}
Algorithm~\ref{algo:DREDMTL} has made frequent use of operators $\dblsetminus$ and $\Cup$, which are responsible for taking the difference and union of two periodic materialisations, respectively. In practice, these two operators can be implemented arbitrarily, so long as $\mathsf{unfold}(\mathds{D}_1)-\mathsf{unfold}(\mathds{D}_2) = \mathsf{unfold}(\mathds{D}_1 \dblsetminus \mathds{D}_2)$ and $\mathsf{unfold}(\mathds{D}_1)\cup\mathsf{unfold}(\mathds{D}_2) = \mathsf{unfold}(\mathds{D}_1 \Cup \mathds{D}_2)$. Next we describe our implementation of these operators, which utilises two auxiliary functions, $\texttt{Ext}$ and $\texttt{Aln}$, responsible for extending and aligning periodic materialisations, respectively.

\begin{algorithm}[t]
\DontPrintSemicolon
\caption{Implementation of Periodic Minus}
\label{algo:periodic_minus}
\KwIn{Two periodic materialisations $\mathds{D}_1$ and $\mathds{D}_2$}


\vspace{0.2cm}

\If{$\mathds{D}_2.\varrho_{\mathsf{L}}$ }{
    $(\mathds{D}_1, \mathds{D}_2) \coloneqq \texttt{Ext}(\mathds{D}_1, \mathds{D}_2, \mathsf{L})$ \;
    $(\mathds{D}_1, \mathds{D}_2, \varrho_\mathsf{L}) \coloneqq \texttt{Aln}(\mathds{D}_1, \mathds{D}_2, \mathsf{L})$ \;
}
\lElse*{
    $\varrho_\mathsf{L} \coloneqq \mathds{D}_1.\varrho_\mathsf{L}$ \;
}

\If{$\mathds{D}_2.\varrho_\mathsf{R}$}{
    $(\mathds{D}_1, \mathds{D}_2) \coloneqq \texttt{Ext}(\mathds{D}_1, \mathds{D}_2, \mathsf{R})$ \;
    $(\mathds{D}_1, \mathds{D}_2, \varrho_\mathsf{R}) \coloneqq \texttt{Aln}(\mathds{D}_1, \mathds{D}_2, \mathsf{R})$ \;
}
\lElse*{
    $\varrho_\mathsf{R} \coloneqq \mathds{D}_1.\varrho_\mathsf{R}$ \;
}

$I \coloneqq \mathds{D}_1.I \setminus \mathds{D}_2.I$ \;

\Return $\langle I, \varrho_\mathsf{L}, \varrho_\mathsf{R} \rangle$
\end{algorithm}


Consider two periodic materialisations $\mathds{D}_1$ and 
$\mathds{D}_2$, and let $\mathsf{end} \in \{ \mathsf{L}, \mathit{R} \}$ denote which end of the periodic materialisation to operate on (either left or right). Given periodic materialisations $\mathds{D}_1$ and  $\mathds{D}_2$ which both have valid (but potentially different) periods $\varrho_\mathsf{end}$, function $\texttt{Ext}$ computes a pair of periodic materialisations $(\mathds{D}'_1, \mathds{D}'_2) = \texttt{Ext}(\mathds{D}_1, \mathds{D}_2, \mathsf{end})$ such that the periodic regions are extended to have the same length, which is the least common multiple (LCM) of the two; this can be easily achieved by copying the relevant periodic segments of the timeline. In contrast, function $\texttt{Aln}$ tries to align 
the start and end points of periodic intervals of the two periodic materialisations at the target end, again through facts copying. If only one of the two periodic operators has a valid $\varrho_{\mathsf{end}}$, an empty period is introduced for the other periodic materialisation so that alignment can still be performed. It should be noted that both $\texttt{Ext}$ and $\texttt{Aln}$ operations preserve the semantics of the interpretations: they do not change which facts hold at any time point but only alters the finite representations of the input periodic materialisations.

\begin{algorithm}[t]
\DontPrintSemicolon
\caption{Implementation of Periodic Union}
\label{algo:periodic_union}
\KwIn{Two periodic materialisations $\mathds{D}_1$ and $\mathds{D}_2$}

\vspace{0.2cm}

$\varrho_{\mathsf{L}} \coloneqq \varrho_{\mathsf{R}} \coloneqq \emptyset$

\If{$\mathds{D}_1.\varrho_{\mathsf{L}}$ and $\mathds{D}_2.\varrho_{\mathsf{L}}$}{
         $(\mathds{D}_1, \mathds{D}_2) \coloneqq \texttt{Ext}(\mathds{D}_1, \mathds{D}_2, \mathsf{L})$
    
}

\If{$\mathds{D}_1.\varrho_{\mathsf{L}}$ or $\mathds{D}_2.\varrho_{\mathsf{L}}$}{
    
    $(\mathds{D}_1, \mathds{D}_2, \varrho_{\mathsf{L}}) \coloneqq \texttt{Aln}(\mathds{D}_1, \mathds{D}_2, \mathsf{L})$ \;
}

\If{$\mathds{D}_1.\varrho_\mathsf{R}$ and $\mathds{D}_2.\varrho_\mathsf{R}$}{
        $(\mathds{D}_1, \mathds{D}_2) \coloneqq \texttt{Ext}(\mathds{D}_1, \mathds{D}_2, \mathsf{R})$
    
}

\If{$\mathds{D}_1.\varrho_\mathsf{R}$ or $\mathds{D}_2.\varrho_\mathsf{R}$}{
    
    $(\mathds{D}_1, \mathds{D}_2, \varrho_\mathsf{R}) \coloneqq \texttt{Aln}(\mathds{D}_1, \mathds{D}_2, \mathsf{R})$ \;
}

$I \coloneqq \mathds{D}_1.I \cup \mathds{D}_2.I$ 

\Return $\langle I, \varrho_{\mathsf{L}}, \varrho_{\mathsf{R}}\rangle$
\end{algorithm}

Theorem~\ref{theorem:periodicoperatorscorrect} states that Algorithms~\ref{algo:periodic_minus} and~\ref{algo:periodic_union} correctly implement operators $\dblsetminus$ and $\Cup$, respectively; Theorem~\ref{maintheorem} states that Algorithm~\ref{algo:DREDMTL} is correct. The full proofs for these theorems are lengthy, so we only provide proof sketches; the full proofs are given in the online technical report. 

\begin{restatable}{thm}{theoremoperatorcorrectness}\label{theorem:periodicoperatorscorrect}
    If given input $\mathds{D}_1$,$\mathds{D}_2$, Algorithm~\ref{algo:periodic_minus} and~\ref{algo:periodic_union} output $\mathds{D}_m$ and $\mathds{D}_u$, respectively, then $\mathsf{unfold}(\mathds{D}_1)-\mathsf{unfold}(\mathds{D}_2) = \mathsf{unfold}(\mathds{D}_m)$ and $\mathsf{unfold}(\mathds{D}_1)\cup\mathsf{unfold}(\mathds{D}_2) = \mathsf{unfold}(\mathds{D}_u)$
\end{restatable}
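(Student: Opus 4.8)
The plan is to reduce the correctness of both algorithms to two ingredients: (i) the auxiliary operations $\texttt{Ext}$ and $\texttt{Aln}$ leave the unfolding of each input unchanged, and (ii) once the two inputs have been brought into a common, synchronised periodic structure, the pointwise interpretation operation applied to their core interpretations commutes with $\mathsf{unfold}$. I would treat the two claims together, writing $\star$ for either $-$ or $\cup$ (the operation realised on cores by the line $I \coloneqq \mathds{D}_1.I \setminus \mathds{D}_2.I$ of Algorithm~\ref{algo:periodic_minus} and $I \coloneqq \mathds{D}_1.I \cup \mathds{D}_2.I$ of Algorithm~\ref{algo:periodic_union}); the only difference between the two proofs is which branches of the two algorithms fire, so I would keep a running case analysis on whether $\mathds{D}_1$ and $\mathds{D}_2$ each carry a left/right period.

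First I would prove the invariance lemma: for each end $\mathsf{end}\in\{\mathsf{L},\mathsf{R}\}$, applying $\texttt{Ext}$ and then $\texttt{Aln}$ yields $\mathds{D}_1',\mathds{D}_2'$ with $\mathsf{unfold}(\mathds{D}_i')=\mathsf{unfold}(\mathds{D}_i)$. This is immediate from the unfolding definition: $\texttt{Ext}$ only replaces a period interval by an integer number of copies of itself over an LCM-length window, and $\texttt{Aln}$ only pushes a core boundary outward by copying whole periods; in both cases the bi-infinite periodic extension is literally the same set of facts at every time point. I would also record the post-condition that, after these calls, $\mathds{D}_1'$ and $\mathds{D}_2'$ share the same period length $|\varrho_{\mathsf{end}}|$ and the same endpoints at that end, and their cores both cover the aligned period interval. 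The use of the LCM in $\texttt{Ext}$ is what makes this possible for $\cup$, where the combined pattern is periodic only with period $\mathrm{lcm}(|\mathds{D}_1.\varrho|,|\mathds{D}_2.\varrho|)$.

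The central step is a tiling argument. After the relevant calls, let $a=\varrho_\mathsf{L}^-$, $b=\varrho_\mathsf{R}^+$ be the common core boundaries and $\ell_\mathsf{L}=|\varrho_\mathsf{L}|$, $\ell_\mathsf{R}=|\varrho_\mathsf{R}|$ the common period lengths shared by $\mathds{D}_1$, $\mathds{D}_2$, and hence by the returned $\langle \mathds{D}_1.I\star\mathds{D}_2.I,\varrho_\mathsf{L},\varrho_\mathsf{R}\rangle$. I would partition the timeline into the core cell $[a,b]$, the left cells of length $\ell_\mathsf{L}$ tiling $(-\infty,a)$, and the right cells of length $\ell_\mathsf{R}$ tiling $(b,\infty)$. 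By Definition~\ref{def:saturated} and the unfolding definition, on the core cell $\mathsf{unfold}(\mathds{D}_i)$ agrees with $\mathds{D}_i.I$, and on a given left (resp.\ right) cell it is a fixed shift of $\mathds{D}_i.I\mid_{\varrho_\mathsf{L}}$ (resp.\ $\mathds{D}_i.I\mid_{\varrho_\mathsf{R}}$). Because the three materialisations share $a,b,\ell_\mathsf{L},\ell_\mathsf{R}$, the shift associated with each cell is the same for all three; since $-$ and $\cup$ commute with a common shift and with restriction to a cell, evaluating $\star$ cell by cell gives $\mathsf{unfold}(\langle \mathds{D}_1.I\star\mathds{D}_2.I,\varrho_\mathsf{L},\varrho_\mathsf{R}\rangle)=\mathsf{unfold}(\mathds{D}_1)\star\mathsf{unfold}(\mathds{D}_2)$ at every time point, which is the claim.

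The remaining work, and the main obstacle, is the case analysis for ends where one or both inputs lack a period, since there $\texttt{Ext}$/$\texttt{Aln}$ are skipped (Algorithm~\ref{algo:periodic_minus}) or only partially applied (Algorithm~\ref{algo:periodic_union}). I would use the convention, justified by the bounded-dataset remark following Definition~\ref{def:periodicmat}, that a missing period is an empty period: then $\mathds{D}_i.I\mid_{\varrho}=\mathfrak{I}_\emptyset$ there and $\mathsf{unfold}(\mathds{D}_i)$ is empty beyond the core on that side. For $\Cup$, aligning whenever either side has a period (with an empty period introduced for the other) still produces synchronised cells, because the union with an empty cell is a shift of the nonempty side. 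For $\dblsetminus$, retaining $\mathds{D}_1.\varrho_\mathsf{L}$ when $\mathds{D}_2$ has no left period is sound because $\mathsf{unfold}(\mathds{D}_2)$ is eventually empty to the left, so far-left cells collapse to $\mathsf{unfold}(\mathds{D}_1)$; the delicate point is the finite overlap region where $\mathds{D}_2.I$ is nonempty, which the core difference $\mathds{D}_1.I\setminus\mathds{D}_2.I$ resolves only if the cores are positioned to cover every point at which the two unfoldings disagree. Verifying that this coverage always holds is the fiddly, bookkeeping-heavy part of the full proof, and it is where I expect the real difficulty to lie.
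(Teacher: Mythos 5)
Your decomposition is essentially the paper's: an invariance lemma for \texttt{Ext}/\texttt{Aln}, a commutation lemma showing that once the two inputs share identical period intervals the core-level set operation commutes with $\mathsf{unfold}$ (the paper proves this by classifying each fact's provenance --- core fact versus periodic copy of a core fact --- in Claims~\ref{claim:alignedMinus} and~\ref{claim:alignedUnion}, which is interchangeable with your cell-tiling argument), and a case analysis on which ends carry periods (Claims~\ref{claim:minuscorrect} and~\ref{claim:unioncorrect}). So the route is the same; the issue is the piece you explicitly leave open.

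The point you flag as the ``real difficulty'' --- the minus case where $\mathds{D}_2$ lacks a period on one end, and $\mathds{D}_2.I$ might contain facts at time points that fall inside $\mathds{D}_1$'s periodic (unfolded) region rather than its core, so that $\mathds{D}_1.I\setminus\mathds{D}_2.I$ would fail to remove them --- is closed in the paper not by bookkeeping but by a precondition you did not identify: the correctness claim for $\dblsetminus$ is only established under the hypothesis $\mathsf{unfold}(\mathds{D}_2)\subseteq\mathsf{unfold}(\mathds{D}_1)$ (this appears in both Claim~\ref{claim:alignedMinus} and Claim~\ref{claim:minuscorrect}, and holds at every call site of $\dblsetminus$ in Algorithm~\ref{algo:DREDMTL}, since the overdeleted interpretation is contained in the materialisation). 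Under that containment, when $\mathds{D}_2$ has no left period the paper simply treats it as carrying a virtual empty period aligned with $\mathds{D}_1.\varrho_\mathsf{L}$ and observes that $\mathds{D}_2$'s facts ``have no influence'' on the facts generated by unfolding $\mathds{D}_1.\varrho_\mathsf{L}$; your worry about a nonempty overlap between $\mathds{D}_2.I$ and $\mathds{D}_1$'s unfolded region does not arise because the disagreement set is confined to where both cores are defined. Without stating this containment hypothesis your argument for the missing-period minus case cannot be completed as the theorem is literally stated, so you should either add the hypothesis (as the paper's appendix does) or note that the theorem is only invoked under it. One further point both you and the paper gloss over equally: the core operation $\mathds{D}_1.I\setminus\mathds{D}_2.I$ is a set difference on fact representations, which agrees with interpretation difference only if the two cores represent overlapping information with identical interval decompositions; this is an implicit normalisation assumption, not something your tiling argument (or the paper's case analysis) supplies.
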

\begin{psketch}
For minus, we show that the equation holds by leveraging that the periods of $\mathsf{unfold}(\mathds{D}_1)$ and $\mathsf{unfold}(\mathds{D}_2)$ can be extended and aligned to form a new pair of periods, and their difference also abides by the new periods. The case for union is similar. 
\end{psketch}
\vspace{-0.1cm}
\begin{restatable}{thm}{theoremalgorithmcorrectness} \label{maintheorem}
    For a bounded DatalogMTL program $\Pi$, a bounded dataset $E$, a bounded deleting dataset $E^-$, a bounded inserting dataset $E^+$, a periodic materialisation $\mathds{I}$ such that $\mathsf{unfold}(\mathds{I}) = \mathfrak{C}_{\Pi,E}$, let $E'=E\setminus E^-\cup E^+$, then after calling \textbf{\textsc{DRed}$_{MTL}$$(\Pi,E,E^{-},E^{+},\mathds{I})$}, we have
    \begin{align*}
    \mathsf{unfold}(\mathds{I})= \mathfrak{C}_{\Pi,E'}
    \end{align*}
\end{restatable}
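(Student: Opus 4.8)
The plan is to reduce the statement about periodic materialisations to one about their infinite unfoldings, and then to adapt the classical DRed correctness argument to the DatalogMTL setting, where the principal new difficulty is that every intermediate set of facts is kept only through a finite periodic representation whose validity rests on the period-identification procedure \textsc{Pds}. First I would invoke Theorem~\ref{theorem:periodicoperatorscorrect} to discharge all periodic bookkeeping: since $\dblsetminus$ and $\Cup$ commute with $\mathsf{unfold}$, it suffices to track the interpretation $\mathsf{unfold}(\mathds{I})$ across the three stages. Writing $M=\mathfrak{C}_{\Pi,E}$ and $M'=\mathfrak{C}_{\Pi,E'}$, and letting $D^\star,R^\star,A^\star$ denote the unfoldings of the periodic materialisations $\mathds{D},\mathds{R},\mathds{A}$ assembled by \textsc{Overdelete}, \textsc{Rederive} and \textsc{Insert}, the algorithm leaves $\mathsf{unfold}(\mathds{I})=((M-D^\star)\cup R^\star)\cup A^\star$, so the goal reduces to proving that this interpretation equals $M'$.

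Second, I would fix the semantic meaning of each stage in terms of the semina\"ive operator $\Pi\langle I\myvdots\Delta\rangle$ read over the whole timeline. For \textsc{Overdelete} I would show that $D^\star$ is exactly the least interpretation seeded by $E^-$ and closed under the overdeletion operator, i.e. the facts of $M$ admitting some derivation from $E$ that uses a fact of $E^-$; consequently $M-D^\star$ collects precisely the facts of $M$ all of whose derivations avoid $E^-$, which are derivable from $E\setminus E^-\subseteq E'$ and hence satisfy $M-D^\star\subseteq M'$. For \textsc{Rederive} I would prove that $R^\star$ is the least set of overdeleted facts re-derivable from the surviving interpretation under $T_\Pi$, yielding $(M-D^\star)\cup R^\star = M\cap M'$. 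Finally, for \textsc{Insert} I would show that $A^\star$ computes the $T_\Pi$-consequences of $E^+$ together with the rederived interpretation. The theorem then follows from three facts, closed by leastness of $M'$: soundness $\mathsf{unfold}(\mathds{I})\subseteq M'$ (by monotonicity of $T_\Pi$ and the fact that $M'$ is a fixpoint containing each seed), that the final interpretation is closed under $T_\Pi$ (the loops exit only when \textsc{Pds} certifies a complete periodic structure, which I show entails closure under $T_\Pi$), and that it contains $\mathfrak{I}_{E'}$, so that it is a model of $(\Pi,E')$ and therefore a superset of the least model $M'$. A point requiring care here is the base facts: I would verify $\mathfrak{I}_{E'}\subseteq\mathsf{unfold}(\mathds{I})$ explicitly, tracking how any fact of $E\setminus E^-$ that happens to be overdeleted is re-established during \textsc{Rederive}/\textsc{Insert}.

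The hard part, and the genuinely DatalogMTL-specific obstacle, is to prove that the finite periodic representations manipulated inside the loops faithfully encode the infinite fixpoints above, i.e. the correctness and termination of \textsc{Pds}. Concretely I would need a lemma: whenever \textsc{Pds} returns a non-empty pair $(\varrho_{\mathsf{L}},\varrho_{\mathsf{R}})$, the $(\varrho_{\mathsf{L}},\varrho_{\mathsf{R}})$-unfolding of the dataset accumulated so far coincides with the true infinite fixpoint of the current stage. This amounts to showing that a pattern detected by comparing two bounded windows of length $2\mathsf{depth}(\Pi)$ whose endpoints lie on the $(\Pi,E)$-ruler genuinely recurs over the whole timeline; the argument should mirror the saturation reasoning behind Definition~\ref{def:saturated}, using boundedness of $\Pi$ to show that the content of one such window determines, via $T_\Pi$, the content of the next, so that a single coincidence forces eventual periodicity. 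Dually I must establish termination, namely that the loops always reach a configuration in which \textsc{Pds} succeeds; I would derive this from the existence of saturated interpretations for $(\Pi,E')$ recalled in the Preliminaries together with the monotone growth of $D$, $R$ and $A$ across rounds.

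I expect the most delicate technical point to be the interaction between the semina\"ive restriction encoded in $\Pi\langle I\myvdots\Delta\rangle$ and the alignment of newly detected periods with the incoming periods $\mathds{I}.\varrho_{\mathsf{L}},\mathds{I}.\varrho_{\mathsf{R}}$. On one hand, I must check that the operator neither omits nor double-counts consequences of the update — that $\mathfrak{I}_I,t\models body$ while $\mathfrak{I}_{I\setminus\Delta},t\not\models body$ captures exactly the derivations that truly depend on the latest increment — since any slack here would break the fixpoint characterisations of $D^\star,R^\star,A^\star$. On the other hand, the congruence conditions $\varrho_2^-\equiv\varrho_1^-\bmod|\varrho_{\mathsf{L}}|$ and $\varrho_3^+\equiv\varrho_4^+\bmod|\varrho_{\mathsf{R}}|$ in \textsc{Pds} are what make the detected periods commensurable with the pre-existing ones, which is a precondition for \texttt{Ext} and \texttt{Aln}, and hence for applying Theorem~\ref{theorem:periodicoperatorscorrect} at the $\dblsetminus$ and $\Cup$ steps; I would prove that these conditions indeed guarantee a common refinement of all periods involved.
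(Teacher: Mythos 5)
Your overall strategy is essentially the paper's: reduce everything to unfoldings via Theorem~\ref{theorem:periodicoperatorscorrect}, characterise each stage as the limit of a (transfinite) fixpoint operator seeded by $E^-$, the one-step-rederivable facts, and $E^+$ respectively, prove that the semina\"ive operator computes the same sequence as the na\"ive one, and then establish the genuinely DatalogMTL-specific lemma that a period detected by \textsc{Pds} over two windows of length $2\mathsf{depth}(\Pi)$ on the $(\Pi,E)$-ruler recurs over the whole timeline (the paper does this via its ``depth-representation'' lemma and the deletion-/rederivation-/insertion-saturation theorems), together with a counting argument for termination. Your identification of the hard points — faithfulness of the finite periodic encoding, exactness of $\Pi\langle I\myvdots\Delta\rangle$, and commensurability of the new periods with $\mathds{I}.\varrho_{\mathsf{L}},\mathds{I}.\varrho_{\mathsf{R}}$ — matches where the paper spends its effort.

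There is, however, one intermediate invariant that is false as stated and would block you if pursued literally: the claim that after rederivation $(M-D^\star)\cup R^\star = M\cap M'$. The correct invariant, and the one the paper proves (Proposition~\ref{proposition:rederivationcorrect}), is $(M-D^\star)\cup R^\star=\mathfrak{C}_{\Pi,E\setminus E^-}$, which can be a \emph{strict} subset of $M\cap M'$. For a counterexample take $\Pi=\{C(a)\gets A(a),\; C(a)\gets B(a)\}$, $E=E^-=\{A(a)@0\}$, $E^+=\{B(a)@0\}$: the fact $C(a)@0$ lies in both $\mathfrak{C}_{\Pi,E}$ and $\mathfrak{C}_{\Pi,E'}$, yet it is overdeleted and cannot be rederived before $E^+$ is inserted, since $\mathfrak{C}_{\Pi,E\setminus E^-}=\mathfrak{I}_\emptyset$; it is only restored during \textsc{Insert}. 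The $\supseteq$ direction of your claimed equality therefore fails, because membership of a fact in $M'$ may depend on derivations that use $E^+$, which are unavailable at rederivation time. The fix is simply to target $\mathfrak{C}_{\Pi,E\setminus E^-}$ as the post-rederivation invariant and let the insertion stage close the gap to $\mathfrak{C}_{\Pi,E'}$; with that correction the rest of your plan goes through along the paper's lines.
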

\begin{psketch}
    We first show that $\mathsf{unfold}(\mathds{D})$ computed by the \textsc{Overdelete} contains $\mathfrak{C}_{\Pi,E}-\mathfrak{C}_{\Pi,E\setminus E^-\cup E^+}$, which means every fact that no longer holds because of deleting $E^-$ will be removed, and then we show that the union of  $\mathsf{unfold}(\mathds{R}),\mathsf{unfold}(\mathds{A})$ computed by the \textsc{Rederive},\textsc{Insert} and $\mathsf{unfold}(\mathds{I})$ after deletion equals $\mathfrak{C}_{\Pi,E\setminus E^-\cup E^+}$, which involves proving mistakenly deleted facts are rederived and every new fact that holds because of $E^+$ are inserted. 
\end{psketch}
\section{Evaluation}

We implemented the proposed DRed$_\text{MTL}$ algorithm based on an efficient DatalogMTL reasoner MeTeoR \cite{DBLP:conf/aaai/WangHWG22} and empirically tested the performance of our implementation on three publicly available datasets. We chose MeTeoR as its latest version supports materialisation for DatalogMTL with bounded intervals~\cite{DBLP:conf/aaai/WalegaZWG23}, which allows us to directly compare the performance of DRed$_\text{MTL}$ with rematerialisation from scratch. The source code of our implementation, the benchmarks we used, as well as an extended technical report containing all detailed proofs, are available online.

\begin{table}[b]
    \centering
    \renewcommand{\arraystretch}{1.5} 
    \begin{tabular}{c|c|c|c}
        ~ & $\text{LUBM}_t$ & iTemporal & Meteorological \\ \hline
        $|\Pi_{mtl}|$ & 29 & 3 & 2 \\ \hline
        $|\Pi_{no\_mtl}|$ & 56 & 8 & 2 \\ \hline
        $|\Pi|$ & 85 & 11 & 4 \\ \hline
        $Recursive$ & Yes & Yes & No \\ \hline
        $|E|$ & 630.5k & 46.41k & 62.01M \\ \hline
        $|I|$ & 1.426M & 30.62M & 62.48M \\
    \end{tabular}
    \caption{Dataset Statistics}
    \label{exp:Running times}
\end{table}

\begin{table*}[t]
\centering
\renewcommand{\arraystretch}{1.3}
{\small `Remat' stands for Rematerialisation from the updated set of explicitly given facts.}
\begin{tabular}{|c|c||c|c|c|c||c||c|c||c|}
\hline
\multirow{3}{*}{\centering Dataset} & \multirow{3}{*}{\centering $|E^{\pm}|$} & \multicolumn{5}{c||}{Deletion} & \multicolumn{3}{c|}{Insertion} \\
\cline{3-10}
 & & \multicolumn{4}{c||}{\textsc{DRed}$_{MTL}$} & Remat & \multicolumn{2}{c||}{\textsc{DRed}$_{MTL}$} & Remat \\ 
\cline{3-10}
 & & Time(s) & $|D|$ & $|R|$ & $|A|$ & Time(s) & Time(s) & $|A|$ & Time(s) \\ 
\hline
\multirow{2}{*}{$\text{LUBM}_t$} 
 & \centering 100 & 0.7k & 11.5k & 1.5k & 11.4k & 48.6k & 0.4k & 0.2k & 48.5k \\ 
 & \centering 63.1k & 3.4k & 372.1k & 143.5k & 251.9k & 45.0k & 1.1k & 190.2k & 48.2k \\ 
\hline
\multirow{2}{*}{iTemporal} 
 & \centering 100 & 8.1k & 244.5k & 274.0k & 274.4k & 52.7k & 8.7k & 231.2k & 52.8k \\ 
 & \centering 4.6k & 12.6k & 8.963M & 1.286M & 1.288M & 52.3k & 12.4k & 7.829M & 52.6k \\ 
\hline
\multirow{2}{*}{Meteorological} 
 & \centering 100 & 10 & 102 & 1 & 1 & 0.7k & 11 & 101 & 0.7k \\ 
 & \centering 6.201M & 1.4k & 6.271M & 1105 & 1105 & 0.7k & 1.2k & 6.270M & 0.7k \\ 
\hline
\end{tabular}
\caption{Evaluation Results}
\label{tab:combined_test}
\end{table*}

\begin{table*}[t]
\centering

\renewcommand{\arraystretch}{1.25}
\begin{tabular}{|c|c||>{\centering\arraybackslash}p{3.75cm}|>{\centering\arraybackslash}p{3.75cm}|>{\centering\arraybackslash}p{3.75cm}|}
\hline
Dataset & $|E^{-}|$ &
Overdeletion & Rederivation & Insertion \\
\hline

\multirow{2}{*}{$\text{LUBM}_t$}
& 100    & 59.0\% & 2.4\%  & 38.6\% \\ 
& 63.1k  & 15.7\% & 67.9\% & 16.4\% \\
\hline

\multirow{2}{*}{iTemporal}
& 100    & 70.5\% & 10.6\% & 18.9\% \\ 
& 4.6k   & 64.2\% & 29.9\% & 5.9\%  \\
\hline

\multirow{2}{*}{Meteorological}
& 100     & 62.8\% & 21.8\% & 15.4\% \\ 
& 6.201M & 92.2\% & 7.7\%  & 0.1\%  \\
\hline

\end{tabular}
\caption{Deletion Test Runtime Breakdown}
\label{tab:breakdown}
\end{table*}

\subsubsection{Benchmarks}
$\text{LUBM}_t$ ~\cite{DBLP:conf/aaai/WangHWG22} is a temporal version of the well-known LUBM benchmark~\cite{DBLP:journals/ws/GuoPH05LUBM}. It has a recursive program consisting of 85 rules, of which 29 have temporal operators in them (denoted as $|\Pi_{mtl}|$) and 56 do not (denoted as $|\Pi_{no\_mtl}|$). The iTemporal dataset is generated from a temporal benchmark generator developed by \citet{DBLP:conf/icde/BellomariniNS22}. Its program is highly recursive and consists of 12 rules. Finally, the Meteorological dataset~\cite{Weather} contains long-term meteorological observations. It has a nonrecursive program consisting of four rules, of which two contain temporal operators. These datasets were collected and made publicly available by~\citet{wang2025goal}, and we used them without any modification. The statistics of these datasets is given in Table \ref{exp:Running times}, where $|E|$ and $|I|$ are the number of explicitly given facts and the number of facts in the saturated interpretation, respectively. The ratio between $|I|$ and $|E|$ is usually a good indicator of the recursiveness of the corresponding rule set: the larger the ratio is, the more recursive and complex the rule set is and the more likely it is to generate a greater number of facts. Indeed, our choice of benchmarks is a nice mixture of highly recursive (iTemporal), mildly recursive (LUBM$_t$) and nonrecursive datasets, which allows us to test the potential of our reasoning algorithm in different possible application scenarios.



\subsubsection{Test Settings}
We conducted our experiment on a server with 256GB RAM and an Intel Xeon Platinum 8269CL 2.50GHz CPU, running Fedora Linux 40, kernel version Linux 6.10.10-200.fc40.x86\_64. Our evaluation primarily examines the capabilities of DRed$_\text{MTL}$ in handling both deletions and insertions. Table \ref{tab:combined_test} summarises the performance comparison of our algorithm against rematerialisation. For our approach, we record the wall-clock time of running Algorithm \ref{algo:DREDMTL} to handle the updates. For the baseline, we record the wall-clock time that MeTeoR spends on computing the canonical representation over the updated set of explicitly given facts from scratch. In addition to the time metrics, to better reflect the workload of DRed$_\text{MTL}$, we also record the number of facts derived in the three stages of our algorithm, i.e.,  overdeletion ($|D|$), rederivation ($|R|$) and insertion ($|A|$). Note that for insertion, the sets of overdeleted and rederived facts are always empty, so we only record the number of inserted facts, $|A|$.



Our evaluation considers both small-scale and large-scale updates. For small-scale update tests, we randomly selected 100 facts and ran DRed$_\text{MTL}$ to deal with the deletion; then we added these facts back and recorded the time DRed$_\text{MTL}$ took to handle the insertion. Large-scale tests were performed in a similar fashion, except that 10\% of the original dataset were removed and added back; the exact numbers of deleted facts are shown in Table 3. For each test case, three test runs with different (randomly selected) updates were performed; the results showed no significant variation in terms of running time. Therefore, for the ease of presentation we only reported the results of one run for each test case. Finally, for each test run, we made sure that the periodic materialisation produced by DRed$_\text{MTL}$ ($\mathds{I}_1$) is equivalent to that produced by rematerialisation ($\mathds{I}_2$): this is achieved by verifying that both ${\mathds{I}_1 \dblsetminus \mathds{I}_2}$ and ${\mathds{I}_2 \dblsetminus \mathds{I}_1}$ are empty.

\subsubsection{Results}

Our evaluation shows that DRed$_\text{MTL}$ consistently outperforms rematerialisation for all small deletions and insertions. As shown in Table \ref{tab:combined_test}, on $\text{LUBM}_t$, DRed$_\text{MTL}$ is 69.4 times faster than the baseline for small deletion, and 121.3 times faster for small insertion. On the nonrecursive Meteorological dataset, DRed$_\text{MTL}$ achieves similar performance improvement for small updates. On iTemporal, the performance improvement is more modest: the speedup is around six times. This is so since the program of iTemporal is highly recursive, making incremental maintenance especially challenging: as one can see, deleting only 100 facts leads to the overdeletion of over 244 thousand facts.


For large updates, DRed$_\text{MTL}$ achieved a significant performance boost on the $\text{LUBM}_t$ and iTemporal datasets. When deleting 10\% of the data, DRed$_\text{MTL}$ is 13.2 times faster on $\text{LUBM}_t$ and 4.2 times faster on iTemporal. For large insertion, the improvements are 43.8 and 4.2 times, respectively. Interestingly, on the Meteorological dataset, DRed$_\text{MTL}$ is slower than rematerialisation in dealing with large updates. Notice that for this dataset, the ratio between $|I|$ and $|E|$ is rather small, as depicted in Table~\ref{exp:Running times}, indicating that only a small fraction of the materialisation are inferred facts; moreover, the program is nonrecursive, so no effort is required for identifying the repeating pattern inside the materialisation. As such, for large updates on this dataset, DRed$_\text{MTL}$, which is based on efficient seminaive reasoning and period identification, losts its advantage.

To further analyse the performance of the proposed algorithm, we profiled our system on the deletion test cases and reported the runtime breakdown (by stage of reasoning) in Table~\ref{tab:breakdown}. It could be readily observed that in most cases, overdeletion is the most time consuming step of the algorithm. This is so since overdeletion produces the largest number of facts across the three stages. The only exception is the case of large deletion for LUBM$_t$: rederivation consumes more time than overdeletion and insertion combined. Indeed, although rederivation produces less facts than overdeletion, it involves evaluating rules `backwards' from head to body, which, as observed by~\citet{HuMH18}, could be more expensive than the semina\"ive evaluation taking place in overdeletion and insertion. In the Datalog setting, enhancing rederivation with counting could help alleviate this issue, but extending the counting technique to DatalogMTL is beyond the scope of this paper.


Overall our test results suggest that the proposed algorithm improves substantially and consistently over rematerialisation, for both small and large updates. In some test cases, the running time decreased from several hours to several minutes, or from an hour to a few seconds, demonstrating the potential of deploying the proposed approach in industrial applications where short service response time is highly desirable.

\section{Conclusion and Future Work}
In this paper, we have introduced a new technique for incrementally updating DatalogMTL materialisations. Compared with recomputing the materialisation from scratch, our technique achieves significant performance improvements, especially when the updates are small. 

We see many exciting future research directions. From a practical perspective, it would be interesting to see how well the proposed approach works in industrial scenarios such as IoT anomaly detection~\cite{zhangIoT}: in typical such applications, DatalogMTL can be used to model anomaly detection rules that are triggered by streams composed of timestamped sensor data; the ability to reason incrementally offered by our method is crucial for ensuring efficient and reliable fault alerts. Moreover, we shall consider comparing the performance of our system with well-established stream reasoning frameworks such as C-SPARQL~\cite{www09csparql} and CQELS~\cite{PhuocDPH11}. While the underlying languages are quite different, it should be possible to transform our reasoning workload (at least the nonrecursive cases) to a C-SPARQL or CQELS workload, and test it on the corresponding engine. Last but not least, DRed is not the only incremental update algorithm for Datalog materialisations; algorithms such as B/F, FBF, and DRed$^c$ are popular alternatives of DRed and sometimes offer superior performance. It would thus be useful to consider extending these incremental maintenance algorithms for Datalog to support DatalogMTL reasoning, and to compare the performance of the adapted algorithms with that of ours.

\section{Acknowledgements}

This work was generously funded by National Science and Technology Major Project of China under grant number 2025ZD1600800 and National Natural Science Foundation of China under grant number 62206169. We thank the anonymous reviewers for their constructive comments that helped greatly in shaping the final version of this paper.

\bibliography{aaai2026}

@INPROCEEDINGS{zhangIoT,
  author={Zhang, Fengming and Hu, Pan and Cai, Hongming and Jiang, Lihong},
  booktitle={2024 27th International Conference on Computer Supported Cooperative Work in Design (CSCWD)}, 
  title={Parallel Collaborative Reasoning Approaches Based on DatalogMTL in IoT Scenarios}, 
  year={2024},
  volume={},
  number={},
  pages={1055-1060},
  keywords={Access control;Correlation;Federated learning;Scalability;Semantics;Collaboration;Cognition;Internet of Things (IoT);DatalogMTL;temporal semantic reasoning;parallel collaboration},
  doi={10.1109/CSCWD61410.2024.10580829}
}

@inproceedings{wang2025goal,
  author       = {Shaoyu Wang and
                  Kaiyue Zhao and
                  Dongliang Wei and
                  Przemyslaw Andrzej Walega and
                  Dingmin Wang and
                  Hongming Cai and
                  Pan Hu},
  editor       = {Toby Walsh and
                  Julie Shah and
                  Zico Kolter},
  title        = {Goal-Driven Reasoning in DatalogMTL with Magic Sets},
  booktitle    = {The Thirty-Ninth AAAI Conference on Artificial Intelligence, February 25 - March 4, 2025, Philadelphia, PA, {USA}},
  pages        = {15203--15211},
  publisher    = {{AAAI} Press},
  year         = {2025},
  url          = {https://doi.org/10.1609/aaai.v39i14.33668},
  doi          = {10.1609/AAAI.V39I14.33668},
  timestamp    = {Tue, 12 Aug 2025 07:47:00 +0200},
  biburl       = {https://dblp.org/rec/conf/aaai/WangZWWWC025.bib},
  bibsource    = {dblp computer science bibliography, https://dblp.org}
}

@inproceedings{PhuocDPH11,
  author       = {Danh Le Phuoc and
                  Minh Dao{-}Tran and
                  Josiane Xavier Parreira and
                  Manfred Hauswirth},
  editor       = {Lora Aroyo and
                  Chris Welty and
                  Harith Alani and
                  Jamie Taylor and
                  Abraham Bernstein and
                  Lalana Kagal and
                  Natasha Fridman Noy and
                  Eva Blomqvist},
  title        = {A Native and Adaptive Approach for Unified Processing of Linked Streams
                  and Linked Data},
  booktitle    = {The Semantic Web - {ISWC} 2011 - 10th International Semantic Web Conference,
                  Bonn, Germany, October 23-27, 2011, Proceedings, Part {I}},
  series       = {Lecture Notes in Computer Science},
  volume       = {7031},
  pages        = {370--388},
  publisher    = {Springer},
  year         = {2011},
  url          = {https://doi.org/10.1007/978-3-642-25073-6\_24},
  doi          = {10.1007/978-3-642-25073-6\_24},
  timestamp    = {Sun, 19 Jan 2025 13:16:56 +0100},
  biburl       = {https://dblp.org/rec/conf/semweb/PhuocDPH11.bib},
  bibsource    = {dblp computer science bibliography, https://dblp.org}
}

@inproceedings{www09csparql,
author = {Barbieri, Davide Francesco and Braga, Daniele and Ceri, Stefano and Della Valle, Emanuele and Grossniklaus, Michael},
title = {C-SPARQL: SPARQL for continuous querying},
year = {2009},
isbn = {9781605584874},
publisher = {Association for Computing Machinery},
address = {New York, NY, USA},
url = {https://doi.org/10.1145/1526709.1526856},
doi = {10.1145/1526709.1526856},
abstract = {C-SPARQL is an extension of SPARQL to support continuous queries, registered and continuously executed over RDF data streams, considering windows of such streams. Supporting streams in RDF format guarantees interoperability and opens up important applications, in which reasoners can deal with knowledge that evolves over time. We present C-SPARQL by means of examples in Urban Computing.},
booktitle = {Proceedings of the 18th International Conference on World Wide Web},
pages = {1061–1062},
numpages = {2},
keywords = {RDF, SPARQL, data streams},
location = {Madrid, Spain},
series = {WWW '09}
}

@inproceedings{HuMH18,
  author       = {Pan Hu and
                  Boris Motik and
                  Ian Horrocks},
  editor       = {Sheila A. McIlraith and
                  Kilian Q. Weinberger},
  title        = {Optimised Maintenance of Datalog Materialisations},
  booktitle    = {Proceedings of the Thirty-Second {AAAI} Conference on Artificial Intelligence, New Orleans, Louisiana, USA, February
                  2-7, 2018},
  pages        = {1871--1879},
  publisher    = {{AAAI} Press},
  year         = {2018},
  url          = {https://doi.org/10.1609/aaai.v32i1.11554},
  doi          = {10.1609/AAAI.V32I1.11554},
  timestamp    = {Sat, 30 Sep 2023 09:33:09 +0200},
  biburl       = {https://dblp.org/rec/conf/aaai/HuMH18.bib},
  bibsource    = {dblp computer science bibliography, https://dblp.org}
}

@article{motik2019maintenance,
  title={Maintenance of datalog materialisations revisited},
  author={Motik, Boris and Nenov, Yavor and Piro, Robert and Horrocks, Ian},
  journal={Artificial Intelligence},
  volume={269},
  pages={76--136},
  year={2019},
  publisher={Elsevier}
}

@inproceedings{mori2022neural,
  title={Neural Machine Translation for Fact-checking Temporal Claims},
  author={Mori, Marco and Papotti, Paolo and Bellomarini, Luigi and Giudice, Oliver},
  booktitle = {Fact Extraction and VERification Workshop},
  doi = {"10.18653/v1/2022.fever-1.8"},
  pages={78--82},
  year={2022}
}

@article{walkega2023stream,
  title={Stream reasoning with DatalogMTL},
  author={Wa{\l}{\k{e}}ga, Przemys{\l}aw A and Kaminski, Mark and Wang, Dingmin and Grau, Bernardo Cuenca},
  journal={Journal of Web Semantics},
  volume={76},
  pages={100776},
  year={2023},
  publisher={Elsevier}
}

@inproceedings{guzel2018ontop,
  title={Ontop-temporal: a tool for ontology-based query answering over temporal data},
  author={G{\"u}zel Kalayci, Elem and Xiao, Guohui and Ryzhikov, Vladislav and Kalayci, Tahir Emre and Calvanese, Diego},
  booktitle={ACM International Conference on Information and Knowledge Management},
  pages={1927--1930},
  year={2018}
}

@article{DBLP:journals/jair/BrandtKRXZ18,
  author       = {Sebastian Brandt and
                  Elem G{\"{u}}zel Kalayci and
                  Vladislav Ryzhikov and
                  Guohui Xiao and
                  Michael Zakharyaschev},
  title        = {Querying Log Data with Metric Temporal Logic},
  journal      = {J. Artif. Intell. Res.},
  volume       = {62},
  pages        = {829--877},
  year         = {2018},
  url          = {https://doi.org/10.1613/jair.1.11229},
  doi          = {10.1613/JAIR.1.11229},
  timestamp    = {Sat, 30 Sep 2023 10:17:34 +0200},
  biburl       = {https://dblp.org/rec/journals/jair/BrandtKRXZ18.bib},
  bibsource    = {dblp computer science bibliography, https://dblp.org}
}

@inproceedings{DBLP:conf/aaai/WangHWG22,
  author       = {Dingmin Wang and
                  Pan Hu and
                  Przemys{\l}aw A. Wa{\l}\c{e}ga and
                  Bernardo {Cuenca Grau}},
  title        = {MeTeoR: Practical Reasoning in Datalog with Metric Temporal Operators},
  booktitle    = {{AAAI} Conference on Artificial Intelligence},
  pages        = {5906--5913},
  year         = {2022},
  url          = {https://doi.org/10.1609/aaai.v36i5.20535},
  doi          = {10.1609/AAAI.V36I5.20535},
  timestamp    = {Sat, 21 Oct 2023 10:46:17 +0200},
  biburl       = {https://dblp.org/rec/conf/aaai/WangHWG22.bib},
  bibsource    = {dblp computer science bibliography, https://dblp.org}
}

@article{DBLP:journals/tkde/CeriGT89,
  author       = {Stefano Ceri and
                  Georg Gottlob and
                  Letizia Tanca},
  title        = {What you Always Wanted to Know About Datalog (And Never Dared to Ask)},
  journal      = {{IEEE} Trans. Knowl. Data Eng.},
  volume       = {1},
  number       = {1},
  pages        = {146--166},
  year         = {1989},
  url          = {https://doi.org/10.1109/69.43410},
  doi          = {10.1109/69.43410},
  timestamp    = {Wed, 14 Nov 2018 10:17:27 +0100},
  biburl       = {https://dblp.org/rec/journals/tkde/CeriGT89.bib},
  bibsource    = {dblp computer science bibliography, https://dblp.org}
}

@article{DBLP:journals/rts/Koymans90,
  author       = {Ron Koymans},
  title        = {Specifying Real-Time Properties with Metric Temporal Logic},
  journal      = {Real Time Syst.},
  volume       = {2},
  number       = {4},
  pages        = {255--299},
  year         = {1990},
  url          = {https://doi.org/10.1007/BF01995674},
  doi          = {10.1007/BF01995674},
  timestamp    = {Thu, 10 Sep 2020 14:37:57 +0200},
  biburl       = {https://dblp.org/rec/journals/rts/Koymans90.bib},
  bibsource    = {dblp computer science bibliography, https://dblp.org}
}

@inproceedings{DBLP:conf/aaai/WalegaKG19,
  author       = {Przemys{\l}aw Wa{\l}\c{e}ga and
                  Mark Kaminski and
                  Bernardo {Cuenca Grau}},
  title        = {Reasoning over Streaming Data in Metric Temporal Datalog},
  booktitle    = {The Thirty-Third {AAAI} Conference on Artificial Intelligence, {AAAI}
                  2019, The Thirty-First Innovative Applications of Artificial Intelligence
                  Conference, {IAAI} 2019, The Ninth {AAAI} Symposium on Educational
                  Advances in Artificial Intelligence, {EAAI} 2019, Honolulu, Hawaii,
                  USA, January 27 - February 1, 2019},
  pages        = {3092--3099},
  publisher    = {{AAAI} Press},
  year         = {2019},
  url          = {https://doi.org/10.1609/aaai.v33i01.33013092},
  doi          = {10.1609/AAAI.V33I01.33013092},
  timestamp    = {Mon, 04 Sep 2023 12:29:24 +0200},
  biburl       = {https://dblp.org/rec/conf/aaai/WalegaKG19.bib},
  bibsource    = {dblp computer science bibliography, https://dblp.org}
}

@inproceedings{DBLP:conf/aaai/WalegaZWG23,
  author       = {Przemys{\l}aw A. Wa{\l}\c{e}ga and
                  Michal Zawidzki and
                  Dingmin Wang and
                  Bernardo {Cuenca Grau}},
  title        = {Materialisation-Based Reasoning in DatalogMTL with Bounded Intervals},
  booktitle    = {{AAAI} Conference on Artificial Intelligence},
  pages        = {6566--6574},
  year         = {2023},
  url          = {https://doi.org/10.1609/aaai.v37i5.25807},
  doi          = {10.1609/AAAI.V37I5.25807},
  timestamp    = {Sun, 12 Nov 2023 02:11:35 +0100},
  biburl       = {https://dblp.org/rec/conf/aaai/WalegaZWG23.bib},
  bibsource    = {dblp computer science bibliography, https://dblp.org}
}

@inproceedings{DBLP:conf/edbt/ColomboBCL23,
  author       = {Andrea Colombo and
                  Luigi Bellomarini and
                  Stefano Ceri and
                  Eleonora Laurenza},
  title        = {Smart Derivative Contracts in DatalogMTL},
  booktitle    = {International Conference on Extending Database Technology,
                  {EDBT}},
  pages        = {773--781},
  year         = {2023},
  url          = {https://doi.org/10.48786/edbt.2023.65},
  doi          = {10.48786/EDBT.2023.65},
  timestamp    = {Sat, 29 Apr 2023 13:06:22 +0200},
  biburl       = {https://dblp.org/rec/conf/edbt/ColomboBCL23.bib},
  bibsource    = {dblp computer science bibliography, https://dblp.org}
}

@inproceedings{DBLP:conf/icde/BellomariniNS22,
  author       = {Luigi Bellomarini and
                  Markus Nissl and
                  Emanuel Sallinger},
  title        = {iTemporal: An Extensible Generator of Temporal Benchmarks},
  booktitle    = {International Conference on Data Engineering, {ICDE} 2022},
  pages        = {2021--2033},
  year         = {2022},
  url          = {https://doi.org/10.1109/ICDE53745.2022.00197},
  doi          = {10.1109/ICDE53745.2022.00197},
  timestamp    = {Fri, 05 Aug 2022 16:24:02 +0200},
  biburl       = {https://dblp.org/rec/conf/icde/BellomariniNS22.bib},
  bibsource    = {dblp computer science bibliography, https://dblp.org}
}

@inproceedings{DBLP:conf/edbt/NisslS22,
  author       = {Markus Nissl and
                  Emanuel Sallinger},
  title        = {Modelling Smart Contracts with DatalogMTL},
  booktitle    = {Workshops of the {EDBT/ICDT}},
  volume       = {3135},
  year         = {2022},
  url          = {https://ceur-ws.org/Vol-3135/EcoFinKG\_2022\_paper4.pdf},
  timestamp    = {Fri, 10 Mar 2023 16:23:45 +0100},
  biburl       = {https://dblp.org/rec/conf/edbt/NisslS22.bib},
  bibsource    = {dblp computer science bibliography, https://dblp.org}
}

@article{DBLP:journals/ws/GuoPH05LUBM,
  author       = {Yuanbo Guo and
                  Zhengxiang Pan and
                  Jeff Heflin},
  title        = {{LUBM:} {A} benchmark for {OWL} knowledge base systems},
  journal      = {J. Web Semant.},
  volume       = {3},
  number       = {2-3},
  pages        = {158--182},
  year         = {2005},
  url          = {https://doi.org/10.1016/j.websem.2005.06.005},
  doi          = {10.1016/J.WEBSEM.2005.06.005},
  timestamp    = {Mon, 28 Aug 2023 21:43:44 +0200},
  biburl       = {https://dblp.org/rec/journals/ws/GuoPH05.bib},
  bibsource    = {dblp computer science bibliography, https://dblp.org}
}

@article {Weather,
      author = "E. P. Maurer and A. W. Wood and J. C. Adam and D. P. Lettenmaier and B. Nijssen",
      title = "A Long-Term Hydrologically Based Dataset of Land Surface Fluxes and States for the Conterminous United States",
      journal = "Journal of Climate",
      year = "2002",
      publisher = "American Meteorological Society",
      address = "Boston MA, USA",
      volume = "15",
      number = "22",
      doi = "10.1175/1520-0442(2002)015<3237:ALTHBD>2.0.CO;2",
      pages=      "3237 - 3251",
      url = "https://journals.ametsoc.org/view/journals/clim/15/22/1520-0442_2002_015_3237_althbd_2.0.co_2.xml"
}

@inproceedings{10.1145/170035.170066,
author = {Gupta, Ashish and Mumick, Inderpal Singh and Subrahmanian, V. S.},
title = {Maintaining views incrementally},
year = {1993},
isbn = {0897915925},
publisher = {Association for Computing Machinery},
address = {New York, NY, USA},
url = {https://doi.org/10.1145/170035.170066},
doi = {10.1145/170035.170066},
abstract = {We present incremental evaluation algorithms to compute changes to materialized views in relational and deductive database systems, in response to changes (insertions, deletions, and updates) to the relations. The view definitions can be in SQL or Datalog, and may use UNION, negation, aggregation (e.g. SUM, MIN), linear recursion, and general recursion.We first present a counting algorithm that tracks the number of alternative derivations (counts) for each derived tuple in a view. The algorithm works with both set and duplicate semantics. We present the algorithm for nonrecursive views (with negation and aggregation), and show that the count for a tuple can be computed at little or no cost above the cost of deriving the tuple. The algorithm is optimal in that it computes exactly those view tuples that are inserted or deleted. Note that we store only the number of derivations, not the derivations themselves.We then present the Delete and Rederive algorithm, DRed, for incremental maintenance of recursive views (negation and aggregation are permitted). The algorithm works by first deleting a superset of the tuples that need to be deleted, and then rederiving some of them. The algorithm can also be used when the view definition is itself altered.},
booktitle = {Proceedings of the 1993 ACM SIGMOD International Conference on Management of Data},
pages = {157–166},
numpages = {10},
location = {Washington, D.C., USA},
series = {SIGMOD '93}
}

@inproceedings{10.5555/645922.673479,
author = {Staudt, Martin and Jarke, Matthias},
title = {Incremental Maintenance of Externally Materialized Views},
year = {1996},
isbn = {1558603824},
publisher = {Morgan Kaufmann Publishers Inc.},
address = {San Francisco, CA, USA},
booktitle = {Proceedings of the 22th International Conference on Very Large Data Bases},
pages = {75–86},
numpages = {12},
series = {VLDB '96}
}

@InProceedings{10.1007/978-3-642-41335-3_41,
author="Urbani, Jacopo
and Margara, Alessandro
and Jacobs, Ceriel
and van Harmelen, Frank
and Bal, Henri",
editor="Alani, Harith
and Kagal, Lalana
and Fokoue, Achille
and Groth, Paul
and Biemann, Chris
and Parreira, Josiane Xavier
and Aroyo, Lora
and Noy, Natasha
and Welty, Chris
and Janowicz, Krzysztof",
title="DynamiTE: Parallel Materialization of Dynamic RDF Data",
booktitle="The Semantic Web -- ISWC 2013",
year="2013",
publisher="Springer Berlin Heidelberg",
address="Berlin, Heidelberg",
pages="657--672",
abstract="One of the main advantages of using semantically annotated data is that machines can reason on it, deriving implicit knowledge from explicit information. In this context, materializing every possible implicit derivation from a given input can be computationally expensive, especially when considering large data volumes.",
isbn="978-3-642-41335-3"
}

@inproceedings{10.1145/2063576.2063696,
author = {Ren, Yuan and Pan, Jeff Z.},
title = {Optimising ontology stream reasoning with truth maintenance system},
year = {2011},
isbn = {9781450307178},
publisher = {Association for Computing Machinery},
address = {New York, NY, USA},
url = {https://doi.org/10.1145/2063576.2063696},
doi = {10.1145/2063576.2063696},
abstract = {So far researchers in the Description Logics / Ontology communities mainly consider ontology reasoning services for static ontologies. The rapid development of the Semantic Web and its emerging data ask for reasoning technologies for dynamic knowledge streams. Existing work on stream reasoning is focused on lightweight languages such as RDF and RDFS. In this paper, we introduce the notion of Ontology Stream Management System (OSMS) and present a stream-reasoning approach based on Truth Maintenance System (TMS). We present optimised EL++ algorithm to reduce memory consumption. Our evaluations show that the optimisation improves TMS-enabled EL++ reasoning to deal with relatively large volumes of data and update efficiently.},
booktitle = {Proceedings of the 20th ACM International Conference on Information and Knowledge Management},
pages = {831–836},
numpages = {6},
keywords = {truth maintenance system, stream reasoning, ontology},
location = {Glasgow, Scotland, UK},
series = {CIKM '11}
}

@inproceedings{10.5555/2886521.2886537,
author = {Motik, Boris and Nenov, Yavor and Piro, Robert and Horrocks, Ian},
title = {Incremental update of datalog materialisation: the backward/forward algorithm},
year = {2015},
isbn = {0262511290},
publisher = {AAAI Press},
abstract = {Datalog-based systems often materialise all consequences of a datalog program and the data, allowing users' queries to be evaluated directly in the materialisation. This process, however, can be computationally intensive, so most systems update the materialisation incrementally when input data changes. We argue that existing solutions, such as the well-known Delete/Rederive (DRed) algorithm, can be inefficient in cases when facts have many alternate derivations. As a possible remedy, we propose a novel Backward/Forward (B/F) algorithm that tries to reduce the amount of work by a combination of backward and forward chaining. In our evaluation, the B/F algorithm was several orders of magnitude more efficient than the DRed algorithm on some inputs, and it was never significantly less efficient.},
booktitle = {Proceedings of the Twenty-Ninth AAAI Conference on Artificial Intelligence},
pages = {1560–1568},
numpages = {9},
location = {Austin, Texas},
series = {AAAI'15}
}

@inproceedings{Brandt2017,
author = {Brandt, S. and Kalayc\i{}, E. G\"{u}zel and Kontchakov, R. and Ryzhikov, V. and Xiao, G. and Zakharyaschev, M.},
title = {Ontology-based data access with a horn fragment of metric temporal logic},
year = {2017},
publisher = {AAAI Press},
abstract = {We advocate datalogMTL, a datalog extension of a Horn fragment of the metric temporal logic MTL, as a language for ontology-based access to temporal log data. We show that datalogMTL is EXPSPACE-complete even with punctual intervals, in which case MTL is known to be undecidable. Non-recursive datalogMTL turns out to be PSPACE-complete for combined complexity and in AC0 for data complexity. We demonstrate by two real-world use cases that nonrecursive datalogMTL programs can express complex temporal concepts from typical user queries and thereby facilitate access to log data. Our experiments with Siemens turbine data and MesoWest weather data show that datalogMTL ontology-mediated queries are efficient and scale on large datasets of up to 11GB.},
booktitle = {Proceedings of the Thirty-First AAAI Conference on Artificial Intelligence},
pages = {1070–1076},
numpages = {7},
location = {San Francisco, California, USA},
series = {AAAI'17}
}
\onecolumn
\appendix
\begin{appendices}
\section{Proof of Correctness for Algorithms~\ref{algo:periodic_minus} 
and~\ref{algo:periodic_union}}

In this section, we prove that when applied to arbitrary periodic materialisations $\mathds{I}_1$ and $\mathds{I}_2$, Algorithms~\ref{algo:periodic_minus} 
and~\ref{algo:periodic_union} correctly compute $\mathds{I}_1 \dblsetminus \mathds{I}_2 $ and $\mathds{I}_1 \Cup\mathds{I}_2$, respectively.

\theoremoperatorcorrectness* 

We begin with a simple scenario where $\mathds{I}_1$ and $\mathds{I}_2$ share the same period intervals. We prove that the operators $\dblsetminus$ and $\Cup$ can be implemented using standard set difference
($\setminus$) and set union ($\cup$), respectively, as stated in Claim~\ref{claim:alignedMinus} and Claim~\ref{claim:alignedUnion}. Subsequently, in Claim~\ref{claim:minuscorrect} and Claim~\ref{claim:unioncorrect}, we prove that our algorithm ensures that $\mathds{I}_1$ and $\mathds{I}_2$ are aligned in terms of their period intervals before applying set difference and set union.

\begin{claim}\label{claim:alignedMinus}
    For periodic materialisation $\mathds{I}_1=\langle I_1,\varrho_{\mathsf
L},\varrho_{\mathsf
R}\rangle$ and $\mathds{I}_2=\langle I_2,\varrho_{\mathsf L},\varrho_{\mathsf R}\rangle$, such that $\mathsf{unfold}(\mathds{I}_2)\subseteq \mathsf{unfold}(\mathds{I}_1)$. Then the following holds:
    \begin{align*}
        \mathsf{unfold}(\mathds{I}_1) -\mathsf{unfold}(\mathds{I}_2) =\mathsf{unfold}(\langle I_1\setminus I_2,\varrho_{\mathsf L},\varrho_{\mathsf R}\rangle )
    \end{align*}
\end{claim}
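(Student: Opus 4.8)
The plan is to reduce the claimed identity to a pointwise comparison of interpretations, exploiting that both operands share the same period intervals. First I would observe that, by the definition of interpretation difference (the greatest interpretation contained in $\mathfrak{I}_1$ and disjoint from $\mathfrak{I}_2$), the operator $-$ acts pointwise: for every time point $t$ and ground relational atom $R(\bm t)$ we have $(\mathfrak{I}_1-\mathfrak{I}_2),t\models R(\bm t)$ iff $\mathfrak{I}_1,t\models R(\bm t)$ and $\mathfrak{I}_2,t\not\models R(\bm t)$; that is, $(\mathfrak{I}_1-\mathfrak{I}_2)(t)=\mathfrak{I}_1(t)\setminus\mathfrak{I}_2(t)$ for each $t$. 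Writing $\mathfrak{C}_1=\mathsf{unfold}(\mathds{I}_1)$, $\mathfrak{C}_2=\mathsf{unfold}(\mathds{I}_2)$ and $\mathfrak{C}_3=\mathsf{unfold}(\langle I_1\setminus I_2,\varrho_{\mathsf L},\varrho_{\mathsf R}\rangle)$, it therefore suffices to show $\mathfrak{C}_1(t)\setminus\mathfrak{C}_2(t)=\mathfrak{C}_3(t)$ for every $t$. The crucial structural fact I would use is that all three unfoldings are built from the same period intervals $\varrho_{\mathsf L}$ and $\varrho_{\mathsf R}$, so they induce one common tiling of the timeline into the central interval $C=[\varrho_{\mathsf L}^-,\varrho_{\mathsf R}^+]$, the left blocks $\varrho_{\mathsf L}-n|\varrho_{\mathsf L}|$, and the right blocks $\varrho_{\mathsf R}+n|\varrho_{\mathsf R}|$ for $n\in\mathbb N$.

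Second, I would dispatch the central interval $C$. By the definition of unfolding, $\mathfrak{C}_1$ coincides with $I_1$ on $C$, $\mathfrak{C}_2$ with $I_2$, and $\mathfrak{C}_3$ with $I_1\setminus I_2$. Hence for each $t\in C$, since set difference of the saturated interpretations is pointwise, $\mathfrak{C}_1(t)\setminus\mathfrak{C}_2(t)=I_1(t)\setminus I_2(t)=(I_1\setminus I_2)(t)=\mathfrak{C}_3(t)$, so both sides agree on $C$.

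Third, I would handle the left and right extensions by shift-equivariance. Fixing a left block $\varrho_{\mathsf L}-n|\varrho_{\mathsf L}|$, each unfolding restricted to it is by definition the shift by $-n|\varrho_{\mathsf L}|$ of the corresponding interpretation restricted to $\varrho_{\mathsf L}$; because the three unfoldings share $\varrho_{\mathsf L}$, the same shift amount applies to all of them, and a point $t$ in the block corresponds to the single point $t+n|\varrho_{\mathsf L}|\in\varrho_{\mathsf L}\subseteq C$. Since shifting commutes with pointwise set difference, $\mathfrak{C}_1(t)\setminus\mathfrak{C}_2(t)=I_1(t+n|\varrho_{\mathsf L}|)\setminus I_2(t+n|\varrho_{\mathsf L}|)=(I_1\setminus I_2)(t+n|\varrho_{\mathsf L}|)=\mathfrak{C}_3(t)$. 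The right extension is symmetric, using $\varrho_{\mathsf R}$. Combining the three regions yields agreement at every time point, hence the claimed identity.

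The main obstacle I anticipate is not any single hard step but making the tiling argument airtight: I must confirm that the half-open period blocks cover each of the two half-lines without gaps or overlaps and abut the closed central interval consistently, so that every time point is assigned to exactly one region and one shift amount \emph{simultaneously} for all three unfoldings. The shared-period hypothesis is precisely what guarantees this uniform assignment, and it also makes $\mathsf{unfold}(\langle I_1\setminus I_2,\varrho_{\mathsf L},\varrho_{\mathsf R}\rangle)$ well defined. Finally, I would remark that the containment assumption $\mathsf{unfold}(\mathds{I}_2)\subseteq\mathsf{unfold}(\mathds{I}_1)$ is not actually needed for the pointwise computation above, but it keeps the statement within the intended regime and is convenient when this claim is later invoked, after alignment, in the proof of Claim~\ref{claim:minuscorrect}.
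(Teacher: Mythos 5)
Your proof is correct, and it takes a somewhat different --- and in fact tighter --- route than the paper's. The paper argues at the level of facts: it classifies each fact by its provenance (a base fact in $I_2$, a base fact in $I_1\setminus I_2$, or an unfolded copy of one of these) and checks membership on both sides of the equation in each case. You instead argue at the level of time points: you observe that interpretation difference is computed pointwise, tile the timeline into the central interval and the shifted copies of $\varrho_\mathsf{L}$ and $\varrho_\mathsf{R}$ (a tiling common to all three unfoldings precisely because the periods are shared), and verify the pointwise identity region by region using the fact that pointwise difference commutes with shifting. The two arguments exploit the same underlying structure, but yours buys some extra rigour: a fact-level case split has to worry about facts whose intervals straddle the boundary between the base segment and a periodic block, and about whether $I_1\setminus I_2$ denotes set difference of fact representations or pointwise difference of the underlying interpretations --- only the latter reading makes the claim true, and it is the one you adopt explicitly, whereas the paper's Cases~1--4 implicitly assume a normalised fact representation. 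Your closing remark that the containment hypothesis $\mathsf{unfold}(\mathds{I}_2)\subseteq\mathsf{unfold}(\mathds{I}_1)$ is not needed for the identity itself is also accurate; it only keeps the statement in the regime in which the claim is later invoked.
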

\begin{proof}
    We prove the claim by classifying all possible sources of facts in $\mathsf{unfold}(\mathds{I}_1)$ and checking that both sides of the equation contain or exclude them consistently.

    Let $f$ be a fact. We analyse the cases where $f$ may appear in left-hand side and how it relates to the right-hand side.
    
    Case 1: $f\in I_2$.
    
    Then, $f$ will appear in $\mathsf{unfold}(\mathds{I}_2)$ via unfolding. Since $\mathsf{unfold}(\mathds{I}_2)\subseteq \mathsf{unfold}(\mathds{I}_1)$, the fact $f$ is removed by the minus on the left-hand side. On the right-hand side, since $f\in I_2$, we have $f\not\in I_1\setminus I_2$, and hence $f$ is not included in $\mathsf{unfold}(\langle I_1\setminus I_2,\varrho_{\mathsf L},\varrho_{\mathsf R}\rangle )$. Therefore,  both sides exclude $f$.

    Case 2: $f\in I_1$ and $f \not\in I_2$ 

    Then, $f\in I_1 \setminus I_2$, and hence $f$ is included in the right-hand side unfolding. On the left-hand side, $f\in \mathsf{unflod}(\mathds{I}_1)$ and $f\not\in\mathsf{unfold}(\mathds{I}_2)$, so it remains after the set difference. Thus, both sides include $f$.
    
    Case 3: $f\in \mathsf{unfold}(\mathds{I}_2)$ and $f \not\in I_2$.

    This means $f$ appears in $\mathsf{unfold}(\mathds{I}_2)$ due to periodic repetition from some $f'\in I_2$, i.e., $f$ is derived from base fact $f' \in I_2$ by unfolding over $\varrho_{\mathsf L},\varrho_{\mathsf R}$. Then $f \in \mathsf{unfold}(\mathds{I}_2)\subseteq\mathsf{unfold}(\mathds{I}_1)$, so it will be removed on the left-hand side. On the right-hand side, the corresponding base fact $f'\in I_2$, so $f'\not\in I_1\setminus I_2$, and $f$ is not generated in the unfolding of the right-hand side. Hence, both sides exclude $f$.

    Case 4: $f\in \mathsf{unfold}(\mathds{I}_1)$, $f\not\in I_1$ and $f \not\in \mathsf{unfold}(\mathds{I}_2)$

    This means $f$ comes from unfolding some base fact $f'\in I_1\setminus I_2$. Then $f$ appears in $\mathsf{unfold}(\mathds{I}_1)$ and is not removed in the left-hand side. Since $f'\in I_1 \setminus I_2$, $f$ also appears in the unfolding of the right-hand side. Thus, both sides include $f$.

    Since in all cases the membership of $f$ in both sides coincides, the two sets are equal.
\end{proof}
\begin{claim}\label{claim:alignedUnion}
      For periodic materialisation $\mathds{I}_1=\langle I_1,\varrho_{\mathsf L},\varrho_{\mathsf R}\rangle$ and $\mathds{I}_2=\langle I_2,\varrho^l,\varrho^r\rangle$. Then the following holds:
      \begin{align*}
         \mathsf{unfold}(\mathds{I}_1) \cup\mathsf{unfold}(\mathds{I}_2) =\mathsf{unfold}(\langle I_1\cup I_2,\varrho_{\mathsf L},\varrho_{\mathsf R}\rangle )
    \end{align*}
\end{claim}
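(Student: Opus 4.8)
The plan is to mirror the structure of Claim~\ref{claim:alignedMinus}, while exploiting the fact that union interacts with unfolding far more smoothly than set difference does, so that a single pointwise argument replaces the four-way case split used there. Throughout I assume, in keeping with the surrounding text and with Claim~\ref{claim:alignedMinus}, that $\mathds{I}_1$ and $\mathds{I}_2$ carry the \emph{same} periods $\varrho_{\mathsf{L}},\varrho_{\mathsf{R}}$ (so the symbols $\varrho^l,\varrho^r$ in the statement should read $\varrho_{\mathsf{L}},\varrho_{\mathsf{R}}$); this shared-period hypothesis is indispensable, since the right-hand side repeats only with the periods $\lvert\varrho_{\mathsf{L}}\rvert$ and $\lvert\varrho_{\mathsf{R}}\rvert$. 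Unlike the minus case, no containment assumption between the two unfoldings is needed.

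The central step is to isolate, as a lemma, the following pointwise description of unfolding. For fixed periods $\varrho_{\mathsf{L}},\varrho_{\mathsf{R}}$ there is a \emph{representative map} $\rho\colon\mathbb{Q}\to[\varrho_{\mathsf{L}}^-,\varrho_{\mathsf{R}}^+]$, depending only on the periods and not on the base interpretation, such that for every interpretation $I$, every ground relational atom $R(\bm{t})$, and every $t\in\mathbb{Q}$,
\[
\mathsf{unfold}(\langle I,\varrho_{\mathsf{L}},\varrho_{\mathsf{R}}\rangle),\,t\models R(\bm{t})\iff I,\,\rho(t)\models R(\bm{t}).
\]
I would read $\rho$ straight off the three bullets of the unfolding definition: $\rho(t)=t$ for $t\in[\varrho_{\mathsf{L}}^-,\varrho_{\mathsf{R}}^+]$; for $t<\varrho_{\mathsf{L}}^-$, take the unique $n\ge 1$ with $t\in\varrho_{\mathsf{L}}-n\lvert\varrho_{\mathsf{L}}\rvert$ and set $\rho(t)=t+n\lvert\varrho_{\mathsf{L}}\rvert\in\varrho_{\mathsf{L}}$; symmetrically $\rho(t)=t-n\lvert\varrho_{\mathsf{R}}\rvert$ for $t>\varrho_{\mathsf{R}}^+$. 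The first bullet gives the equivalence on the window, and the second and third give it on the two periodic tails, since there the unfolding is literally a shift of $I\mid_{\varrho_{\mathsf{L}}}$ and $I\mid_{\varrho_{\mathsf{R}}}$.

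Granting the lemma, the claim drops out by a short chain of equivalences. Because the periods agree, the \emph{same} map $\rho$ governs $\mathds{I}_1$, $\mathds{I}_2$, and $\langle I_1\cup I_2,\varrho_{\mathsf{L}},\varrho_{\mathsf{R}}\rangle$; and because interpretation union is pointwise, an atom holds at a point in $I_1\cup I_2$ iff it holds there in $I_1$ or in $I_2$. Hence for every $t$ and $R(\bm{t})$, $\mathsf{unfold}(\langle I_1\cup I_2,\varrho_{\mathsf{L}},\varrho_{\mathsf{R}}\rangle),t\models R(\bm{t})$ iff $(I_1\cup I_2),\rho(t)\models R(\bm{t})$ iff $I_1,\rho(t)\models R(\bm{t})$ or $I_2,\rho(t)\models R(\bm{t})$ iff $\mathsf{unfold}(\mathds{I}_1),t\models R(\bm{t})$ or $\mathsf{unfold}(\mathds{I}_2),t\models R(\bm{t})$, which by pointwiseness of union is exactly $(\mathsf{unfold}(\mathds{I}_1)\cup\mathsf{unfold}(\mathds{I}_2)),t\models R(\bm{t})$. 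Since this holds for all $t$ and all atoms, the two interpretations coincide.

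The real work, and the main obstacle, lies not in the union step — which is essentially free once distributivity is in hand — but in making the representative-map lemma watertight. I must verify that the half-open tiles $\{\varrho_{\mathsf{L}}-n\lvert\varrho_{\mathsf{L}}\rvert\}_{n\ge 0}$ partition $(-\infty,\varrho_{\mathsf{L}}^+)$ and dually on the right, so that $\rho$ is total and single-valued; and I must confirm the three defining bullets are mutually consistent on their overlaps (the $n=0$ tile is $\varrho_{\mathsf{L}}$ itself, which already lies inside the directly copied window, so the shift-by-$0$ reading agrees with the window reading). If one prefers to stay closer to the paper's presentation, the same result can instead be obtained by a case analysis on the origin of a fact $f$, paralleling Claim~\ref{claim:alignedMinus} but with only the two sources $\mathds{I}_1$ and $\mathds{I}_2$ and none of the deletion-induced bookkeeping.
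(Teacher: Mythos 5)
Your proof is correct, and it takes a genuinely cleaner route than the paper's. The paper argues by a case analysis on the provenance of a fact $f$ in $\mathsf{unfold}(\mathds{I}_1)$ — either $f\in I_1$ or $f$ arises by periodic repetition from some $f'\in I_1$ — and checks that each such fact also appears in $\mathsf{unfold}(\langle I_1\cup I_2,\varrho_{\mathsf L},\varrho_{\mathsf R}\rangle)$; by symmetry the same holds for $\mathds{I}_2$, which yields one containment, with the converse direction left implicit. Your representative-map lemma isolates the principle underlying both cases — that the unfolding at any time point is determined by the base interpretation at a canonical representative $\rho(t)$ in the window, where $\rho$ depends only on the periods — and then the claim follows from the pointwise nature of interpretation union as a two-way chain of equivalences, so both containments come out simultaneously. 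This also sidesteps a looseness in the paper's provenance language (a fact whose interval straddles a tile boundary is not "derived from" a single base fact), since you work with punctual satisfaction throughout, which by the paper's own definition of containment suffices for interpretation equality. Your explicit insistence on the shared-period hypothesis is the right reading: it plays exactly the role that Claim~\ref{claim:unioncorrect} plays in the paper, namely reducing the general case to the aligned one via \texttt{Ext} and \texttt{Aln}. The residual obligations you flag — that the half-open tiles partition the two tails and that the $n=0$ tile agrees with the window clause — are routine given that $\varrho_{\mathsf L}$ has the form $[a,b)$ and $\varrho_{\mathsf R}$ the form $(c,d]$, so the lemma is watertight as sketched.
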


\begin{proof}
    We prove the claim by classifying all possible sources of facts in $\mathsf{unfold}(\mathds{I}_1)$ and verifying that both sides of the equation include them consistently. Due to the commutativity of union, it suffices to reason about facts in $\mathds{I}_1$.

    Let $f$ be a fact. We analyse the following cases:

    Case 1: $f \in I_1$.

    Then $f$ appears in $\mathsf{unfold}(\mathds{I}_1)$ and thus in the left-hand side. Since $f \in I_1 \subseteq I_1 \cup I_2$, $f$ is also in $\mathsf{unfold}(\langle I_1 \cup I_2, \varrho_{\mathsf L},\varrho_{\mathsf R} \rangle)$. Hence, both sides include $f$.

    Case 2: $f \in \mathsf{unfold}(\mathds{I}_1)$ and $f \notin I_1$.

    Then $f$ is derived from some $f' \in I_1$ via unfolding over the periodic intervals. Since $f' \in I_1 \subseteq I_1 \cup I_2$, this same unfolding also exists in $\mathsf{unfold}(\langle I_1 \cup I_2, \varrho_{\mathsf L},\varrho_{\mathsf R} \rangle)$, so $f$ appears in both sides.

    Since all facts from $\mathsf{unfold}(\mathds{I}_1)$ are preserved in the right-hand side, and the same argument holds symmetrically for $\mathds{I}_2$, we conclude the two sides are equal.
\end{proof}

\begin{claim}\label{claim:minuscorrect}
    For a periodic materialisation $\mathds{I}_1$, and for each periodic materialisation  $\mathds{I}_2$ such that $\mathsf{unfold}(\mathds{I}_2) \subseteq \mathsf{unfold}(\mathds{I}_1)$, if we run Algorithm~\ref{algo:periodic_minus} on them and yield $\mathds{I}_3$, then
    \begin{align*}
        \mathsf{unfold}(\mathds{I}_1) -\mathsf{unfold}(\mathds{I}_2) = \mathsf{unfold}(\mathds{I}_3)
    \end{align*}
\end{claim}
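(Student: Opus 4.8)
The plan is to reduce Claim~\ref{claim:minuscorrect} to the already-established aligned case of Claim~\ref{claim:alignedMinus}. The key observation is that Algorithm~\ref{algo:periodic_minus} does not immediately take a set difference; instead, it first calls \texttt{Ext} and \texttt{Aln} (separately on the left and right ends) to transform $\mathds{I}_1$ and $\mathds{I}_2$ into two new periodic materialisations that share \emph{identical} period intervals $\varrho_\mathsf{L}$ and $\varrho_\mathsf{R}$, and only then computes $I \coloneqq \mathds{D}_1.I \setminus \mathds{D}_2.I$. So the proof naturally splits into two parts: first, that the preprocessing by \texttt{Ext} and \texttt{Aln} is semantics-preserving and produces aligned inputs; second, that on aligned inputs the plain set difference is correct, which is exactly Claim~\ref{claim:alignedMinus}.

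First I would pin down the semantic invariant of the two auxiliary functions. As stated in the text, both \texttt{Ext} and \texttt{Aln} only alter the finite representation and never change which facts hold at any time point; formally, if $(\mathds{D}'_1, \mathds{D}'_2)$ (or the aligned triple) is produced from $(\mathds{D}_1, \mathds{D}_2)$, then $\mathsf{unfold}(\mathds{D}'_i) = \mathsf{unfold}(\mathds{D}_i)$ for $i \in \{1,2\}$. I would establish this as a lemma: \texttt{Ext} extends each period to the least common multiple of the two period lengths by copying the periodic segment, which does not change the unfolded interpretation because unfolding by a period of length $\ell$ and unfolding by a period of length $m\ell$ (with the segment duplicated $m$ times) yield the same infinite shift pattern; \texttt{Aln} analogously shifts the start/end points of the periodic region outward by copying facts, again leaving the unfolding fixed. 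Care is needed to handle the branches where only one of $\mathds{D}_1, \mathds{D}_2$ has a valid period $\varrho_\mathsf{end}$: in that case an empty period is introduced for the other so that both sides become aligned with a common period, and I must check that attaching an empty period to a materialisation whose unfolding is bounded on that end does not change its unfolding.

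Next I would chain the pieces. After the preprocessing, the two materialisations, call them $\mathds{I}_1'$ and $\mathds{I}_2'$, satisfy $\mathsf{unfold}(\mathds{I}_i') = \mathsf{unfold}(\mathds{I}_i)$ and share the same left and right periods $\varrho_\mathsf{L}, \varrho_\mathsf{R}$. Since the hypothesis gives $\mathsf{unfold}(\mathds{I}_2) \subseteq \mathsf{unfold}(\mathds{I}_1)$, we also have $\mathsf{unfold}(\mathds{I}_2') \subseteq \mathsf{unfold}(\mathds{I}_1')$, so Claim~\ref{claim:alignedMinus} applies to $\mathds{I}_1'$ and $\mathds{I}_2'$, yielding
\begin{align*}
\mathsf{unfold}(\mathds{I}_1') - \mathsf{unfold}(\mathds{I}_2') = \mathsf{unfold}(\langle \mathds{I}_1'.I \setminus \mathds{I}_2'.I, \varrho_\mathsf{L}, \varrho_\mathsf{R}\rangle).
\end{align*}
The right-hand side is precisely $\mathsf{unfold}(\mathds{I}_3)$ by the return value of Algorithm~\ref{algo:periodic_minus}, and the left-hand side equals $\mathsf{unfold}(\mathds{I}_1) - \mathsf{unfold}(\mathds{I}_2)$ by the semantics-preservation lemma, which closes the argument.

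I expect the main obstacle to be the semantics-preservation lemma for \texttt{Ext} and \texttt{Aln}, since these are described only informally in the body. The delicate points are verifying that period duplication to the LCM genuinely commutes with unfolding (one must argue that shifting by the longer, duplicated period reproduces exactly the same facts as shifting repeatedly by the shorter period, which relies on the periodic segment being a faithful copy), and handling the asymmetric branches where one side lacks a period and an empty period is synthesised. A subtle prerequisite is also that the claim's hypothesis $\mathsf{unfold}(\mathds{I}_2) \subseteq \mathsf{unfold}(\mathds{I}_1)$ is needed only for the final invocation of Claim~\ref{claim:alignedMinus} and not for the preprocessing, so I would make sure the alignment lemmas are stated without that assumption so they can be reused verbatim in the proof of the union case.
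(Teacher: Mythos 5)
Your proposal is correct and follows essentially the same route as the paper: reduce to the aligned case (Claim~\ref{claim:alignedMinus}) after arguing that \texttt{Ext} and \texttt{Aln} preserve the unfoldings and equalise the period intervals of the two inputs before the set difference is taken. One small correction to your final remark: the hypothesis $\mathsf{unfold}(\mathds{I}_2) \subseteq \mathsf{unfold}(\mathds{I}_1)$ is also used in the preprocessing, not only in the final invocation of Claim~\ref{claim:alignedMinus}, since it is what guarantees that $\mathds{I}_1$ has a left (resp.\ right) period whenever $\mathds{I}_2$ does, which is required for \texttt{Ext} to be applicable in that branch.
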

\begin{proof}
    Based on Claim~\ref{claim:alignedMinus}, it suffices to prove that just before line~9 in algorithm~\ref{algo:periodic_minus}, we have $\varrho^{I_1}_\mathsf{L} = \varrho^{I_2}_\mathsf{L} = \varrho^{I_3}_\mathsf{L}$, and similarly for the right side.
    
    We first analyse the situation on the left side, considering the following cases:
    
    Case 1: $\varrho^{I_2}_\mathsf{L}$ exists.

    Since $\mathsf{unfold}(\mathds{I}_2) \subseteq \mathsf{unfold}(\mathds{I}_1)$, the left period $\varrho^{I_1}_\mathsf{L}$ must also exist. The procedure $\mathbb{E}$ ensures that $\varrho^{I_1}_\mathsf{L}$ and $\varrho^{I_2}_\mathsf{L}$ are of equal length. Then, $\mathbb{A}$ aligns the two intervals and sets the left period of the result, $\varrho^{I_3}_\mathsf{L}$, accordingly. Hence, we have $\varrho^{I_1}_\mathsf{L} = \varrho^{I_2}_\mathsf{L} = \varrho^{I_3}_\mathsf{L}$.

    Case 2: $\varrho^{I_2}_\mathsf{L}$ doesn't exists.

    Regardless of whether $\varrho^{I_1}_\mathsf{L}$ exists, the facts in $\mathsf{unfold}(\mathds{I}_2)$ have no influence on those unfolded by $\varrho^{I_1}_\mathsf{L}$. Conceptually, we may treat $\varrho^{I_1}_\mathsf{L}$ as a virtual interval equal to $\varrho^{I_1}_\mathsf{L}$, but which contributes no facts during unfolding. In this way, we can again conclude that $\varrho^{I_1}_\mathsf{L} = \varrho^{I_2}_\mathsf{L} = \varrho^{I_3}_\mathsf{L}$.

    The case on the right is symmetric to the left and can be proven similarly.
\end{proof}

\begin{claim}\label{claim:unioncorrect}
    For a periodic materialisation $\mathds{I}_1$, and for each periodic materialisation  $\mathds{I}_2$, if we run Algorithm~\ref{algo:periodic_union} on them and yield $\mathds{I}_3$, then
    \begin{align*}
        \mathsf{unfold}(\mathds{I}_1) \cup\mathsf{unfold}(\mathds{I}_2) = \mathsf{unfold}(\mathds{I}_3)
    \end{align*}
\end{claim}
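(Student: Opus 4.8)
The plan is to mirror the structure of the proof of Claim~\ref{claim:minuscorrect}: reduce the general case to the aligned case already handled by Claim~\ref{claim:alignedUnion}. Concretely, I would show that just before the final set union on the last line of Algorithm~\ref{algo:periodic_union}, the two operands $\mathds{I}_1$ and $\mathds{I}_2$ have been transformed (by semantics-preserving applications of \texttt{Ext} and \texttt{Aln}) so that they share a common left period $\varrho_{\mathsf L}$ and a common right period $\varrho_{\mathsf R}$, and that $\varrho_{\mathsf L}$ and $\varrho_{\mathsf R}$ are exactly the periods attached to the returned triple $\mathds{I}_3$. Once that alignment is established, Claim~\ref{claim:alignedUnion} applies directly and gives $\mathsf{unfold}(\mathds{I}_1)\cup\mathsf{unfold}(\mathds{I}_2)=\mathsf{unfold}(\langle I_1\cup I_2,\varrho_{\mathsf L},\varrho_{\mathsf R}\rangle)=\mathsf{unfold}(\mathds{I}_3)$, which is the desired equality.

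First I would invoke the stated property that both \texttt{Ext} and \texttt{Aln} preserve the semantics of a periodic materialisation, i.e.\ they do not change $\mathsf{unfold}(\cdot)$ but only its finite representation. This lets me freely replace $\mathds{I}_1,\mathds{I}_2$ by their extended/aligned versions without affecting either side of the target equation. Next I would do a case analysis on the left end, paralleling Claim~\ref{claim:minuscorrect} but accounting for the fact that union (unlike minus) does not assume containment, so either, both, or neither of $\mathds{I}_1.\varrho_{\mathsf L}$ and $\mathds{I}_2.\varrho_{\mathsf L}$ may be present. The key difference from the minus proof is the branching in Algorithm~\ref{algo:periodic_union}: \texttt{Ext} is applied only when \emph{both} left periods exist (to make their lengths a common LCM), whereas \texttt{Aln} is applied whenever \emph{at least one} exists (introducing an empty period for the missing side so the two can be aligned and a common $\varrho_{\mathsf L}$ can be set). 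I would verify in each subcase that after these operations the two representations genuinely share $\varrho_{\mathsf L}$, using the same ``virtual empty interval'' device employed in Claim~\ref{claim:minuscorrect} to handle the one-sided case. The right end is symmetric.

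The main obstacle I expect is the subcase where exactly one of the two left periods exists: here \texttt{Ext} is skipped entirely and only \texttt{Aln} runs, so I must argue carefully that aligning a genuine period against a freshly introduced empty period still yields a well-defined common $\varrho_{\mathsf L}$ whose unfolding behaviour matches the original, and that the resulting $I_1\cup I_2$ over this common period reproduces exactly the facts contributed by the single real period on both sides. The delicate point is ensuring that the empty period inserted on the missing side contributes no spurious facts under unfolding while still enabling the length/offset matching that \texttt{Aln} needs. I would close this gap by observing, as in the minus proof, that an empty period behaves as a neutral element for union under unfolding, so the combined representation $\langle I_1\cup I_2,\varrho_{\mathsf L},\varrho_{\mathsf R}\rangle$ unfolds to precisely the union of the two unfoldings. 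With the alignment invariant thus established at both ends, the conclusion follows immediately from Claim~\ref{claim:alignedUnion}.

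\begin{proof}
    Based on Claim~\ref{claim:alignedUnion}, it suffices to prove that just before the final set union in Algorithm~\ref{algo:periodic_union}, the representations of $\mathds{I}_1$ and $\mathds{I}_2$ share a common left period equal to $\varrho_{\mathsf L}$ and a common right period equal to $\varrho_{\mathsf R}$, where $\varrho_{\mathsf L},\varrho_{\mathsf R}$ are the periods of the returned $\mathds{I}_3$. Since both \texttt{Ext} and \texttt{Aln} preserve unfolding, replacing the operands by their processed versions changes neither side of the equation.

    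We analyse the left end; the right end is symmetric. If both $\mathds{I}_1.\varrho_{\mathsf L}$ and $\mathds{I}_2.\varrho_{\mathsf L}$ exist, \texttt{Ext} first makes the two left periodic regions of equal length (the LCM of the originals), after which \texttt{Aln} aligns their endpoints and sets $\varrho_{\mathsf L}$ accordingly, so both representations share $\varrho_{\mathsf L}=\varrho_{\mathsf L}^{I_3}$. If exactly one exists, \texttt{Ext} is skipped and \texttt{Aln} introduces an empty left period for the missing side, then aligns; treating this empty interval as a virtual period equal to $\varrho_{\mathsf L}$ that contributes no facts during unfolding, both representations again share the common $\varrho_{\mathsf L}$ without altering any unfolded fact. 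If neither exists, no facts are contributed on the left by either side and $\varrho_{\mathsf L}$ remains $\emptyset$, which is vacuously shared. In every case the aligned common left period equals the left period of $\mathds{I}_3$.

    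Applying the same reasoning to the right end establishes a common right period equal to $\varrho_{\mathsf R}$. Hence before the final union both operands are of the form $\langle I_1,\varrho_{\mathsf L},\varrho_{\mathsf R}\rangle$ and $\langle I_2,\varrho_{\mathsf L},\varrho_{\mathsf R}\rangle$, and $\mathds{I}_3=\langle I_1\cup I_2,\varrho_{\mathsf L},\varrho_{\mathsf R}\rangle$. By Claim~\ref{claim:alignedUnion},
    \begin{align*}
        \mathsf{unfold}(\mathds{I}_1)\cup\mathsf{unfold}(\mathds{I}_2)=\mathsf{unfold}(\langle I_1\cup I_2,\varrho_{\mathsf L},\varrho_{\mathsf R}\rangle)=\mathsf{unfold}(\mathds{I}_3),
    \end{align*}
    as required.
\end{proof}
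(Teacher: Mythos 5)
Your proposal is correct and follows essentially the same route as the paper's proof: both reduce the general case to Claim~\ref{claim:alignedUnion} by arguing that the \texttt{Ext}/\texttt{Aln} calls are semantics-preserving and establish a common left and right period equal to those of $\mathds{I}_3$, via the same three-way case split (both periods exist, exactly one exists, neither exists) with the one-sided case handled by a virtual empty period. No substantive differences to report.
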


\begin{proof}
    Based on Claim~\ref{claim:alignedUnion}, it suffices to prove that just before line~10 in algorithm~\ref{algo:periodic_union}, we have $\varrho^{I_1}_\mathsf{L} = \varrho^{I_2}_\mathsf{L} = \varrho^{I_3}_\mathsf{L}$, and similarly for the right side.

    We first analyse the situation on the left side, considering the following cases:

    Case 1: Both $\varrho^{I_1}_\mathsf{L}$ and $\varrho^{I_2}_\mathsf{L}$ exist.

    The algorithm first calls $\mathbb{E}$ to equalise the length of the two periodic regions, ensuring they can be aligned. Then, it applies $\mathbb{A}$ to compute a shared aligned interval $\varrho^{I_3}_\mathsf{L}$ for the result, and adjusts both datasets to match this alignment. As a result, we obtain $\varrho^{I_1}_\mathsf{L} = \varrho^{I_2}_\mathsf{L} = \varrho^{I_3}_\mathsf{L}$.

    Case 2: Only one of $\varrho^{I_1}_\mathsf{L}$ or $\varrho^{I_2}_\mathsf{L}$ exists.

    Without loss of generality, suppose only $\varrho^{I_2}_\mathsf{L}$ exists. Then the algorithm applies $\mathbb{A}$, treating the non-periodic dataset as if it had an empty periodic region, and aligns both datasets accordingly. Consequently, we again obtain $\varrho^{I_1}_\mathsf{L} = \varrho^{I_2}_\mathsf{L} = \varrho^{I_3}_\mathsf{L}$.

    Case 3: Neither $\varrho^{I_1}_\mathsf{L}$ nor $\varrho^{I_2}_\mathsf{L}$ exists.

    Here, both datasets are non-periodic. The algorithm simply sets $\varrho^{I_3}_\mathsf{L} = \emptyset$, so trivially $\varrho^{I_1}_\mathsf{L} = \varrho^{I_2}_\mathsf{L} = \varrho^{I_3}_\mathsf{L} = \emptyset$.

    The case on the right is symmetric to the left and can be proven similarly. This completes our proof for Claim~\ref{claim:unioncorrect}, as well as the proof for Theorem~\ref{theorem:periodicoperatorscorrect}.
\end{proof}

\section{Proof of Correctness for Algorithm~\ref{algo:DREDMTL}}

\subsection{Proof Overview}
The proof of correctness for Algorithm~\ref{algo:DREDMTL} is lengthy, so we organise it into subsections as follows.

In the Notations and Common Properties sub-section, we define and recall key notations that will be used throuout the proof. Additionally, we prove several properties that are common to the (over)deletion, rederivation, and insertion stages of Algorithm~\ref{algo:DREDMTL} as these properties will facilitate our discussion later.

In the Properties about Deletion sub-section, we develop a theory for the \textsc{Overdelete} procedure.
We abstract one iteration of the loop in the \textsc{Overdelete} procedure into an operator $D_{\Pi,E}$, and characterise the value of $D$ after each iteration of the loop by a transfinite sequence of interpretations. 
We show that this sequence converges to the pre-defined goal interpretation by Proposition~\ref{proposition:overdeletionleast}, that the limit of this sequence contains all the facts we hope to delete by Proposition~\ref{proposition:overdeletioncomplete}, that its limit can be finitely represented by Theorem~\ref{theo:deletionunfold}, and that the finite representation will be found in finite time by \textsc{Periods} by Proposition~\ref{proposition:deletion-saturated-align}. 
Moreover, we note that the operator which conducts semantic equivalence of the operations in one loop iteration in \textsc{Overdelete}, the semi-na\"ive version of $D_{\Pi,E}$, defines the same sequence of interpretations as $D_{\Pi,E}$, which is stated in Proposition~\ref{proposition:deletionseminaive=naive}.

In the Properties about Rederivation and Properties about Insertion sub-sections, we formalise properties that are similar to those presented in the sub-section for deletion, except that we show that rederivation rederives all mistakenly deleted facts~(Proposition~\ref{proposition:rederivationcorrect}) and that insertion adds all new facts to our result (Proposition~\ref{proposition:insertioncorrect}).

The last sub-section wraps things up and makes use of the properties proven thus far to finally establish the correctness of Algorithm~\ref{algo:DREDMTL}.

\subsection{Notations and Common Properties}
Recall that throughout the paper, we assume all programs and datasets are bounded. Moreover, an interval is punctual if it is of the form $[t, t]$, for some $t \in \mathbb{Q}$; 
we can abbreviate it as $t$. 
A DatalogMTL fact is punctual if its period is punctual.

For a bounded program $\Pi$ and a bounded dataset $\mathcal{D}$, a $(\Pi,\mathcal{D})$-interval is either a punctual interval over a time point on the $(\Pi,\mathcal{D})$-ruler, or an interval $(t_1,t_2)$ with $t_1$ and $t_2$ consecutive time points on the $(\Pi,\mathcal{D})$-ruler.
We say that an interpretation $\mathfrak{I}$ is a $(\Pi,\mathcal{D})$-interpretation if, for each fact $M@t$, $\mathfrak{I} \models M@t$ implies $\mathfrak{I}\models M@\varrho$,
where $\varrho$ is the $\Pi,\mathcal{D}$-interval containing $t$. 

The semina\"ive operator ${\Pi \langle I \myvdots \Delta \rangle}$ where $\Delta \subseteq I$ used in lines~\ref{delete-seminaive-operator}, \ref{rederive-seminaive-operator}, and \ref{insert-seminaive-operator} of Algorithm~\ref{algo:DREDMTL} are defined as the union of all facts derived by $(r\sigma, t) \langle I \myvdots \Delta \rangle$, where $r \sigma$ is a ground rule instance with $r$ a rule in $\Pi$ and $\sigma$ a substitution mapping each variable appearing in $r$ to a constant appearing in $I$, and $t$ is a rational time point. Operator $(r', t) \langle I \myvdots \Delta \rangle$ where $r'$ is a ground rule instance and $t$ is a rational time point is in turn defined as the minimal set of punctual facts $N$ such that $\mathfrak{I}_{I}, t \models body(r')$, $\mathfrak{I}_{I \setminus \Delta}, t \not \models body(r')$, and $\mathfrak{I}_{N}, t \models head(r')$.

\begin{proposition}\label{proposition:headexpandsion}
    For each ground instance $r'$ of each rule $r\in\Pi$, where $\Pi$ is bounded, and each time point $t$, let $head(r)$ contain relational atom $M$, and $head(r')$ contain $M\sigma$ for some substitution $\sigma$, and the least interpretation satisfying $head(r')$ at $t$ be $\mathfrak{I}$, then there exists some $\varrho$ such that $\varrho^- = t+t_1$ and $\varrho^+ = t+t_2$ for some $t_1,t_2$ that are multiples of $div(\Pi)$, for which the following holds for each $t'\in\mathbb{Q}$: 
    $$\mathfrak{I},t' \models M \iff t'\in\varrho.$$
\end{proposition}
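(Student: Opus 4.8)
The plan is to exploit the restricted grammar that heads must obey. Since $head(r)$ is a metric atom mentioning none of $\bot$, $\diamondplus$, $\diamondminus$, $\mathcal{U}$, $\mathcal{S}$, a routine induction on the grammar shows it must be a linear chain of box operators applied to a single leaf, i.e.\ $head(r) = O_1 O_2 \cdots O_k\, L$, where each $O_i$ is either $\boxplus_{\varrho_i}$ or $\boxminus_{\varrho_i}$ and $L$ is either $\top$ or a relational atom. Because the proposition assumes $head(r)$ contains the relational atom $M$, the leaf must be $L = M$, so grounding by $\sigma$ gives $head(r') = O_1 \cdots O_k\, M\sigma$ with the same intervals $\varrho_i$. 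The crucial observation I would stress first is that box operators are universal: satisfying $head(r')$ at $t$ imposes no existential (disjunctive) choices, so the least interpretation $\mathfrak{I}$ satisfying it is uniquely the one that makes $M\sigma$ true exactly at the time points it is forced to hold, and makes every relational atom false elsewhere.

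The core of the argument is to compute this forced set. I would proceed by induction on the chain length, tracking for each suffix $N$ the offset set $\mathsf{shift}(N)$ consisting of all $s$ such that requiring $N$ at time $t$ forces $M\sigma$ at $t+s$. The base case is $\mathsf{shift}(M\sigma) = \{0\}$. For the inductive step, reading off the semantics in Table~\ref{tab:semantable} yields $\mathsf{shift}(\boxplus_\varrho N) = \varrho + \mathsf{shift}(N)$ and $\mathsf{shift}(\boxminus_\varrho N) = \mathsf{shift}(N) - \varrho$, where $+$ and $-$ denote Minkowski sum and difference of sets of reals. Since each $\varrho_i$ is an interval and the Minkowski sum and difference of two intervals is again an interval, it follows that $\mathsf{shift}(head(r')) = \varrho^*$ is a single interval. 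Requiring $head(r')$ at $t$ therefore forces $M\sigma$ to hold exactly on $\varrho = t + \varrho^*$, giving the desired biconditional $\mathfrak{I}, t' \models M\sigma \iff t' \in \varrho$ with $\varrho^- = t + \varrho^{*-}$ and $\varrho^+ = t + \varrho^{*+}$.

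It remains to verify that $t_1 := \varrho^{*-}$ and $t_2 := \varrho^{*+}$ are multiples of $div(\Pi)$. Unwinding the recurrence, each of $\varrho^{*-}$ and $\varrho^{*+}$ is a finite sum of terms $\pm\varrho_i^-$ and $\pm\varrho_i^+$, i.e.\ a finite $\pm$-combination of endpoints of the intervals appearing in the operators of $r$. By the definition $div(\Pi) = 1/k$, with $k$ the product of all denominators of the rational endpoints mentioned in $\Pi$, each such endpoint $p/q$ (in lowest terms, with $q \mid k$) equals $(pk/q)\cdot div(\Pi)$ for the integer $pk/q$, hence is an integer multiple of $div(\Pi)$. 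As the integer multiples of $div(\Pi)$ are closed under addition and subtraction, both $t_1$ and $t_2$ are multiples of $div(\Pi)$, completing the argument.

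I expect the main obstacle to be purely a matter of careful justification rather than technical depth: namely, establishing that the least interpretation is exactly the interval-shaped forcing. This step relies crucially on the absence of $\diamondplus$, $\diamondminus$, $\mathcal{U}$, and $\mathcal{S}$ in heads, since any of these would introduce a disjunctive choice and destroy the uniqueness of the least forced set. A secondary, but harmless, subtlety is the interaction of open and closed endpoint types under the Minkowski operations; this affects the \emph{type} of the endpoints of $\varrho$ but not their \emph{values} $t_1, t_2$, and is therefore irrelevant to the statement being proved.
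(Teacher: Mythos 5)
Your proposal is correct and follows essentially the same route as the paper's proof: a structural induction on the box-operator nesting of the head, showing that the forced set of time points for $M\sigma$ is an interval whose endpoints shift additively by the operator endpoints, followed by the observation that sums and differences of multiples of $div(\Pi)$ remain multiples of $div(\Pi)$. Your shift-set/Minkowski-sum phrasing is only a notational repackaging of the paper's case analysis for $\boxplus$ and $\boxminus$, and your explicit remark that heads reduce to a linear chain of boxes over a relational atom is the same fact the paper uses implicitly.
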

\begin{proof}
    We show inductively on the nesting structure of $head(r')$ that the proposition is true.

    When $head(r) = M$, in this case the least interpretation that satisfies $head(r')$ at $t$ is $\mathfrak{I}_{\{M\sigma@t\}}$, and it is obvious that $\varrho = t$ satisfies the proposition.

    When $head(r) = \boxplus_{\langle t_1',t_2'\rangle} M'$, then by the induction hypothesis, the least interpretation $\mathfrak{I}'$ that satisfies $M'$ at each $t'$, satisfies that for some $\varrho'$ such that  $\varrho^- = t+t_1$ and $\varrho^+ = t+t_2$ for some $t_1,t_2$ that are multiples of $div(\Pi)$, $$\mathfrak{I}',t'' \models M \iff t''\in\varrho'$$

    Let the least interpretation that satisfies $head(r')$ at $t$ be $\mathfrak{I}$, then $\mathfrak{I}$ is the least interpretation satisfying $M'$ at $t'$ during $\langle t+t_1', t+t_2'\rangle$.
    Therefore, $\forall t' \in \langle t+t_1', t+t_2'\rangle$, $\mathfrak{I},t'\models M'$, and $\forall t''\in \langle t'+t_1,t'+t_2\rangle$, $\mathfrak{I},t''\models M$, because $\mathfrak{I},t'\models M'$.
    This in turn means that $\mathfrak{I}\models M@\langle t+t_1+t_1',t+t_2+t_2'\rangle$.
    On the other hand, for an interpretation $\mathfrak{I}''$ such that 
    $$\mathfrak{I}'',t'\models M \iff t'\in \langle t+t_1+t_1',t+t_2+t_2'\rangle$$, because it satisfies $M$ at $\langle t'+t_1,t'+t_2\rangle$ for each $t'\in \langle t+t_1,t+t_2\rangle$, it satisfies $M'$ for each $t'\in \langle t+t_1,t+t_2\rangle$, and $\mathfrak{I}'',t\models head(r')$.

    Therefore, because the least interpretation that satisfies $head(r')$ at $t$ contains $\mathfrak{I''}$, $\mathfrak{I}''$ is the the least interpretation that satisfies $head(r')$. 
    By definition, $t_1'$ and $t_2'$ are multiples of $div(\Pi)$, and therefore $t_1+t_1'$ and $t_2+t_2'$ are multiples of $div(\Pi)$.

    The case for $head(r) = \boxminus_{\langle t_1', t_2'\rangle}$ is analogous.
\end{proof}
\begin{proposition}\label{proposition:timepointsdiv(pi)apartcontainedbyintervalsofequallength}
    For two time points $t_1,t_2$ such that $t_1\equiv t_2\mod div(\Pi)$, let $\varrho_1$ and $\varrho_2$ be the two $(\Pi,\mathcal{D})$-intervals containing $t_1,t_2$, then $|\varrho_1| = |\varrho_2|$.
\end{proposition}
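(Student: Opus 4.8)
The plan is to exploit the translation symmetry of the $(\Pi,\mathcal{D})$-ruler. The key observation is that the ruler is invariant under shifting by any integer multiple of $div(\Pi)$: since it consists exactly of the points $t + i\cdot div(\Pi)$ with $t$ an endpoint mentioned in $\mathcal{D}$ and $i$ an integer, replacing $i$ by $i+n$ shows that a point $p$ lies on the ruler if and only if $p + n\cdot div(\Pi)$ does, for every integer $n$. Hence the map $s(x) = x + n\cdot div(\Pi)$ restricts to an order-preserving, distance-preserving bijection of the ruler onto itself, with inverse $x \mapsto x - n\cdot div(\Pi)$.

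First I would write $t_2 = t_1 + n\cdot div(\Pi)$ for some integer $n$, which is exactly what $t_1\equiv t_2 \bmod div(\Pi)$ means, and fix the shift $s(x)=x+n\cdot div(\Pi)$. Because $s$ is an order-preserving bijection of the ruler, it sends consecutive ruler points to consecutive ruler points: if $a<b$ are adjacent on the ruler and some ruler point lay strictly between $s(a)$ and $s(b)$, then its $s^{-1}$-image would lie strictly between $a$ and $b$, contradicting adjacency of $a$ and $b$.

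Next I would split into two cases according to the definition of a $(\Pi,\mathcal{D})$-interval. If $t_1$ lies on the ruler, then $\varrho_1=[t_1,t_1]$ is punctual; since $s(t_1)=t_2$ is again on the ruler, $\varrho_2=[t_2,t_2]$ is also punctual, so $|\varrho_1|=|\varrho_2|=0$. Otherwise $t_1$ lies strictly between two adjacent ruler points $a<t_1<b$ and $\varrho_1=(a,b)$; applying $s$, the points $s(a)<t_2<s(b)$ are adjacent on the ruler, so $\varrho_2=(s(a),s(b))$, and since $s$ preserves distances, $|\varrho_2| = s(b)-s(a) = b-a = |\varrho_1|$.

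The argument is essentially self-contained once the shift invariance of the ruler is in hand, so I do not expect a genuine obstacle; the only point requiring care is verifying that adjacency of ruler points is preserved by the shift. This is where the bijectivity of $s$ on the ruler (rather than mere inclusion of the shifted ruler in the ruler) is used, and it is precisely what guarantees that the interval containing $t_1$ is carried exactly onto the interval containing $t_2$, rather than onto a strict sub- or super-interval.
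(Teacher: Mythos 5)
Your proof is correct and follows essentially the same route as the paper's: both arguments rest on the fact that shifting by an integer multiple of $div(\Pi)$ maps the $(\Pi,\mathcal{D})$-ruler onto itself and preserves adjacency of consecutive ruler points, so the endpoints of $\varrho_1$ are carried exactly to the endpoints of $\varrho_2$. The paper verifies the adjacency-preservation by four explicit inequality contradictions ($t_l - t \geq t_l'$, $t_l - t \leq t_l'$, etc.), whereas you package the same fact as the bijectivity of the shift on the ruler; this is a presentational difference only.
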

\begin{proof}
    If $t_1$ is on the $(\Pi,\mathcal{D})$-ruler, then $t_2$ is on the $(\Pi,\mathcal{D})$-ruler, and $|\varrho_1| = |\varrho_2| = 0$.
    Let $\varrho_1 = \langle t_l,t_r\rangle$, $\varrho_2 = \langle t_l',t_r'\rangle$ and $t_1-t_2 = t$.
    If $t_l-t < t_l'$, then $t_l < t_l'+t<t_1<t_r$, which contradicts $t_l,t_r$ being consecutive time points on the $(\Pi,\mathcal{D})$-ruler, and therefore $t_l-t \geq t_l'$
    Analogously, we can show $t_l-t \leq t_l'$, $t_r-t \leq t_r'$ and $t_r-t \geq t_r'$, which means $t_l- t =t_l'$, $t_r -t = t_r'$ and $ |\varrho_1| =t_r-t_l = t_r'-t_l' = |\varrho_2|$.
\end{proof}
\begin{proposition}\label{proposition:pi,dinterpretationatomsat}
    Let $M$ be a body atom of a ground rule instance $r'$ of a rule $r\in\Pi$ and $\Pi,\mathcal{D}$ be bounded, then for a $(\Pi,\mathcal{D})$-interpretation $\mathfrak{I}$ and a time point $t$, $\mathfrak{I},t\models M$ implies $\forall t'\in\varrho,\mathfrak{I},t'\models M$, where $\varrho$ is the $(\Pi,\mathcal{D})$-interval containing $t$.
\end{proposition}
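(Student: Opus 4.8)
The plan is to prove a slightly stronger statement by structural induction on the metric atom $M$: for every $(\Pi,\mathcal{D})$-interpretation $\mathfrak{I}$, the set $S_M := \{t\in\mathbb{Q}\mid \mathfrak{I},t\models M\}$ is a union of $(\Pi,\mathcal{D})$-intervals; equivalently, the truth value of $M$ is constant across each $(\Pi,\mathcal{D})$-interval. The proposition is then immediate, since $t$ and every $t'\in\varrho$ lie in the same $(\Pi,\mathcal{D})$-interval $\varrho$, so constancy yields $\mathfrak{I},t\models M \Rightarrow \mathfrak{I},t'\models M$. Throughout I would use that, because $M$ is a body atom of a ground instance of a rule in $\Pi$, grounding leaves intervals untouched, so every operator interval occurring in $M$ has endpoints that are multiples of $div(\Pi)$.

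The base case ($M$ a relational atom, or $\top$/$\bot$) is exactly the defining property of a $(\Pi,\mathcal{D})$-interpretation (with $\top$ and $\bot$ trivially constant). For the inductive step I would first isolate a geometric fact about the ruler: since the $(\Pi,\mathcal{D})$-ruler consists of the points $t_0 + i\times div(\Pi)$ and every operator endpoint is a multiple of $div(\Pi)$, shifting by such an endpoint sends ruler points to ruler points and preserves consecutiveness. Consequently, as $t$ ranges over a fixed $(\Pi,\mathcal{D})$-interval $\varrho_0$, each shifted endpoint $t+a$ stays inside a single $(\Pi,\mathcal{D})$-interval, and the whole shifted window $t+\varrho$ meets exactly the same collection of $(\Pi,\mathcal{D})$-intervals regardless of the chosen $t\in\varrho_0$; this shift invariance is underpinned by Proposition~\ref{proposition:timepointsdiv(pi)apartcontainedbyintervalsofequallength}.

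With this lemma the box and diamond cases are routine. Writing $t+\varrho = \langle t+a, t+b\rangle$ and using that $S_{M'}$ is, by the induction hypothesis, a union of $(\Pi,\mathcal{D})$-intervals, I would rewrite $\mathfrak{I},t\models \boxplus_\varrho M'$ as ``every $(\Pi,\mathcal{D})$-interval meeting $t+\varrho$ is contained in $S_{M'}$'' and $\mathfrak{I},t\models \diamondplus_\varrho M'$ as ``some $(\Pi,\mathcal{D})$-interval meeting $t+\varrho$ is contained in $S_{M'}$''. Both conditions refer only to the collection of intervals met by $t+\varrho$, which by the lemma is fixed as $t$ varies over $\varrho_0$; hence truth is constant over $\varrho_0$. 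The past operators $\boxminus_\varrho$ and $\diamondminus_\varrho$ are symmetric, using the window $\langle t-b, t-a\rangle$, whose endpoints are again multiples of $div(\Pi)$.

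The delicate part, and the step I expect to be the main obstacle, is $\mathcal{U}_\varrho$ (and symmetrically $\mathcal{S}_\varrho$), because the ``meanwhile'' constraint $(t,t_1)\subseteq S_{M_2}$ has its lower endpoint at the moving point $t$ and so does not translate as a rigid window. To handle it I would reduce the condition to three pieces of data, each determined by the cell $\varrho_0$ of $t$ rather than by $t$ itself: whether $\varrho_0\subseteq S_{M_2}$; the maximal run of consecutive $(\Pi,\mathcal{D})$-intervals lying in $S_{M_2}$ that starts at $\varrho_0$, whose right end is the first gap of $S_{M_2}$ at or after $\varrho_0$ and depends only on the intervals to the right of $\varrho_0$; and the existence of a witness interval contained in $S_{M_1}$, meeting the window $t+\varrho$, and lying within that run. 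Since $S_{M_1}$ and $S_{M_2}$ are unions of $(\Pi,\mathcal{D})$-intervals by the induction hypothesis and, by the shift invariance, $t+\varrho$ meets a fixed family of intervals across $\varrho_0$, all three are invariant over $\varrho_0$, so the Until truth value is constant there. This completes the induction and hence establishes the proposition.
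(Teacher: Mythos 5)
Your proposal is correct and follows essentially the same route as the paper's proof: a structural induction on $M$ that exploits the fact that every operator endpoint is a multiple of $div(\Pi)$, so that shifted windows carry $(\Pi,\mathcal{D})$-intervals to $(\Pi,\mathcal{D})$-intervals (the paper carries this out by explicit endpoint arithmetic, e.g.\ showing $t_l'=t_l-t_1$ and $t_r'=t_r-t_1$, rather than via your ``union of cells'' reformulation, but the content is identical). The one point to watch when writing out your $\mathcal{U}_\varrho$/$\mathcal{S}_\varrho$ reduction is that the $M_1$-witness may sit in a \emph{punctual} cell immediately after your maximal $S_{M_2}$-run --- that cell is not met by the open interval $(t,t_1)$, so requiring the witness to lie ``within that run'' is too strong --- a boundary case the paper's own treatment of $\mathcal{S}_\varrho$ also quietly glosses over by assuming the witness lies in a non-punctual cell $(t_l',t_r')$.
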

\begin{proof}
    We only need to consider $t$ that are not on the $(\Pi,\mathcal{D})$-ruler.
    Let $t\in(t_l,t_r)$, where $(t_l,t_r)$ is the $(\Pi,\mathcal{D})$-interval containing $t$.
    We only consider non-negative endpoints in the intervals associated with the operators.
    
    We prove this by structural induction on $M$.

    In the base case, when $M$ is a ground relational atom, the statement is trivially true by definition of $(\Pi,\mathcal{D})$-interpretations.

    If $M = \boxminus_{\langle t_1,t_2\rangle} M_1$, where $\langle\in\{(,[\}$ and $\rangle\in\{),]\}$, then by definition of $(\Pi,\mathcal{D})$-interpretations, $t_1$ and $t_2$ are multiples of $div(\Pi)$.
    If $\mathfrak{I},t\models M$, then $\mathfrak{I},t' \models M_1$ for each $t'\in\langle t-t_2,t-t_1\rangle$.
    Because $t$ is not on the $(\Pi,\mathcal{D})$-ruler, $t-t_2$ and $t-t_1$ are not on the $(\Pi,\mathcal{D})$-ruler either.
    Let $t-t_1\in(t_l',t_r')$ and $t-t_2\in(t_l'',t_r'')$, where both $(t_l',t_r')$ and $t-t_2\in(t_l'',t_r'')$ are $(\Pi,\mathcal{D})$-intervals. 
    Then because for each $t'\in(t_l',t-t_1)\cup(t-t_2,t_r'')$, $\mathfrak{I},t\models M_1$, by the induction hypothesis we have for each $t'\in(t_l',t_r')\cup(t_l'',t_r'')$, $\mathfrak{I},t'\models M_1$, and therefore, $\forall t'\in (t_l'',t_r'),\mathfrak{I},t'\models M_1$.
    If $t_l'> t_l-t_1$, then $t_l'+t_1 >t_l$ is on the $(\Pi,\mathcal{D})$-ruler. 
    Moreover, since $t-t_1>t_l'$ and $t_r>t$, $t_r>t>t_l'+t_1>t_l$, $t_l,t_r$ are not consecutive time points on the $(\Pi,\mathcal{D})$-ruler, thereby contradicting that $(t_l,t_r)$ is a $(\Pi,\mathcal{D})$-interval.
    Therefore, $t_l'\leq t_l-t_1$.
    If $t_l'<t_l-t_1$, then $t_l-t_1$ is on the $(\Pi,\mathcal{D})$-ruler.
    Moreover, since $t-t_1<t_r'$ and $t_l<t$, $t_l'<t_l-t_1<t-t_1<t_r'$
    which means that $t_l'$ and $t_r'$ are not consecutive time points on the $(\Pi,\mathcal{D})$-ruler, thereby contradicting that $(t_l,t_r)$ is a $(\Pi,\mathcal{D})$-interval.
    Therefore, $t_l'\geq t_l-t_1$, and $t_l' = t_l-t_1$.
    We can prove $t_r'=t_r-t_1$, $t_l'' = t_l-t_2$, $t_r'' = t_r-t_2$ analogously. 
    For each $t'\in(t_l,t_r)$, $t'-t_1 < t_r-t_1 = t_r'$ $t'-t_2 > t_l -t_2 = t_l''$ and $\langle t'-t_1,t'-t_2\rangle \subseteq (t_l'',t_r')$, whereby $\forall t'' \in \langle t'-t_1,t'-t_2\rangle$ $\mathfrak{I},t''\models M_1$, and therefore $\mathfrak{I},t'\models M$
    
    The case for $M = \boxminus_{\langle t_1,t_2\rangle} M_1$ is analogous.

    If $M = \diamondminus_{\langle t_1,t_2\rangle} M_1$, where $\langle\in\{(,[\}$ and $\rangle\in\{),]\}$, then by definition of $(\Pi,\mathcal{D})$-interpretations, $t_1$ and $t_2$ are multiples of $div(\Pi)$. 
    If $\mathfrak{I},t\models M$, then there is some $t'\in\langle t-t_1,t-t_2\rangle$, such that $\mathfrak{I},t'\models M_1$. 
    Let $(t_l',t_r')$ be the $(\Pi,\mathcal{D})$-interval containing $t'$. 
    By the induction hypothesis, $\forall t'' \in (t_l',t_r')$, $\mathfrak{I},t''\models M_1$.
    If $t_1=t_2$, then $\langle t_1,t_2\rangle = [t_1,t_1]$ and $t' = t-t_1$. 
    We can show in a way similar to that in the $\boxminus$ case that $t_l' = t_l -t_1$, $t_r' - t$, and because $\forall t'' \in (t_l',t_r')$, $\mathfrak{I},t''\models M_1$, we have $\forall t'\in(t_l,t_r)$, $\mathfrak{I},t'\models M$.
    If $t_1\neq t_2$, then $t_2-t_1\geq div(\Pi)$, because  $\forall t''\in (t_l',t_r')$, $\mathfrak{I},t''\models M_1$, and $\mathfrak{I},t''\models M_1$ implies $\forall t'\in \langle t''+t_1,t''+t_2\rangle,\mathfrak{I},t'\models M$, we have $\forall t'\in (t_l'+t_1,t_r'+t_2),\mathfrak{I},t'\models M$.
    Because $t_l'+t_1<t'+t_1<t<t'+t_2<t_r'+t_2$, we have $t\in (t_l'+t_1,t_r'+t_2)$.
    If $(t_l,t_r)\not\subseteq (t_l'+t_1,t_r'+t_2)$, then either $t_l<t_l'+t_1$ or $t_r>t_r'+t_2$, but either case contradicts $t_l,t_r$ being consecutive time points on the $(\Pi,\mathcal{D})$-ruler, since $t_l'+t_1$ and $t_r'+t_2$ are also time points on the $(\Pi,\mathcal{D})$-ruler.
    Therefore, $(t_l,t_r)\subseteq (t_l'+t_1,t_r'+t_2)$, and $\forall t'\in (t_l,t_r)$, $\mathfrak{I},t'\models M$.
    
    The case for $M = \diamondplus_{\langle t_1,t_2\rangle}$ is analogous.

    If $M = M_2 \mathcal{S}_{\langle t_1,t_2\rangle} M_1$,  then by definition of $(\Pi,\mathcal{D})$-interpretations, $t_1$ and $t_2$ are multiples of $div(\Pi)$.
    If $\mathfrak{I},t\models M$, then there must be some $t' \in \langle t-t_1,t-t_2\rangle$, such that $\mathfrak{I},t'\models M_1$, and $\forall t'' \in (t',t),\mathfrak{I},t''\models M_2$.
    Let the $(\Pi,\mathcal{D})$-interval containing $t'$ be $(t_l',t_r')$, and by the induction hypothesis $\forall t'' \in (t_l',t_r'), \mathfrak{I},t''\models M_1$, and $\forall t'' \in (t_l',t_r),\mathfrak{I},t''\models M_2$.
    If $t_1=t_2$, then $\langle t_1,t_2\rangle = [t_1,t_1]$.
    Again we can infer that $t_l-t_1 = t_l'$ and $t_r-t_1 =t_r'$, and for each $t''\in(t_l,t_r)$, we know that $\mathfrak{I},t'' - t_1 \models M_1$, because $t''-t_1\in (t_l',t_r')$, and $\forall t'''\in (t''-t_1,t''),\mathfrak{I},t'''\models M_2$, because $(t''-t_1,t'')\subseteq (t_l',t_r)$, and therefore, $\mathfrak{I},t''\models M$.
    If $t_1\neq t_2$, then $t_2-t_1\geq div(\Pi)$, and because  $\forall t''\in (t_l',t_r')$, $\mathfrak{I},t''\models M_1$ , and $\mathfrak{I},t''\models M_1$ implies $\forall t'''\in \langle t''+t_1,t''+t_2\rangle$, $t''$ is a time point in $\langle t'''-t_1,t'''-t_2\rangle$, such that $\mathfrak{I},t''\models M_1$, and because $\forall t'' \in (t_l',t_r),\mathfrak{I},t''\models M_2$ , we have $\forall t_4'\in (t_l'+t_1,t_r'+t_2)\cap (t_l',t_r),\mathfrak{I},t_4'\models M$.
    By the same reasoning from the $\diamondminus$ case, we have $(t_l,t_r)\subseteq (t_l'+t_1,t_r'+t_2)$, because $(t_l,t_r)\subseteq (t_l',t_r)$, we have $\forall t_4'\in (t_l,t_r),\mathfrak{I},t_4'\models M$.

    The case for $M = M_2 \mathcal{U}_{\langle t_1,t_2\rangle} M_1$ is analogous.
    
\end{proof}

\begin{proposition}\label{proposition:pi,dinterpretationatomnotsat}
    Let $M$ be a body atom of a ground rule instance $r'$ of a rule $r\in\Pi$, and $\Pi,\mathcal{D}$ be bounded, then for a $(\Pi,\mathcal{D})$-interpretation $\mathfrak{I}$ and a time point $t$, $\mathfrak{I},t\not\models M$ implies $\forall t'\in \varrho$, $\mathfrak{I},t'\not\models M$, where $\varrho$ is the $\Pi,\mathcal{D}$-interval containing $t$.
\end{proposition}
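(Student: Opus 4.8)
The plan is to obtain this proposition as the immediate dual of the already-proven Proposition~\ref{proposition:pi,dinterpretationatomsat}, rather than repeating the full structural induction. Taken together, the two propositions assert that every body atom $M$ has a \emph{constant} truth value across any single $(\Pi,\mathcal{D})$-interval: the previous proposition says that if $M$ holds at some point of $\varrho$ then it holds at every point of $\varrho$, and the present one says that if $M$ fails at some point of $\varrho$ then it fails at every point of $\varrho$. These are logically equivalent once the $(\Pi,\mathcal{D})$-intervals are known to behave well, so essentially all the case analysis has already been carried out in the proof of Proposition~\ref{proposition:pi,dinterpretationatomsat}.

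Concretely, I would argue by contradiction. Suppose $\mathfrak{I},t\not\models M$ with $t\in\varrho$, but that some $t'\in\varrho$ satisfies $\mathfrak{I},t'\models M$. Since $t$ and $t'$ lie in the same interval $\varrho$, and $\varrho$ is by hypothesis the $(\Pi,\mathcal{D})$-interval containing $t$, the $(\Pi,\mathcal{D})$-interval containing $t'$ is also $\varrho$. The hypotheses of Proposition~\ref{proposition:pi,dinterpretationatomsat} — that $M$ is a body atom of a ground rule instance, that $\Pi,\mathcal{D}$ are bounded, and that $\mathfrak{I}$ is a $(\Pi,\mathcal{D})$-interpretation — are exactly those assumed here, so applying that proposition at the time point $t'$ yields $\mathfrak{I},t''\models M$ for every $t''\in\varrho$. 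In particular $\mathfrak{I},t\models M$, contradicting $\mathfrak{I},t\not\models M$. Hence no such $t'$ exists, and $\mathfrak{I},t'\not\models M$ for all $t'\in\varrho$, as required.

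The one point that genuinely needs checking — and the main, albeit minor, obstacle — is the claim that each time point lies in a \emph{unique} $(\Pi,\mathcal{D})$-interval, so that ``the interval containing $t$'' and ``the interval containing $t'$'' are forced to coincide whenever $t,t'\in\varrho$. This follows because, for bounded $\mathcal{D}$, the $(\Pi,\mathcal{D})$-ruler is a finite union of arithmetic progressions with common step $div(\Pi)$ and is therefore locally finite; consecutive ruler points are thus well-defined, and the punctual intervals at ruler points together with the open intervals strictly between consecutive ruler points partition the timeline. I would record this partition property explicitly, since it underpins the well-definedness of $\varrho$ used in the argument. As an alternative, one could mirror the structural induction on $M$ from the previous proof, treating each metric operator case in turn; but the reduction above is far shorter and reuses that completed case analysis verbatim.
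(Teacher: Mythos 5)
Your proposal is correct and matches the paper's own proof, which likewise derives the statement by contraposition from Proposition~\ref{proposition:pi,dinterpretationatomsat}: if some $t'\in\varrho$ satisfied $M$, that proposition would force $M$ to hold at every point of $\varrho$, including $t$, a contradiction. Your additional remark on the uniqueness of the $(\Pi,\mathcal{D})$-interval containing a point is a reasonable well-definedness check but not something the paper spells out.
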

\begin{proof}
    If $\exists t'\in \varrho$, $\mathfrak{I},t'\models M$, then by Proposition~\ref{proposition:pi,dinterpretationatomsat} $\forall t'\in \varrho$, $\mathfrak{I},t'\models M$, which contradicts $\mathfrak{I},t\not\models M$.
\end{proof}
\begin{proposition}\label{proposition:punctualfactsareenoughforcontainment}
    For two interpretations $\mathfrak{I}_1,\mathfrak{I}_2$, if $\mathfrak{I}_1$ satisfies every punctual fact $f$ satisfied by $\mathfrak{I}_2$, then $\mathfrak{I}_2\subseteq\mathfrak{I}_1$ 
\end{proposition}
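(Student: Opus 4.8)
The plan is to reduce the statement to an immediate consequence of the definitions of punctual-fact satisfaction and of interpretation containment. First I would observe that an interpretation $\mathfrak{I}$ satisfies a punctual fact $R(\bm{t})@[t,t]$ if and only if $\mathfrak{I}, t \models R(\bm{t})$. This is because, by the definition of fact satisfaction, $\mathfrak{I} \models R(\bm{t})@\varrho$ requires $\mathfrak{I}, t' \models R(\bm{t})$ for every $t' \in \varrho$, and for the punctual interval $[t,t]$ the unique such time point is $t$ itself. Thus the punctual facts satisfied by an interpretation are in exact correspondence with the pairs $(R(\bm{t}), t)$ at which that interpretation makes $R(\bm{t})$ true.

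With this correspondence established, I would fix an arbitrary ground relational atom $R(\bm{t})$ and an arbitrary time point $t \in \mathbb{Q}$ such that $\mathfrak{I}_2, t \models R(\bm{t})$. By the observation above, this is equivalent to saying that $\mathfrak{I}_2$ satisfies the punctual fact $R(\bm{t})@[t,t]$. The hypothesis of the proposition then guarantees that $\mathfrak{I}_1$ also satisfies $R(\bm{t})@[t,t]$, and unfolding the same correspondence in the reverse direction yields $\mathfrak{I}_1, t \models R(\bm{t})$.

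Since $R(\bm{t})$ and $t$ were chosen arbitrarily, I would conclude that $\mathfrak{I}_2, t \models R(\bm{t})$ implies $\mathfrak{I}_1, t \models R(\bm{t})$ for every ground relational atom and every time point, which is precisely the definition of $\mathfrak{I}_2 \subseteq \mathfrak{I}_1$. There is no substantive obstacle in this argument: the proposition is essentially a restatement of the containment relation once one notices that an interpretation is wholly determined by the set of punctual facts it satisfies. The only point requiring care is to read the containment relation in the intended direction (everything true in $\mathfrak{I}_2$ is true in $\mathfrak{I}_1$), consistent with the stipulation that the empty interpretation is contained in every interpretation, and to keep the equivalence between punctual-fact satisfaction and pointwise entailment aligned throughout.
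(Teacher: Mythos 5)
Your proof is correct and follows essentially the same route as the paper's: both arguments reduce satisfaction of an arbitrary (or pointwise) fact to satisfaction of the corresponding punctual facts, apply the hypothesis, and reassemble; the paper phrases this over an arbitrary fact $M@\varrho$ while you work directly with the pointwise definition of containment, which is an immaterial difference. Your explicit remark about reading the containment definition in the direction consistent with the empty-interpretation stipulation is a fair observation, and the paper's own proof tacitly does the same.
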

\begin{proof}
    For each fact $f'= M@\varrho$ such that $\mathfrak{I}_2\models f'$, it must be so that $\mathfrak{I}_2,t \models M$ for each $t\in \varrho$, which means for each punctual fact $f''=M@t$ such that $t\in\varrho$, $\mathfrak{I}_2\models f''$ and $\mathfrak{I}_1\models f''$.
    This in turn means that for each $t\in \varrho$, $\mathfrak{I}_1,t \models f$, and finally, $\mathfrak{I}_1\models f'$.
\end{proof}

For an interpretation $\mathfrak{I}$, we say a dataset $\mathcal{D}$ \textbf{represents} $\mathfrak{I}$, denoted by $\mathcal{D} = \mathsf{rep}(\mathfrak{I})$, if $\mathfrak{I} = \mathfrak{I}_{\mathcal{D}}$, for each $M@\varrho_1,M@\varrho_2\in\mathcal{D}$ with the same relational atom either $\varrho_1\cup\varrho_2$ is not an interval if.

To help capture the derivations of each DatalogMTL facts, we define a structure \textbf{derivation tree}. 

We say that a ground rule $r$ \textbf{derives} a fact $f$ from a dataset $\mathcal{D}$ at $t$, for some time point $t$, if $\mathfrak{I}_{\mathcal{D}},t\models body(r')$, and for any interpretation $\mathfrak{I}$ such that $\mathfrak{I},t\models head(r')$, $\mathfrak{I}\models f$. 
When the specific value of $t$ is of no interest, we can avoid mentioning the existence of some $t$, and say $r$ derives $f$ from $\mathcal{D}$ if $r$ derives $f$ from $\mathcal{D}$ at some $t$.
Furthermore, we say a ground rule $r$ does not derive $f$ from $\mathcal{D}$, if no such $t$ exists such $r$ derives $f$ from $\mathcal{D}$ at $t$.

 We say a ground rule $r$ \textbf{minimally derives} $f$ from $\mathcal{D}$, if $r$ derives $f$ from $\mathcal{D}$ and for each $\mathcal{D}'$ such that $\mathfrak{I}_{\mathcal{D}'}\subsetneq\mathfrak{I}_{\mathcal{D}}$, $r$ does not derive $f$ from $\mathcal{D}'$.
\begin{proposition}\label{proposition:derivesimpliesminimal}
    If $r$ derives $f$ from $\mathcal{D}$ at $t$, then there must be $\mathcal{D}'$ such that $r$ minimally derives $f$ from $\mathcal{D}'$ at $t$ and $\mathfrak{I}_{\mathcal{D}'}\subseteq \mathfrak{I}_{\mathcal{D}}$.
\end{proposition}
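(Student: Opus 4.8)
The plan is to first discard the part of the definition of ``derives'' that is independent of the dataset, recasting the claim as the existence of a $\subseteq$-minimal witness. In the definition of $r$ deriving $f$ from $\mathcal{D}$ at $t$, the requirement that every interpretation $\mathfrak{I}$ with $\mathfrak{I},t\models head(r)$ also satisfies $f$ mentions neither $\mathcal{D}$ nor $\mathfrak{I}_{\mathcal{D}}$; since the hypothesis gives it, it holds for every dataset. Thus for any dataset $\mathcal{D}''$ we have that $r$ derives $f$ from $\mathcal{D}''$ at $t$ exactly when $\mathfrak{I}_{\mathcal{D}''},t\models body(r)$. The whole problem therefore reduces to producing a $\subseteq$-minimal element of the family $\mathcal{F}=\{\mathfrak{I}\mid \mathfrak{I}\subseteq\mathfrak{I}_{\mathcal{D}}\text{ and }\mathfrak{I},t\models body(r)\}$, which is nonempty because it contains $\mathfrak{I}_{\mathcal{D}}$; from a minimal $\mathfrak{I}'$ the dataset $\mathcal{D}'=\mathsf{rep}(\mathfrak{I}')$ then witnesses the proposition, as any $\mathfrak{I}_{\mathcal{D}''}\subsetneq\mathfrak{I}'$ lying in $\mathcal{F}$ is excluded by minimality.

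The first difficulty is that a direct appeal to Zorn's lemma does not establish minimality, because $\mathcal{F}$ is not closed under intersections of descending chains: a chain of witnesses for a subformula $\diamondminus_{(0,1]}M$ may slide its single witness toward an excluded endpoint, so the intersection keeps no witness and escapes $\mathcal{F}$; and nested shapes such as $\boxplus_\varrho\diamondminus_{\varrho'}M$ enforce covering conditions rather than forcing whole intervals, so even the rigid part of a witness behaves subtly. I would therefore replace chain-completeness by finiteness. Since $\Pi$ is bounded, evaluating $body(r)$ at $t$ inspects only the bounded region within $\mathsf{depth}(\Pi)$ of $t$, and since $\mathfrak{I}_{\mathcal{D}}$ is finitely represented, the breakpoints relevant there---the finitely many endpoints of $\mathcal{D}$, shifted by the finitely many interval endpoints of $\Pi$ along the bounded nesting of $r$, in the spirit of the $(\Pi,\mathcal{D})$-ruler---form a finite set. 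I would then fix a finite stock of candidate building blocks: the finitely many $(\Pi,\mathcal{D})$-interval pieces meeting this region, used to meet universal obligations, together with one representative point inside each piece and each breakpoint, used as existential witnesses. The sub-interpretations of $\mathfrak{I}_{\mathcal{D}}$ supported on these blocks form a finite subposet $\mathcal{F}^\flat\subseteq\mathcal{F}$.

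The main technical step, and the step I expect to be the real obstacle, is a discretisation lemma stating that every witness can be rounded to a block-supported one without growing: for each $\mathfrak{I}\in\mathcal{F}$ there is $\mathfrak{I}^\flat\in\mathcal{F}^\flat$ sitting inside the block-closure of $\mathfrak{I}$ within $\mathfrak{I}_{\mathcal{D}}$. This would be proved by induction on the nesting of $body(r)$, using that the truth of each body atom is constant across a $(\Pi,\mathcal{D})$-interval (Propositions~\ref{proposition:pi,dinterpretationatomsat} and~\ref{proposition:pi,dinterpretationatomnotsat}): universal obligations are met by retaining whole interval pieces, while each existential obligation is redirected to the representative point of a cell in which its atom already holds. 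The delicate part is reconciling these single-point existential witnesses with the interval-valued forced parts under nesting. Granting the lemma, $\mathcal{F}^\flat$ is a nonempty finite poset and hence has a $\subseteq$-minimal element $\mathfrak{I}'$, and this $\mathfrak{I}'$ is minimal in all of $\mathcal{F}$: any strictly smaller witness $\mathfrak{I}''\subsetneq\mathfrak{I}'$ in $\mathcal{F}$ would round, within $\mathfrak{I}'$, to a block-supported witness strictly below $\mathfrak{I}'$, contradicting the minimality of $\mathfrak{I}'$ in $\mathcal{F}^\flat$. Taking $\mathcal{D}'=\mathsf{rep}(\mathfrak{I}')$ then yields $\mathfrak{I}_{\mathcal{D}'}\subseteq\mathfrak{I}_{\mathcal{D}}$ from which $r$ minimally derives $f$ at $t$.
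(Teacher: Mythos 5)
Your reduction of the problem is right, and you have correctly put your finger on something the paper's own one-line proof glosses over: the paper simply takes ``the least model satisfying $body(r)$ at $t$'' and sets $\mathcal{D}'=\mathsf{rep}(\mathfrak{I})$, whereas for bodies containing $\diamondminus$, $\diamondplus$, $\mathcal{U}$ or $\mathcal{S}$ the family $\mathcal{F}$ of sub-interpretations of $\mathfrak{I}_{\mathcal{D}}$ satisfying $body(r)$ at $t$ has no least element and is not closed under intersections of descending chains, so both a naive ``least model'' and a naive Zorn argument fail. That diagnosis is a genuine contribution of your write-up. However, your repair does not close the argument. First, the discretisation lemma is only announced, not proved, and you yourself flag its nested case as the obstacle. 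Second, and more seriously, the transfer from minimality in $\mathcal{F}^\flat$ to minimality in $\mathcal{F}$ does not follow from the form of the lemma you state: you only guarantee $\mathfrak{I}''^\flat$ lands inside the \emph{block-closure} of $\mathfrak{I}''$, and the block-closure of a proper subset of a block-supported $\mathfrak{I}'$ can equal $\mathfrak{I}'$ itself, so the ``strictly below'' conclusion is unjustified.

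This is not a hypothetical worry: for a covering atom such as $\boxminus_{[0,1]}\diamondminus_{(0,0.1)}M$, a witness must supply $M$ at some point of every open window $(s-0.1,s)$ with $s\in[t-1,t]$; no fixed finite stock of one representative point per $(\Pi,\mathcal{D})$-interval achieves this (a window can sit strictly between two consecutive representatives), so any element of $\mathcal{F}^\flat$ must use whole interval pieces, yet $\mathcal{F}$ contains strictly smaller witnesses consisting of finitely many points spaced just under $0.1$ apart, at positions not drawn from any predetermined finite set. Hence a minimal element of $\mathcal{F}^\flat$ need not be minimal in $\mathcal{F}$, and the facts $\mathcal{D}'$ you output would not satisfy the minimal-derivation condition. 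A correct argument has to produce, below an arbitrary witness, a minimal one whose existential and covering obligations are discharged by a \emph{finite set of points chosen from that witness} (by structural induction on the body atom, extracting finitely many witnesses per obligation and showing no proper sub-interpretation of the resulting finite union of points and forced intervals still satisfies the body), rather than by rounding to a fixed grid. As it stands the proposal has a genuine gap at exactly the step the paper's proof also leaves implicit.
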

\begin{proof}
    Let the least model satisfying $body(r)$ at $t$ be $\mathfrak{I}$, then $\mathfrak{I}\subseteq \mathfrak{I}_{\mathcal{D}}$, and $r$ minimally derives $f$ from $\mathsf{rep}(\mathfrak{I})$ at $t$.
\end{proof}

A derivation tree represents DatalogMTL facts and derivations of facts from sets of facts by its nodes and edges, each node is labelled with a single fact. Different nodes can be labelled with the same fact.
Its directed edges are labelled with ground rules, and they go into the node representing the derived fact.
For a non-leaf derivation tree node, let the fact it represents be $f$, and the set of facts represented by its in-neighbours be $\mathcal{IN}$, 
then all its incoming edges must be labelled by a ground rule instance $r'$ of a rule $r\in\Pi$, 
such that $r'$ derives $f$ from $\mathcal{IN}$. 
Formally, a derivation tree is an ordered pair $(V,E)$, where $V$ is a set of labelled nodes and $E$ a set of labelled edges.
For a labelled node $v$ or a labelled edge $e$ of a derivation tree, we denote the label of $v$ by $l(v)$, and the label of $e$ by $l(e)$.

Given a fixed bounded DatalogMTL program $\Pi$ and a dataset $\mathcal{D}$, for each DatalogMTL fact $f$ and an ordinal $i$, we define a set of derivation trees, denoted by $\mathfrak{D}_{\Pi,\mathcal{D},i}(f)$, via transfinite induction on the index $i$ of the transfinite sequence of interpretations, $\mathfrak{I}_0, \mathfrak{I}_1,...,\mathfrak{I}_{\omega_1},....$, defined by successive applications of $T_\Pi$ to $\mathfrak{I}_\mathcal{D}$:
\begin{itemize}
    \item when i=0, $\mathfrak{D}_{\Pi,\mathcal{D},0}(f)=\emptyset$ if $\mathfrak{I}_{0}\not\models f$, and $\mathfrak{D}_{\Pi,\mathcal{D},0}(f)=\{(\{V_f\},\emptyset)\}$ otherwise.
    \item When $i=\alpha+1$, where $\alpha$ is a successor ordinal, for each dataset $\mathcal{B}$ such that $\mathfrak{I}_{\alpha}\models\mathcal{B}$,
    and there is a ground instance $r'$ of a rule $r\in\Pi$, such that $r'$ \textbf{minimally derives} $f$ from $\mathcal{B}$, we construct a set of derivation trees $T(\Pi,\mathcal{D},\alpha+1,r',\mathcal{B},f)$ as follows
    \begin{enumerate}
        \item let $\mathcal{B}=\{f_1,f_2,...,f_n\}$ for some n, and let
        \begin{align*}
            \bar{S} = \{(T_1,T_2,...,T_n)\mid T_i\in \mathfrak{D}_{\Pi,\mathcal{D},\alpha}(f_i)\}.
        \end{align*}
        \item For a tuple $\bm{s}\in \bar{S}$, let $\bm{s} = (T_1,T_2,...,T_n)$, $T_i = (V_i,E_i)$, the root of $T_i$ be $t_i$ then, let
        \begin{align*}
            T(\Pi,\mathcal{D},\alpha+1,r',\mathcal{B},f) = \{T\mid T = (V,E), \text{ such that}\\ \text{for some $\bm{s}\in\bar{S}$, } V = \bigcup_{i=1}^n V_i\cup\{t_f\}, t_f \notin \bigcup_{i=1}^n V_i
            \\ E = \bigcup_{i=1}^n E_i \cup\{e_i\mid e_i = (t_i,t_f), e_i\text{ labelled by } r'\}
            \}
        \end{align*}
    \end{enumerate}
    And we have
    \begin{align*}
        \mathfrak{D}_{\Pi,\mathcal{D},\alpha+1}(f) = \mathfrak{D}_{\Pi,\mathcal{D},\alpha}(f)\cup \bigcup _{\mathcal{B},r'} T(\Pi,\mathcal{D},\alpha+1,r',\mathcal{B},f)
    \end{align*}
    \item When $i=\beta$, where $\beta$ is a limit ordinal, then
    \begin{align*}
    \mathfrak{D}_{\Pi,\mathcal{D},\beta}(f) = \bigcup_{\alpha<\beta}\mathfrak{D}_{\Pi,\mathcal{D},\alpha}(f),
    \end{align*}
    where $\alpha$ iterates over successor ordinals.
\end{itemize}

For the first limit ordinal $\omega_1$, we denote $\mathfrak{D}_{\Pi,\mathcal{D},\omega_1}(f)$ by $\mathfrak{D}_{\Pi,\mathcal{D}}(f)$. Clearly, for a fact $f$, \begin{align*}
    \mathfrak{D}_{\Pi,E}(f)\neq \emptyset \iff \mathfrak{C}_{\Pi,E}\models f
\end{align*} 

For a DatalogMTL fact $f$, program $\Pi$, datasets $E,E^-$ such that $\mathfrak{I}_{E^-}\subseteq \mathfrak{I}_{E}$, we say $f$ is \textbf{partially rooted} in $E^-$ regarding $\Pi,E$, 
if there exists $(V,E)\in\mathfrak{D}_{\Pi,E}(f)$, such that there exists $v\in V$, for which $\mathfrak{I}_{E^-}\cap \mathfrak{I}_{\{l(v)\}}\neq\mathfrak{I}_{\emptyset}$. 
We omit the `regarding $\Pi,E$'  when the context is clear.
\begin{proposition}
\label{proposition:partiallyrootedsub}
    If there is a derivation tree $T=(V,\mathcal{E})$, a node $v\in V$ such that $T\in\mathfrak{D}_{\Pi,E}(f)$ and $l(v)$ is partially rooted in $E^-$, then $f$ is also partially rooted in $E^-$.
\end{proposition}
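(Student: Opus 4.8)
The plan is to prove the statement by transfinite induction on the least ordinal $i$ with $T\in\mathfrak{D}_{\Pi,E,i}(f)$, reducing the claim for $f$ to the same claim for one of the body facts used in the final derivation step, and then rebuilding a derivation tree for $f$ around a partially rooted derivation tree of that body fact. Before starting the induction I would record two monotonicity facts that are immediate from the definitions and are needed to align ordinal levels: the tree sets are monotone, i.e. $\mathfrak{D}_{\Pi,E,\alpha}(g)\subseteq\mathfrak{D}_{\Pi,E,\beta}(g)$ whenever $\alpha\le\beta$; and the interpretations $\mathfrak{I}_\alpha$ obtained by iterating $T_\Pi$ on $\mathfrak{I}_E$ are monotone, so $\mathfrak{I}_\alpha\models\mathcal{B}$ implies $\mathfrak{I}_\beta\models\mathcal{B}$ for $\beta\ge\alpha$. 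Both hold because at successor steps the new set (resp. interpretation) contains the previous one, and at limit steps it is a union.

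The base and limit cases are routine. If $i=0$, then $T$ is the single node $V_f$, so $v$ is its root, $l(v)=f$, and the hypothesis ``$l(v)$ is partially rooted'' is literally ``$f$ is partially rooted''. If $i$ is a limit ordinal, then by definition $T\in\mathfrak{D}_{\Pi,E,\alpha}(f)$ for some $\alpha<i$, and the induction hypothesis applies verbatim.

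For the successor case $i=\alpha+1$, either $T\in\mathfrak{D}_{\Pi,E,\alpha}(f)$, in which case the induction hypothesis applies directly, or $T$ is produced by the level-$(\alpha{+}1)$ construction from a ground rule instance $r'$ minimally deriving $f$ from some $\mathcal{B}=\{f_1,\dots,f_n\}$ with $\mathfrak{I}_\alpha\models\mathcal{B}$, together with subtrees $T_k\in\mathfrak{D}_{\Pi,E,\alpha}(f_k)$. If $v$ is the root $t_f$, then $l(v)=f$ and we are done. Otherwise $v$ lies in exactly one subtree $T_k$, so $v$ is a node of $T_k\in\mathfrak{D}_{\Pi,E,\alpha}(f_k)$ whose label is partially rooted; since $\alpha<i$, the induction hypothesis yields that $f_k$ is partially rooted, witnessed by a tree $T'_k\in\mathfrak{D}_{\Pi,E,\gamma}(f_k)$ containing a node $w$ with $\mathfrak{I}_{E^-}\cap\mathfrak{I}_{\{l(w)\}}\neq\mathfrak{I}_{\emptyset}$. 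Setting $\beta=\max(\alpha,\gamma)$, the two monotonicity facts give $T'_k\in\mathfrak{D}_{\Pi,E,\beta}(f_k)$, $T_j\in\mathfrak{D}_{\Pi,E,\beta}(f_j)$ for $j\neq k$, and $\mathfrak{I}_\beta\models\mathcal{B}$. As $r'$ still minimally derives $f$ from $\mathcal{B}$, applying the level-$(\beta{+}1)$ construction to $r'$, $\mathcal{B}$, and the tuple obtained from $(T_1,\dots,T_n)$ by replacing $T_k$ with $T'_k$ produces a tree $\widehat{T}\in\mathfrak{D}_{\Pi,E,\beta+1}(f)\subseteq\mathfrak{D}_{\Pi,E}(f)$ whose node set contains all nodes of $T'_k$, in particular $w$. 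Hence $f$ is partially rooted in $E^-$, completing the induction.

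The main obstacle is the ordinal bookkeeping in this rebuilding step: the witnessing tree $T'_k$ for $f_k$ need not live at the same level $\alpha$ as the original subtree $T_k$, so the generation rule for derivation trees cannot be invoked verbatim. This is precisely what the two monotonicity observations are for, since they let me raise every subtree and the interpretation to the common level $\beta$ so that the construction of $\mathfrak{D}_{\Pi,E,\beta+1}(f)$ becomes applicable; I expect the only delicate point to be checking that the replacement tuple still meets the hypotheses of that construction, namely that $r'$ and $\mathcal{B}$ are unchanged and that minimality of the derivation is therefore preserved.
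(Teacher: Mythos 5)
Your proof is correct and takes essentially the same route as the paper's: both arguments graft a derivation tree witnessing partial rootedness into a subtree of $T$ and observe that the resulting tree is still in $\mathfrak{D}_{\Pi,E}(f)$. Your transfinite induction on the construction level, together with the two monotonicity observations, merely makes explicit the ordinal bookkeeping that the paper's one-line justification (``by the way we construct derivation trees'') leaves implicit.
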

\begin{proof}
    Let the tree be $T=(V,\mathcal{E})$ and the partially rooted fact be $f'$, the node representing it be $v_{f'}\in V$. 
    Then there must a derivation tree $T'\in\mathfrak{D}_{\Pi,E}(f')$, such that $T'$ contains a node $v'$, such that $\mathfrak{I}_{E^-}\cap\mathfrak{I}_{l(v')}\neq \mathfrak{I}_{\emptyset}$.
    By the way we construct derivation trees we know that if we replace the subtree of $T$ rooted at $v_{f'}$ with $T'$, the tree we derive is also in $\mathfrak{D}_{\Pi,E}(f)$, and this tree contains the node $v'$, which means $f$ is partially rooted in $E^-$.
\end{proof}

\begin{proposition}\label{proposition:leafrange}
    For a fact $f$ and a dataset $\mathcal{D'}$ such that $ \mathfrak{I}_{\mathcal{D'}}\subseteq\mathfrak{I}_\mathcal{D}$, if $\exists T = (V,E)$, $T\in\mathfrak{D}_{\Pi,\mathcal{D}}(f)$, such that $\forall v\in \{v\in V \mid \forall(v_1,v_2)\in E, v_2\neq v\}$ , $\mathfrak{I}_{\mathcal{D'}}\models l(v)$, then $\mathfrak{C}_{\Pi,\mathcal{D}'}\models f$.
\end{proposition}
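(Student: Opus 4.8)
The plan is to prove the statement by well-founded induction on the derivation tree $T$, propagating the satisfaction of the leaf labels in $\mathfrak{I}_{\mathcal{D}'}$ upward to the root along the rule applications recorded by the edges. Before the induction can be carried out, I would first observe that every $T \in \mathfrak{D}_{\Pi,\mathcal{D}}(f)$ is a finite tree: each non-leaf node has in-degree equal to the cardinality of the finite dataset $\mathcal{B}$ labelling its in-neighbours, so $T$ is finitely branching, and since $T$ is assembled from subtrees of strictly smaller ordinal rank in the transfinite construction, it admits no infinite descending path and is thus well-founded; a finitely branching well-founded tree is finite. This justifies ordinary structural induction (equivalently, induction on the least ordinal $i$ with $T \in \mathfrak{D}_{\Pi,\mathcal{D},i}(f)$), where the induction hypothesis is precisely the statement of the proposition applied to each subtree rooted at an in-neighbour of the root.

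For the base case, I would observe that the root of $T$ is itself a leaf, so $f = l(\mathrm{root})$ and the hypothesis gives $\mathfrak{I}_{\mathcal{D}'} \models f$; since the sequence $T^\alpha_\Pi(\mathfrak{I}_{\mathcal{D}'})$ is increasing we have $\mathfrak{I}_{\mathcal{D}'} \subseteq \mathfrak{C}_{\Pi,\mathcal{D}'}$, whence $\mathfrak{C}_{\Pi,\mathcal{D}'} \models f$. For the inductive step, let the root $v_f$ carry label $f$ and have incoming edges labelled by a ground rule instance $r'$ issuing from in-neighbours with labels $\mathcal{B} = \{f_1,\dots,f_n\}$; by construction $r'$ minimally derives $f$ from $\mathcal{B}$ at some time point $t$, and the subtree $T_i$ rooted at the $i$-th in-neighbour lies in $\mathfrak{D}_{\Pi,\mathcal{D}}(f_i)$. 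The only new edges introduced at the root point into $v_f$, so the leaves of $T$ are exactly the union of the leaves of the $T_i$; hence each $T_i$ again satisfies the leaf hypothesis, and the induction hypothesis yields $\mathfrak{C}_{\Pi,\mathcal{D}'} \models f_i$ for every $i$. Consequently $\mathfrak{I}_{\mathcal{B}} \subseteq \mathfrak{C}_{\Pi,\mathcal{D}'}$, and since all MTL body operators are monotone, $\mathfrak{I}_{\mathcal{B}}, t \models body(r')$ lifts to $\mathfrak{C}_{\Pi,\mathcal{D}'}, t \models body(r')$. As $\mathfrak{C}_{\Pi,\mathcal{D}'}$ is a model of $\Pi$, it satisfies the ground rule $r'$, so $\mathfrak{C}_{\Pi,\mathcal{D}'}, t \models head(r')$; finally, the definition of ``$r'$ derives $f$ from $\mathcal{B}$ at $t$'' forces any interpretation satisfying $head(r')$ at $t$ to satisfy $f$, giving $\mathfrak{C}_{\Pi,\mathcal{D}'} \models f$.

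The main obstacle I anticipate is not the upward propagation itself but the two supporting facts that make it rigorous: first, confirming that derivation trees are genuinely finite (or, alternatively, phrasing the whole argument as a clean transfinite induction on the construction index so that finiteness is never invoked), and second, the precise bookkeeping showing that the leaves of $T$ decompose into the leaves of its subtrees, which is what lets the leaf hypothesis descend to each $T_i$. The monotonicity of DatalogMTL bodies under $\subseteq$, needed to pass from $\mathfrak{I}_{\mathcal{B}}$ to $\mathfrak{C}_{\Pi,\mathcal{D}'}$ in the body-satisfaction step, is standard since the language has no negation, but I would isolate it as a small auxiliary observation. I note that the hypothesis $\mathfrak{I}_{\mathcal{D}'} \subseteq \mathfrak{I}_{\mathcal{D}}$ is not actually exercised by this argument; it is part of the surrounding deletion context and is simply carried along harmlessly.
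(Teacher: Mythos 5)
Your proof is correct and follows essentially the same route as the paper's: an induction up the derivation tree that propagates satisfaction of the leaf labels to the root via the minimally-deriving rule instances. The only cosmetic difference is that the paper inducts on node height and tracks the stage $T^i_\Pi(\mathfrak{I}_{\mathcal{D}'})$ at which each label becomes satisfied, whereas you do structural induction on subtrees and pass directly to $\mathfrak{C}_{\Pi,\mathcal{D}'}$ using body monotonicity and the fact that the canonical model satisfies $\Pi$; both hinge on the same step and your explicit remarks on tree finiteness and the unused hypothesis $\mathfrak{I}_{\mathcal{D}'}\subseteq\mathfrak{I}_{\mathcal{D}}$ are accurate.
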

\begin{proof}
    We define the height of each node in $T$ inductively, denoted by $h(v)$ for a $v\in V$, as follows:
    \begin{itemize}
        \item The height of a leaf of $T$ is 0.
        \item For a non-leaf node, its height is the maximum height of all its in-neighbours plus 1.
    \end{itemize}
    Let the root node of $T$ be $root$, and let $h(root) = max$.

    We show inductively for each $i\in \{0,\dots,max\}$ that $\forall v\in \{v\in V\mid h(v)=i\}$, $T^i_\Pi(\mathfrak{I}_{\mathcal{D}'})\models l(v)$.

    The base case, where $i=0$, is trivial because the label of each leaf node of $T$ is satisfied by $\mathfrak{I}_{\mathcal{D}'}$.

    For the inductive step, consider a node $v$ such that $h(v) =i+1$. 
    Then all its in-neighbours must be satisfied by $T^i_{\Pi}(\mathfrak{I}_{\mathcal{D}'})$ by the induction hypothesis.
    Let the label of all its incoming edges be $r'$, and we have $T^i_{\Pi}(\mathfrak{I}_{\mathcal{D}'})\models body(r')$, whereby $T^{i+1}_{\Pi}(\mathfrak{I}_{\mathcal{D}'})\models head(r')$. 
    Since by definition of a derivation tree, $r'$ minimally derives $l(v)$ from the dataset represented by the in-neighbours of $v$, we have $T^{i+1}_{\Pi}(\mathfrak{I}_{\mathcal{D}'})\models l(v)$.
\end{proof}

\subsection{Properties about Deletion}
Assuming the same context, we say the \textbf{max deletion-impact interpretation}, abbreviated as the mdi-interpretation, denoted by $\mathfrak{M}^-_{\Pi,E,E^-}$, of $E^-$ regarding $\Pi,E$ is the least in $\bm{\mathfrak{I}}$, 
such that $\forall\mathfrak{I}\in\bm{\mathfrak{I}}$, for each fact $f$ partially rooted in $E^-$ regarding $\Pi,E$, $\mathfrak{I}\models f$. 
We may call the interpretations in $\bm{\mathfrak{I}}$ the \textbf{max deletion-impact models} of $E^-$ regarding $\Pi,E$. 

\begin{proposition}\label{proposition:mdiiincanon}
$\mathfrak{M}_{\Pi,E,E^-}^-\subseteq\mathfrak{C}_{\Pi,E}$
\end{proposition}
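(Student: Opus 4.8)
The plan is to show that the canonical model $\mathfrak{C}_{\Pi,E}$ is itself a max deletion-impact model of $E^-$, i.e.\ that $\mathfrak{C}_{\Pi,E} \in \bm{\mathfrak{I}}$, and then to invoke minimality: since $\mathfrak{M}^-_{\Pi,E,E^-}$ is by definition the \emph{least} element of $\bm{\mathfrak{I}}$, it is contained in every member of $\bm{\mathfrak{I}}$, and in particular in $\mathfrak{C}_{\Pi,E}$. Thus the whole statement reduces to verifying membership of $\mathfrak{C}_{\Pi,E}$ in $\bm{\mathfrak{I}}$.

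First I would unwind the definition of $\bm{\mathfrak{I}}$: an interpretation lies in $\bm{\mathfrak{I}}$ exactly when it satisfies every fact $f$ that is partially rooted in $E^-$ regarding $\Pi,E$. So it suffices to take an arbitrary partially rooted fact $f$ and argue $\mathfrak{C}_{\Pi,E}\models f$. By the definition of ``partially rooted'', there is a derivation tree $(V,\mathcal{E})\in\mathfrak{D}_{\Pi,E}(f)$ witnessing a node whose label overlaps $E^-$; the crucial consequence for us is simply that $\mathfrak{D}_{\Pi,E}(f)\neq\emptyset$ (the overlap condition itself is irrelevant to this proposition).

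Next I would apply the equivalence recorded immediately after the construction of derivation trees, namely $\mathfrak{D}_{\Pi,E}(f)\neq\emptyset \iff \mathfrak{C}_{\Pi,E}\models f$. From $\mathfrak{D}_{\Pi,E}(f)\neq\emptyset$ this gives $\mathfrak{C}_{\Pi,E}\models f$. Since $f$ was an arbitrary partially rooted fact, $\mathfrak{C}_{\Pi,E}$ satisfies all of them, establishing $\mathfrak{C}_{\Pi,E}\in\bm{\mathfrak{I}}$.

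Finally I would close the argument using the definition of the least element of a set of interpretations: because $\mathfrak{M}^-_{\Pi,E,E^-}$ is the least in $\bm{\mathfrak{I}}$, we have $\mathfrak{M}^-_{\Pi,E,E^-}\subseteq\mathfrak{I}'$ for every $\mathfrak{I}'\in\bm{\mathfrak{I}}$, and instantiating $\mathfrak{I}'=\mathfrak{C}_{\Pi,E}$ yields $\mathfrak{M}^-_{\Pi,E,E^-}\subseteq\mathfrak{C}_{\Pi,E}$, as required. I do not anticipate any real obstacle here: the entire content is the observation that a partially rooted fact is in particular a derivable fact, so that the set of partially rooted facts is a subset of the facts entailed by $(\Pi,E)$; the ``hard part'', if any, is only to be careful that the existence of the partial root is not an extra constraint restricting $\mathfrak{D}_{\Pi,E}(f)$ but merely guarantees its nonemptiness.
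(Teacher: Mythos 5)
Your proposal is correct and rests on the same key observation as the paper's proof, namely that a partially rooted fact has a nonempty derivation-tree set and is therefore satisfied by $\mathfrak{C}_{\Pi,E}$. Your packaging --- showing $\mathfrak{C}_{\Pi,E}\in\bm{\mathfrak{I}}$ and then invoking leastness of $\mathfrak{M}^-_{\Pi,E,E^-}$ --- is in fact slightly cleaner than the paper's, which argues directly about an arbitrary fact satisfied by $\mathfrak{M}^-_{\Pi,E,E^-}$ and implicitly assumes such a fact is itself partially rooted.
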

\begin{proof}
    For a fact $f$, if it is satisfied by $\mathfrak{M}_{\Pi,E,E^-}^-$, then by definition $\mathfrak{D}_{\Pi,E}(f)$ is not empty, and therefore
    $\mathfrak{C}_{\Pi,E}\models f$.
\end{proof}

\begin{proposition}\label{proposition:overdeletioncomplete}
$\mathfrak{C}_{\Pi,E} - \mathfrak{C}_{\Pi,E\setminus E^-\cup E^+} \subseteq \mathfrak{M}_{\Pi,E,E^-}^-$    
\end{proposition}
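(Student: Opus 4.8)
The plan is to prove the contrapositive and to reduce the whole statement to punctual facts. By Proposition~\ref{proposition:punctualfactsareenoughforcontainment}, the containment $\mathfrak{C}_{\Pi,E}-\mathfrak{C}_{\Pi,E\setminus E^-\cup E^+}\subseteq\mathfrak{M}_{\Pi,E,E^-}^-$ follows once I show, for every punctual fact $f$, that $\mathfrak{M}_{\Pi,E,E^-}^-\not\models f$ implies $f\notin\mathfrak{C}_{\Pi,E}-\mathfrak{C}_{\Pi,E\setminus E^-\cup E^+}$. Writing $E'=(E\setminus E^-)\cup E^+$, the difference is by definition contained in $\mathfrak{C}_{\Pi,E}$, so I may assume $\mathfrak{C}_{\Pi,E}\models f$, and it then suffices to prove $\mathfrak{C}_{\Pi,E'}\models f$. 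Since $\mathfrak{M}_{\Pi,E,E^-}^-$ satisfies every fact partially rooted in $E^-$, the hypothesis $\mathfrak{M}_{\Pi,E,E^-}^-\not\models f$ forces $f$ to be \emph{not} partially rooted in $E^-$; by the definition of partial rootedness, this means that no node of any derivation tree in $\mathfrak{D}_{\Pi,E}(f)$ carries a label whose interpretation overlaps $\mathfrak{I}_{E^-}$.

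Next I would extract a usable derivation. Because $\mathfrak{C}_{\Pi,E}\models f$, the set $\mathfrak{D}_{\Pi,E}(f)$ is nonempty, so I fix any tree $T=(V,\mathcal{E})\in\mathfrak{D}_{\Pi,E}(f)$. Every node of $T$, and in particular every leaf, then has a label $g=M_g@\varrho_g$ with $\mathfrak{I}_{E^-}\cap\mathfrak{I}_{\{g\}}=\mathfrak{I}_{\emptyset}$, while each leaf additionally satisfies $\mathfrak{I}_E\models g$ (leaves are stage-$0$ nodes). The crucial step is to show that these leaves survive the deletion, i.e. $\mathfrak{I}_{E\setminus E^-}\models g$: any set-element of $E$ that contributes $M_g$-coverage to a point of $\varrho_g$ cannot lie in $E^-$, since otherwise $\mathfrak{I}_{E^-}$ would assert $M_g$ at a point of $\varrho_g$, contradicting the disjointness of $g$ from $E^-$. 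Hence the set difference $\setminus E^-$ removes no fact that covers $g$, and the coverage of $g$ remains intact.

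I would then invoke Proposition~\ref{proposition:leafrange} with $\mathcal{D}=E$ and $\mathcal{D}'=E\setminus E^-$. This application is legitimate because $\mathfrak{I}_{E\setminus E^-}\subseteq\mathfrak{I}_E$, and all leaves of $T$ are satisfied by $\mathfrak{I}_{E\setminus E^-}$ by the previous step; the proposition yields $\mathfrak{C}_{\Pi,E\setminus E^-}\models f$. Finally, since $E\setminus E^-$ is a subset of $E'$ as datasets we have $\mathfrak{I}_{E\setminus E^-}\subseteq\mathfrak{I}_{E'}$, and monotonicity of the canonical-model construction (the operator $T_\Pi$ is monotone, so a larger least interpretation yields a larger canonical model) gives $\mathfrak{C}_{\Pi,E\setminus E^-}\subseteq\mathfrak{C}_{\Pi,E'}$, whence $\mathfrak{C}_{\Pi,E'}\models f$. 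This places $f$ outside the difference and completes the contrapositive.

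The step I expect to be the main obstacle is the ``leaves survive'' argument, as it is where the purely semantic condition defining partial rootedness, namely $\mathfrak{I}_{E^-}\cap\mathfrak{I}_{\{g\}}=\mathfrak{I}_{\emptyset}$, must be reconciled with the syntactic set difference $E\setminus E^-$ that forms the updated dataset; the representation of facts as atoms over intervals makes this reconciliation delicate, and it is cleanest to use that $E^-$ has been intersected with $E$ in line~1 of Algorithm~\ref{algo:DREDMTL} so that its facts are genuine set-elements of $E$. A secondary technical point requiring care is that Proposition~\ref{proposition:leafrange} demands $\mathfrak{I}_{\mathcal{D}'}\subseteq\mathfrak{I}_{\mathcal{D}}$, which is why the argument must detour through $E\setminus E^-$ rather than apply directly to $E'$, folding the insertion $E^+$ back in only at the end through monotonicity.
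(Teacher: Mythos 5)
Your proof is correct and uses essentially the same machinery as the paper's: derivation trees, the definition of partial rootedness, Proposition~\ref{proposition:leafrange}, and monotonicity of the canonical model under $\mathfrak{I}_{E\setminus E^-}\subseteq\mathfrak{I}_{E\setminus E^-\cup E^+}$. The only difference is presentational — you argue the contrapositive (not partially rooted $\Rightarrow$ all leaves survive $\Rightarrow$ $f$ rederivable), whereas the paper argues directly via the contrapositive of Proposition~\ref{proposition:leafrange}; your handling of the ``leaves survive'' step is in fact slightly more careful than the paper's.
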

\begin{proof}
    Consider a fact $f$ such that $\mathfrak{C}_{\Pi,E}\models f$ and $\mathfrak{C}_{\Pi,E\setminus E^-\cup E^+}\not\models f$.
    Because $\mathfrak{C}_{\Pi,E\setminus E^-}\subseteq \mathfrak{C}_{\Pi,E\setminus E^-\cup E^+}$, $\mathfrak{C}_{\Pi,E\setminus E^-}\not\models f$.
    Let $T\in \mathfrak{D}_{\Pi,E}(f)$, $T = (V,\mathcal{E})$. 
    Then by Proposition~\ref{proposition:leafrange}, $\exists v\in \{v\in V\mid \forall (v_1,v_2)\in \mathcal{E}, v_2\neq v\}$, $\mathfrak{I}_{E\setminus E^-}\not\models l(v)$, and therefore $\mathfrak{I}_{E^-}\models l(v)$, $f$ is partially rooted in $E^-$, which means $ \mathfrak{M}_{\Pi,E,E^-}^-\models f$.
\end{proof}
Given a DatalogMTL program $\Pi$, dataset $E$, we define the \textbf{immediate deletion-consequence operator} $D_{\Pi,E}$, 
mapping an interpretation $\mathfrak{I}$ into the least interpretation $D_{\Pi,E}(\mathfrak{I})$ containing $\mathfrak{I}$ satisfying for each ground rule instance $r'$ of a rule in $\Pi$, a fact $f$,
a dataset $\mathcal{D}$ such that if $r'$ minimally derives $f$ from $\mathcal{D}$, $\mathfrak{C}_{\Pi,E}\models \mathcal{D}$, 
and $\mathfrak{I}\cap\mathfrak{I}_{\mathcal{D}}\neq\mathfrak{I}_{\emptyset}$, then $D_{\Pi,E}(\mathfrak{I})\models f$. 

For a given $E^-$ such that $\mathfrak{I}_{E^-}\subseteq \mathfrak{I}_{E}$, successive applications of $D_{\Pi,E}$ to $\mathfrak{I}_{E^-}$ yield a transfinite sequence of interpretations such that 
\begin{enumerate*}[label=(\roman*)]
    \item $D_{\Pi,E}^{0}(\mathfrak{I}_{E^-}) = \mathfrak{I}_{E^-}$,
    \item $D_{\Pi,E}^{\alpha+1}(\mathfrak{I}_{E^-}) = D_{\Pi,E}(D_{\Pi,E}^{\alpha}(\mathfrak{I}_{E^-}))$, for $\alpha$ a successor ordinal, and
    \item $D_{\Pi,E}^{\beta}(\mathfrak{I}_{E^-}) = \bigcup_{\alpha<\beta}D_{\Pi,E}^{\alpha}(\mathfrak{I}_{E^-})$, for $\beta$ a limit ordinal
\end{enumerate*}
\begin{proposition}\label{proposition:overdeletionruler}
    For each $\alpha\leq\omega_1,D^{\alpha}_{\Pi,E}(\mathfrak{I}_{E^-})$ is a $(\Pi,E\cup E^+)$-interpretation.
\end{proposition}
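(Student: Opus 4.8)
The plan is to prove the statement by transfinite induction on $\alpha$, matching the three clauses in the definition of the sequence $D^{\alpha}_{\Pi,E}(\mathfrak{I}_{E^-})$. Throughout I would use that $\mathfrak{C}_{\Pi,E}$ is itself a $(\Pi, E\cup E^+)$-interpretation: it is known to be a $(\Pi,E)$-interpretation, and since the $(\Pi,E)$-ruler is contained in the $(\Pi,E\cup E^+)$-ruler, every $(\Pi,E\cup E^+)$-interval sits inside a $(\Pi,E)$-interval, so interval-constancy over the coarser grid implies interval-constancy over the finer one. The same refinement remark also shows that being a $(\Pi,E\cup E^+)$-interpretation is preserved under union, which will handle the limit case.

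For the base case $\alpha = 0$ I would argue that $\mathfrak{I}_{E^-}$ is a $(\Pi,E\cup E^+)$-interpretation. After the first line of Algorithm~\ref{algo:DREDMTL} we have $E^- \subseteq E$, so every fact $R(\bm{t})@\varrho \in E^-$ has both endpoints of $\varrho$ mentioned in $E$, hence on the $(\Pi,E\cup E^+)$-ruler. If $\mathfrak{I}_{E^-}, s \models R(\bm{t})$ then $s \in \varrho$ for such a fact; the $(\Pi,E\cup E^+)$-interval containing $s$ lies strictly between two consecutive ruler points, both of which fall inside $[\varrho^-,\varrho^+]$, so the whole interval is contained in $\varrho$ and $R(\bm{t})$ holds throughout it. For the limit case, since the sequence is increasing its union is the pointwise union, a relational atom holds at a point in the union iff it holds there in some $D^{\alpha}_{\Pi,E}(\mathfrak{I}_{E^-})$, and the interval-constancy follows immediately from the inductive hypotheses.

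The successor case $D^{\alpha+1}_{\Pi,E}(\mathfrak{I}_{E^-}) = D_{\Pi,E}(\mathfrak{I})$, with $\mathfrak{I} := D^{\alpha}_{\Pi,E}(\mathfrak{I}_{E^-})$, is the heart of the proof. Writing $\mathfrak{J} := D_{\Pi,E}(\mathfrak{I})$, I would take a relational atom $R(\bm{t})$ with $\mathfrak{J}, s \models R(\bm{t})$ for $s$ in a non-punctual $(\Pi,E\cup E^+)$-interval $(a,b)$ and show $\mathfrak{J}, s' \models R(\bm{t})$ for every $s' \in (a,b)$. If $\mathfrak{I},s\models R(\bm{t})$ the inductive hypothesis finishes it, so assume $R(\bm{t})@s$ is newly derived by a ground instance $r'$ that minimally derives a head fact from a dataset $\mathcal{D}$ with $\mathfrak{C}_{\Pi,E}\models\mathcal{D}$ and $\mathfrak{I}\cap\mathfrak{I}_{\mathcal{D}}\neq\mathfrak{I}_{\emptyset}$, firing at some point $t$. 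By Proposition~\ref{proposition:headexpandsion} the head forces $R(\bm{t})$ over an interval $\langle t+t_1,t+t_2\rangle \ni s$ whose offsets $t_1,t_2$ are multiples of $div(\Pi)$. Letting $J$ be the $(\Pi,E\cup E^+)$-interval containing $t$, Proposition~\ref{proposition:pi,dinterpretationatomsat} applied to $\mathfrak{C}_{\Pi,E}$ gives that $body(r')$ holds at every point of $J$, and since the endpoints of $J$ are ruler points and $t_1,t_2$ are multiples of $div(\Pi)$, the union of the head intervals obtained by firing throughout $J$ is an interval whose endpoints again lie on the $(\Pi,E\cup E^+)$-ruler. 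Because $(a,b)$ contains no ruler point in its interior and meets this union at $s$, it must be fully contained in it, so $R(\bm{t})$ is forced at every $s' \in (a,b)$.

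The main obstacle is that being forced by the canonical model is not by itself enough: a derivation only contributes to $\mathfrak{J}$ when the overlap condition $\mathfrak{I}\cap\mathfrak{I}_{\mathcal{D}}\neq\mathfrak{I}_{\emptyset}$ holds, so I must show this condition is preserved as the firing point slides across $J$. To do this I would fix a punctual witness $P@p$ with $\mathfrak{I}\models P@p$ and $\mathfrak{I}_{\mathcal{D}}\models P@p$, track how the minimal body dataset of the firing, and hence the witness point $p$, moves within its own $(\Pi,E\cup E^+)$-interval as the firing point ranges over $J$ (the relevant offsets are again multiples of $div(\Pi)$, so the witness stays ruler-aligned), and invoke the inductive hypothesis that $\mathfrak{I}$ is a $(\Pi,E\cup E^+)$-interpretation to conclude that $\mathfrak{I}$ still satisfies $P$ at the slid witness point. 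Combining this with the body-satisfaction and head-alignment arguments above yields genuine derivations of $R(\bm{t})$ at every $s'\in(a,b)$, and taking the union of the resulting single-fact interpretations with $\mathfrak{I}$, which preserves the interval property, completes the successor step and hence the induction up to $\omega_1$.
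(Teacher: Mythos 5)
Your overall skeleton matches the paper's: transfinite induction on $\alpha$, the refinement remark that every $(\Pi,E\cup E^+)$-interval sits inside a $(\Pi,E)$-interval, the routine base and limit cases, Proposition~\ref{proposition:pi,dinterpretationatomsat} to spread body satisfaction in $\mathfrak{C}_{\Pi,E}$ over the whole ruler interval containing the firing point, and Propositions~\ref{proposition:headexpandsion} and~\ref{proposition:timepointsdiv(pi)apartcontainedbyintervalsofequallength} to align the derived head facts with the ruler. Where you diverge is precisely in the step you flag as the main obstacle, and your treatment of it has a genuine gap.

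You preserve the overlap condition $\mathfrak{I}\cap\mathfrak{I}_{\mathcal{D}}\neq\mathfrak{I}_{\emptyset}$ by fixing a punctual witness $P@p$ and sliding it rigidly with the firing point, asserting that ``the relevant offsets are again multiples of $div(\Pi)$, so the witness stays ruler-aligned.'' That assertion fails for existential temporal body atoms. For a body atom $\diamondminus_{[0,2]}P$ with $div(\Pi)=1$ and firing point $t=0.5$, a minimal dataset is $\{P@p\}$ for an arbitrary $p\in[-1.5,0.5]$; take $p=-1.3$, which lies in the ruler interval $(-2,-1)$. Sliding the firing point to $0.9$ shifts the witness to $-0.9\in(-1,0)$, a different ruler interval, so the inductive hypothesis that $\mathfrak{I}$ is interval-constant no longer yields $\mathfrak{I}\models P@(-0.9)$. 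The witness in a minimal dataset for $\diamondminus$, $\diamondplus$, $\mathcal{S}$, $\mathcal{U}$ is not at a fixed multiple-of-$div(\Pi)$ offset from the firing point, so rigid shifting does not keep it ruler-aligned. The argument can be repaired by re-choosing the witness inside its original ruler interval for each new firing point, but that amounts to a per-operator case analysis you have not supplied. The paper avoids witness-tracking altogether: it observes that $\mathfrak{C}_{\Pi,E}-D^{\alpha}_{\Pi,E}(\mathfrak{I}_{E^-})$ is itself a $(\Pi,E)$-interpretation (by the inductive hypothesis), that the body of $r'$ fails in it at the original firing point, and then applies Proposition~\ref{proposition:pi,dinterpretationatomnotsat} to propagate that failure across the entire ruler interval; the failure of the body in the complement at each $t''$ is exactly what certifies that every minimal deriving dataset at $t''$ intersects $D^{\alpha}_{\Pi,E}(\mathfrak{I}_{E^-})$. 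You should either adopt that route or carry out the witness re-selection explicitly for each temporal operator.
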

\begin{proof}
    We prove this by induction.

    When $\alpha = 0$, this is trivially true because $\mathfrak{I}_{E^-}$ is a $(\Pi,E\cup E^+)$-interpretation.

    For the induction step, consider a fact $f=M@t$ such that $D^{\alpha+1}_{\Pi,E}(\mathfrak{I}_{E^-})\models f$ and $D^{\alpha}_{\Pi,E}(\mathfrak{I}_{E^-})\not\models f$. 
    If $t$ is on the $(\Pi,E)$-ruler, then $D^{\alpha+1}_{\Pi,E}(\mathfrak{I}_{E^-})\models M@t$ already implies $D^{\alpha+1}_{\Pi,E}(\mathfrak{I}_{E^-})\models M@\varrho$ for the $(\Pi,E)$-interval $\varrho$ containing $t$, which is $t$.
    If $t$ is not on the $(\Pi,E)$-ruler,
    then there must be a rule $r'$ a dataset $\mathcal{D}$ and a time point $t'$ such that $r'$ minimally derives $f$ at $t'$, $\mathfrak{C}_{\Pi,E}\models \mathcal{D}$ and $\mathfrak{I}_{\mathcal{D}}\cap D^{\alpha}_{\Pi,E}(\mathfrak{I}_{E^-})\neq \mathfrak{I}_{\emptyset}$. 
    Let $\varrho'$ be the $(\Pi,E)$-interval containing $t'$. 
    Then by Proposition~\ref{proposition:pi,dinterpretationatomsat}, $\forall t''\in \varrho'$, $\mathfrak{C}_{\Pi,E},t''\models body(r')$.
    Because $\mathfrak{I}_\mathcal{D}-D^{\alpha}_{\Pi,E}(\mathfrak{I}_{E^-}),t'\not\models body(r')$, $\mathfrak{C}_{\Pi,E}-D^{\alpha}_{\Pi,E}(\mathfrak{I}_{E^-}),t'\not\models body(r')$, and by Proposition~\ref{proposition:pi,dinterpretationatomnotsat}, $\forall t''\in \varrho'$, $\mathfrak{C}_{\Pi,E}-D^{\alpha}_{\Pi,E}(\mathfrak{I}_{E^-}),t''\not\models body(r')$. 
    And therefore, $\forall t''\in \varrho'$, $\exists \mathcal{D}'$ such that $r'$ minimally derives $M@t+t''-t'$ from $\mathcal{D}'$ at $t''$, $\mathfrak{C}_{\Pi,E}\models\mathcal{D}'$, $\mathfrak{I}_{\mathcal{D}'}$ is a shift of $\mathfrak{I}_{\mathcal{D}}$, and $\mathfrak{I}_{\mathcal{D}'}\cap D^{\alpha}_{\Pi,E}(\mathfrak{I}_{E^-})\neq \mathfrak{I}_{\emptyset}$, which is because $\mathfrak{I}_{\mathcal{D}'}-D^{\alpha}_{\Pi,E}(\mathfrak{I}_{E^-}),t''\not\models body(r')$.
    
    Let $\mathfrak{I}$ be the least interpretation that satisfies $head(r')$ at $t'$.  
    By Proposition~\ref{proposition:headexpandsion}, for some $\varrho$ such that $\varrho^- = t'+t_1$ $\varrho^+ = t'+t_2$, where $t_1$ and $t_2$ are multiples of $div(\Pi)$, $\mathfrak{I}$ maps exactly each $t''\in\varrho$ to $\{M\}$.
    If $t_1 = t_2$, then it must be so that $t = t'+t_1$, and by Proposition~\ref{proposition:timepointsdiv(pi)apartcontainedbyintervalsofequallength}, $\varrho'$ and $\varrho$ are of the same length, which means $\forall t''\in \varrho$, there $\exists \mathcal{D}'$ such that $r'$ minimally derives $M@t''$ from $\mathcal{D}'$ at $t''-t_1$, $\mathfrak{C}_{\Pi,E}\models\mathcal{D}'$, and $\mathfrak{I}_{\mathcal{D}'}\cap D^{\alpha}_{\Pi,E}(\mathfrak{I}_{E^-})\neq \mathfrak{I}_{\emptyset}$.

    If $t_1\neq t_2$, then because $\forall t''\in \varrho'$, there $\exists \mathcal{D}'$ such that $r'$ minimally derives $M@t+t''-t'$ from $\mathcal{D}'$ at $t''$, $\mathfrak{C}_{\Pi,E}\models\mathcal{D}'$, and $\mathfrak{I}_{\mathcal{D}'}\cap D^{\alpha}_{\Pi,E}(\mathfrak{I}_{E^-})\neq \mathfrak{I}_{\emptyset}$, $\forall t''\in \varrho'$, there $\exists \mathcal{D}'$ such that $r'$ minimally derives $M@(t''+t_1,t''+t_2)$ from $\mathcal{D}'$ at $t''$, $\mathfrak{C}_{\Pi,E}\models\mathcal{D}'$, and $\mathfrak{I}_{\mathcal{D}'}\cap D^{\alpha}_{\Pi,E}(\mathfrak{I}_{E^-})\neq \mathfrak{I}_{\emptyset}$,
    and $D^{\alpha+1}_{\Pi,E}(\mathfrak{I}_{E^-})\models M@(\varrho'+t_1,\varrho'+t_2)$.
    Because $t'\in \varrho'$, $t\in(\varrho'+t_1,\varrho'+t_2)$, and because $\varrho'+t_1,\varrho'+t_2$ are both time points on the $(\Pi,E)$-ruler, $(\varrho'+t_1,\varrho'+t_2)$ contains $\varrho$, and we have $D^{\alpha+1}_{\Pi,E}(\mathfrak{I}_{E^-})\models M@t \implies D^{\alpha+1}_{\Pi,E}(\mathfrak{I}_{E^-})\models M@\varrho$. 
    
\end{proof}
\begin{proposition}\label{proposition:overdeletionleast}
    $D_{\Pi,E}^{\omega_1}(\mathfrak{I}_{E^-})=\mathfrak{M}^-_{\Pi,E,E^-}$ 
\end{proposition}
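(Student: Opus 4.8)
The plan is to prove the two inclusions $D_{\Pi,E}^{\omega_1}(\mathfrak{I}_{E^-})\subseteq\mathfrak{M}^-_{\Pi,E,E^-}$ and $\mathfrak{M}^-_{\Pi,E,E^-}\subseteq D_{\Pi,E}^{\omega_1}(\mathfrak{I}_{E^-})$ separately. Since $\mathfrak{M}^-_{\Pi,E,E^-}$ is by definition the least interpretation satisfying every fact partially rooted in $E^-$, and by Proposition~\ref{proposition:punctualfactsareenoughforcontainment} containment of interpretations is already decided by punctual facts, I would carry out the whole argument at the level of punctual facts and the notion of partial rootedness, reducing each inclusion to a statement about derivation trees. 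A preliminary observation I would record first is that every derivation tree in $\mathfrak{D}_{\Pi,E}(f)$ has finite height, since the construction adds at most one level at successor stages, merely takes unions at limit stages, and has finite branching because rule bodies are finite.

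For $\mathfrak{M}^-\subseteq D^{\omega_1}$ I would show that each partially rooted fact $f$ is satisfied by $D^{\omega_1}_{\Pi,E}(\mathfrak{I}_{E^-})$. Fix a witnessing tree $T\in\mathfrak{D}_{\Pi,E}(f)$ together with a node $v_0$ whose label overlaps $E^-$, and follow the (finite) path $v_0,v_1,\dots,v_m$ up to the root $v_m$ labelled $f$. The overlap provides a punctual fact $g$ of $l(v_0)$ with $g\in\mathfrak{I}_{E^-}=D^0_{\Pi,E}(\mathfrak{I}_{E^-})$. I then induct along the path: each $v_{k+1}$ is minimally derived by its incident rule from the labels of its in-neighbours, a dataset satisfied by $\mathfrak{C}_{\Pi,E}$, and because the label already reached at $v_k$ overlaps this body dataset, the defining condition of $D_{\Pi,E}$ fires and contributes the \emph{entire} fact $l(v_{k+1})$. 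Telescoping over the finitely many steps gives $D^m_{\Pi,E}(\mathfrak{I}_{E^-})\models f$, hence $D^{\omega_1}\models f$.

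For $D^{\omega_1}\subseteq\mathfrak{M}^-$ I would prove by transfinite induction that $D^\alpha_{\Pi,E}(\mathfrak{I}_{E^-})\subseteq\mathfrak{M}^-$ for all $\alpha$; using the (easily checked) monotonicity of $D_{\Pi,E}$, this reduces to the base inclusion $\mathfrak{I}_{E^-}\subseteq\mathfrak{M}^-$ and the closure property $D_{\Pi,E}(\mathfrak{M}^-)\subseteq\mathfrak{M}^-$. The base case is immediate, as each punctual fact of $\mathfrak{I}_{E^-}$ is its own leaf tree and overlaps $E^-$, so it is partially rooted. For closure, suppose $r'$ minimally derives $f'$ from $\mathcal{D}$ with $\mathfrak{C}_{\Pi,E}\models\mathcal{D}$ and $\mathfrak{M}^-\cap\mathfrak{I}_\mathcal{D}\neq\mathfrak{I}_\emptyset$; I must show $f'$ is partially rooted. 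The overlap yields a punctual fact $h=M@t''$ with $\mathfrak{M}^-\models h$ and some body fact $M@\varrho\in\mathcal{D}$ with $t''\in\varrho$. Building a derivation tree for $f'$ with root rule $r'$ and invoking Proposition~\ref{proposition:partiallyrootedsub}, it then suffices to exhibit a single in-neighbour node whose label is partially rooted.

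The hard part will be the granularity mismatch in this closure step: the overlap is witnessed at a single point $t''$, whereas the body fact $M@\varrho$ spans an interval that may cross several $(\Pi,E)$-intervals, and $M@\varrho$ as a whole need not be partially rooted. I would resolve this in two moves. First, since replacing a body fact by several pieces with the same underlying interpretation changes neither $\mathfrak{I}_\mathcal{D}$ nor which facts are minimally derived, I may build the tree from a body dataset in which the support of each atom is split along the $(\Pi,E)$-ruler into individual $(\Pi,E)$-interval facts; the piece $M@\varrho_0$ containing $t''$ is then a single $(\Pi,E)$-interval. Second, I would establish a stretching lemma: if a punctual fact $M@t''$ is partially rooted, then so is $M@\varrho_0$ for the $(\Pi,E)$-interval $\varrho_0$ containing $t''$. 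This lemma is proved by taking a witnessing tree for $M@t''$ and replacing every node label by the fact over its enclosing $(\Pi,E)$-interval; the uniformity of rule firing within such intervals—Proposition~\ref{proposition:headexpandsion} for heads, the body-satisfaction propositions stating that a body atom is true (resp.\ false) throughout the $(\Pi,E)$-interval of any point at which it is true (resp.\ false), and Proposition~\ref{proposition:overdeletionruler}—guarantees that the stretched structure is again a valid derivation tree and that the overlapping node keeps overlapping $E^-$. Verifying that stretching preserves minimal derivability at every node, and dispatching the edge cases (leaf/base-fact nodes, and the case where the overlapping node is the root, which is where the assumption that deletions are whole base facts of $E$ is used), is where most of the care is required.
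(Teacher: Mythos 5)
Your overall skeleton is the paper's: both inclusions, with the inclusion $\mathfrak{M}^-_{\Pi,E,E^-}\subseteq D^{\omega_1}_{\Pi,E}(\mathfrak{I}_{E^-})$ proved by walking the finite path from an overlapping node up to the root of a derivation tree (you do this forwards where the paper argues by contradiction, which is immaterial), and the reverse inclusion by transfinite induction. Your preliminary observation that derivation trees have finite height is correct and is indeed tacitly used by the paper.

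The gap is in the closure step of the reverse inclusion. By reducing to $D_{\Pi,E}(\mathfrak{M}^-_{\Pi,E,E^-})\subseteq\mathfrak{M}^-_{\Pi,E,E^-}$ you throw away the inductive information and are left to extract a partially rooted body piece from the bare fact that $\mathfrak{M}^-_{\Pi,E,E^-}$ overlaps $\mathfrak{I}_{\mathcal{D}}$ at a point $t''$. But $\mathfrak{M}^-_{\Pi,E,E^-},t''\models M$ only says that $t''$ lies in the interval of \emph{some} partially rooted fact $M@\varrho'$, whose witnessing overlap with $E^-$ may sit elsewhere in $\varrho'$; it does not yield that $M@t''$ (nor $M@\varrho_0$) is itself partially rooted. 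That missing implication is precisely Proposition~\ref{proposition:subsumedfactsalsopartiallyrooted}, which the paper obtains only as a corollary of the statement you are proving, so you may not assume it; and your stretching lemma starts from the hypothesis ``$M@t''$ is partially rooted'' that you have not established, so it attacks a downstream issue rather than this one. The paper avoids the problem by carrying the stronger invariant that \emph{every fact satisfied by} $D^{\alpha}_{\Pi,E}(\mathfrak{I}_{E^-})$ is partially rooted, and in the inductive step splitting the overlapping body fact $f'$ into a dataset $\mathcal{D}_1'$ representing $\mathfrak{I}_{\{f'\}}\cap D^{\alpha}_{\Pi,E}(\mathfrak{I}_{E^-})$ and a remainder $\mathcal{D}_2'$: each $f_1'\in\mathcal{D}_1'$ is then a whole fact satisfied by $D^{\alpha}_{\Pi,E}(\mathfrak{I}_{E^-})$, hence partially rooted by the induction hypothesis, and is itself a node label of a derivation tree for $f$, so Proposition~\ref{proposition:partiallyrootedsub} applies with no granularity repair at all. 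You should keep the induction on $\alpha$ with that invariant instead of collapsing to a closure property of $\mathfrak{M}^-_{\Pi,E,E^-}$. (Your observation that the case where the overlapping node is the root needs the deletions to be whole facts of $E$ is a genuine soft spot, but it afflicts the first inclusion and the paper's own proof equally; it is not a substitute for closing the gap above.)
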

\begin{proof}
    We prove by induction on the ordinals less or equal to $\omega_1$ that for each fact $f$ such that $D^{\alpha}_{\Pi,E}\models f$, $\mathfrak{M}_{\Pi,E,E^-}^- \models f$. 
    For $\alpha=0$, it is trivial that a fact satisfied by $\mathfrak{I}_{E^-}$ is partially rooted in $E^-$, since $\mathfrak{D}_{\Pi,E}(f)$ contains the derivation tree $(\{v_f\},\emptyset)$.

    For $\alpha+1$ where $\alpha<\omega_1$, we consider a fact $f$ satisfied by $D^{\alpha+1}_{\Pi,E}(\mathfrak{I}_{E^-})$ but not by $D^{\alpha}_{\Pi,E}(\mathfrak{I}_{E^-})$. 
    Then there must be a ground instance $r'$ of a rule in $\Pi$, a dataset $\mathcal{D}$ satisfied by $\mathfrak{C}_{\Pi,E}$, and $D^{\alpha}_{\Pi,E}(\mathfrak{I}_{E^-})\cap\mathfrak{I}_{\mathcal{D}}\neq \mathfrak{I}_{\emptyset}$, such that $r'$ minimally derives $f$ from $\mathcal{D}$.
    Then there must be a fact $f'\in\mathcal{D}$ such that $\mathfrak{I}_{\{f'\}}\cap D^{\alpha}_{\Pi,E}(\mathfrak{I}_{E^-})\neq \emptyset$. 
    Let $\mathcal{D}_1'$ represent $\mathfrak{I}_{\{f\}}\cap D^{\alpha}_{\Pi,E}(\mathfrak{I}_{E^-})$ and $\mathcal{D}_2'$ represent $\mathfrak{I}_{\{f\}}- D^{\alpha}_{\Pi,E}(\mathfrak{I}_{E^-})$, 
    and let $\mathcal{D}' = \mathcal{D}-\{f'\}\cup \mathcal{D}_1'\cup \mathcal{D}_2'$, then it is clear that $\mathfrak{I}_{\mathcal{D}} = \mathfrak{I}_{\mathcal{D}'}$, $\mathfrak{C}_{\Pi,E}\models\mathcal{D}'$, and $r'$ minimally derives $f$ from $\mathcal{D}'$. Let $f_1' \in \mathcal{D}_1'$, then $\mathfrak{D}_{\Pi,E}(f)$ contains a tree that has a node labelled $f_1'$. By the induction hypothesis, $f_1'$ is partially rooted in $E^-$, and by Claim~\ref{proposition:partiallyrootedsub}, $f$ is partially rooted in $E^-$.
    
    Then case when $\alpha=\omega_1$ is trivial, since $D^{\omega_1}_{\Pi,E}(\mathfrak{I}_{E^-})=\bigcup_{\beta<\omega_1}D^{\beta}_{\Pi,E}(\mathfrak{I}_{E^-})$ 

    Consequently, $D^{\omega_1}_{\Pi,E}(\mathfrak{I}_{E^-})\subseteq\mathfrak{M}^-_{\Pi,E,E^-}$.
    
    For a fact $f$, if $f$ is partially rooted in $E^-$ regarding $\Pi,E$, then $\mathfrak{D}_{\Pi,E}(f)\neq\emptyset$, 
    and there must a tree $T\in\mathfrak{D}_{\Pi,E}(f)$ such that T contains a node $v_{f'}$ representing a fact $f'$ satisfying $\mathfrak{I}_{\{f'\}}\cap\mathfrak{I}_{E^-}\neq\mathfrak{I}_{\emptyset}$. 
    Let the (reverse) path in $T$ from its root $v$ to $v_{f'}$ be $v,v_1,v_2,...,v_k,v_{f'}$ for some $k$.
    Then by Proposition~\ref{proposition:partiallyrootedsub}, $l(v_i)$ is partially rooted in $E^-$, for $i=1,2...,k$.
    Let $l(v_i) = f_i$.
    If $D^{\omega_1}_{\Pi,E}(\mathfrak{I}_{E^-})\not\models f$, then $D^{\omega_1}_{\Pi,E}(\mathfrak{I}_{E^-})\not\models f_1$, otherwise because $f$ is minimally derived by $l(v_1,v)$ from the dataset containing exactly the facts represented by the in-neighbours of $v$, a dataset which is satisfied by $\mathfrak{C}_{\Pi,E}$ and $D^{\omega_1}_{\Pi,E}(\mathfrak{I}_{E^-})\models f_1$, which means $D^{\alpha}_{\Pi,E}(\mathfrak{I}_{E^-})\models f_1$ for some $\alpha<\omega_1$ because $D^{\omega_1}_{\Pi,E}(\mathfrak{I}_{E^-})$ is a $(\Pi,E)$-interpretation,
    $D^{\omega_1}_{\Pi,E}(\mathfrak{I}_{E^-})\models f$, which contradicts $D^{\omega_1}_{\Pi,E}(\mathfrak{I}_{E^-})\not\models f$. 
    Likewise, $D^{\omega_1}_{\Pi,E}(\mathfrak{I}_{E^-})\not\models f_2,f_3,...,f_k$.
    There must be a dataset $\mathcal{D}'$ such that $l((v_{f'},v_k))$ minimally derives $f_k$ from $\mathcal{D}'$, and the facts in $\mathcal{D}'$ are exactly represented by the in-neighbour of $v_{f_k}$ in $T$.
    Then, because $\mathfrak{I}_{\{f'\}}\cap\mathfrak{I}_{E^-}\neq \mathfrak{I}_{\emptyset}$ and $\mathfrak{C}_{\Pi,E}\models\mathcal{D}'$, $D_{\Pi,E}(\mathfrak{I}_{E^-})\models f_k$, which means $D^{\omega_1}_{\Pi,E}(\mathfrak{I}_{E^-})\models f_k$, and this contradicts $D^{\omega_1}_{\Pi,E}(\mathfrak{I}_{E^-})\not\models f_k$. 
    Therefore, $D^{\omega_1}_{\Pi,E}(\mathfrak{I}_{E^-})\models f$, $D^{\omega_1}_{\Pi,E}(\mathfrak{I}_{E^-})$ is a max deletion-impact model of $E^-$ regarding $\Pi,E$, and $\mathfrak{M}_{\Pi,E,E^-}^-\subseteq D^{\omega_1}_{\Pi,E}(\mathfrak{I}_{E^-})$
    
\end{proof}
Note that from this proof, we can also derive that
\begin{proposition}\label{proposition:AllfactsinDomegapartiallyrooted}
    For a fact $f$, if $D^{\omega_1}_{\Pi,E}(\mathfrak{I}_{E^-})\models f$, then $f$ is partially rooted in $E^-$.
\end{proposition}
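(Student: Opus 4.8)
The key observation is that this statement is precisely the inductive invariant already established (though not isolated) inside the proof of Proposition~\ref{proposition:overdeletionleast}: there, the containment $D^{\omega_1}_{\Pi,E}(\mathfrak{I}_{E^-}) \subseteq \mathfrak{M}^-_{\Pi,E,E^-}$ was obtained by showing, via transfinite induction on $\alpha \le \omega_1$, that every fact $f$ with $D^{\alpha}_{\Pi,E}(\mathfrak{I}_{E^-}) \models f$ is partially rooted in $E^-$ (whence $\mathfrak{M}^-_{\Pi,E,E^-}\models f$ by definition). So the plan is to extract that induction and instantiate it at $\alpha = \omega_1$. Throughout, I would abbreviate $D^{\alpha}_{\Pi,E}(\mathfrak{I}_{E^-})$ by $D^{\alpha}$ and prove the claim $P(\alpha)$: ``for every fact $f$, if $D^{\alpha} \models f$ then $f$ is partially rooted in $E^-$'', for all $\alpha \le \omega_1$.

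For the base case $\alpha = 0$, since $D^{0} = \mathfrak{I}_{E^-} \subseteq \mathfrak{I}_E$, any $f$ satisfied by $\mathfrak{I}_{E^-}$ is also satisfied by $\mathfrak{I}_E$, so the single-node derivation tree $(\{v_f\},\emptyset) \in \mathfrak{D}_{\Pi,E}(f)$ exists and its unique node overlaps $E^-$, making $f$ partially rooted. For the successor case $\alpha+1$, I would take $f$ with $D^{\alpha+1} \models f$ but $D^{\alpha} \not\models f$; the definition of $D_{\Pi,E}$ then supplies a ground instance $r'$ and a dataset $\mathcal{D}$ with $\mathfrak{C}_{\Pi,E} \models \mathcal{D}$, with $r'$ minimally deriving $f$ from $\mathcal{D}$, and with $\mathfrak{I}_{\mathcal{D}} \cap D^{\alpha} \neq \mathfrak{I}_{\emptyset}$. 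Picking a body fact $f'$ that overlaps $D^{\alpha}$ and splitting it into its part inside $D^{\alpha}$ and its part outside (exactly the $\mathcal{D}_1',\mathcal{D}_2'$ decomposition used in Proposition~\ref{proposition:overdeletionleast}) yields a fact $f_1'$ with $D^{\alpha} \models f_1'$ that still appears as a child label in some tree of $\mathfrak{D}_{\Pi,E}(f)$; by the induction hypothesis $f_1'$ is partially rooted, and Proposition~\ref{proposition:partiallyrootedsub} lifts this to $f$.

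The main obstacle is the limit case, which the earlier proof dismisses as ``trivial''. Here $D^{\beta} = \bigcup_{\alpha<\beta} D^{\alpha}$, and the difficulty is that satisfaction of a fact at the union need not be witnessed at a single earlier stage. The plan is to reduce to punctual facts using Proposition~\ref{proposition:punctualfactsareenoughforcontainment}: a punctual fact $M@t$ satisfied by the union is, by the pointwise definition of $\bigcup$, satisfied by some $D^{\alpha}$ with $\alpha < \beta$, so the induction hypothesis applies directly to punctual facts. The remaining care is to bridge from punctual facts back to a general interval fact $f = M@\varrho$: since $D^{\beta}\subseteq\mathfrak{C}_{\Pi,E}$ (Proposition~\ref{proposition:mdiiincanon}) we have $\mathfrak{C}_{\Pi,E} \models f$, so a tree for $f$ exists, and a single derivation of $M@\varrho$ at a time point simultaneously derives each punctual sub-fact $M@t$ with $t\in\varrho$. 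This lets one re-root a tree witnessing partial-rootedness of some $M@t$ at $M@\varrho$ while preserving the overlapping node. Making this re-rooting precise is the only genuinely delicate point, and I expect it to hinge on the head-expansion behaviour of Proposition~\ref{proposition:headexpandsion}, which guarantees that the interval carried by the head of $r'$ is fixed (up to multiples of $div(\Pi)$) across the punctual derivations being glued together.
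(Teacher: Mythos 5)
Your proposal matches the paper's own treatment: the paper derives this proposition simply by remarking that it follows from the proof of Proposition~\ref{proposition:overdeletionleast}, whose first half is exactly the transfinite induction you describe (with partial-rootedness as the inductive invariant, the single-node tree at $\alpha=0$, and the $\mathcal{D}_1',\mathcal{D}_2'$ split combined with Proposition~\ref{proposition:partiallyrootedsub} at successor stages). Your extra care at limit stages, reducing to punctual facts and re-rooting, goes beyond the paper, which declares that case trivial, but it does not change the route.
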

From the above proposition, we have that
\begin{proposition}\label{proposition:subsumedfactsalsopartiallyrooted}
If a fact $f$ is partially rooted in $E^-$ regarding $\Pi,E$, then for any fact $f'$ such that $\mathfrak{I}_{\{f\}}\models f'$, we have $f'$ is partially rooted in $E^-$.
\end{proposition}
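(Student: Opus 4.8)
The plan is to route the argument through the immediate deletion-consequence operator and its fixpoint $D^{\omega_1}_{\Pi,E}(\mathfrak{I}_{E^-})$, rather than trying to manipulate derivation trees of $f'$ directly. A direct tree-based argument would be awkward here: a derivation tree witnessing that $f$ is partially rooted in $E^-$ derives the specific fact $f$, and there is no canonical way to restrict or reshape it into a tree for the weaker, subsumed fact $f'$. By contrast, the characterisation "partially rooted in $E^-$ $\iff$ satisfied by $D^{\omega_1}_{\Pi,E}(\mathfrak{I}_{E^-})$" reduces the whole statement to the trivial observation that satisfaction of a fact is preserved under subsumption of facts and enlargement of interpretations.

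First I would use the hypothesis that $f$ is partially rooted in $E^-$ to conclude that the max deletion-impact interpretation satisfies $f$. This is immediate from the definition of $\mathfrak{M}^-_{\Pi,E,E^-}$: it is the least interpretation in the collection $\bm{\mathfrak{I}}$ of interpretations that satisfy every fact partially rooted in $E^-$, so in particular $\mathfrak{M}^-_{\Pi,E,E^-}\in\bm{\mathfrak{I}}$ and therefore $\mathfrak{M}^-_{\Pi,E,E^-}\models f$. Appealing to Proposition~\ref{proposition:overdeletionleast}, which establishes $D^{\omega_1}_{\Pi,E}(\mathfrak{I}_{E^-})=\mathfrak{M}^-_{\Pi,E,E^-}$, I obtain $D^{\omega_1}_{\Pi,E}(\mathfrak{I}_{E^-})\models f$.

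Next I would propagate satisfaction from $f$ to $f'$. Since $D^{\omega_1}_{\Pi,E}(\mathfrak{I}_{E^-})\models f$, this interpretation contains $\mathfrak{I}_{\{f\}}$, the least interpretation satisfying $f$; that is, $\mathfrak{I}_{\{f\}}\subseteq D^{\omega_1}_{\Pi,E}(\mathfrak{I}_{E^-})$. The hypothesis $\mathfrak{I}_{\{f\}}\models f'$ then transfers upward, because satisfaction of a fact is preserved when the interpretation is enlarged along $\subseteq$, giving $D^{\omega_1}_{\Pi,E}(\mathfrak{I}_{E^-})\models f'$. Finally, I would invoke Proposition~\ref{proposition:AllfactsinDomegapartiallyrooted}, which states precisely that every fact satisfied by $D^{\omega_1}_{\Pi,E}(\mathfrak{I}_{E^-})$ is partially rooted in $E^-$, to conclude that $f'$ is partially rooted in $E^-$, as required.

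I do not anticipate a serious obstacle, since this is essentially a corollary of the two cited propositions; the only points needing care are light. The first is confirming that $\mathfrak{M}^-_{\Pi,E,E^-}$ genuinely lies in $\bm{\mathfrak{I}}$, so that its being the \emph{least} such interpretation does not accidentally exclude $f$; this follows directly from its definition as the least element of $\bm{\mathfrak{I}}$. The second is the elementary semantic fact that $\mathfrak{I}\models f$ is equivalent to $\mathfrak{I}_{\{f\}}\subseteq\mathfrak{I}$ and that $\mathfrak{I}_1\subseteq\mathfrak{I}_2$ with $\mathfrak{I}_1\models f'$ yields $\mathfrak{I}_2\models f'$. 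Both are routine once the characterisation via $D^{\omega_1}_{\Pi,E}(\mathfrak{I}_{E^-})$ is in place, so the substantive content of the statement is entirely carried by the already-proven Propositions~\ref{proposition:overdeletionleast} and~\ref{proposition:AllfactsinDomegapartiallyrooted}.
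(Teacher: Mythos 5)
Your proposal is correct and follows essentially the same route as the paper's own proof: both pass from partial rootedness of $f$ to $D^{\omega_1}_{\Pi,E}(\mathfrak{I}_{E^-})\models f$ via Proposition~\ref{proposition:overdeletionleast}, use $\mathfrak{I}_{\{f\}}\subseteq D^{\omega_1}_{\Pi,E}(\mathfrak{I}_{E^-})$ to get $D^{\omega_1}_{\Pi,E}(\mathfrak{I}_{E^-})\models f'$, and conclude with Proposition~\ref{proposition:AllfactsinDomegapartiallyrooted}. Your write-up merely spells out the routine intermediate steps more explicitly than the paper does.
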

\begin{proof}
    Because $f$ is partially rooted in $E^-$, by Proposition~\ref{proposition:overdeletionleast} we have $D^{\omega_1}_{\Pi,E}(\mathfrak{I}_{E^-})\models f$, and $\mathfrak{I}_{\{f\}}\subseteq D^{\omega_1}_{\Pi,E}(\mathfrak{I}_{E^-})$.
    Therefore, $D^{\omega_1}_{\Pi,E}(\mathfrak{I}_{E^-})\models f'$
    , and by Proposition~\ref{proposition:AllfactsinDomegapartiallyrooted}, we have that $f'$ is partially rooted in $E^-$.
\end{proof}
We say that an interpretation $\mathfrak{I}$ \textbf{satisfies with over-deletion} of $E^-$ a ground rule $r$ regarding $\Pi,E$, if for each time point $t$, whenever there is a dataset $\mathcal{D}$ such that $\mathfrak{C}_{\Pi,E} \models \mathcal{D}$, $\mathfrak{I}_{\mathcal{D}},t\models body(r)$, and there exists a fact $f'\in\mathcal{D}$ partially rooted in $E^-$ such that $\mathfrak{I}_{\mathcal{D}-\{f'\}},t\not\models body(r)$, then $\mathfrak{I},t\models head(r)$.

Moreover, We say that an interpretation $\mathfrak{I}$ \textbf{satisfies with over-deletion} of $E^-$ a ground rule $r$ regarding $\Pi,E$ during $\varrho$, if for each time point $t\in\varrho$, whenever there is a dataset $\mathcal{D}$ such that $\mathfrak{C}_{\Pi,E} \models \mathcal{D}$, $\mathfrak{I}_{\mathcal{D}
},t\models body(r)$, and there exists a fact $f'\in\mathcal{D}$ partially rooted in $E^-$ such that $\mathfrak{I}_{\mathcal{D}-\{f'\}},t\not\models body(r)$, then $\mathfrak{I},t\models head(r)$.

We say that an interpretation $\mathfrak{I}$
\textbf{satisfies with over-deletion} of $E^-$ program $\Pi$ regarding $\Pi,E$ (during $\varrho$), if it satisfies with over-deletion of $E^-$ each ground instance $r'$ of each rule $r\in\Pi$ (during $\varrho$), and $\mathfrak{I}$ is an \textbf{over-deletion model} of $\Pi$ regarding $\Pi,E,E^-$.
If a model $\mathfrak{I}$ is both a model of $E^-$ and an over-deletion model of $\Pi$ regarding $\Pi,E,E^-$, then $\mathfrak{I}$ is the \textbf{over-deletion model} of $\Pi,E,E^-$.

Next, we show over-deletion models and max-deletion models describe the same interpretations for $\Pi,E,E^-$.
\begin{proposition}\label{proposition:deletionmodelsaremaxmodels}
    Given a DatalogMTL program $\Pi$,  datasets $E,E^-$ such that $\mathfrak{I}_{E^-} \subseteq \mathfrak{I}_E$, if an interpretation $\mathfrak{I}$ is an over-deletion model  of $\Pi,E,E^-$, then $\mathfrak{I}$ is a max deletion-impact model of $E^-$ regarding $\Pi,E$. 
\end{proposition}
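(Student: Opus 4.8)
The plan is to reduce the statement to the single containment $\mathfrak{M}^-_{\Pi,E,E^-}\subseteq\mathfrak{I}$. Indeed, $\mathfrak{M}^-_{\Pi,E,E^-}$ is itself a max deletion-impact model of $E^-$ (it is the least element of the set of such models), so it satisfies every fact partially rooted in $E^-$; once we know $\mathfrak{M}^-_{\Pi,E,E^-}\subseteq\mathfrak{I}$, the interpretation $\mathfrak{I}$ satisfies everything $\mathfrak{M}^-_{\Pi,E,E^-}$ does, hence all partially rooted facts, and is therefore a max deletion-impact model. By Proposition~\ref{proposition:overdeletionleast} we have $\mathfrak{M}^-_{\Pi,E,E^-}=D^{\omega_1}_{\Pi,E}(\mathfrak{I}_{E^-})$, so it suffices to prove $D^{\omega_1}_{\Pi,E}(\mathfrak{I}_{E^-})\subseteq\mathfrak{I}$, and by Proposition~\ref{proposition:punctualfactsareenoughforcontainment} I may work exclusively with punctual facts.

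I would prove $D^{\alpha}_{\Pi,E}(\mathfrak{I}_{E^-})\subseteq\mathfrak{I}$ for all $\alpha\le\omega_1$ by transfinite induction. The base case is immediate: an over-deletion model of $\Pi,E,E^-$ is by definition a model of $E^-$, so $\mathfrak{I}_{E^-}=D^0_{\Pi,E}(\mathfrak{I}_{E^-})\subseteq\mathfrak{I}$. The limit case is also immediate, since $D^{\beta}_{\Pi,E}(\mathfrak{I}_{E^-})=\bigcup_{\alpha<\beta}D^{\alpha}_{\Pi,E}(\mathfrak{I}_{E^-})$ is contained in $\mathfrak{I}$ whenever each term of the union is.

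For the successor step, assume $D^{\alpha}_{\Pi,E}(\mathfrak{I}_{E^-})\subseteq\mathfrak{I}$ and let $f$ be any fact contributed when forming $D^{\alpha+1}_{\Pi,E}(\mathfrak{I}_{E^-})$; by definition of $D_{\Pi,E}$ there are a ground instance $r'$ of a rule in $\Pi$, a dataset $\mathcal{D}$ with $\mathfrak{C}_{\Pi,E}\models\mathcal{D}$ and $D^{\alpha}_{\Pi,E}(\mathfrak{I}_{E^-})\cap\mathfrak{I}_{\mathcal{D}}\neq\mathfrak{I}_{\emptyset}$, and a time point $t'$ at which $r'$ minimally derives $f$ from $\mathcal{D}$. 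I would pick a punctual fact $g$ with $\mathfrak{I}_{\{g\}}\subseteq D^{\alpha}_{\Pi,E}(\mathfrak{I}_{E^-})\cap\mathfrak{I}_{\mathcal{D}}$ and reshape $\mathcal{D}$ into $\mathcal{D}''=\mathsf{rep}(\mathfrak{I}_{\mathcal{D}}-\mathfrak{I}_{\{g\}})\cup\{g\}$, which has $\mathfrak{I}_{\mathcal{D}''}=\mathfrak{I}_{\mathcal{D}}$ and isolates $g$ as a single removable fact. Then $\mathfrak{C}_{\Pi,E}\models\mathcal{D}''$ and $\mathfrak{I}_{\mathcal{D}''},t'\models body(r')$, while $\mathfrak{I}_{\mathcal{D}''-\{g\}}=\mathfrak{I}_{\mathcal{D}}-\mathfrak{I}_{\{g\}}\subsetneq\mathfrak{I}_{\mathcal{D}}$. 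Minimality of the derivation forces $\mathfrak{I}_{\mathcal{D}''-\{g\}},t'\not\models body(r')$: whether the least interpretation satisfying $head(r')$ at $t'$ entails $f$ is a property of $r'$, $t'$ and $f$ alone, so were $body(r')$ satisfied at $t'$ under $\mathcal{D}''-\{g\}$, then $r'$ would derive $f$ from this proper sub-dataset, contradicting minimality. Moreover $g$ is partially rooted in $E^-$, because $\mathfrak{I}_{\{g\}}\subseteq D^{\alpha}_{\Pi,E}(\mathfrak{I}_{E^-})\subseteq D^{\omega_1}_{\Pi,E}(\mathfrak{I}_{E^-})$ and every fact in the latter is partially rooted by Proposition~\ref{proposition:AllfactsinDomegapartiallyrooted}. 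Thus $\mathcal{D}''$ together with $g$ instantiates the hypothesis of the over-deletion satisfaction property of $r'$ at $t'$, giving $\mathfrak{I},t'\models head(r')$; since the head-closure at $t'$ entails $f$, we obtain $\mathfrak{I}\models f$, and therefore $D^{\alpha+1}_{\Pi,E}(\mathfrak{I}_{E^-})\subseteq\mathfrak{I}$.

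The main obstacle is this successor step. The over-deletion satisfaction property is stated in a single-fact-deletion form (``there exists a fact $f'\in\mathcal{D}$ partially rooted in $E^-$ with $\mathfrak{I}_{\mathcal{D}-\{f'\}},t\not\models body(r)$''), whereas the hypothesis supplied by $D_{\Pi,E}$ only guarantees a nonempty overlap $D^{\alpha}_{\Pi,E}(\mathfrak{I}_{E^-})\cap\mathfrak{I}_{\mathcal{D}}$. Bridging this gap requires the refinement of $\mathcal{D}$ into $\mathcal{D}''$ so that a single punctual, partially rooted fact $g$ carries the overlap, and it requires using minimal derivation carefully to ensure that deleting exactly $g$ destroys satisfaction of $body(r')$ at the precise time point $t'$; the accompanying bookkeeping (that $g$ is genuinely removed by $\mathsf{rep}(\mathfrak{I}_{\mathcal{D}}-\mathfrak{I}_{\{g\}})$ and is certified partially rooted through $D^{\omega_1}_{\Pi,E}(\mathfrak{I}_{E^-})$) is routine but essential.
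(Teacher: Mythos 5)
Your proposal is correct and follows essentially the same route as the paper: a transfinite induction establishing $D^{\alpha}_{\Pi,E}(\mathfrak{I}_{E^-})\subseteq\mathfrak{I}$, whose successor step reshapes the witnessing dataset so that a single fact, certified partially rooted via Proposition~\ref{proposition:AllfactsinDomegapartiallyrooted}, carries the overlap with $D^{\alpha}_{\Pi,E}(\mathfrak{I}_{E^-})$ and whose removal breaks body satisfaction, thereby triggering the over-deletion condition. The only cosmetic difference is that the paper splits one fact $f'$ into the pieces $\mathfrak{I}_{\{f'\}}\cap D^{\alpha}_{\Pi,E}(\mathfrak{I}_{E^-})$ and $\mathfrak{I}_{\{f'\}}- D^{\alpha}_{\Pi,E}(\mathfrak{I}_{E^-})$, whereas you extract a punctual fact $g$ and re-represent the remainder; both achieve the same isolation.
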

\begin{proof}
    We show that $D^{\omega_1}_{\Pi,E}(\mathfrak{I}_{E^-})\subseteq\mathfrak{I}$ by proving inductively that for each ordinal $\alpha<\omega_1$, $D^{\alpha}_{\Pi,E}(\mathfrak{I}_{E^-})\subseteq\mathfrak{I}$, and each $f$.

    The base case when $\alpha=0$ is trivial since $\mathfrak{I}$ is a model of $E^-$,

    In the inductive step for $\alpha+1$, for a fact $f$ such that $D^{\alpha+1}_{\Pi,E}(\mathfrak{I}_{E^-})\models f$, we only need to consider when $D^{\alpha}_{\Pi,E}(\mathfrak{I}_{E^-})\not\models f$. 
    Then, there must be a ground instance $r'$ of a rule $r\in\Pi$, a time point $t$, a dataset $\mathcal{D}$ satisfying $\mathfrak{C}_{\Pi,E}\models\mathcal{D}$, and $D^{\alpha}_{\Pi,E}(\mathfrak{I}_{E^-})\cap \mathfrak{I}_\mathcal{D}\neq \mathfrak{I}_\emptyset$ such that $r'$ minimally derives $f$ from $\mathcal{D}$ at $t$. 
    Then there must be a fact $f'\in\mathcal{D}$ such that $\mathfrak{I}_{\{f'\}}\cap D^{\alpha}_{\Pi,E}(\mathfrak{I}_{E^-})\neq\mathfrak{I}_\emptyset $.
    Let $\mathcal{D}_1$ represent $\mathfrak{I}_{\{f'\}}\cap D^{\alpha}_{\Pi,E}(\mathfrak{I}_{E^-})$ and $\mathcal{D}_2$ represent $\mathfrak{I}_{\{f'\}}- D^{\alpha}_{\Pi,E}(\mathfrak{I}_{E^-})$, and let $\mathcal{D}'=\mathcal{D}-\{f'\}\cup \mathcal{D}_1\cup\mathcal{D}_2$, then $\mathcal{D}_1\neq\emptyset$ and $r'$ minimally derives $f$ from $\mathcal{D}'$, and $\mathfrak{C}_{\Pi,E}\models \mathcal{D}'$.
    Let $f_1'\in \mathcal{D}_1$, then $f_1'\in \mathcal{D}'$, $D^{\alpha}_{\Pi,E}(\mathfrak{I}_{E^-})\models f_1'$ and $f_1'$ is partially rooted in $E^-$ by Claim~\ref{proposition:AllfactsinDomegapartiallyrooted}.
    Also, because $r'$ minimally derives $f$ from $\mathcal{D}$, $\mathcal{D}-\{f_1'\},t\not\models body(r')$.
    Therefore, by the definition of an over-deletion model, $\mathfrak{I},t\models head(r)$, which in turn means $\mathfrak{I}\models f$, because $\mathfrak{I}$ satisfies with over-deletion of $E^-$ $r'$ regarding $\Pi,E$. 
\end{proof}

\begin{proposition}\label{proposition:maxmodelsaredeletionmodels}
    Given a DatalogMTL program $\Pi$,  datasets $E,E^-$ such that $\mathfrak{I}_{E^-} \subseteq \mathfrak{I}_E$, if an interpretation $\mathfrak{I}$ is a max deletion-impact model of $E^-$ regarding $\Pi,E$, then $\mathfrak{I}$ is an over-deletion model  of $\Pi,E,E^-$. 
\end{proposition}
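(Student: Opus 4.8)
The plan is to verify directly the two defining conditions of an over-deletion model: that $\mathfrak{I}$ is a model of $E^-$, and that $\mathfrak{I}$ satisfies with over-deletion of $E^-$ every ground instance of every rule of $\Pi$. The leverage point is that a max deletion-impact model $\mathfrak{I}$ satisfies, by definition, \emph{every} fact partially rooted in $E^-$ regarding $\Pi,E$, so both conditions reduce to establishing partial rootedness of the relevant facts. For the first condition, I would note that each $f\in E^-$ is partially rooted: since $\mathfrak{I}_{E^-}\subseteq\mathfrak{I}_E$ we have $\mathfrak{I}_E\models f$, so the single-node tree $(\{v_f\},\emptyset)$ lies in $\mathfrak{D}_{\Pi,E,0}(f)\subseteq\mathfrak{D}_{\Pi,E}(f)$, and its only node is labelled $f$ with $\mathfrak{I}_{E^-}\cap\mathfrak{I}_{\{f\}}=\mathfrak{I}_{\{f\}}\neq\mathfrak{I}_{\emptyset}$. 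Hence $\mathfrak{I}\models f$ for all $f\in E^-$, giving $\mathfrak{I}\models E^-$.

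For the second condition, I would fix a ground instance $r'$ of a rule in $\Pi$, a time point $t$, and a dataset $\mathcal{D}$ with $\mathfrak{C}_{\Pi,E}\models\mathcal{D}$, $\mathfrak{I}_{\mathcal{D}},t\models body(r')$, and a fact $f'\in\mathcal{D}$ partially rooted in $E^-$ such that $\mathfrak{I}_{\mathcal{D}-\{f'\}},t\not\models body(r')$. Let $g=M@\varrho'$ be the single fact representing the least interpretation satisfying $head(r')$ at $t$ (a single fact by Proposition~\ref{proposition:headexpandsion}), so that $\mathfrak{I},t\models head(r')$ is equivalent to $\mathfrak{I}\models g$; thus it suffices to show $g$ is partially rooted in $E^-$. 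First, by Proposition~\ref{proposition:derivesimpliesminimal} I pass to a dataset $\mathcal{B}$ with $\mathfrak{I}_{\mathcal{B}}\subseteq\mathfrak{I}_{\mathcal{D}}$ equal to the least model of $body(r')$ at $t$, so that $r'$ minimally derives $g$ from $\mathcal{B}$ at $t$. Using the monotonicity of body evaluation (the body uses only the monotone operators $\boxplus,\boxminus,\diamondplus,\diamondminus,\mathcal{U},\mathcal{S}$) together with $\mathfrak{I}_{\mathcal{D}-\{f'\}},t\not\models body(r')$, I conclude $\mathfrak{I}_{\mathcal{B}}\not\subseteq\mathfrak{I}_{\mathcal{D}-\{f'\}}$; and since $\mathfrak{I}_{\mathcal{B}}\subseteq\mathfrak{I}_{\mathcal{D}}=\mathfrak{I}_{\mathcal{D}-\{f'\}}\cup\mathfrak{I}_{\{f'\}}$, the missing part must come from $f'$, yielding a punctual fact $h$ with $\mathfrak{I}_{\mathcal{B}}\models h$ and $\mathfrak{I}_{\{h\}}\subseteq\mathfrak{I}_{\{f'\}}$.

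I would finish by assembling a partially rooted derivation tree for $g$. Because $\mathfrak{I}_{\mathcal{B}}\models h$, the dataset $\mathcal{B}\cup\{h\}$ represents the same interpretation as $\mathcal{B}$, so $r'$ still minimally derives $g$ from $\mathcal{B}\cup\{h\}$ at $t$, and $h$ occurs among its facts. Since $\mathfrak{I}_{\{h\}}\subseteq\mathfrak{I}_{\{f'\}}$ and $f'$ is partially rooted, Proposition~\ref{proposition:subsumedfactsalsopartiallyrooted} gives that $h$ is partially rooted, so $\mathfrak{D}_{\Pi,E}(h)$ contains a tree with a node whose label overlaps $E^-$. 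Every remaining fact of $\mathcal{B}\cup\{h\}$ is satisfied by $\mathfrak{C}_{\Pi,E}$ and hence has nonempty $\mathfrak{D}_{\Pi,E}$; choosing an ordinal $\alpha<\omega_1$ large enough that $\mathfrak{I}_\alpha$ satisfies $\mathcal{B}\cup\{h\}$ and all the chosen sub-trees lie in $\mathfrak{D}_{\Pi,E,\alpha}$, the transfinite tree construction produces a derivation tree for $g$ in $\mathfrak{D}_{\Pi,E,\alpha+1}(g)$ that retains the distinguished node of $h$'s tree. Hence $g$ is partially rooted in $E^-$, and the max deletion-impact model property yields $\mathfrak{I}\models g$, i.e., $\mathfrak{I},t\models head(r')$.

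I expect the main obstacle to be the middle step of isolating the punctual witness $h$ inside the minimal body-representation $\mathcal{B}$ and arguing it is subsumed by $f'$. This hinges on two facts that must be stated carefully: the monotonicity of positive body evaluation, which converts $\mathfrak{I}_{\mathcal{D}-\{f'\}},t\not\models body(r')$ into $\mathfrak{I}_{\mathcal{B}}\not\subseteq\mathfrak{I}_{\mathcal{D}-\{f'\}}$, and the identity $\mathfrak{I}_{\mathcal{D}}=\mathfrak{I}_{\mathcal{D}-\{f'\}}\cup\mathfrak{I}_{\{f'\}}$, so that the surplus content of $\mathfrak{I}_{\mathcal{B}}$ can only originate from $f'$. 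The remaining bookkeeping—refining the representation to contain $h$ and checking that the tree assembly preserves a node meeting $E^-$—is routine given the earlier propositions.
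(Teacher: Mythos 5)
Your proposal is correct and follows essentially the same route as the paper's proof: you shrink $\mathcal{D}$ to a minimal body-supporting sub-dataset, use monotonicity of body evaluation to force an overlap with $f'$, extract a punctual witness that is partially rooted by Proposition~\ref{proposition:subsumedfactsalsopartiallyrooted}, and assemble a derivation tree to conclude the head fact is partially rooted and hence satisfied by the max deletion-impact model. The only differences are presentational — you explicitly check the ``model of $E^-$'' condition and work with the single head fact from Proposition~\ref{proposition:headexpandsion} rather than the set of all derived facts — neither of which changes the argument.
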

\begin{proof}
    We show that if $\mathfrak{I}$ contains $D^{\omega_1}_{\Pi,E}(\mathfrak{I}_{E^-})$, then it is a over-deletion model of $\Pi,E,E^-$.
    For each ground instance $r'$ of each rule $r$ and each time point $t$, if there is a dataset $\mathcal{D}$ such that $\mathfrak{C}_{\Pi,E}\models\mathcal{D}$, $\mathfrak{I}_{\mathcal{D}},t\models body(r')$ and there exists $f'\in\mathcal{D}$ partially rooted in $E^-$ such that $\mathfrak{I}_{\mathcal{D}-\{f'\}}\not\models body(r')$, then let the set of all facts derived by $r'$ from $\mathcal{D}$ at $t$ be $\mathcal{F}$. 
    Clearly, $\mathfrak{I}_{\mathcal{F}},t\models head(r)$.

    Let $\mathcal{D}'$ be a set of facts such that $\mathfrak{I}_{\mathcal{D}'}\subsetneq\mathfrak{I}_{\mathcal{D}}$, $\mathfrak{I}_{\mathcal{D}'},t\models body(r')$, and for any $\mathfrak{I}\subsetneq \mathfrak{I}_{\mathcal{D}'}$, $\mathfrak{I},t\not\models body(r')$. 
    Clearly, $\forall f\in \mathcal{F}$, $r'$ minimally derives $f$ from $\mathcal{D}'$.
    We have $\mathfrak{I}_{\mathcal{D}'}\cap\mathfrak{I}_{\{f'\}}\neq \mathfrak{I}_{\emptyset}$ because if not, then we have $\mathfrak{I}_{\mathcal{D}'} = \mathfrak{I}_{\mathcal{D}'}-\mathfrak{I}_{\{f'\}}\subseteq\mathfrak{I}_{\mathcal{D}}-\mathfrak{I}_{\{f'\}}$, but since $\mathfrak{I}_{\mathcal{D}'},t\models body(r')$, $\mathfrak{I}_{\mathcal{D'}}\not\subseteq \mathfrak{I}_{\mathcal{D}}-\mathfrak{I}_{\{f'\}}$, we have a contradiction.
    Let $\mathcal{D}_1'$ represent $\mathfrak{I}_{\mathcal{D}'} \cap \mathfrak{I}_{\{f'\}}$, and $\mathcal{D}_2'$ represent $\mathfrak{I}_{\mathcal{D}'} - \mathfrak{I}_{\{f'\}}$, and let $\mathcal{D}_3' = \mathcal{D}' - \{f'\} \cup \mathcal{D}_1'\cup \mathcal{D}_2'$.
    Then we have that $r'$ minimally derives $f$ from $\mathcal{D}_3'$. 
    Let $f_1'\in\mathcal{D}'_1$, then $f_1'\in\mathcal{D}_3'$, and $f_1'$ is partially rooted in $E^-$ because $\mathfrak{I}_{\{f'\}}\models f_1'$, which means that $\forall f\in \mathcal{F}$, $f$ is partially rooted in $E^-$. 
    Consequently, $\forall f\in \mathcal{F}$, $D^{\omega_1}_{\Pi,E}(\mathfrak{I}_{E^-})\models f$, and therefore, $D^{\omega_1}_{\Pi,E}(\mathfrak{I}_{E^-}),t\models head(r)$, and we have $D^{\omega_1}_{\Pi,E}(\mathfrak{I}_{E^-})$ and $\mathfrak{I}$ satisfies with over-deletion of $E^-$ ground rule $r'$ regarding $\Pi,E$ at $t$.
    Consequently, $\mathfrak{I}$ is an over-deletion model of $\Pi,E,E^-$.
\end{proof}
From Propositions~\ref{proposition:deletionmodelsaremaxmodels} and~\ref{proposition:maxmodelsaredeletionmodels}, we can also conclude that 
\begin{corollary}\label{coro:leastdeletionmodel}
 $\mathfrak{M}_{\Pi,E,E^-}^-$ is the least over-deletion model of $\Pi,E,E^-$. 
\end{corollary}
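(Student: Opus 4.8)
The plan is to obtain the corollary as an immediate consequence of the equivalence between the two notions of models that Propositions~\ref{proposition:deletionmodelsaremaxmodels} and~\ref{proposition:maxmodelsaredeletionmodels} together establish. I would split the goal into the two obligations that match the definition of ``least over-deletion model'': first, that $\mathfrak{M}_{\Pi,E,E^-}^-$ is itself an over-deletion model of $\Pi,E,E^-$; and second, that it is contained in every over-deletion model of $\Pi,E,E^-$.

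For the first obligation, I would note that by its defining property as the least element of the set of max deletion-impact models, $\mathfrak{M}_{\Pi,E,E^-}^-$ is in particular a member of that set, hence a max deletion-impact model; Proposition~\ref{proposition:maxmodelsaredeletionmodels} then promotes it to an over-deletion model. For the second obligation, I would take an arbitrary over-deletion model $\mathfrak{I}$ of $\Pi,E,E^-$, invoke Proposition~\ref{proposition:deletionmodelsaremaxmodels} to conclude that $\mathfrak{I}$ is also a max deletion-impact model, and then use the minimality of $\mathfrak{M}_{\Pi,E,E^-}^-$ within that class to obtain $\mathfrak{M}_{\Pi,E,E^-}^- \subseteq \mathfrak{I}$. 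Since $\mathfrak{I}$ was arbitrary, this yields containment below every over-deletion model, and together with the first obligation establishes that $\mathfrak{M}_{\Pi,E,E^-}^-$ is the least such model.

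I do not expect any genuine technical obstacle to remain at this stage, since the substantive work has already been carried out in the two cited propositions, which show that the class of over-deletion models and the class of max deletion-impact models coincide as sets of interpretations. The only point meriting a moment's care is the transfer of the minimality qualifier across this coincidence: because the two classes are literally the same set, being least in one is the same as being least in the other, so the conclusion follows at once.
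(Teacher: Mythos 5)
Your proposal is correct and matches the paper's intended argument exactly: the paper derives this corollary directly from Propositions~\ref{proposition:deletionmodelsaremaxmodels} and~\ref{proposition:maxmodelsaredeletionmodels} without further elaboration, relying precisely on the coincidence of the two model classes and the fact that $\mathfrak{M}_{\Pi,E,E^-}^-$ is by definition the least max deletion-impact model (membership being built into the paper's definition of ``least in a set''). No gap.
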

Given bounded $\Pi$, $E$ and $E^-$ such that $\mathfrak{I}_{E^-}\subseteq\mathfrak{I}_{E}$, we prove the following lemma
\begin{lemma}\label{lemma:depthrep}
   Let $\varrho$ be a closed interval of length $\mathsf{depth}(\Pi)$
and $E'$ a dataset representing $\mathfrak{M}^-_{\Pi,E,E^-}\mid_{\varrho}$. Then both of the
following hold: 
\begin{enumerate}
    \item if $\varrho^+> t_E^+$, then $\mathfrak{M}^-_{\Pi,E,E^-}\mid_{(\varrho^+,\infty)} = \mathfrak{M}^-_{\Pi,E,E'}\mid_{(\varrho^+,\infty)} $,
    \item if $\varrho^-< t_E^-$, then $\mathfrak{M}^-_{\Pi,E,E^-}\mid_{(-\infty,\varrho^-)} = \mathfrak{M}^-_{\Pi,E,E'}\mid_{(-\infty,\varrho^-)} $.
\end{enumerate}
\end{lemma}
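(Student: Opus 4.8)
The plan is to establish each identity by proving both inclusions, working throughout with punctual facts (which suffices by Proposition~\ref{proposition:punctualfactsareenoughforcontainment}) and using the characterisation $\mathfrak{M}^-_{\Pi,E,E^-}=D^{\omega_1}_{\Pi,E}(\mathfrak{I}_{E^-})$ from Proposition~\ref{proposition:overdeletionleast}. Since $\mathfrak{I}_{E'}=\mathfrak{M}^-_{\Pi,E,E^-}\mid_\varrho\subseteq\mathfrak{C}_{\Pi,E}$ by Proposition~\ref{proposition:mdiiincanon}, the same machinery applies to the seed $E'$, giving $\mathfrak{M}^-_{\Pi,E,E'}=D^{\omega_1}_{\Pi,E}(\mathfrak{I}_{E'})$. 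I would prove Part~1; Part~2 follows by a mirror-image argument that swaps the roles of the forward ($\boxplus,\diamondplus,\mathcal{U}$) and backward ($\boxminus,\diamondminus,\mathcal{S}$) temporal operators.

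For the inclusion $\mathfrak{M}^-_{\Pi,E,E'}\mid_{(\varrho^+,\infty)}\subseteq\mathfrak{M}^-_{\Pi,E,E^-}\mid_{(\varrho^+,\infty)}$ I would first note that $D_{\Pi,E}$ is monotone in its argument, since a larger interpretation can only make more impact-conditions $\mathfrak{I}\cap\mathfrak{I}_{\mathcal{D}}\neq\mathfrak{I}_{\emptyset}$ hold; hence $D^{\omega_1}_{\Pi,E}$ is monotone and $\mathfrak{M}^-_{\Pi,E,E^-}$ is one of its fixpoints. From $\mathfrak{I}_{E'}\subseteq\mathfrak{M}^-_{\Pi,E,E^-}$ we then obtain $\mathfrak{M}^-_{\Pi,E,E'}=D^{\omega_1}_{\Pi,E}(\mathfrak{I}_{E'})\subseteq D^{\omega_1}_{\Pi,E}(\mathfrak{M}^-_{\Pi,E,E^-})=\mathfrak{M}^-_{\Pi,E,E^-}$, which restricts to the desired inclusion on $(\varrho^+,\infty)$.

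The substantive direction is $\mathfrak{M}^-_{\Pi,E,E^-}\mid_{(\varrho^+,\infty)}\subseteq\mathfrak{M}^-_{\Pi,E,E'}$, which I would prove by transfinite induction on $\alpha$ for the statement $D^{\alpha}_{\Pi,E}(\mathfrak{I}_{E^-})\mid_{(\varrho^+,\infty)}\subseteq\mathfrak{M}^-_{\Pi,E,E'}$. The base case is vacuous, as every fact of $E^-$ lies in $[t_E^-,t_E^+]$ and $\varrho^+>t_E^+$; limit stages follow by taking unions. For the successor step, consider a punctual fact $M@t$ with $t>\varrho^+$ in $D^{\alpha+1}_{\Pi,E}(\mathfrak{I}_{E^-})$; if it already lies in $D^{\alpha}_{\Pi,E}(\mathfrak{I}_{E^-})$ the claim holds by induction, so assume it is newly forced by a ground instance $r'$ that minimally derives it from some $\mathcal{D}$ at a time point $t'$, with $\mathfrak{C}_{\Pi,E}\models\mathcal{D}$ and an impacted punctual fact $M'@s\in\mathfrak{I}_{\mathcal{D}}\cap D^{\alpha}_{\Pi,E}(\mathfrak{I}_{E^-})$. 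The key geometric fact is $|t-s|\le\mathsf{depth}(\Pi)$: by Proposition~\ref{proposition:headexpandsion} the head forces $M$ only within distance $\mathsf{depth}(head(r'))$ of $t'$, so $|t-t'|\le\mathsf{depth}(head(r'))$; and the least interpretation satisfying $body(r')$ at $t'$, which $\mathcal{D}$ represents, mentions only time points within distance $\mathsf{depth}(body(r'))$ of $t'$, so $|s-t'|\le\mathsf{depth}(body(r'))$. Adding these and using that $\mathsf{depth}$ sums the right endpoints over \emph{all} operators of the rule gives $|t-s|\le\mathsf{depth}(r')\le\mathsf{depth}(\Pi)=|\varrho|$. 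Because $t>\varrho^+$ and $|\varrho|=\mathsf{depth}(\Pi)$, this forces $s>\varrho^-$, so the impacted fact cannot lie left of $\varrho$. If $s>\varrho^+$ the induction hypothesis yields $M'@s\in\mathfrak{M}^-_{\Pi,E,E'}$; if instead $\varrho^-<s\le\varrho^+$ then $s\in\varrho$ and $M'@s\in\mathfrak{M}^-_{\Pi,E,E^-}\mid_\varrho=\mathfrak{I}_{E'}\subseteq\mathfrak{M}^-_{\Pi,E,E'}$. In either case $D^{\omega_1}_{\Pi,E}(\mathfrak{I}_{E'})\cap\mathfrak{I}_{\mathcal{D}}\neq\mathfrak{I}_{\emptyset}$, so applying $D_{\Pi,E}$ once to the fixpoint $\mathfrak{M}^-_{\Pi,E,E'}$ yields $M@t\in\mathfrak{M}^-_{\Pi,E,E'}$. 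Intuitively, the window $\varrho$ of length exactly $\mathsf{depth}(\Pi)$ is wide enough that any deletion impact reaching $(\varrho^+,\infty)$ must deposit an impacted fact inside $\varrho$, and $E'$ already records every such fact.

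The main obstacle is the locality bound $|t-s|\le\mathsf{depth}(\Pi)$: establishing it cleanly requires combining Proposition~\ref{proposition:headexpandsion} with a reach analysis of each body operator (including $\mathcal{S}$ and $\mathcal{U}$, whose satisfaction at $t'$ constrains a whole subinterval within distance $\mathsf{depth}(body(r'))$ of $t'$), and, crucially, observing that $\mathsf{depth}$ sums right endpoints over the entire rule, so the head's forward reach and the body's backward reach cannot both be large at once. This is exactly why a separator of length $\mathsf{depth}(\Pi)$, rather than $2\mathsf{depth}(\Pi)$, suffices, and it is the delicate point on which the whole argument hinges.
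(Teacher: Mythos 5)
Your proposal is correct and follows essentially the same route as the paper's proof: the easy inclusion via monotonicity from $\mathfrak{I}_{E'}\subseteq\mathfrak{M}^-_{\Pi,E,E^-}$, and the substantive inclusion by transfinite induction on the stages $D^{\alpha}_{\Pi,E}(\mathfrak{I}_{E^-})$, using exactly the locality bound $m_1+m_2\le\mathsf{depth}(\Pi)$ (head reach plus body reach) to force the impacted fact into $[\varrho^-,\infty)$ and hence either into $\varrho$ (covered by $E'$) or into the inductive region. The only cosmetic difference is that you compare each stage against the full fixpoint $\mathfrak{M}^-_{\Pi,E,E'}$ rather than against the matching stage $D^{\alpha}_{\Pi,E}(\mathfrak{I}_{E'})$ as the paper does, which changes nothing of substance.
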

\begin{proof}
    We consider $\varrho^+>t^+_E$.
    
    Since $\mathfrak{I}_{E'}\subseteq\mathfrak{M}_{\Pi,E,E^-}^-$, we have $\mathfrak{M}_{\Pi,E,E'}^-\subseteq \mathfrak{M}_{\Pi,E,E^-}^-$, and therefore $\mathfrak{M}_{\Pi,E,E'}^-\mid_{(\varrho^+,\infty)}\subseteq \mathfrak{M}_{\Pi,E,E^-}^-\mid_{(\varrho^+,\infty)}$.

    To prove $\mathfrak{M}_{\Pi,E,E^-}^-\mid_{(\varrho^+,\infty)}\subseteq \mathfrak{M}_{\Pi,E,E'}^-\mid_{(\varrho^+,\infty)}$, we show inductively that for each ordinal $\alpha$, each ground relational atom $M$, each time point $t>\varrho^+$ $D^\alpha_{\Pi,E}(\mathfrak{I}_{E^-}),t\models M$ implies $D^\alpha_{\Pi,E}(\mathfrak{I}_{E'}),t\models M$.

    In the base case, the implication holds trivially because for each $t>\varrho^+$, there is no $M$ such that $\mathfrak{I}_{E'},t\models M$.

    In the induction step, consider $\alpha+1$ for an ordinal $\alpha$. 
    For a time point $t>\varrho^+$ and ground relational atom $M$, if $D^\alpha_{\Pi,E}(\mathfrak{I}_{E^-}),t\models M$, then the implication is guaranteed by the induction hypothesis. 
    If $D^\alpha_{\Pi,E}(\mathfrak{I}_{E^-}),t\not\models M$ and $D^{\alpha+1}_{\Pi,E}(\mathfrak{I}_{E^-}),t\models M$, then there must a ground rule instance $r'$, a dataset $\mathcal{D}$ such that $\mathfrak{C}_{\Pi,E}\models\mathfrak{I}_{\mathcal{D}}$, $\mathfrak{I}_D\cap D^\alpha_{\Pi,E}(\mathfrak{I}_{E^-})\neq\mathfrak{I}_{\emptyset}$, and $r'$ minimally derives $M@t$ from $\mathcal{D}$ at some $t'$. 

    Let the sum of every right endpoint of every interval mentioned in $head(r')$ be $m_1$, and the sum of right every endpoint mentioned in $body(r')$ be $m_2$. 
    Then by definition of $\mathsf{depth}(\Pi)$, $m_1+m_2 \leq \mathsf{depth}(\Pi)$.
    
     Because $r'$ derives $f$ at $t'$, we know that $t'\in [t-m_1,t+m_1]$ and $\forall M'@\varrho'\in\mathcal{D}$, $\varrho \subseteq [t'-m_2,t'+m_2]$.
     Since $m_1+m_2 \leq \mathsf{depth}(\Pi)$, and by the minimality of $\mathcal{D}$, every interval mentioned in $\mathcal{D}$ must be in $[t-\mathsf{depth}(\Pi),t+\mathsf{depth}(\Pi)]$. 
     And because $t>\varrho^+$ and $\varrho^+-\varrho^- = \mathsf{depth}(\Pi)$, $[t-\mathsf{depth}(\Pi),t+\mathsf{depth}(\Pi)]\subseteq [\varrho^-,t+\mathsf{depth}(\Pi)]$, and by the induction hypothesis, 
     $\mathfrak{I}_{\mathcal{D}}\cap D^{\alpha}_{\Pi,E}(\mathfrak{I}_{E^-})\subseteq D^{\alpha}_{\Pi,E}(\mathfrak{I}_{E'})$, and
     $D^{\alpha}_{\Pi,E}(\mathfrak{I}_{E'})\cap\mathfrak{I}_{\mathcal{D}}\neq \mathfrak{I}_{\emptyset}$, by which we have $D^{\alpha+1}_{\Pi,E}(\mathfrak{I}_{E'})\models f$.

     For a limit ordinal $\alpha$, the implication holds trivially because $D^{\alpha}_{\Pi,E}(\mathfrak{I}_{E'}) = \bigcup_{\beta < \alpha}D^{\alpha}_{\Pi,E}(\mathfrak{I}_{E'})$.

    The case for $\varrho^- < t_E^-$ is symmetric.
\end{proof}

From Lemma~\ref{lemma:depthrep}, Propositions~\ref{proposition:AllfactsinDomegapartiallyrooted} and ~\ref{proposition:mdiiincanon} we can conclude
\begin{corollary}\label{coro:recursivepartialrootedness}
    Let $\varrho$ be a closed interval of length $\mathsf{depth}(\Pi)$
and $E'$ a dataset representing $\mathfrak{M}^-_{\Pi,E,E^-}\mid_{\varrho}$. Then both of the
following hold: 
\begin{enumerate}
    \item if $\varrho^+> t_E^+$, then each fact $M@\varrho_1$ with $\varrho_1^->\varrho^-$ partially rooted in $E^-$ is also partially rooted in $E'$ and vice versa.
    \item if $\varrho^-< t_E^-$, then each fact $M@\varrho_1$ with $\varrho_1^+<\varrho^+$ partially rooted in $E^-$ is also partially rooted in $E'$ and vice versa.
\end{enumerate}
\end{corollary}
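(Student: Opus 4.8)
The plan is to reduce both equivalences to a single statement: that the two max deletion-impact interpretations $\mathfrak{M}^-_{\Pi,E,E^-}$ and $\mathfrak{M}^-_{\Pi,E,E'}$ agree on the region $(\varrho^-,\infty)$ in case~1 (and symmetrically on $(-\infty,\varrho^+)$ in case~2). The bridge between ``partial rootedness'' and these interpretations is the characterisation that, for any admissible starting set $S$ with $\mathfrak{I}_S\subseteq\mathfrak{C}_{\Pi,E}$, a fact $f$ is partially rooted in $S$ regarding $\Pi,E$ if and only if $\mathfrak{M}^-_{\Pi,E,S}\models f$. The ``if'' direction follows from Proposition~\ref{proposition:overdeletionleast} (which gives $\mathfrak{M}^-_{\Pi,E,S}=D^{\omega_1}_{\Pi,E}(\mathfrak{I}_S)$) together with Proposition~\ref{proposition:AllfactsinDomegapartiallyrooted}, while the ``only if'' direction is immediate from the definition of $\mathfrak{M}^-$ as the least interpretation satisfying every partially rooted fact. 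Both $E^-$ and $E'$ qualify as admissible starting sets, since $\mathfrak{I}_{E'}\subseteq\mathfrak{M}^-_{\Pi,E,E^-}\subseteq\mathfrak{C}_{\Pi,E}$ by the construction of $E'$ and by Proposition~\ref{proposition:mdiiincanon}.

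With this characterisation in hand, I would argue case~1 as follows. A fact $M@\varrho_1$ with $\varrho_1^->\varrho^-$ satisfies $\varrho_1\subseteq(\varrho^-,\infty)$, so whenever the two interpretations coincide throughout $(\varrho^-,\infty)$, satisfaction of $M@\varrho_1$ by $\mathfrak{M}^-_{\Pi,E,E^-}$ is equivalent to satisfaction by $\mathfrak{M}^-_{\Pi,E,E'}$; by the characterisation this is exactly the claimed two-way implication. It therefore suffices to establish coincidence on $(\varrho^-,\infty)$, which I would split into two pieces. On $(\varrho^+,\infty)$ coincidence is delivered directly by Lemma~\ref{lemma:depthrep}(1). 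On the closed interval $\varrho=[\varrho^-,\varrho^+]$ I would use a sandwich argument: since $\mathfrak{I}_{E'}\subseteq\mathfrak{M}^-_{\Pi,E,E'}$ (the operator sequence is increasing and starts at $\mathfrak{I}_{E'}$) and $\mathfrak{M}^-_{\Pi,E,E'}\subseteq\mathfrak{M}^-_{\Pi,E,E^-}$ (monotonicity of $D_{\Pi,E}$, already noted in the proof of Lemma~\ref{lemma:depthrep}), while $\mathfrak{M}^-_{\Pi,E,E^-}\mid_{\varrho}=\mathfrak{I}_{E'}\mid_{\varrho}$ by the defining property of $E'$, we obtain
\[
\mathfrak{M}^-_{\Pi,E,E'}\mid_{\varrho}\subseteq\mathfrak{M}^-_{\Pi,E,E^-}\mid_{\varrho}=\mathfrak{I}_{E'}\mid_{\varrho}\subseteq\mathfrak{M}^-_{\Pi,E,E'}\mid_{\varrho},
\]
forcing equality on $\varrho$. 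Taking the union of the two pieces covers $[\varrho^-,\infty)\supseteq(\varrho^-,\infty)$, as required.

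Case~2 is entirely symmetric, replacing $(\varrho^+,\infty)$ and Lemma~\ref{lemma:depthrep}(1) with $(-\infty,\varrho^-)$ and Lemma~\ref{lemma:depthrep}(2), and noting that a fact $M@\varrho_1$ with $\varrho_1^+<\varrho^+$ has $\varrho_1\subseteq(-\infty,\varrho^+)$. The remaining bookkeeping is minor. The one point needing care is the ``seam'' interval $\varrho$ itself, which the lemma does not cover and which the sandwich above handles separately; I expect that matching the half-open region supplied by the lemma against the closed region $\varrho$ supplied by $E'$, so that their union genuinely contains all of $(\varrho^-,\infty)$, is the place where a careless endpoint could leave a gap, and hence the step I would verify most carefully.
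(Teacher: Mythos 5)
Your proposal is correct and follows exactly the route the paper intends: the paper gives no written proof for this corollary, merely asserting that it follows from Lemma~\ref{lemma:depthrep}, Proposition~\ref{proposition:AllfactsinDomegapartiallyrooted}, and Proposition~\ref{proposition:mdiiincanon}, and your argument assembles precisely these ingredients (the characterisation of partial rootedness via $\mathfrak{M}^-$ through Propositions~\ref{proposition:overdeletionleast} and~\ref{proposition:AllfactsinDomegapartiallyrooted}, admissibility of $E'$ via Proposition~\ref{proposition:mdiiincanon}, and coincidence beyond $\varrho$ via the lemma). The sandwich argument on the seam interval $\varrho$ is a detail the paper leaves implicit, and you handle it correctly.
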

The following definitions are all made in the context that $\Pi,E,E^-$ are bounded. 
\begin{definition}
    Interpretation $D^k_{\Pi,E}(\mathfrak{I}_{E^-})$ is \textbf{deletion-saturated} regarding $\Pi,E,E^-$, if there exist $\varrho_1,\varrho_2,\varrho_3$ and $\varrho_4$ of length $2\mathsf{depth}(\Pi)$,
whose endpoints are located on the $(\Pi, E)$-ruler and satisfy
$\varrho_1^+<\varrho_2^+<t_E^-$
and $t_E^+<\varrho^-_3<\varrho_4^-$, and for which the
following properties hold:
\begin{enumerate}
    \item $\mathfrak{C}_{\Pi,E}\mid_{[\varrho_1^-,\varrho_4^+]}$ is a saturated interpretation with periods $[\varrho_1^-,\varrho_2^-)$ and $(\varrho_3^+,\varrho_4^+]$.
    \item $D^k_{\Pi,E}(\mathfrak{I}_{E^-})$ is an over-deletion model of $\Pi,E,E^-$ during $[\varrho_1^- +\mathsf{depth}(\Pi),\varrho_4^+-\mathsf{depth}(\Pi)]$.
    \item $D^k_{\Pi,E}(\mathfrak{I}_{E^-})\mid_{\varrho_2}$ is a shift of $D^k_{\Pi,E}(\mathfrak{I}_{E^-})\mid_{\varrho_1}$, 
    \item $D^k_{\Pi,E}(\mathfrak{I}_{E^-})\mid_{\varrho_4}$ is a shift of $D^k_{\Pi,E}(\mathfrak{I}_{E^-})\mid_{\varrho_3}$.
\end{enumerate}
Each pair of intervals $[\varrho_1^-,\varrho_2^-)$ and $(\varrho^+_3,\varrho_4^+]$ for each quadruple $\varrho_1,\varrho_2,\varrho_3,\varrho_4$ as above, can be referred to as the \textbf{deletion periods} of $D^k_{\Pi,E}(\mathfrak{I}_{E^-})$. 
Alternatively, we can denote them as $(\varrho_{\mathit{left}},\varrho_{\mathit{right}})$
\end{definition}
\begin{proposition} \label{proposition:deletion-saturated}
        Interpretation $D^k_{\Pi,E}(\mathfrak{I}_{E^-})$ is also \textbf{deletion-saturated} regarding $\Pi,E,E^-$, if there exist $\varrho_1,\varrho_2,\varrho_3$ and $\varrho_4$ of length $2\mathsf{depth}(\Pi)$,
whose endpoints are located on the $(\Pi, E)$-ruler and satisfy
$\varrho_1^+<\varrho_2^+<t_E^-$
and $t_E^+<\varrho^-_3<\varrho_4^-$, and for which the
following properties hold:
\begin{enumerate}
    \item $\mathfrak{C}_{\Pi,E}\mid_{[\varrho_1^-,\varrho_4^+]}$ is a saturated interpretation with periods $[\varrho_1^-,\varrho_2^-)$ and $(\varrho_3^+,\varrho_4^+]$.
    \item $D^{k+1}_{\Pi,E}(\mathfrak{I}_{E^-})\mid_{[\varrho^-_1,\varrho_4^+]} = D^{k}_{\Pi,E}(\mathfrak{I}_{E^-})\mid_{[\varrho^-_1,\varrho_4^+]} $.
    \item $D^k_{\Pi,E}(\mathfrak{I}_{E^-})\mid_{\varrho_2}$ is a shift of $D^k_{\Pi,E}(\mathfrak{I}_{E^-})\mid_{\varrho_1}$, 
    \item $D^k_{\Pi,E}(\mathfrak{I}_{E^-})\mid_{\varrho_4}$ is a shift of $D^k_{\Pi,E}(\mathfrak{I}_{E^-})\mid_{\varrho_3}$.
\end{enumerate}
\end{proposition}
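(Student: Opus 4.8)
The plan is to observe that the hypotheses of this proposition share conditions~1, 3, and 4 verbatim with the definition of deletion-saturation, so for the same quadruple $\varrho_1,\varrho_2,\varrho_3,\varrho_4$ it suffices to derive the definition's condition~2 (that $D^k_{\Pi,E}(\mathfrak{I}_{E^-})$ is an over-deletion model during the shrunk interval $[\varrho_1^-+\mathsf{depth}(\Pi),\varrho_4^+-\mathsf{depth}(\Pi)]$) from the proposition's condition~2 (the one-step fixpoint $D^{k+1}_{\Pi,E}(\mathfrak{I}_{E^-})\mid_{[\varrho_1^-,\varrho_4^+]} = D^{k}_{\Pi,E}(\mathfrak{I}_{E^-})\mid_{[\varrho_1^-,\varrho_4^+]}$) together with the periodicity conditions. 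The backbone is a locality observation already implicit in the proof of Lemma~\ref{lemma:depthrep}: if a ground instance $r'$ minimally derives a head fact from a dataset $\mathcal{D}$ at a body time point $t$, then all intervals of $\mathcal{D}$ and of the forced head facts lie within $[t-\mathsf{depth}(\Pi),t+\mathsf{depth}(\Pi)]$, since the head- and body-operator budgets of $r'$ sum to at most $\mathsf{depth}(\Pi)$.

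The key lemma to establish is \emph{core stabilization}: $D^{k}_{\Pi,E}(\mathfrak{I}_{E^-})\mid_{[\varrho_1^-,\varrho_4^+]} = \mathfrak{M}^-_{\Pi,E,E^-}\mid_{[\varrho_1^-,\varrho_4^+]}$, where $\mathfrak{M}^-_{\Pi,E,E^-}=D^{\omega_1}_{\Pi,E}(\mathfrak{I}_{E^-})$ by Proposition~\ref{proposition:overdeletionleast}. The inclusion $\subseteq$ is immediate from monotonicity. For $\supseteq$, I would show by transfinite induction that $D^{\alpha}_{\Pi,E}(\mathfrak{I}_{E^-})\mid_{[\varrho_1^-,\varrho_4^+]} = D^{k}_{\Pi,E}(\mathfrak{I}_{E^-})\mid_{[\varrho_1^-,\varrho_4^+]}$ for every ordinal $\alpha\ge k$; the limit stage is trivial. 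At a successor $\alpha+1$, take a freshly added fact $f$ with interval inside the core, arising by locality from a minimal derivation at a body point $t$ with window $[t-\mathsf{depth}(\Pi),t+\mathsf{depth}(\Pi)]$. When this window lies inside $[\varrho_1^-,\varrho_4^+]$, the firing dataset already meets $D^{k}_{\Pi,E}(\mathfrak{I}_{E^-})$ by the induction hypothesis, so the operator fires one step from $D^{k}$ and the one-step fixpoint forces $f$ into $D^{k}$ already. When the window protrudes past a boundary (so $t$ is within $\mathsf{depth}(\Pi)$ of $\varrho_1^-$ or $\varrho_4^+$), I would invoke the shift conditions~3 and~4 — which, being statements about the length-$2\mathsf{depth}(\Pi)$ intervals $\varrho_1,\varrho_2$ and $\varrho_3,\varrho_4$ lying inside the core, persist at stage $\alpha$ by the induction hypothesis — together with the saturation of $\mathfrak{C}_{\Pi,E}$ from condition~1, which makes the body datasets drawn from $\mathfrak{C}_{\Pi,E}$ genuinely periodic beyond the core. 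Shifting the offending derivation inward by one period moves its window fully into the core, where it is already stabilized, and shifting the conclusion back recovers $f$ in $D^{k}$.

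With core stabilization in hand, the proposition follows quickly. By Corollary~\ref{coro:leastdeletionmodel}, $\mathfrak{M}^-_{\Pi,E,E^-}$ is the least over-deletion model of $\Pi,E,E^-$, hence it satisfies with over-deletion every ground rule instance at every time point. Fix $t$ in the shrunk interval $[\varrho_1^-+\mathsf{depth}(\Pi),\varrho_4^+-\mathsf{depth}(\Pi)]$ and a ground rule $r'$ whose over-deletion premise holds; the head facts it forces have intervals inside $[t-\mathsf{depth}(\Pi),t+\mathsf{depth}(\Pi)]\subseteq[\varrho_1^-,\varrho_4^+]$ by locality, and $\mathfrak{M}^-_{\Pi,E,E^-}$ satisfies them. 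By core stabilization these same facts hold in $D^{k}_{\Pi,E}(\mathfrak{I}_{E^-})$, so $D^{k}_{\Pi,E}(\mathfrak{I}_{E^-}),t\models head(r')$. As $r'$ and $t$ are arbitrary, $D^{k}_{\Pi,E}(\mathfrak{I}_{E^-})$ is an over-deletion model during the shrunk interval, establishing the definition's condition~2; with the unchanged conditions~1, 3, 4 this makes $D^{k}_{\Pi,E}(\mathfrak{I}_{E^-})$ deletion-saturated.

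I expect the main obstacle to be the boundary (margin) case of the core-stabilization induction: making the periodic shift argument rigorous, in particular verifying that both the candidate body datasets (via the saturation of $\mathfrak{C}_{\Pi,E}$) and the partial-rootedness of the witnessing facts transfer correctly under a one-period shift, and confirming that the induction hypothesis genuinely keeps the shift relations~3 and~4 available at every stage $\alpha$.
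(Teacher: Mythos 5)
You correctly isolate what has to be shown (only condition~2 of the definition needs to be derived; conditions 1, 3, 4 carry over verbatim), but your route diverges sharply from the paper's. The paper's proof is a one-step contradiction that uses only the proposition's condition~2 together with locality: if the over-deletion requirement failed at some $t$ in $[\varrho_1^-+\mathsf{depth}(\Pi),\varrho_4^+-\mathsf{depth}(\Pi)]$, the missing head fact would have its interval inside $[t-\mathsf{depth}(\Pi),t+\mathsf{depth}(\Pi)]\subseteq[\varrho_1^-,\varrho_4^+]$ and would be added by one further application of $D_{\Pi,E}$, contradicting $D^{k+1}_{\Pi,E}(\mathfrak{I}_{E^-})\mid_{[\varrho_1^-,\varrho_4^+]}=D^{k}_{\Pi,E}(\mathfrak{I}_{E^-})\mid_{[\varrho_1^-,\varrho_4^+]}$; conditions 1, 3, 4 and no comparison with $\mathfrak{M}^-_{\Pi,E,E^-}$ are needed. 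You instead route everything through the much stronger intermediate claim $D^{k}_{\Pi,E}(\mathfrak{I}_{E^-})\mid_{[\varrho_1^-,\varrho_4^+]}=\mathfrak{M}^-_{\Pi,E,E^-}\mid_{[\varrho_1^-,\varrho_4^+]}$. Note that in the paper this equality is only available \emph{downstream}, as a consequence of Theorem~\ref{theo:deletionunfold}, which presupposes deletion-saturation; so your decomposition must prove it from scratch, and that is exactly where your argument does not close.

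The concrete gap is the boundary case of your core-stabilization induction, which you flag but do not resolve. Your induction hypothesis controls $D^{\alpha}_{\Pi,E}(\mathfrak{I}_{E^-})$ only on $[\varrho_1^-,\varrho_4^+]$. When a fresh fact with interval inside the core is derived at stage $\alpha+1$ from a dataset $\mathcal{D}$ whose window protrudes past $\varrho_1^-$ or $\varrho_4^+$, the firing condition $D^{\alpha}_{\Pi,E}(\mathfrak{I}_{E^-})\cap\mathfrak{I}_{\mathcal{D}}\neq\mathfrak{I}_{\emptyset}$ may be witnessed by a fact lying entirely in the protruding margin. About $D^{\alpha}$ on that margin neither the induction hypothesis nor conditions 3--4 say anything: those conditions concern $D^{k}$ on the four fixed windows $\varrho_1,\dots,\varrho_4$, all of which lie inside the core, and condition~1 only gives periodicity of $\mathfrak{C}_{\Pi,E}$, not of the overdeleted facts beyond the core. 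Hence "shifting the offending derivation inward by one period" cannot be justified: you would need that $D^{\alpha}$ restricted to $[\varrho_4^+,\varrho_4^++\mathsf{depth}(\Pi)]$ is a shift of $D^{\alpha}$ one period inward, which is a statement of the same kind as the one being proved and does not follow from the stated hypotheses about $D^{k}$. The same problem affects the transfer of partial-rootedness of the witnessing fact under the shift (the paper's Corollary~\ref{coro:recursivepartialrootedness} would require already knowing $\mathfrak{M}^-_{\Pi,E,E^-}$ on a length-$\mathsf{depth}(\Pi)$ window, which is again what you are trying to establish). Your second half --- deducing the over-deletion-model property on the shrunk interval from core stabilization via Corollary~\ref{coro:leastdeletionmodel} and locality --- is fine, but without a proof of the key lemma the proposal is not a proof.
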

\begin{proof}
    We show that $D^{k}_{\Pi,E}(\mathfrak{I}_{E^-})$ is an over-deletion model of $\Pi,E,E^-$ during $[\varrho_1^- +\mathsf{depth}(\Pi),\varrho_4^+ - \mathsf{depth}(\Pi)]$.
    For $t\in [\varrho_1^- +\mathsf{depth}(\Pi),\varrho_4^+ - \mathsf{depth}(\Pi)]$, if there is a ground instance $r'$ of some rule $r\in\Pi$ such that $D^{k}_{\Pi,E}(\mathfrak{I}_{E^-}),t\models body(r')$ and $D^{k}_{\Pi,E}(\mathfrak{I}_{E^-})\not\models head(r')$, then there must be a punctual fact $f$ such that $r'$ derives $f$ from $\mathsf{rep}(D^{k}_{\Pi,E}(\mathfrak{I}_{E^-}))$ at $t$ and $D^{k}_{\Pi,E}(\mathfrak{I}_{E^-})\not\models f$. 
    Since $f$ is derived at $t$, let $f = M@t'$, then $t'\in[t-\mathsf{depth}(\Pi),t+\mathsf{depth}(\Pi)]$, and therefore $t\in[\varrho_1^-,\varrho_4^+]$, and we have $D^{k+1}_{\Pi,E}(\mathfrak{I}_{E^-})\models f$, and $D^{k}_{\Pi,E}(\mathfrak{I}_{E^-})\mid_{[\varrho_1^-,\varrho_4^+]}\neq D^{k+1}_{\Pi,E}(\mathfrak{I}_{E^-})\mid_{[\varrho_1^-,\varrho_4^+]}$, which is a contradiction to our assumption.
    Therefore, $D^{k}_{\Pi,E}(\mathfrak{I}_{E^-})$ is deletion-saturated regarding $\Pi,E,E^-$.
\end{proof}
\begin{definition}\label{claim:deleteunfold}
    The $(\varrho_{\mathit{left}},\varrho_{\mathit{right}})$-unfolding of deletion-saturated interpretation $D^{k}_{\Pi,E}(\mathfrak{I}_{E^-})$ is the interpretation $\mathfrak{I}$ such that
    \begin{itemize}
        \item $\mathfrak{I}\mid_{[\varrho^-_{\mathit{left}},\varrho^+_{\mathit{right}}]} =D^{k}_{\Pi,E}(\mathfrak{I}_{E^-})\mid_{[\varrho^-_{\mathit{left}},\varrho^+_{\mathit{right}}]} $
        \item $\forall n\in \mathbb{N}$,
        $\mathfrak{I}\mid_{\varrho_{\mathit{left}}-n|\varrho_{\mathit{left}}|}$ is a shift of $D^{k}_{\Pi,E}(\mathfrak{I}_{E^-})\mid_{\varrho_{\mathit{left}}}$,
        \item $\forall n\in \mathbb{N}$, $\mathfrak{I}\mid_{\varrho_{\mathit{right}}+n|\varrho_{\mathit{right}}|}$ is a shift of $D^{k}_{\Pi,E}(\mathfrak{I}_{E^-})\mid_{\varrho_{\mathit{right}}}$
\end{itemize}
\end{definition}

Next, we show that 
\begin{theorem}\label{theo:deletionunfold}
    For a deletion-saturated interpretation $D^k_{\Pi,E}(\mathfrak{I}_{E^-})$ and a pair of its deletion periods $\varrho_{\mathit{left}},\varrho_{\mathit{right}}$, the $(\varrho_{\mathit{left}},\varrho_{\mathit{right}})$-unfolding $\mathfrak{I}$ of $D^k_{\Pi,E}(\mathfrak{I}_{E^-})$ coincides with $\mathfrak{M}^-_{\Pi,E,E^-}$.
\end{theorem}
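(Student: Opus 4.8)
The plan is to establish the two inclusions $\mathfrak{M}^-_{\Pi,E,E^-}\subseteq\mathfrak{I}$ and $\mathfrak{I}\subseteq\mathfrak{M}^-_{\Pi,E,E^-}$ separately, using throughout that $\mathfrak{M}^-_{\Pi,E,E^-}=D^{\omega_1}_{\Pi,E}(\mathfrak{I}_{E^-})$ (Proposition~\ref{proposition:overdeletionleast}) and that $\mathfrak{M}^-_{\Pi,E,E^-}$ is the \emph{least} over-deletion model of $\Pi,E,E^-$ (Corollary~\ref{coro:leastdeletionmodel}). The conceptual backbone of both directions is that the limit $\mathfrak{M}^-_{\Pi,E,E^-}$ is itself periodic, with periods $\varrho_{\mathit{left}}$ and $\varrho_{\mathit{right}}$, outside the stable window $[\varrho_1^-,\varrho_4^+]$; once this periodicity is secured, $\mathfrak{M}^-_{\Pi,E,E^-}$ coincides with the unfolding $\mathfrak{I}$ by construction (Definition~\ref{claim:deleteunfold}).

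For $\mathfrak{M}^-_{\Pi,E,E^-}\subseteq\mathfrak{I}$, I would show that $\mathfrak{I}$ is an over-deletion model of $\Pi,E,E^-$ containing $\mathfrak{I}_{E^-}$, and then invoke minimality. Containment of $\mathfrak{I}_{E^-}$ is immediate, since the intervals of $E^-$ lie inside $[t_E^-,t_E^+]\subseteq[\varrho_2^-,\varrho_3^+]$, where $\mathfrak{I}$ agrees with $D^k_{\Pi,E}(\mathfrak{I}_{E^-})\supseteq\mathfrak{I}_{E^-}$. To verify the over-deletion condition at an arbitrary time point $t$: if $t$ lies in the stable window this is exactly property~2 of deletion-saturation; if $t$ lies in a periodic block, I shift $t$ back into the stable window by an integer multiple of $|\varrho_{\mathit{right}}|$ (respectively $|\varrho_{\mathit{left}}|$). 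A witnessing dataset $\mathcal{D}$ with $\mathfrak{C}_{\Pi,E}\models\mathcal{D}$ survives the shift because $\mathfrak{C}_{\Pi,E}$ is periodic there (property~1), its partially rooted witness $f'$ survives by the locality of partial rootedness (Corollary~\ref{coro:recursivepartialrootedness}), and $\mathfrak{I}$ is periodic by construction; hence the head is derived at $t$ as well. Proposition~\ref{proposition:overdeletionruler} ensures that the shifts respect the $(\Pi,E)$-interval structure, so that body (non-)satisfaction is preserved.

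For $\mathfrak{I}\subseteq\mathfrak{M}^-_{\Pi,E,E^-}$, on the stable window $\mathfrak{I}$ equals $D^k_{\Pi,E}(\mathfrak{I}_{E^-})\subseteq\mathfrak{M}^-_{\Pi,E,E^-}$, so it remains to treat the periodic blocks. I would fix a closed window $\varrho$ of length $\mathsf{depth}(\Pi)$ abutting the right stable boundary on which $D^k_{\Pi,E}(\mathfrak{I}_{E^-})=\mathfrak{M}^-_{\Pi,E,E^-}$, and then induct on the block index $n\ge 0$: Lemma~\ref{lemma:depthrep} forces $\mathfrak{M}^-_{\Pi,E,E^-}$ beyond $\varrho^+$ to be determined by its content on $\varrho$; since, by the saturation shifts (properties~3 and~4) together with the periodicity of $\mathfrak{C}_{\Pi,E}$ (property~1), the window content repeats with period $|\varrho_{\mathit{right}}|$, the content of block $n+1$ is forced to be the shift of the stable content, i.e.\ exactly $\mathfrak{I}$ on that block. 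The left side is symmetric, which yields that $\mathfrak{M}^-_{\Pi,E,E^-}$ is periodic and equals $\mathfrak{I}$ outside the stable window.

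The main obstacle is transferring periodicity from the finite stage $D^k_{\Pi,E}(\mathfrak{I}_{E^-})$ to the fixpoint $\mathfrak{M}^-_{\Pi,E,E^-}=D^{\omega_1}_{\Pi,E}(\mathfrak{I}_{E^-})$: a priori, deletion consequences could keep propagating across period boundaries at later stages and destroy the repetition. Controlling this is precisely the role of the two locality results, Lemma~\ref{lemma:depthrep} (a $\mathsf{depth}(\Pi)$-window determines everything beyond it) and Corollary~\ref{coro:recursivepartialrootedness} (partial rootedness beyond a boundary is decided by a local slice), so the delicate part is to anchor the window far enough inside the over-deletion band $[\varrho_1^-+\mathsf{depth}(\Pi),\varrho_4^+-\mathsf{depth}(\Pi)]$ that $D^k_{\Pi,E}(\mathfrak{I}_{E^-})$ genuinely agrees with $\mathfrak{M}^-_{\Pi,E,E^-}$ there, and to check that every shift used maps ruler points to ruler points so that the $(\Pi,E)$-interval structure, and hence both body satisfaction and partial rootedness, is preserved. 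These alignment bookkeeping steps, rather than any single conceptual leap, form the bulk of the work.
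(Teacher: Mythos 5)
Your proposal is correct and follows essentially the same route as the paper's proof: one inclusion via showing the unfolding is an over-deletion model (using property~2 of deletion-saturation on the stable window and period-shifts of both $\mathfrak{C}_{\Pi,E}$ and $\mathfrak{I}$ outside it, then invoking minimality of $\mathfrak{M}^-_{\Pi,E,E^-}$), and the reverse inclusion by induction over periodic blocks anchored on a $\mathsf{depth}(\Pi)$-window via Lemma~\ref{lemma:depthrep}. Your explicit appeal to Corollary~\ref{coro:recursivepartialrootedness} for transporting the partially rooted witness across shifts is a detail the paper leaves implicit, but it does not change the argument.
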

\begin{proof}
    First, we prove $$\mathfrak{M}^-_{\Pi,E,E^-}\subseteq\mathfrak{I}.$$ 
    Because $\mathfrak{M}^-_{\Pi,E,E^-}$ is the least over-deletion model of $\Pi,E,E^-$, we only need to show that $\mathfrak{I}$ is a model of $E^-$ and an over-deletion model of $\Pi$ regarding $\Pi,E,E^-$.

    Because $D^k_{\Pi,E}(\mathfrak{I}_{E^-})$ contains $\mathfrak{I}_{E^-}$, $D^k_{\Pi,E}(\mathfrak{I}_{E^-})$, $\varrho_\mathsf{left}^-<t_{E}^-$, $\varrho_\mathsf{right}^+>t_{E}^+$, and $\mathfrak{I}\mid_{[\varrho_\mathsf{left}^-,\varrho_\mathsf{{right}}^+]} =D^k_{\Pi,E}(\mathfrak{I}_{E^-})\mid_{[\varrho_\mathsf{left}^-,\varrho_\mathsf{{right}}^+]}$, $D^k_{\Pi,E}(\mathfrak{I}_{E^-}) $, we have $\mathfrak{I}_{[\varrho_\mathsf{left}^-,\varrho_\mathsf{{right}}^+]}$ is a model of $E^-$, and $\mathfrak{I}$ is a model of $E^-$.

    To show that $\mathfrak{I}$ is an over-deletion model of $\Pi$ regarding $\Pi,E,E^-$, we first show that $\mathfrak{I}$ satisfies with over-deletion of $E^-$ $\Pi$ regarding $\Pi,E$ during $\varrho=[\varrho_\mathsf{left}^-+\mathsf{depth}(\Pi),\varrho_\mathsf{right}^+ - \mathsf{depth}(\Pi)]$.
    This is the case because $\mathfrak{I}\mid_{[\varrho_\mathsf{left}^-,\varrho_\mathsf{{right}}^+]} = D^k_{\Pi,E}(\mathfrak{I}_{E^-})\mid_{[\varrho_\mathsf{left}^-,\varrho_\mathsf{{right}}^+]}$, and by the definition of a deletion-saturated interpretation, $D^k_{\Pi,E}(\mathfrak{I}_{E^-})$ satisfies with over-deletion of $E^-$ $\Pi$ regarding $\Pi,E$ during $[\varrho_\mathsf{left}^-+\mathsf{depth}(\Pi),\varrho_\mathsf{right}^+ - \mathsf{depth}(\Pi)]$.
    Then we show that $\forall t \notin\varrho$, $\mathfrak{I}$ satisfies with over-deletion of $E^-$ $\Pi$ regarding $\Pi,E$ at $t$.
    Suppose $t<\varrho^-$, then $t< \varrho_\mathsf{left}^-+\mathsf{depth}(\Pi)$. 
    Then there must a natural number $n$ and and a time point $t'\in [\varrho_\mathsf{left}^-+\mathsf{depth}(\Pi),\varrho_\mathsf{left}^+ +\mathsf{depth}(\Pi)]$ such that $t' = t+n\cdot|\varrho_\mathsf{left}|$.
    By the definition of $\mathfrak{I}$, we know that $\mathfrak{I}\mid_{[t-\mathsf{depth}(\Pi),t+\mathsf{depth}(\Pi)]}$ is a shift of $\mathfrak{I}\mid_{[t'-\mathsf{depth}(\Pi),t'+\mathsf{depth}(\Pi)]}$, and also $\mathfrak{C}_{\Pi,E}\mid_{[t-\mathsf{depth}(\Pi),t+\mathsf{depth}(\Pi)]}$ is a shift of $\mathfrak{C}_{\Pi,E}\mid_{[t'-\mathsf{depth}(\Pi),t'+\mathsf{depth}(\Pi)]}$. 
    Therefore, for any rule instance $r'$ of a rule $r\in\Pi$, if it is satisfied with over-deletion of $E^-$ by $\mathfrak{I}$ at $t'$, then it is satisfied with over-deletion of $E^-$ by $\mathfrak{I}$ at $t$, and vice versa, and we have $\mathfrak{I}$ satisfies with over-deletion of $E^-$ $\Pi$ regarding $\Pi,E$ at $t$ because $t'\in\varrho$. 
    The proof for $t>\varrho^+$ is symmetric.
    Consequently, $\mathfrak{M}^-_{\Pi,E,E^-}\subseteq\mathfrak{I}$
    
    Next, we prove $$\mathfrak{I}\subseteq\mathfrak{M}^-_{\Pi,E,E^-}.$$
    Because $D^k_{\Pi,E}(\mathfrak{I}_{E^-})\subseteq D^{\omega_1}_{\Pi,E}(\mathfrak{I}_{E^-}) = \mathfrak{M}^-_{\Pi,E,E^-}$ and $\mathfrak{I}\mid_{[\varrho_\mathsf{left}^-,\varrho_\mathsf{{right}}^+]} = D^k_{\Pi,E}(\mathfrak{I}_{E^-})\mid_{[\varrho_\mathsf{left}^-,\varrho_\mathsf{{right}}^+]}$, we have $\mathfrak{I}\mid_{[\varrho_\mathsf{left}^-,\varrho_\mathsf{{right}}^+]}\subseteq\mathfrak{M}^-_{\Pi,E,E^-}$.
    Consider $\mathfrak{I}\mid_{[\varrho_\mathsf{right}^+,\infty)}$, we show inductively that for each $t_i$ on the $(\Pi,E)$-ruler that $\mathfrak{I}\mid_{[\varrho_\mathsf{left}^-,t_i]}\subseteq\mathfrak{M}^-_{\Pi,E,E^-}\mid_{[\varrho_\mathsf{left}^-,t_i]} $, where $t_0 = \varrho_\mathsf{right}^+$.
    The base case is already proven.
    For the inductive step, let $\varrho_A = (t_{i-1},t_i]$, and $\varrho_B = [t_{i-1}-\mathsf{depth}(\Pi),t_{i-1}]$, $\varrho_A' = \varrho_A - |\varrho_\mathsf{right}|$, $\varrho_B' = \varrho_B - |\varrho_\mathsf{right}|$. 
    We only need to show that $\mathfrak{I}\mid_{\varrho_A}\subseteq \mathfrak{M}^-_{\Pi,E,E^-}\mid_{\varrho_A}$. 
    Notice that by definition of $\mathfrak{I}$, $\mathfrak{I}\mid_{\varrho_B}$ is a shift of $\mathfrak{I}\mid_{\varrho_B'}$, and by the induction hypothesis and $\mathfrak{M}^-_{\Pi,E,E^-}\subseteq\mathfrak{I}$, we have $\mathfrak{M}^-_{\Pi,E,E^-}\mid_{\varrho_B'}$ is a shift of $\mathfrak{M}^-_{\Pi,E,E^-}\mid_{\varrho_B}$.
    Since by definition of $\mathfrak{I}$, $\mathfrak{C}_{\Pi,E}\mid_{[\varrho_B'^-,\infty)}$ is a shift of $\mathfrak{C}_{\Pi,E}\mid_{[\varrho_B^-,\infty)}$ and $|\varrho_B'|=|\varrho_B|=\mathsf{depth}(\Pi)$, by Lemma~\ref{lemma:depthrep} we can show that $\mathfrak{M}^-_{\Pi,E,E^-}\mid_{\varrho_A}$ is a shift of $\mathfrak{M}^-_{\Pi,E,E^-}\mid_{\varrho_A'}$, and because $\mathfrak{I}\mid_{\varrho_A'}$ is a shift of $\mathfrak{I}\mid_{\varrho_A}$, and $\mathfrak{I}\mid_{\varrho_A'}\subseteq\mathfrak{M}^-_{\Pi,E,E^-}\mid_{\varrho_A'}$, we have $\mathfrak{I}\mid_{\varrho_A}\subseteq\mathfrak{M}^-_{\Pi,E,E^-}\mid_{\varrho_A}$. 
    The case for $\mathfrak{I}\mid_{(-\infty,\varrho_\mathsf{right}^+)}$ is symmetric.
    Therefore, $\mathfrak{I}\subseteq\mathfrak{M}^-_{\Pi,E,E^-}$.
\end{proof}

Given a DatalogMTL program $\Pi$, dataset $E$, we define the \textbf{semi-na\"ive immediate deletion-consequence operator} $\dot{D}_{\Pi,E}$, 
mapping two interpretations $\mathfrak{I}_1,\mathfrak{I}_2$ into the least interpretation $\dot{D}_{\Pi,E}(\mathfrak{I}_1,\mathfrak{I}_2)$ containing $\mathfrak{I}_2$ satisfying for each ground rule instance $r'$ of a rule in $\Pi$, a fact $f$,
a dataset $\mathcal{D}$ such that if $r'$ minimally derives $f$ from $\mathcal{D}$, $\mathfrak{C}_{\Pi,E}-\mathfrak{I}_2\models \mathcal{D}$, 
and $(\mathfrak{I}_2-\mathfrak{I}_1)\mathfrak{}\cap\mathfrak{I}_{\mathcal{D}}\neq\mathfrak{I}_{\emptyset}$, then $\dot{D}_{\Pi,E}(\mathfrak{I}_1,\mathfrak{I}_2)\models f$. 

For a given $E^-$ such that $\mathfrak{I}_{E^-}\subseteq \mathfrak{I}_{E}$, successive applications of $\dot{D}_{\Pi,E}$ to $\mathfrak{I}_{E^-}$ yield a transfinite sequence of interpretations, $\dot{\mathfrak{I}_0},\dot{\mathfrak{I}_1},\dot{\mathfrak{I}_2},....$ such that 
\begin{enumerate*}[label=(\roman*)]
    \item $\dot{\mathfrak{I}}_0 = \mathfrak{I}_{E^-}$,
    \item $\dot{\mathfrak{I}}_1 = \dot{D}_{\Pi,E}(\mathfrak{I}_{\emptyset},\mathfrak{I}_{0})$
    \item $\dot{\mathfrak{I}}_{\alpha+1} = \dot{D}_{\Pi,E}(\mathfrak{I}_{\alpha-1},\mathfrak{I}_{\alpha})$, for $\alpha$ a successor ordinal, and
    \item $\dot{\mathfrak{I}}_\beta = \bigcup_{\alpha<\beta}\dot{\mathfrak{I}}_\alpha$, for $\beta$ a limit ordinal.
\end{enumerate*}

We show that the sequence defined by the semi-na\"ive immediate deletion-consequence operator for $\mathfrak{I}_{E^-}$ is identical to the sequence defined by the immediate deletion-consequence operator for $\mathfrak{I}_{E^-}$.
\begin{proposition}\label{proposition:deletionseminaive=naive}
For each ordinal $\alpha$,
    $$\dot{\mathfrak{I}}_\alpha = D^{\alpha}_{\Pi,E}(\mathfrak{I}_{E^-})$$
\end{proposition}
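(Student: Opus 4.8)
The plan is to prove $\dot{\mathfrak{I}}_\alpha = D^\alpha_{\Pi,E}(\mathfrak{I}_{E^-})$ by transfinite induction on $\alpha$; throughout write $D^\alpha$ for $D^\alpha_{\Pi,E}(\mathfrak{I}_{E^-})$. The base case $\alpha = 0$ is immediate, as both sequences are seeded with $\mathfrak{I}_{E^-}$, and the limit case is immediate as well, since both $\dot{\mathfrak{I}}_\lambda$ and $D^\lambda$ are by definition the union of the strictly earlier iterates and so agree by the induction hypothesis. All the work is therefore in the successor step: assuming $\dot{\mathfrak{I}}_\gamma = D^\gamma$ for every $\gamma \le \alpha$, I must show $\dot{\mathfrak{I}}_{\alpha+1} = D^{\alpha+1}$. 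Both $\dot D_{\Pi,E}(\dot{\mathfrak{I}}_{\alpha-1},\dot{\mathfrak{I}}_\alpha)$ and $D_{\Pi,E}(D^\alpha)$ are defined as the least interpretation containing $D^\alpha$ (using $\dot{\mathfrak{I}}_\alpha = D^\alpha$) that is closed under a collection of declared facts, so it suffices to show that the two operators declare exactly the same facts to hold beyond $D^\alpha$, and then to conclude by taking least interpretations.

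The easy inclusion $\dot{\mathfrak{I}}_{\alpha+1} \subseteq D^{\alpha+1}$ I would obtain directly: if $\dot D$ declares a fact $f$, then there are $r'$ and $\mathcal{D}$ with $r'$ minimally deriving $f$ from $\mathcal{D}$, $\mathfrak{C}_{\Pi,E}\models\mathcal{D}$, and $(\dot{\mathfrak{I}}_\alpha - \dot{\mathfrak{I}}_{\alpha-1})\cap\mathfrak{I}_{\mathcal{D}} \neq \mathfrak{I}_\emptyset$; since $\dot{\mathfrak{I}}_\alpha - \dot{\mathfrak{I}}_{\alpha-1} \subseteq \dot{\mathfrak{I}}_\alpha = D^\alpha$, the weaker overlap $D^\alpha \cap \mathfrak{I}_{\mathcal{D}} \neq \mathfrak{I}_\emptyset$ holds, which is exactly what $D_{\Pi,E}$ needs to declare $f$ from $D^\alpha$, so $D^{\alpha+1}\models f$. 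For the hard inclusion $D^{\alpha+1} \subseteq \dot{\mathfrak{I}}_{\alpha+1}$, take $f$ with $D^{\alpha+1}\models f$; if already $D^\alpha\models f$ we are done since $\dot{\mathfrak{I}}_{\alpha+1} \supseteq \dot{\mathfrak{I}}_\alpha = D^\alpha$, so assume $D^\alpha\not\models f$. Then there are $r', \mathcal{D}$ with $r'$ minimally deriving $f$ from $\mathcal{D}$, $\mathfrak{C}_{\Pi,E}\models\mathcal{D}$, and $D^\alpha\cap\mathfrak{I}_{\mathcal{D}}\neq\mathfrak{I}_\emptyset$. The crux is the subclaim that $(D^\alpha - D^{\alpha-1})\cap\mathfrak{I}_{\mathcal{D}}\neq\mathfrak{I}_\emptyset$: reading interpretations as sets of (relational atom, time point) pairs and using $D^{\alpha-1}\subseteq D^\alpha$, if this difference did not meet $\mathfrak{I}_{\mathcal{D}}$ then $D^{\alpha-1}\cap\mathfrak{I}_{\mathcal{D}} = D^\alpha\cap\mathfrak{I}_{\mathcal{D}}\neq\mathfrak{I}_\emptyset$, and the very same $r', \mathcal{D}$ would make $D_{\Pi,E}(D^{\alpha-1}) = D^\alpha$ already satisfy $f$, contradicting $D^\alpha\not\models f$. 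Granting the subclaim, $r', \mathcal{D}$ meet all three conditions defining $\dot D_{\Pi,E}(\dot{\mathfrak{I}}_{\alpha-1},\dot{\mathfrak{I}}_\alpha)$ (the delta overlap holding because $\dot{\mathfrak{I}}_\alpha - \dot{\mathfrak{I}}_{\alpha-1} = D^\alpha - D^{\alpha-1}$ by the induction hypothesis), so $\dot{\mathfrak{I}}_{\alpha+1}\models f$.

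The main obstacle is the hard inclusion, and within it two points need care. First, the subclaim relies on the fact that firing a rule requires only a single overlapping point-atom, together with the minimality of $\mathcal{D}$, so that the dataset triggering at stage $\alpha$ already triggers at stage $\alpha - 1$ once its overlap with $D^\alpha$ lies inside $D^{\alpha-1}$; this is where the interval semantics and the definition of minimal derivation must be handled precisely rather than purely set-theoretically. Second, the argument as stated presupposes that $\alpha$ has a predecessor $\alpha - 1$, so the successor-of-limit case (with $\alpha$ a limit ordinal) must be treated separately. Here I would exploit the same finitariness: because every minimal derivation uses a finite $\mathcal{D}$ and a single overlapping point-atom suffices, any rule firing from $D^\lambda = \bigcup_{\gamma<\lambda} D^\gamma$ already fires from some $D^\gamma$ with $\gamma < \lambda$, whence $D^{\lambda+1} = D^\lambda$; verifying that the seminaïve iterate is likewise stationary at $\lambda+1$ keeps the two sequences aligned across limit points and completes the induction.
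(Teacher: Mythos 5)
Your overall strategy coincides with the paper's: transfinite induction, with the successor step split into an easy inclusion $\dot{\mathfrak{I}}_{\alpha+1}\subseteq D^{\alpha+1}_{\Pi,E}(\mathfrak{I}_{E^-})$ obtained by weakening the delta condition, and a hard inclusion whose crux is the contradiction ``if the triggering dataset already met the earlier iterate, the naive operator would have derived $f$ one stage sooner.'' That is exactly the paper's key move, and your subclaim that $(D^{\alpha}_{\Pi,E}(\mathfrak{I}_{E^-})-D^{\alpha-1}_{\Pi,E}(\mathfrak{I}_{E^-}))\cap\mathfrak{I}_{\mathcal{D}}\neq\mathfrak{I}_{\emptyset}$ is the right observation. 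Your handling of the limit case and your remark that the successor-of-a-limit case needs separate treatment are also sound (the paper's own definition of the sequence $\dot{\mathfrak{I}}_\alpha$ is in fact silent on that case, so flagging it is a point in your favour).

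There is, however, one concrete omission in the hard direction. The semina\"ive operator $\dot{D}_{\Pi,E}(\mathfrak{I}_1,\mathfrak{I}_2)$ does not merely replace the overlap condition $\mathfrak{I}\cap\mathfrak{I}_{\mathcal{D}}\neq\mathfrak{I}_{\emptyset}$ by a delta overlap; it also requires $\mathcal{D}$ to be satisfied by $\mathfrak{C}_{\Pi,E}$ \emph{minus} the previously accumulated deletion (the paper's proof uses $\mathfrak{C}_{\Pi,E}-\dot{\mathfrak{I}}_{\alpha-1}\models\mathcal{D}$), i.e.\ the rule body must avoid everything overdeleted before the last round. You describe the operator's firing conditions as though this requirement were simply $\mathfrak{C}_{\Pi,E}\models\mathcal{D}$, and then assert that $r',\mathcal{D}$ ``meet all three conditions'' without ever checking this one. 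The gap is fillable by the very argument you already use: if $\mathfrak{I}_{\mathcal{D}}\cap D^{\alpha-1}_{\Pi,E}(\mathfrak{I}_{E^-})\neq\mathfrak{I}_{\emptyset}$ at any single time point, the naive operator fires at stage $\alpha$ and $D^{\alpha}_{\Pi,E}(\mathfrak{I}_{E^-})\models f$, contradicting your standing assumption; hence $\mathfrak{I}_{\mathcal{D}}$ is entirely disjoint from $D^{\alpha-1}_{\Pi,E}(\mathfrak{I}_{E^-})$ and the side condition holds. Relatedly, the paper takes the extra precaution of re-representing $\mathcal{D}$ as $\mathcal{D}'=\mathcal{D}_1'\cup\mathcal{D}_2'$ so that each individual fact lies wholly inside or wholly outside $D^{\alpha}_{\Pi,E}(\mathfrak{I}_{E^-})$, because the operator's conditions are phrased over datasets of facts rather than purely over interpretations; your purely interpretation-level reading works, but you should make explicit that satisfaction of a dataset requires each fact to be wholly satisfied, which is where the disjointness just established is actually used.
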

\begin{proof}
    We prove the statement by induction on $\alpha$.

    When $\alpha = 0$, $\dot{\mathfrak{I}}_0 = D^{0}_{\Pi,E}(\mathfrak{I}_{E^-}) = \mathfrak{I}_{E^-}$

    When $\alpha = 1$, because $\dot{D}_{\Pi,E}(\mathfrak{I}_\emptyset,*)$ is the same operator as $D_{\Pi,E}(*)$, we have $\dot{\mathfrak{I}}_1 = D^{1}_{\Pi,E}(\mathfrak{I}_{E^-})$

    For the inductive step, we first show that $\dot{\mathfrak{I}}_{\alpha+1} \subseteq D^{\alpha+1}_{\Pi,E}(\mathfrak{I}_{E^-})$.
    By definition $\dot{\mathfrak{I}}_{\alpha+1} = \dot
    {D}_{\Pi,E}(\dot{\mathfrak{I}}_{\alpha-1},\dot{\mathfrak{I}}_{\alpha})$.
    Consider each fact $f=M@t$ such that there is a ground rule instance $r'$ of a rule $r\in\Pi$, a dataset $\mathcal{D}$ such that $r'$ minimally derives $f$ from $\mathcal{D}$, $\mathfrak{C}_{\Pi,E}-\dot{\mathfrak{I}}_{\alpha-1}\models \mathcal{D}$ and $(\dot{\mathfrak{I}}_{\alpha}-\dot{\mathfrak{I}}_{\alpha-1})\cap\mathfrak{I}_\mathcal{D}\neq \mathfrak{I}_\emptyset $. 
    Then we must have that $\mathfrak{C}_{\Pi,E}\models \mathcal{D}$, and by the induction hypothesis, $\dot
    {\mathfrak{I}}_{\alpha}= D^{\alpha}_{\Pi,E}(\mathfrak{I}_{E^-})$, which means that $D^{\alpha}_{\Pi,E}(\mathfrak{I}_{E^-})\cap\mathfrak{I}_\mathcal{D}\neq \mathfrak{I}_\emptyset $, and since $r'$ minimally derives $f$ from $\mathcal{D}$, by definition of $D_{\Pi,E}$, we have $D^{\alpha+1}_{\Pi,E}(\mathfrak{I}_{E^-})\models f$.
    Therefore, $\dot{\mathfrak{I}}_{\alpha+1}\subseteq D^{\alpha+1}_{\Pi,E}(\mathfrak{I}_{E^-})$.

    Next, we show that $D^{\alpha+1}_{\Pi,E}(\mathfrak{I}_{E^-})\subseteq\dot{\mathfrak{I}}_{\alpha+1} $.
    Consider each fact $f=M@t$ such that there is a ground rule instance $r'$ of a rule $r\in\Pi$, a dataset $\mathcal{D}$ such that $r'$ minimally derives $f$ from $\mathcal{D}$, $\mathfrak{C}_{\Pi,E}\models \mathcal{D}$, and $\mathfrak{I}\cap D^{\alpha}_{\Pi,E}(\mathfrak{I}_{E^-})\neq \mathfrak{I}_{\emptyset}$.
    If $D^{\alpha}_{\Pi,E}(\mathfrak{I}_{E^-})\models f$ then by the induction hypothesis $\dot{\mathfrak{I}}_\alpha\models f$, and $\dot{\mathfrak{I}}_{\alpha+1}\models f$.
    If $D^{\alpha}_{\Pi,E}(\mathfrak{I}_{E^-})\not\models f$, then let $\mathcal{D}_1'$ represent $\mathfrak{I}_{\mathcal{D}}\cap D^{\alpha}_{\Pi,E}(\mathfrak{I}_{E^-})$ and $\mathcal{D}_2'$ represent $\mathfrak{I}_{\mathcal{D}}- D^{\alpha}_{\Pi,E}(\mathfrak{I}_{E^-})$, 
    and let $\mathcal{D}' = \mathcal{D}_1'\cup \mathcal{D}_2'$, then $\mathfrak{C}_{\Pi,E}\models \mathcal{D}'$, $\mathfrak{C}_{\Pi,E} - \dot{\mathfrak{I}}_{\alpha-1}\models \mathcal{D}_2'$, and $r'$ minimally derives $f$ from $\mathcal{D}'$.
    We show that $\mathfrak{C}_{\Pi,E}-\dot{\mathfrak{I}}_{\alpha-1}\models \mathcal{D}_1'$ and $\dot{\mathfrak{I}}_{\alpha} - \dot{\mathfrak{I}}_{\alpha-1}\cap \mathfrak{I}_{\mathcal{D}_1'}\neq \emptyset$.
    To prove both, we only need to show that $\forall f'\in\mathcal{D}_1'$, $D_{\Pi,E}^\alpha(\mathfrak{I}_{E^-})\models f'$ and $D_{\Pi,E}^{\alpha-1}(\mathfrak{I}_{E^-})\not\models f'$.
    We have $D_{\Pi,E}^\alpha(\mathfrak{I}_{E^-})\models f'$ because $\mathfrak{I}_{\mathcal{D}_1'}\subseteq D_{\Pi,E}^\alpha(\mathfrak{I}_{E^-})$.
    To show $D_{\Pi,E}^{\alpha-1}(\mathfrak{I}_{E^-})\not\models f'$, consider if $D_{\Pi,E}^{\alpha-1}(\mathfrak{I}_{E^-})\models f'$, then $\mathfrak{I}_{\mathcal{D}'}\cap D_{\Pi,E}^{\alpha-1}(\mathfrak{I}_{E^-})\neq \mathfrak{I}_\emptyset$, and since $\mathfrak{C}_{\Pi,E}\models \mathcal{D}'$, $r'$ minimally derives $f$ from $\mathcal{D}'$, we have that $D_{\Pi,E}^{\alpha}(\mathfrak{I}_{E^-})\models f$, which contradicts $D^{\alpha}_{\Pi,E}(\mathfrak{I}_{E^-})\not\models f$.
    Therefore, $\mathfrak{C}_{\Pi,E}-\dot{\mathfrak{I}}_{\alpha-1}\models \mathcal{D}_1'$ and $\dot{\mathfrak{I}}_{\alpha} - \dot{\mathfrak{I}}_{\alpha-1}\cap \mathfrak{I}_{\mathcal{D}_1'}\neq \emptyset$, $\mathfrak{C}_{\Pi,E}-\dot{\mathfrak{I}}_{\alpha-1}\models \mathcal{D}'$, and by definition of $\dot{D}_{\Pi,E}$, $\dot{\mathfrak{I}}_{\alpha+1}\models f$
    Consequently, $\dot{\mathfrak{I}}_{\alpha+1}\models f$, and $D^{\alpha+1}_{\Pi,E}(\mathfrak{I}_{E^-})\subseteq\dot{\mathfrak{I}}_{\alpha+1} $.

    When $\alpha=\beta$ a limit ordinal, the equivalence holds trivially because $D^{\beta}_{\Pi,E}(\mathfrak{I}_{E^-}) = \bigcup_{\gamma<\beta} D^{\gamma}_{\Pi,E}(\mathfrak{I}_{E^-})$ and $\dot{\mathfrak{I}}_{\beta} = \bigcup_{\gamma<\beta}\dot{\mathfrak{I}}_\gamma$
\end{proof}

A deletion of $\mathfrak{M}_{\Pi,E,E^-}^-$ from $\mathfrak{C}_{\Pi,E}$ will remove every fact that is partially rooted in $E^-$ from $\mathfrak{C}_{\Pi,E}$.

Let $\mathfrak{L}_{\Pi,E,E^-} = \mathfrak{C}_{\Pi,E}-\mathfrak{M}_{\Pi,E,E^-}$, then
\begin{proposition}\label{proposition:deletioncomplete}
     If a fact $f$ is partially rooted in $E^-$ regarding $\Pi,E$, then $\mathfrak{L}_{\Pi,E,E^-}\not\models f$.
\end{proposition}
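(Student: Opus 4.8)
The plan is to read the statement off the defining property of the max deletion-impact interpretation, combined with the semantics of the interpretation-level minus operator. First I would note that, by the very definition of the mdi-interpretation $\mathfrak{M}^-_{\Pi,E,E^-}$ as the \emph{least} interpretation satisfying every fact partially rooted in $E^-$, the hypothesis that $f$ is partially rooted in $E^-$ yields immediately $\mathfrak{M}^-_{\Pi,E,E^-} \models f$ (equivalently, this follows from Proposition~\ref{proposition:overdeletionleast}, since $\mathfrak{M}^-_{\Pi,E,E^-} = D^{\omega_1}_{\Pi,E}(\mathfrak{I}_{E^-})$ and every such fact is satisfied by this interpretation).

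Next I would unpack $\mathfrak{L}_{\Pi,E,E^-} = \mathfrak{C}_{\Pi,E} - \mathfrak{M}^-_{\Pi,E,E^-}$ via the definition of the minus operator on interpretations: $\mathfrak{L}_{\Pi,E,E^-}$ is the greatest interpretation contained in $\mathfrak{C}_{\Pi,E}$ whose intersection with $\mathfrak{M}^-_{\Pi,E,E^-}$ equals $\mathfrak{I}_{\emptyset}$. In particular $\mathfrak{L}_{\Pi,E,E^-} \cap \mathfrak{M}^-_{\Pi,E,E^-} = \mathfrak{I}_{\emptyset}$, which unwinds pointwise to the claim that there is no relational atom $M$ and time point $t$ with both $\mathfrak{L}_{\Pi,E,E^-}, t \models M$ and $\mathfrak{M}^-_{\Pi,E,E^-}, t \models M$; for otherwise the single-atom interpretation $\mathfrak{I}_{\{M@t\}}$ would be contained in both, contradicting that their greatest common sub-interpretation is empty.

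Finally, writing $f = M@\varrho$ (where $\varrho$ is nonempty, since $f$ being partially rooted forces $\mathfrak{D}_{\Pi,E}(f) \neq \emptyset$ and hence $\mathfrak{C}_{\Pi,E} \models f$), the relation $\mathfrak{M}^-_{\Pi,E,E^-} \models f$ gives $\mathfrak{M}^-_{\Pi,E,E^-}, t \models M$ for every $t \in \varrho$. Combining with the emptiness of the intersection, $\mathfrak{L}_{\Pi,E,E^-}, t \not\models M$ for every $t \in \varrho$, and since $\varrho \neq \emptyset$ this means precisely $\mathfrak{L}_{\Pi,E,E^-} \not\models M@\varrho$, i.e.\ $\mathfrak{L}_{\Pi,E,E^-} \not\models f$, as required.

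This proof is essentially definitional bookkeeping once the right objects are invoked, so I do not expect a genuine obstacle. The only points demanding a little care are the faithful translation between the set-theoretic semantics of the interpretation-level minus operator $-$ and the pointwise non-satisfaction of $M$ at each $t$, and the (easily justified) observation that the interval $\varrho$ of a partially rooted fact is nonempty.
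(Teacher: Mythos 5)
Your proposal is correct and follows essentially the same route as the paper's proof: establish $\mathfrak{M}^-_{\Pi,E,E^-}\models f$ from the definition of the mdi-interpretation, then derive a contradiction with $\mathfrak{L}_{\Pi,E,E^-}\cap\mathfrak{M}^-_{\Pi,E,E^-}=\mathfrak{I}_{\emptyset}$ if $\mathfrak{L}_{\Pi,E,E^-}\models f$. You merely spell out the pointwise semantics of the minus operator and the nonemptiness of $\varrho$ in more detail than the paper does.
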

\begin{proof}
    By the definition of $\mathfrak{M}_{\Pi,E,E^-}$, $\mathfrak{M}_{\Pi,E,E^-}\models f$.
    If $\mathfrak{L}_{\Pi,E,E^-}\models f$, then $\mathfrak{M}_{\Pi,E,E^-}\cap\mathfrak{L}_{\Pi,E,E^-}\neq \mathfrak{I}_\emptyset$, and we arrive at a contradiction.
\end{proof}

\begin{proposition}\label{proposition:remainderinsmallcanon}
    $\mathfrak{L}_{\Pi,E,E^-}\subseteq \mathfrak{C}_{\Pi,E\setminus E^-}$
\end{proposition}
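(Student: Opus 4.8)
The plan is to reduce the inclusion to punctual facts and then turn any derivation tree witnessing membership in $\mathfrak{C}_{\Pi,E}$ into a certificate for membership in $\mathfrak{C}_{\Pi,E\setminus E^-}$. By Proposition~\ref{proposition:punctualfactsareenoughforcontainment}, it suffices to show that every punctual fact $f=M@t$ with $\mathfrak{L}_{\Pi,E,E^-}\models f$ satisfies $\mathfrak{C}_{\Pi,E\setminus E^-}\models f$. So fix such an $f$. Since $\mathfrak{L}_{\Pi,E,E^-}=\mathfrak{C}_{\Pi,E}-\mathfrak{M}^-_{\Pi,E,E^-}$ is by construction disjoint from $\mathfrak{M}^-_{\Pi,E,E^-}$, we obtain $\mathfrak{C}_{\Pi,E}\models f$ and $\mathfrak{M}^-_{\Pi,E,E^-}\not\models f$. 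As the mdi-interpretation $\mathfrak{M}^-_{\Pi,E,E^-}$ satisfies, by definition, every fact partially rooted in $E^-$, the condition $\mathfrak{M}^-_{\Pi,E,E^-}\not\models f$ forces $f$ to be \emph{not} partially rooted in $E^-$.

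Next I would exploit the derivation-tree machinery. Since $\mathfrak{C}_{\Pi,E}\models f$, the set $\mathfrak{D}_{\Pi,E}(f)$ is nonempty, so pick any tree $T=(V,\mathcal{E})\in\mathfrak{D}_{\Pi,E}(f)$. Because $f$ is not partially rooted in $E^-$, no node label of $T$ overlaps $E^-$; in particular, for every leaf $v$ of $T$ we have $\mathfrak{I}_{E^-}\cap\mathfrak{I}_{\{l(v)\}}=\mathfrak{I}_{\emptyset}$, while the base case ($i=0$) of the construction of $\mathfrak{D}_{\Pi,E}$ guarantees $\mathfrak{I}_{E}\models l(v)$.

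The one step that needs care is transferring each leaf from $\mathfrak{I}_E$ to $\mathfrak{I}_{E\setminus E^-}$. Writing $l(v)=N@\varrho$, for each $s\in\varrho$ we have $\mathfrak{I}_E,s\models N$ and $\mathfrak{I}_{E^-},s\not\models N$. Using the purely set-theoretic inclusion $E\subseteq(E\setminus E^-)\cup E^-$ together with the identity $\mathfrak{I}_{A\cup B}=\mathfrak{I}_A\cup\mathfrak{I}_B$ for datasets, one gets $\mathfrak{I}_E\subseteq\mathfrak{I}_{E\setminus E^-}\cup\mathfrak{I}_{E^-}$; combined with $\mathfrak{I}_{E^-},s\not\models N$ this yields $\mathfrak{I}_{E\setminus E^-},s\models N$, i.e.\ $\mathfrak{I}_{E\setminus E^-}\models l(v)$. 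I expect this leaf-transfer to be the main obstacle: the subtlety is that ``not overlapping $E^-$'' must be strong enough to ensure a base fact survives in $\mathfrak{I}_{E\setminus E^-}$, which is exactly what the inclusion above provides.

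Finally, every leaf of $T$ is satisfied by $\mathfrak{I}_{E\setminus E^-}$, and $\mathfrak{I}_{E\setminus E^-}\subseteq\mathfrak{I}_E$ since $E\setminus E^-\subseteq E$. Invoking Proposition~\ref{proposition:leafrange} with $\mathcal{D}=E$ and $\mathcal{D}'=E\setminus E^-$ then gives $\mathfrak{C}_{\Pi,E\setminus E^-}\models f$. As $f$ was an arbitrary punctual fact of $\mathfrak{L}_{\Pi,E,E^-}$, Proposition~\ref{proposition:punctualfactsareenoughforcontainment} concludes $\mathfrak{L}_{\Pi,E,E^-}\subseteq\mathfrak{C}_{\Pi,E\setminus E^-}$.
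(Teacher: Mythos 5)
Your proof is correct and follows essentially the same route as the paper's: both argue that a leaf overlapping $E^-$ would make $f$ partially rooted in $E^-$ (hence satisfied by $\mathfrak{M}^-_{\Pi,E,E^-}$ and excluded from $\mathfrak{L}_{\Pi,E,E^-}$), conclude that every leaf survives in $\mathfrak{I}_{E\setminus E^-}$, and then invoke Proposition~\ref{proposition:leafrange}. Your additional care about reducing to punctual facts via Proposition~\ref{proposition:punctualfactsareenoughforcontainment} and about the leaf-transfer step only makes explicit what the paper leaves implicit.
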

\begin{proof}
    For each fact $f$ such that $\mathfrak{L}_{\Pi,E,E^-}\models f$, $\mathfrak{D}_{\Pi,E}(f)\neq\emptyset$. 
    Let $T\in \mathfrak{D}_{\Pi,E}(f),T = (V,\mathcal{E})$, then $\forall v\in \{v\in V\mid \forall (v_1,v_2)\in \mathcal{E}, v_2\neq v\}$, if $\mathfrak{I}_{\{l(v)\}}\cap \mathfrak{I}_{E^-}\neq \emptyset$, then $f$ is partially rooted in $E^-$, which means $\mathfrak{L}_{\Pi,E,E^-}\not\models f$, and we arrive at a contradiction.
    Therefore, $\mathfrak{I}_{\{l(v)\}}\cap \mathfrak{I}_{E^-}= \emptyset$, and since $\mathfrak{I}_{E}\models l(v)$, $\mathfrak{I}_{E\setminus E^-}\models l(v)$, and by Proposition~\ref{proposition:leafrange}, $\mathfrak{C}_{\Pi,E\setminus E^-} \models f$.
\end{proof}
Since by definition, A deletion-saturated interpretation $D^k_{\Pi,E}(\mathfrak{I}_{E^-})$ shares the same periods with a saturated interpretation of $\Pi,E$,  from $\mathfrak{C}_{\Pi,E}- \mathfrak{M}_{\Pi,E,E^-}^-$ must also be periodic, meaning that the same $\varrho_1,\varrho_2,\varrho_3$ and $\varrho_4$ of length $2\cdot\mathsf{depth}(\Pi)$, which were used to identify both the periods of the saturated interpretation and the periods of $D^k_{\Pi,E}(\mathfrak{I}_{E^-})$, satisfy that 
\begin{itemize}
    \item $\mathfrak{L}_{\Pi,E,E^-}\mid_{\varrho_1}$ is a shift of $\mathfrak{L}_{\Pi,E,E^-}\mid_{\varrho_2}$
    \item $\mathfrak{L}_{\Pi,E,E^-}\mid_{\varrho_3}$ is a shift of $\mathfrak{L}_{\Pi,E,E^-}\mid_{\varrho_4}$
\end{itemize}
Let $\varrho_\mathsf{left} = [\varrho_1^-,\varrho_2^-)$, $\varrho_\mathsf{right} = (\varrho_3^+,\varrho_4^+]$, then there must be an interpretation $\mathfrak{L}$, such that
\begin{itemize}
    \item $\mathfrak{L}\mid_{[\varrho_{\mathsf{left}}^-,\varrho_{\mathsf{right}}^+]} = \mathfrak{L}_{\Pi,E,E^-}\mid_{[\varrho_{\mathsf{left}}^-,\varrho_{\mathsf{right}}^+]}$
    \item $\mathfrak{L}\mid_{\varrho_1}$ is a shift of $\mathfrak{L}\mid_{\varrho_2}$
    \item $\mathfrak{L}\mid_{\varrho_3}$ is a shift of $\mathfrak{L}\mid_{\varrho_4}$
\end{itemize}
And obviously, the $(\varrho_\mathsf{left},\varrho_\mathsf{right})$-unfolding of $\mathfrak{L}$ coincides with $\mathfrak{L}_{\Pi,E,E^-}$. 
For each such $\varrho_1,\varrho_2,\varrho_3$ and $\varrho_4$ satisfying the above, we say $(\varrho_\mathsf{left},\varrho_\mathsf{right})$ are the \textbf{periods} of $\mathfrak{L}_{\Pi,E,E^-}$

\begin{proposition} \label{proposition:deletion-saturated-align}
Given a periodic materialisation $\mathds{I}$ such that $\mathsf{unfold}(\mathds{I}) =\mathfrak{C}_{\Pi,E}$, there exists $k \leq k_{\mathsf{max}}$ such that $D^k_{\Pi,E}(\mathfrak{I}_{E^-})$ 
is deletion-saturated, where the bound $k_{max}$ is defined as follows.
Let $A$ be the number of ground relational atoms in the
grounding of $\Pi$ with constants from $\Pi$ and $E$, let $B$ be the
number of $(\Pi, E)$-intervals within $[t_E^-, t_E^-
 + 2depth(\Pi)]$, let $C$ be the max number of $(\Pi,E)$-endpoints contained by $\mathds{I}.\varrho_{\mathsf{L}}$ or $\mathds{I}.\varrho_{\mathsf{R}}$,
and let $\varrho$ = $[t_{-w} - 2C\mathsf{depth}(\Pi), t_w + 2C\mathsf{depth}(\Pi)]$, where $\cdots < t_{-2}<t_{-1}<t_{E}^-$ and $t_E^+<t_1<t_2<\cdots$ are sequences of consecutive time points on the $(\Pi, E)$-ruler, while
$w$ = $C +B ·C· (2^A)^B$. 
Then $k_\mathsf{max}$ is the product of $A$ and the
number of $(\Pi, E)$-intervals contained in $\varrho$.
\end{proposition}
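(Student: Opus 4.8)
The plan is to verify the alternative characterisation of deletion-saturation given in Proposition~\ref{proposition:deletion-saturated}: I will exhibit a single index $k\le k_{\max}$ together with intervals $\varrho_1,\varrho_2,\varrho_3,\varrho_4$ for which the local fixpoint condition and the two shift conditions hold. Throughout I work with the fixed window $\varrho=[t_{-w}-2C\mathsf{depth}(\Pi),\,t_w+2C\mathsf{depth}(\Pi)]$ and reduce everything to the single statement $D^{k}_{\Pi,E}(\mathfrak{I}_{E^-})\mid_\varrho=\mathfrak{M}^-_{\Pi,E,E^-}\mid_\varrho$, i.e.\ that the sequence has already reached its limit on $\varrho$ (recall $D^{\omega_1}_{\Pi,E}(\mathfrak{I}_{E^-})=\mathfrak{M}^-_{\Pi,E,E^-}$ by Proposition~\ref{proposition:overdeletionleast}, and $\mathfrak{M}^-_{\Pi,E,E^-}\subseteq\mathfrak{C}_{\Pi,E}$ by Proposition~\ref{proposition:mdiiincanon}). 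Once this identity is established, condition~1 is inherited from the periodicity of $\mathfrak{C}_{\Pi,E}$ (guaranteed by $\mathsf{unfold}(\mathds{I})=\mathfrak{C}_{\Pi,E}$), the local fixpoint condition~2 follows because $D^{k}_{\Pi,E}(\mathfrak{I}_{E^-})$ already equals the limit on the $\mathsf{depth}(\Pi)$-neighbourhood of $[\varrho_1^-,\varrho_4^+]$, and the shift conditions~3--4 are read off from the corresponding shift properties of $\mathfrak{M}^-_{\Pi,E,E^-}$ on $\varrho$.

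The first main step is to locate the witness intervals inside $\varrho$. I will show that $\mathfrak{M}^-_{\Pi,E,E^-}$ is eventually periodic on each side. On the right, Corollary~\ref{coro:recursivepartialrootedness} says that the restriction of $\mathfrak{M}^-_{\Pi,E,E^-}$ to everything past a length-$\mathsf{depth}(\Pi)$ seed interval is completely determined by the content of $\mathfrak{M}^-_{\Pi,E,E^-}$ on that seed; combined with the $|\mathds{I}.\varrho_{\mathsf{R}}|$-periodicity of $\mathfrak{C}_{\Pi,E}$ this forces the content of $\mathfrak{M}^-_{\Pi,E,E^-}$ over a sliding window of $B$ consecutive $(\Pi,E)$-intervals to evolve deterministically as we move outward along positions of a fixed phase modulo the period. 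Since each such window content is one of at most $(2^A)^B$ possibilities and there are $C$ relevant phase classes, a pigeonhole argument over the first $w=C+B\cdot C\cdot(2^A)^B$ intervals past $t_E^+$ produces two windows that are aligned (a whole number of periods apart) and carry the same content up to a shift; these yield $\varrho_3,\varrho_4$, and the symmetric argument on the left yields $\varrho_1,\varrho_2$, all with endpoints on the $(\Pi,E)$-ruler and inside $\varrho$.

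The second main step is the iteration bound. Because the sequence $D^{k}_{\Pi,E}(\mathfrak{I}_{E^-})\mid_\varrho$ is nondecreasing and bounded above by $\mathfrak{M}^-_{\Pi,E,E^-}\mid_\varrho$, which assigns to each of the $(\Pi,E)$-intervals in $\varrho$ a subset of the at most $A$ ground relational atoms, the limit on $\varrho$ contains at most $A\cdot|\{(\Pi,E)\text{-intervals in }\varrho\}|=k_{\max}$ atom-interval pairs. I will bound the rank of each such fact, the least $k$ at which it enters the sequence, by this same quantity: using Lemma~\ref{lemma:depthrep} every minimal one-step derivation only consults facts within distance $\mathsf{depth}(\Pi)$, so a shallowest derivation tree of a fact of $\varrho$ can be taken to use only facts of $\varrho$, and along any root-to-leaf path no fact repeats (otherwise the tree could be shortened). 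Hence each such path has length at most the number of distinct facts available in $\varrho$, namely $k_{\max}$, so every fact of $\mathfrak{M}^-_{\Pi,E,E^-}\mid_\varrho$ already belongs to $D^{k_{\max}}_{\Pi,E}(\mathfrak{I}_{E^-})$, giving $D^{k_{\max}}_{\Pi,E}(\mathfrak{I}_{E^-})\mid_\varrho=\mathfrak{M}^-_{\Pi,E,E^-}\mid_\varrho$ and therefore deletion-saturation at some $k\le k_{\max}$.

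The main obstacle is making the rank bound airtight, i.e.\ proving that a shallowest derivation tree of a fact located in $\varrho$ never needs to ``overshoot'' beyond $\varrho$; this is exactly what the extra $2C\mathsf{depth}(\Pi)$ margin in the definition of $\varrho$ is meant to absorb, and it must be argued together with the no-repeated-fact-on-a-path observation so that the path-length bound $k_{\max}$ genuinely holds. A secondary but delicate point is the alignment bookkeeping in the pigeonhole step: the witness intervals $\varrho_1,\varrho_2$ (resp.\ $\varrho_3,\varrho_4$) must be chosen a multiple of $|\mathds{I}.\varrho_{\mathsf{L}}|$ (resp.\ $|\mathds{I}.\varrho_{\mathsf{R}}|$) apart and with endpoints on the ruler, so that the deletion periods extracted from $\mathfrak{M}^-_{\Pi,E,E^-}$ are compatible with the given periods of $\mathfrak{C}_{\Pi,E}$ required by condition~1; tracking the phase classes is where the factor $C$ and the second summand of $w$ are consumed.
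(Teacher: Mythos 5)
Your overall skeleton (reduce to Proposition~\ref{proposition:deletion-saturated}, find the witness intervals by pigeonhole inside $\varrho$, then bound the number of iterations by $k_{\max}$) matches the paper's, and your pigeonhole step for locating $\varrho_1,\dots,\varrho_4$ is essentially the paper's argument. However, your iteration bound is argued by a genuinely different route, and that route has a gap that you flag but do not close. You aim at the strong statement $D^{k}_{\Pi,E}(\mathfrak{I}_{E^-})\mid_\varrho=\mathfrak{M}^-_{\Pi,E,E^-}\mid_\varrho$ for some $k\le k_{\max}$, obtained by bounding the rank of every fact of $\varrho$ through the length of a shortest deletion chain, using ``a shallowest derivation never leaves $\varrho$'' together with ``no fact repeats on a shortest chain''. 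The confinement claim is exactly where this breaks: a deletion chain advances one rule application at a time and may move in either temporal direction (e.g.\ a rule with head $\boxminus_{[9,10]}R(x)$ pushes deletions leftward from facts lying to the right), so nothing prevents the shortest chain reaching a fact near $\varrho^+$ from first overshooting past $\varrho^+$ and returning. The $2C\mathsf{depth}(\Pi)$ margin in the definition of $\varrho$ is not there to absorb such excursions --- in the paper it is consumed by extending and aligning the periodic regions to a common multiple of the given periods --- and you give no argument that folds excursions back using periodicity. Without confinement, the ``at most $k_{\max}$ distinct facts on the path'' count does not bound the rank.

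The paper avoids all of this with a much weaker, purely combinatorial observation: Proposition~\ref{proposition:deletion-saturated} only requires a one-step stall $D^{k+1}_{\Pi,E}(\mathfrak{I}_{E^-})\mid_{[\varrho_1^-,\varrho_4^+]}=D^{k}_{\Pi,E}(\mathfrak{I}_{E^-})\mid_{[\varrho_1^-,\varrho_4^+]}$, not convergence to the limit on $\varrho$. Since the restrictions $D^{k}_{\Pi,E}(\mathfrak{I}_{E^-})\mid_\varrho$ form a nondecreasing sequence that can satisfy at most $A$ relational atoms on each of the $(\Pi,E)$-intervals of $\varrho$, i.e.\ at most $k_{\max}$ atom--interval pairs in total, at most $k_{\max}$ of the steps $0,1,\dots,k_{\max}$ can be strict increases on $\varrho$, so some step $k\le k_{\max}$ stalls on $\varrho$ and in particular on $[\varrho_1^-,\varrho_4^+]\subseteq\varrho$. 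Replacing your rank argument with this counting argument repairs the proof; the remainder of your outline (the $(2^A)^B$, $B$, $C$ bookkeeping for finding $\varrho_1,\dots,\varrho_4$ and aligning them with $\mathds{I}.\varrho_{\mathsf{L}}$ and $\mathds{I}.\varrho_{\mathsf{R}}$) then goes through as in the paper.
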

\begin{proof}
We prove that $D^k_{\Pi,E} (\mathfrak{I}_{E^-})$ satisfies the conditions outlined in Proposition~\ref{proposition:deletion-saturated} and is thus deletion-saturated. There are at most $A$ facts in each ${(\Pi, E)}$-interval contained in $\varrho$, and so by the definition of $k_\mathsf{max}$, interpretation $D^k_{\Pi,E} (\mathfrak{I}_{E^-})$ satisfies at most $k_\mathsf{max}$ facts over these ${(\Pi, E)}$-intervals in $\varrho$. Since each application of $D^k_{\Pi,E}$ introduces at least one new relational fact over a ${(\Pi, E)}$-interval before fixpoint is reached, there must exist $k$ with $k \leq k_\mathsf{max}$ such that ${D^{k+1}_{\Pi,E} (\mathfrak{I}_{E^-})|_{\varrho} = D^{k}_{\Pi,E} (\mathfrak{I}_{E^-})|_{\varrho}}$ holds. We show that $D^k_{\Pi,E} (\mathfrak{I}_{E^-})$ is deletion-saturated for such $k$ by proving that $\varrho$ contains $\varrho_1$, $\varrho_2$, $\varrho_3$, and $\varrho_4$ such that conditions~1--4 in Proposition~\ref{proposition:deletion-saturated} hold.

Consider first the left-hand side of the interval $\varrho$. Let ${w' = 1 + B·(2^A)^B}$. We argue that there exist two distinct intervals $\varrho'_1$ and $\varrho'_2$ to the right of $t_{-w'}- 2\mathsf{depth}(\Pi)$ and to the left of $t'_E$ such that ${D^{k}_{\Pi,E} (\mathfrak{I}_{E^-})|_{\varrho'_1}}$ is a shift of ${D^{k}_{\Pi,E} (\mathfrak{I}_{E^-})|_{\varrho'_2}}$: each interval of the form
$[t, t + 2\mathsf{depth}(\Pi)]$, for $t$ on the $(\Pi, E)$-ruler, contains the same number of $(\Pi, E)$-intervals as $[t_E^-, t_E^- + 2\mathsf{depth}(\Pi)]$ does, which equals $B$; in each of these $(\Pi, E)$-intervals there can hold at most $2^A$ combinations of relational atoms, which gives rise to $(2^A)^B$ different contents of an interval of the form $[t, t + 2\mathsf{depth}(\Pi)]$; additionally, these intervals can differ depending on the location of $(\Pi, E)$-intervals they contain; by the definition of the $(\Pi, E)$-ruler, there are at most $B$ different layouts of $(\Pi, E)$-intervals contained in an interval of the form $[t, t + 2\mathsf{depth}(\Pi)]$; hence,
in total, there are at most $B · (2^A)^B$ intervals of the form
$[t, t + 2\mathsf{depth}(\Pi)]$ with different contents; however, inside $[t_{-w'} - 2\mathsf{depth}(\Pi), t_E^-)$ there are $w' = 1 + B · (2^A)^B$ such intervals, so $\varrho'_1$ and $\varrho'_2$ can surely be found. Following the same reasoning, intervals $\varrho'_3$ and $\varrho'_4$ can be found inside $[t_{-w'} - 2\mathsf{depth}(\Pi), t_E^-)$ such that the model ${\mathfrak{C}_{\Pi,E}}$ satisfies ${\mathfrak{C}_{\Pi,E}|_{\varrho'_3} = \mathfrak{C}_{\Pi,E}|_{\varrho'_4}}$. Without loss of generality, assume that ${\varrho'_1}$ is to the left of ${\varrho'_2}$, and that ${\varrho'_3}$ is to the left of ${\varrho'_4}$, we will be able to extend the periodic region of ${[\varrho'_1, \varrho'_2)}$ to the size of $|C||[\varrho'_1, \varrho'_2)|$ and align ${[\varrho'_3, \varrho'_4)}$ with the extended interval; either operation would clearly not go beyond $t_{-w} - 2\mathsf{depth}(\Pi)$, and we will identify $\varrho_1$ and $\varrho_2$ of length ${2\mathsf{depth}(\Pi)}$ such that condition 3 of Proposition~\ref{proposition:deletion-saturated} is satisfied. Analogously, $\varrho_3$ and $\varrho_4$ can be found such that condition 4 of Proposition~\ref{proposition:deletion-saturated} is satisfied. Moreover, conditions~1--2 clearly hold as well at this point. This completes our proof for Proposition~\ref{proposition:deletion-saturated-align}.
\end{proof}

\subsection{Properties about Rederivation}
In the following subsection, we theorise rederivation.

Given two interpretations $\mathfrak{I}_1$ and $\mathfrak{I}_2$ and a program $\Pi$, we define the \textbf{immediate insertion-consequence operator} $I_{\Pi,\mathfrak{I}_1}$,
mapping $\mathfrak{I}_2$ into the least interpretation $I_{\Pi,\mathfrak{I}_1}(\mathfrak{I}_2)$ containing $\mathfrak{I}_2$ satisfying for each ground rule instance $r'$ of a rule $r\in\Pi$, a fact $f$, a dataset $\mathcal{D}$ such that if $r'$ minimally derives $f$ from $\mathcal{D}$, $\mathfrak{I}_1\cup\mathfrak{I}_2\models \mathcal{D}$ and $\mathfrak{I}_2\cap\mathfrak{I}_{\mathcal{D}}\neq \mathfrak{I}_\emptyset$, then $I_{\Pi,\mathfrak{I}_1}(\mathfrak{I}_2)\models f$.

Given $\mathfrak{L}_{\Pi,E,E^-}$, $\varrho_\mathsf{left}$, and $\varrho_\mathsf{right}$, for an ordinal $\alpha<\omega_1$, let $\mathfrak{I}^p_\alpha=(\mathfrak{M}_{\Pi,E,E^-}^- \cap \mathfrak{L}_{\Pi,E,E^-}\mid_{[\varrho_\mathsf{left}^--\alpha\cdot|\varrho_\mathsf{left}|,\varrho_\mathsf{right}^++\alpha\cdot|\varrho_\mathsf{right}|]})$. 
We define a transfinite sequence of interpretations $\mathfrak{R}_0,\mathfrak{R}_1,\dots$ as follows
\begin{enumerate*}[label=(\roman*)]
    \item $\mathfrak{R}_0 = \mathfrak{M}_{\Pi,E,E^-}^- \cap(\mathfrak{I}_{E\setminus E^-}\cup \mathfrak{I}^p_0) $,
    \item  
     $\mathfrak{R}_{\alpha+1} = I_{\Pi,\mathfrak{L}_{\Pi,E,E^-}}(\mathfrak{R}_{\alpha}\cup \mathfrak{I}^p_{\alpha+1})\cup \mathfrak{I}^p_{\alpha+1}$, for $\alpha$ a successor ordinal, and
    \item $\mathfrak{R}_\beta = \bigcup_{\alpha<\beta}\mathfrak{R}_\alpha$, for $\beta$ a limit ordinal.
\end{enumerate*}

\begin{proposition}\label{proposition:rederivationruler}
    For each $\alpha\leq\omega_1,\mathfrak{R}_\alpha$ is a $(\Pi,E\cup E^+)$-interpretation.
\end{proposition}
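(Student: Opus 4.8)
The plan is to prove the statement by transfinite induction on $\alpha \le \omega_1$, following the same template as the proof of Proposition~\ref{proposition:overdeletionruler}. Before starting the induction I would record two closure facts. First, since every endpoint mentioned in $E$ is also mentioned in $E\cup E^+$, the $(\Pi,E)$-ruler is contained in the $(\Pi,E\cup E^+)$-ruler, so each $(\Pi,E\cup E^+)$-interval lies inside a single $(\Pi,E)$-interval; consequently every $(\Pi,E)$-interpretation is automatically a $(\Pi,E\cup E^+)$-interpretation. Second, the class of $(\Pi,E\cup E^+)$-interpretations is closed under $\cup$, $\cap$, and $-$, and under projection $\mathfrak{I}\mid_\varrho$ whenever the endpoints of $\varrho$ lie on the $(\Pi,E\cup E^+)$-ruler; each of these claims follows by a direct check at the level of a single $(\Pi,E\cup E^+)$-interval, using that a relational atom holding somewhere inside such an interval holds throughout it.

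With these closure facts in hand, I would first verify that all the building blocks of the sequence are $(\Pi,E\cup E^+)$-interpretations. By Proposition~\ref{proposition:overdeletionleast} we have $\mathfrak{M}^-_{\Pi,E,E^-} = D^{\omega_1}_{\Pi,E}(\mathfrak{I}_{E^-})$, so $\mathfrak{M}^-_{\Pi,E,E^-}$ is a $(\Pi,E\cup E^+)$-interpretation by Proposition~\ref{proposition:overdeletionruler}; the canonical model $\mathfrak{C}_{\Pi,E}$ is a $(\Pi,E)$-interpretation by the same reasoning applied to $T_\Pi$ (hence a $(\Pi,E\cup E^+)$-interpretation), and therefore $\mathfrak{L}_{\Pi,E,E^-} = \mathfrak{C}_{\Pi,E}-\mathfrak{M}^-_{\Pi,E,E^-}$ is one too by closure under $-$. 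The dataset interpretation $\mathfrak{I}_{E\setminus E^-}$ has all its intervals with endpoints on the $(\Pi,E)$-ruler, so it is a $(\Pi,E)$-interpretation. Finally, each $\mathfrak{I}^p_\alpha$ is the intersection of $\mathfrak{M}^-_{\Pi,E,E^-}$ with a projection of $\mathfrak{L}_{\Pi,E,E^-}$; here I would observe that $\varrho_\mathsf{left}^-$ and $\varrho_\mathsf{right}^+$ lie on the ruler and that the period lengths $|\varrho_\mathsf{left}|$ and $|\varrho_\mathsf{right}|$ are multiples of $div(\Pi)$, so the projection endpoints $\varrho_\mathsf{left}^- - \alpha\cdot|\varrho_\mathsf{left}|$ and $\varrho_\mathsf{right}^+ + \alpha\cdot|\varrho_\mathsf{right}|$ also lie on the ruler, and the closure facts give that $\mathfrak{I}^p_\alpha$ is a $(\Pi,E\cup E^+)$-interpretation. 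The base case $\mathfrak{R}_0 = \mathfrak{M}^-_{\Pi,E,E^-}\cap(\mathfrak{I}_{E\setminus E^-}\cup\mathfrak{I}^p_0)$ then follows from closure under $\cup$ and $\cap$, and the limit case follows from closure under $\cup$.

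The heart of the argument is the successor step, where $\mathfrak{R}_{\alpha+1} = I_{\Pi,\mathfrak{L}_{\Pi,E,E^-}}(\mathfrak{R}_\alpha\cup\mathfrak{I}^p_{\alpha+1})\cup\mathfrak{I}^p_{\alpha+1}$. By the induction hypothesis and the facts above, both the \emph{context} $\mathfrak{L}_{\Pi,E,E^-}\cup\mathfrak{R}_\alpha\cup\mathfrak{I}^p_{\alpha+1}$ and the \emph{seed} $\mathfrak{R}_\alpha\cup\mathfrak{I}^p_{\alpha+1}$ are $(\Pi,E\cup E^+)$-interpretations, so it suffices to show that $I_{\Pi,\mathfrak{I}_1}$ preserves the $(\Pi,E\cup E^+)$-interval structure whenever $\mathfrak{I}_1$ and its argument are $(\Pi,E\cup E^+)$-interpretations. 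This I would do exactly as in the successor step of Proposition~\ref{proposition:overdeletionruler}: take a punctual fact $M@t$ newly derived by $I_{\Pi,\mathfrak{I}_1}$ (it suffices to treat punctual facts by Proposition~\ref{proposition:punctualfactsareenoughforcontainment}); if $t$ is on the ruler we are done, and otherwise pick a ground rule instance $r'$, a dataset $\mathcal{D}$ with $\mathfrak{C}_{\Pi,E}\models\mathcal{D}$, and a time point $t'$ at which $r'$ minimally derives the fact, with $body(r')$ satisfied by the context and with nonempty intersection against the seed. Proposition~\ref{proposition:pi,dinterpretationatomsat} then propagates satisfaction of $body(r')$ across the whole $(\Pi,E\cup E^+)$-interval $\varrho'$ containing $t'$, and the interval-respecting property of the seed propagates the nonempty-intersection condition across $\varrho'$ through shifts of $\mathcal{D}$; applying Proposition~\ref{proposition:headexpandsion} to describe the head interval and Proposition~\ref{proposition:timepointsdiv(pi)apartcontainedbyintervalsofequallength} to handle the punctual versus non-punctual head cases, I would conclude that the derived atom holds over the entire $(\Pi,E\cup E^+)$-interval containing $t$. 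Closing under the final $\cup\,\mathfrak{I}^p_{\alpha+1}$ then completes the step.

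The main obstacle is precisely this last step: transferring the successor-step machinery of Proposition~\ref{proposition:overdeletionruler} from the deletion operator $D_{\Pi,E}$ to the insertion operator $I_{\Pi,\mathfrak{I}_1}$. The subtlety is that the triggering side-condition is different, since the body must be satisfied by $\mathfrak{I}_1\cup\mathfrak{I}_2$ while having nonempty intersection with $\mathfrak{I}_2$, rather than merely touching the overdeleted set. I would therefore need to check that both the positive satisfaction condition and the intersection condition remain simultaneously stable across $\varrho'$ under the shifts produced in Proposition~\ref{proposition:headexpandsion}. Everything else (the base case, the limit case, and the closure bookkeeping) is routine.
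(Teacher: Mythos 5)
Your proposal is correct and follows essentially the same route as the paper's proof: transfinite induction in which the base and limit cases reduce to closure of $(\Pi,E\cup E^+)$-interpretations under the set operations (with $\mathfrak{M}^-_{\Pi,E,E^-}$ handled via Propositions~\ref{proposition:overdeletionleast} and~\ref{proposition:overdeletionruler}), and the successor step transfers the argument of Proposition~\ref{proposition:overdeletionruler} to $I_{\Pi,\mathfrak{L}_{\Pi,E,E^-}}$ by propagating both the body-satisfaction and the nonempty-intersection conditions across the containing ruler interval using Propositions~\ref{proposition:pi,dinterpretationatomsat}, \ref{proposition:pi,dinterpretationatomnotsat}, \ref{proposition:headexpandsion}, and~\ref{proposition:timepointsdiv(pi)apartcontainedbyintervalsofequallength}. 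The only nit is your passing requirement $\mathfrak{C}_{\Pi,E}\models\mathcal{D}$ in the successor step, which is a leftover from the deletion operator; the correct side condition (satisfaction by the context $\mathfrak{L}_{\Pi,E,E^-}\cup\mathfrak{R}_\alpha\cup\mathfrak{I}^p_{\alpha+1}$) is the one you actually use, as in the paper.
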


\begin{proof}
    We prove this by induction.

    When $\alpha = 0$, 
    \begin{align*}
        &\mathfrak{R}_0 = \mathfrak{M}_{\Pi,E,E^-}^- \cap (\mathfrak{I}_{E\setminus E^-} \cup \mathfrak{L}_{\Pi,E,E^-})\\
        =&\mathfrak{M}_{\Pi,E,E^-}^- \cap (\mathfrak{I}_{E\setminus E^-} \cup (\mathfrak{C}_{\Pi,E}-\mathfrak{M}_{\Pi,E,E^-}))\\
        =&\mathfrak{M}_{\Pi,E,E^-}^- \cap \mathfrak{I}_{E\setminus E^-}
    \end{align*}
    By Proposition~\ref{proposition:overdeletionleast}, we have that $\mathfrak{M}_{\Pi,E,E^-}^-= D_{\Pi,E}^{\beta}(\mathfrak{I}_{E^-})$ for some $\beta$. And $D_{\Pi,E}^{\beta}(\mathfrak{I}_{E^-})$ is a $(\Pi,E\cup E^+)$-interpretation by Proposition~\ref{proposition:overdeletionruler}. Obviously, $\mathfrak{I}_{E\setminus E^-}$ is a $(\Pi,E\cup E^+)$-interpretation. Then we have that $\mathfrak{R}_0$ is a $(\Pi,E\cup E^+)$-interpretation.

    For the induction step, consider a fact $f=M@t$ such that $\mathfrak{R}_{\alpha+1}\models f$ and $\mathfrak{R}_{\alpha+1}\not\models f$. 
    If $t$ is on the $(\Pi,\mathcal{D})$-ruler, then $\mathfrak{R}_{\alpha+1}\models M@t$ already implies $\mathfrak{R}_{\alpha+1}\models M@\varrho$ for the $(\Pi,\mathcal{D})$-interval $\varrho$ containing $t$, which is $t$.
    If $t$ is not on the $(\Pi,\mathcal{D})$-ruler, we proof by classifying cases where $f$ belongs to. When $\mathfrak{I}^p_{\alpha+1}\models f$, it is trivial since $\mathfrak{I}^p_{\alpha+1}=(\mathfrak{M}_{\Pi,E,E^-}^- \cap \mathfrak{L}_{\Pi,E,E^-}\mid_{[\varrho_\mathsf{left}^--\alpha\cdot|\varrho_\mathsf{left}|,\varrho_\mathsf{right}^++\alpha\cdot|\varrho_\mathsf{right}|]})$ and both parts of the right side are $(\Pi,E\cup E^+)$-interpretation. If $I_{\Pi,\mathfrak{L}_{\Pi,E,E^-}}(\mathfrak{R}_{\alpha}\cup \mathfrak{I}^p_{\alpha+1})\models f$, then there must be a rule $r'$, a dataset $\mathcal{D}$ and a time point $t'$ such that $r'$ minimally derives $f$ at $t'$, $\mathfrak{L}_{\Pi,E,E^-}\cup \mathfrak{R}_\alpha \cup\mathfrak{I}_\alpha^p\models\mathcal{D}$ and $(\mathfrak{R}_\alpha \cup\mathfrak{I}_\alpha^p)\cap\mathfrak{I}_\mathcal{D}\not\models\mathfrak{I}_\emptyset$. Let $\varrho'$ be the $(\Pi,E)$-interval containing $t'$. Then by Proposition~\ref{proposition:pi,dinterpretationatomsat}, $\forall t''\in\varrho'$, $\mathfrak{L}_{\Pi,E,E^-}\cup \mathfrak{R}_\alpha \cup\mathfrak{I}_\alpha^p,t'\models body(r')$. Because $\mathfrak{I}_{\mathcal{D}}-(\mathfrak{R}_\alpha \cup\mathfrak{I}_\alpha^p),t'\not\models body(r')$, and by Proposition~\ref{proposition:pi,dinterpretationatomnotsat}, $\forall t''\in \varrho', \mathfrak{I}_{\mathcal{D}}-(\mathfrak{R}_\alpha \cup\mathfrak{I}_\alpha^p),t''\not\models body(r')$. And therefore, $\forall t''\in\varrho'$, $\exists \mathcal{D}$ such that $r'$ minimally derives $M@t+t''-t'$ from $\mathcal{D'}$ at $t''$, $\mathfrak{L}_{\Pi,E,E^-}\cup \mathfrak{R}_\alpha \cup\mathfrak{I}_\alpha^p\models\mathcal{D'}$, $\mathfrak{I}_{\mathcal{D'}}$ is a shift of $\mathfrak{I}_{\mathcal{D}}$, and $\mathfrak{I}_{\mathcal{D'}}\cap(\mathfrak{R}_\alpha \cup\mathfrak{I}_\alpha^p)\not\models\mathfrak{I}_\emptyset$, which is because $\mathfrak{I}_{\mathcal{D'}}-(\mathfrak{R}_\alpha \cup\mathfrak{I}_\alpha^p),t''\not\models body(r')$.

    Let $\mathfrak{I}$ be the lest interpretation that satisfies $head(r')$ at $t'$. By Proposition~\ref{proposition:headexpandsion}, for some $\varrho$ such that $\varrho^-=t'+t_1, \varrho^+=t'+t_2$, where $t_1$ and $t_2$ are multiples of $div(\Pi)$, $\mathfrak{I}$ maps exactly each $t''\in\varrho$ to ${M}$. If $t_1=t_2$, then it must be so that $t=t'+t_1$, and by Proposition~\ref{proposition:timepointsdiv(pi)apartcontainedbyintervalsofequallength}, $\lvert\varrho'\rvert=\lvert\varrho\rvert$, which means $\forall t''\in\varrho$, there $\exists\mathcal{D'}$ such that $r'$ minimally derives $M@t''$ from $\mathcal{D'}$ at $t''-t_1, \mathfrak{L}_{\Pi,E,E^-}\cup \mathfrak{R}_\alpha \cup\mathfrak{I}_\alpha^p\models\mathcal{D'}$, and $\mathfrak{I}_{\mathcal{D'}}\cap(\mathfrak{R}_\alpha \cup\mathfrak{I}_\alpha^p)\not\models\mathfrak{I}_\emptyset$.

    If $t_1\neq t_2$, then because $\forall t'' \in \varrho'$, there $\exists\mathcal{D'}$ such that $r'$ minimally derives $M@t+t''-t'$ from $\mathcal{D'}$ at $t''$, $\mathfrak{L}_{\Pi,E,E^-}\cup \mathfrak{R}_\alpha \cup\mathfrak{I}_\alpha^p\models\mathcal{D'}$, and $\mathfrak{I}_{\mathcal{D'}}\cap(\mathfrak{R}_\alpha \cup\mathfrak{I}_\alpha^p)\not\models\mathfrak{I}_\emptyset$, $\forall t''\in\varrho'$ there $\exists \mathcal{D'}$ such that $r'$ minimally derives $M@(t''+t_1,t''+t_2)$ from $\mathcal{D'}$ at $t''$, $\mathfrak{L}_{\Pi,E,E^-}\cup \mathfrak{R}_\alpha \cup\mathfrak{I}_\alpha^p\models\mathcal{D'}$, and $\mathfrak{I}_{\mathcal{D'}}\cap(\mathfrak{R}_\alpha \cup\mathfrak{I}_\alpha^p)\not\models\mathfrak{I}_\emptyset$. Because $t'\in\varrho',t\in(\varrho'+t_1,\varrho'+t_2)$, and because $\varrho'+t_1, \varrho'+t_2$ are both time points on the $(\Pi,E)$-ruler, $(\varrho'+t_1,\varrho'+t_2)$ contains $\varrho$, and we have $\mathfrak{R}_{\alpha+1}\models M@t \implies \mathfrak{R}_{\alpha+1}\models M@\varrho$

\end{proof}

Interpretation $\mathfrak{R}_{\omega_1}$ is what we rederive.
First, we note that the rederivation is sound.
\begin{proposition}\label{proposition:rederivationsound}
    $\mathfrak{R}_{\omega_1}\subseteq \mathfrak{C}_{\Pi,E\setminus E^-}$.
\end{proposition}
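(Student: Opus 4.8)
The plan is to prove $\mathfrak{R}_{\omega_1}\subseteq \mathfrak{C}_{\Pi,E\setminus E^-}$ by transfinite induction on $\alpha\le\omega_1$, establishing the invariant $\mathfrak{R}_\alpha\subseteq \mathfrak{C}_{\Pi,E\setminus E^-}$ for every ordinal $\alpha$. Two ingredients carry the whole argument: Proposition~\ref{proposition:remainderinsmallcanon}, which gives $\mathfrak{L}_{\Pi,E,E^-}\subseteq \mathfrak{C}_{\Pi,E\setminus E^-}$, and the fact that $\mathfrak{C}_{\Pi,E\setminus E^-}$ is the least model of $(\Pi,E\setminus E^-)$, hence a model both of $\Pi$ and of $E\setminus E^-$. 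I would also record at the outset the easy observation that each $\mathfrak{I}^p_\alpha = \mathfrak{M}^-_{\Pi,E,E^-}\cap(\mathfrak{L}_{\Pi,E,E^-}\mid_{[\ldots]})\subseteq\mathfrak{L}_{\Pi,E,E^-}$, so that $\mathfrak{I}^p_\alpha\subseteq\mathfrak{C}_{\Pi,E\setminus E^-}$ for all $\alpha$ by Proposition~\ref{proposition:remainderinsmallcanon}.

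For the base case $\alpha=0$, I would bound $\mathfrak{R}_0=\mathfrak{M}^-_{\Pi,E,E^-}\cap(\mathfrak{I}_{E\setminus E^-}\cup\mathfrak{I}^p_0)$ by the simpler set $\mathfrak{I}_{E\setminus E^-}\cup\mathfrak{I}^p_0$. Since $\mathfrak{C}_{\Pi,E\setminus E^-}$ is a model of $E\setminus E^-$ we have $\mathfrak{I}_{E\setminus E^-}\subseteq\mathfrak{C}_{\Pi,E\setminus E^-}$, and $\mathfrak{I}^p_0\subseteq\mathfrak{C}_{\Pi,E\setminus E^-}$ by the observation above, so $\mathfrak{R}_0\subseteq\mathfrak{C}_{\Pi,E\setminus E^-}$. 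The limit case $\alpha=\beta$ is immediate: $\mathfrak{R}_\beta=\bigcup_{\gamma<\beta}\mathfrak{R}_\gamma$, and each $\mathfrak{R}_\gamma$ is contained in $\mathfrak{C}_{\Pi,E\setminus E^-}$ by the induction hypothesis, so the union is too.

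The crux is the successor step $\mathfrak{R}_{\alpha+1}=I_{\Pi,\mathfrak{L}_{\Pi,E,E^-}}(\mathfrak{R}_\alpha\cup\mathfrak{I}^p_{\alpha+1})\cup\mathfrak{I}^p_{\alpha+1}$. Writing $\mathfrak{I}_2=\mathfrak{R}_\alpha\cup\mathfrak{I}^p_{\alpha+1}$, the induction hypothesis together with the observation gives $\mathfrak{I}_2\subseteq\mathfrak{C}_{\Pi,E\setminus E^-}$, and with Proposition~\ref{proposition:remainderinsmallcanon} also $\mathfrak{L}_{\Pi,E,E^-}\cup\mathfrak{I}_2\subseteq\mathfrak{C}_{\Pi,E\setminus E^-}$. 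I would then show that $\mathfrak{C}_{\Pi,E\setminus E^-}$ is itself one of the interpretations over which the least interpretation $I_{\Pi,\mathfrak{L}_{\Pi,E,E^-}}(\mathfrak{I}_2)$ is taken, so that minimality forces $I_{\Pi,\mathfrak{L}_{\Pi,E,E^-}}(\mathfrak{I}_2)\subseteq\mathfrak{C}_{\Pi,E\setminus E^-}$. To verify the closure conditions, take any ground instance $r'$ and fact $f$ minimally derived from a dataset $\mathcal{D}$ at a point $t$ with $\mathfrak{L}_{\Pi,E,E^-}\cup\mathfrak{I}_2\models\mathcal{D}$ and $\mathfrak{I}_2\cap\mathfrak{I}_\mathcal{D}\neq\mathfrak{I}_{\emptyset}$. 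From $\mathfrak{I}_\mathcal{D}\subseteq\mathfrak{C}_{\Pi,E\setminus E^-}$ and $\mathfrak{I}_\mathcal{D},t\models body(r')$, monotonicity of body satisfaction yields $\mathfrak{C}_{\Pi,E\setminus E^-},t\models body(r')$; since $\mathfrak{C}_{\Pi,E\setminus E^-}$ is a model of $\Pi$, we get $\mathfrak{C}_{\Pi,E\setminus E^-},t\models head(r')$, and by the definition of derivation $\mathfrak{C}_{\Pi,E\setminus E^-}\models f$. Since $\mathfrak{I}^p_{\alpha+1}$ is already bounded, it follows that $\mathfrak{R}_{\alpha+1}\subseteq\mathfrak{C}_{\Pi,E\setminus E^-}$, closing the induction; instantiating at $\alpha=\omega_1$ gives the claim.

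The main obstacle I anticipate is conceptual rather than computational: correctly exploiting the least-interpretation definition of $I_{\Pi,\mathfrak{L}_{\Pi,E,E^-}}$, that is, confirming that $\mathfrak{C}_{\Pi,E\setminus E^-}$ satisfies every closure condition defining the operator and therefore dominates its output. Everything else reduces to the monotonicity of the body-satisfaction relation for positive metric atoms and to the fact that the canonical model is a model of both $\Pi$ and $E\setminus E^-$.
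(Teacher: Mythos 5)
Your proposal is correct and follows essentially the same route as the paper's proof: transfinite induction on $\alpha$, using Proposition~\ref{proposition:remainderinsmallcanon} to bound $\mathfrak{I}^p_\alpha$ and the fact that $\mathfrak{C}_{\Pi,E\setminus E^-}$ is a model of $\Pi$ containing all the relevant premises to handle the application of $I_{\Pi,\mathfrak{L}_{\Pi,E,E^-}}$ in the successor step. Your phrasing of that step via the minimality of the least interpretation defining the operator is just a cleaner packaging of the paper's fact-by-fact argument.
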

\begin{proof}
    Because $\mathfrak{R}_{\omega_1} = \bigcup_{\alpha<\omega_1}\mathfrak{R}_\alpha$,
    we only need to show inductively that for each $\alpha<\omega_1$ $\mathfrak{R}^\alpha \subseteq\mathfrak{C}_{\Pi,E\setminus E^-}$.

    When $\alpha=0$, because $\mathfrak{I}_{E\setminus E^-}\subseteq \mathfrak{C}_{\Pi,E\setminus E^-}$ and  $\mathfrak{I}^p_\alpha=(\mathfrak{M}_{\Pi,E,E^-}^- \cap \mathfrak{L}_{\Pi,E,E^-}\mid_{[\varrho_\mathsf{left}^--\alpha\cdot|\varrho_\mathsf{left}|,\varrho_\mathsf{right}^++\alpha\cdot|\varrho_\mathsf{right}|]})$, and by Proposition~\ref{proposition:remainderinsmallcanon}, $\mathfrak{L}_{\Pi,E,E^-}\subseteq \mathfrak{C}_{\Pi,E\setminus E^-}$, $\mathfrak{I}^p_0\subseteq \mathfrak{C}_{\Pi,E\setminus E^-}$ and therefore, $\mathfrak{R}_0\subseteq\mathfrak{C}_{\Pi,E\setminus E^-}$.

    For the inductive step, consider each punctual fact $f$ such that $\mathfrak{R}_{\alpha+1}\models f$.
    If $\mathfrak{R}_{\alpha}\models f$, then by the induction hypothesis, $\mathfrak{C}_{\Pi,E\setminus E^-}\models f$.
    If $\mathfrak{R}_{\alpha}\not\models f$, then $\mathfrak{I}^p_{\alpha+1} \models f$ or $I_{\Pi,\mathfrak{L}_{\Pi,E,E^-}}(\mathfrak{R}_{\alpha}\cup \mathfrak{I}^p_{\alpha+1})\models f$.
    If $\mathfrak{I}^p_{\alpha+1} \models f$, again by Proposition~\ref{proposition:remainderinsmallcanon} we have $\mathfrak{I}^p_{\alpha+1}\subseteq \mathfrak{C}_{\Pi,E\setminus E^-}$.
    If $I_{\Pi,\mathfrak{L}_{\Pi,E,E^-}}(\mathfrak{R}_{\alpha}\cup \mathfrak{I}^p_{\alpha+1})\models f$, then there must be a ground instance $r'$ of $r$, a dataset $\mathcal{D}$ such that $r'$ minimally derives $f$ from $\mathcal{D}$, and $\mathfrak{L}_{\Pi,E,E^-}\cup\mathfrak{R}_{\alpha}\cup \mathfrak{I}^p_{\alpha+1}\models \mathcal{D} $. 
    Because $\mathfrak{L}_{\Pi,E,E^-}\cup\mathfrak{R}_{\alpha}\cup \mathfrak{I}^p_{\alpha+1}\subseteq\mathfrak{C}_{\Pi,E\setminus E^-}$, and $\mathfrak{C}_{\Pi,E\setminus E^-}$ is a model of $\Pi$, we have $\mathfrak{C}_{\Pi,E\setminus E^-}\models f$.
    Consequently, $\mathfrak{R}_{\alpha+1}\subseteq \mathfrak{C}_{\Pi,E\setminus E^-}$.
\end{proof}

Then, we show that the rederivation is complete.
\begin{proposition}\label{proposition:rederivationcomplete}
    $\mathfrak{C}_{\Pi,E\setminus E^-}\cap \mathfrak{M}_{\Pi,E,E^-}^-\subseteq  \mathfrak{R}_{\omega_1}$
\end{proposition}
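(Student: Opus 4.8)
The plan is to establish completeness through the derivation-tree machinery, by reducing the containment to punctual facts and then inducting on the height of a derivation tree. By Proposition~\ref{proposition:punctualfactsareenoughforcontainment} it suffices to show that every punctual fact $f$ with both $\mathfrak{C}_{\Pi,E\setminus E^-}\models f$ and $\mathfrak{M}^-_{\Pi,E,E^-}\models f$ satisfies $\mathfrak{R}_{\omega_1}\models f$. Since $\mathfrak{I}_{E\setminus E^-}\subseteq\mathfrak{I}_E$, monotonicity of the canonical model gives $\mathfrak{C}_{\Pi,E\setminus E^-}\subseteq\mathfrak{C}_{\Pi,E}$, so every fact occurring in $\mathfrak{C}_{\Pi,E\setminus E^-}$ falls into exactly one of $\mathfrak{M}^-_{\Pi,E,E^-}$ (the overdeleted facts) and $\mathfrak{L}_{\Pi,E,E^-}=\mathfrak{C}_{\Pi,E}-\mathfrak{M}^-_{\Pi,E,E^-}$ (the surviving facts); moreover $\mathfrak{L}_{\Pi,E,E^-}\subseteq\mathfrak{C}_{\Pi,E\setminus E^-}$ by Proposition~\ref{proposition:remainderinsmallcanon}.

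Fixing a (finite) derivation tree $T=(V,\mathcal{E})\in\mathfrak{D}_{\Pi,E\setminus E^-}(f)$, which exists because $\mathfrak{C}_{\Pi,E\setminus E^-}\models f$, I would prove by induction on node height, as in Proposition~\ref{proposition:leafrange}, the invariant that for every $v\in V$ with label $g=l(v)$ one has $g\in\mathfrak{L}_{\Pi,E,E^-}\cup\mathfrak{R}_{\omega_1}$, and $g\in\mathfrak{R}_{\omega_1}$ whenever $g\in\mathfrak{M}^-_{\Pi,E,E^-}$. For a leaf, $g$ is satisfied by $\mathfrak{I}_{E\setminus E^-}$; if $g$ survives it lies in $\mathfrak{L}_{\Pi,E,E^-}$, and otherwise $g\in\mathfrak{M}^-_{\Pi,E,E^-}\cap\mathfrak{I}_{E\setminus E^-}=\mathfrak{R}_0\subseteq\mathfrak{R}_{\omega_1}$. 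For a non-leaf node, $g$ is minimally derived by some ground instance $r'$ from the dataset $\mathcal{D}$ of its in-neighbours, each already satisfying the invariant by a smaller height. If $g$ survives we are done; if $g\in\mathfrak{M}^-_{\Pi,E,E^-}$ and some in-neighbour is itself overdeleted, then that in-neighbour is in $\mathfrak{R}_{\omega_1}$ by hypothesis, so once all in-neighbours have appeared in $\mathfrak{R}_\alpha$ or lie in $\mathfrak{L}_{\Pi,E,E^-}$ we have $\mathfrak{L}_{\Pi,E,E^-}\cup\mathfrak{R}_\alpha\models\mathcal{D}$ and $\mathfrak{R}_\alpha\cap\mathfrak{I}_{\mathcal{D}}\neq\mathfrak{I}_{\emptyset}$, whence a further application of $I_{\Pi,\mathfrak{L}_{\Pi,E,E^-}}$ places $g$ into $\mathfrak{R}_{\alpha+1}\subseteq\mathfrak{R}_{\omega_1}$. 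If instead $g\in\mathfrak{M}^-_{\Pi,E,E^-}$ but all in-neighbours survive, then $g$ is overdeleted yet one-step derivable from $\mathfrak{L}_{\Pi,E,E^-}$ alone, so it must be supplied by the seed interpretation $\mathfrak{I}^p_\alpha$ once the window $[\varrho_\mathsf{left}^--\alpha\cdot|\varrho_\mathsf{left}|,\varrho_\mathsf{right}^++\alpha\cdot|\varrho_\mathsf{right}|]$ contains its bounded interval, which occurs for some finite $\alpha<\omega_1$. Applying the invariant at the root then gives $f\in\mathfrak{R}_{\omega_1}$, and Proposition~\ref{proposition:rederivationruler} is used throughout to pass from punctual membership to membership over the enclosing $(\Pi,E)$-interval so that the projections defining $\mathfrak{I}^p_\alpha$ behave consistently.

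I expect the principal difficulty to be the last case of the inductive step: a fact that is overdeleted because one of its derivations is partially rooted in $E^-$, yet still holds through an entirely surviving derivation. Pure seminaïve propagation from $\mathfrak{R}_0$ never reaches such a fact, since none of its body atoms is new, so its recovery hinges entirely on the seed $\mathfrak{I}^p_\alpha$; the delicate point is to verify that $\mathfrak{I}^p_\alpha$ indeed contains every overdeleted fact that is one-step rederivable from $\mathfrak{L}_{\Pi,E,E^-}$ within its window, and that the successively widening windows exhaust the timeline, so that each such fact is seeded at some stage strictly below $\omega_1$. Coordinating this window growth with the structural induction on a derivation tree that, although finite, may be deep and reach far along the timeline, while keeping the minimal-derivation side conditions of $I_{\Pi,\mathfrak{L}_{\Pi,E,E^-}}$ exactly met, is where the bulk of the care will lie.
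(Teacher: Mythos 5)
Your proposal is correct and follows essentially the same route as the paper: both reduce the claim to showing $\mathfrak{C}_{\Pi,E\setminus E^-}\subseteq\mathfrak{L}_{\Pi,E,E^-}\cup\mathfrak{R}_{\omega_1}$ and then handle exactly the same two critical sub-cases — propagation via $I_{\Pi,\mathfrak{L}_{\Pi,E,E^-}}$ when some body fact is itself overdeleted and already rederived at some stage $\alpha$, versus seeding from $\mathfrak{I}^p_\alpha$ once the widening window covers a fact that is one-step derivable from surviving facts alone. The only organizational difference is that you induct on the height of a finite derivation tree in $\mathfrak{D}_{\Pi,E\setminus E^-}(f)$ whereas the paper inducts on the stage index of $T^\alpha_\Pi(\mathfrak{I}_{E\setminus E^-})$; these are interchangeable here (the paper itself uses your tree-height induction for the analogous insertion result, Proposition~\ref{proposition:insertioncomplete}), and the delicate point you flag about $\mathfrak{I}^p_\alpha$ exhausting the timeline is exactly the step the paper's proof also rests on.
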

\begin{proof}
    Because $\mathfrak{M}_{\Pi,E,E^-}\cap\mathfrak{L}_{\Pi,E,E^-}=\mathfrak{I}_\emptyset$, we only need to show that $\mathfrak{C}_{\Pi,E\setminus E^-}\subseteq \mathfrak{R}_{\omega_1}\cup\mathfrak{L}_{\Pi,E,E-}$.

    We show inductively that for each ordinal $\alpha<\omega_1$, each fact $f$ such that $T^\alpha_\Pi(\mathfrak{I}_{E\setminus E^-})\models f$, $\exists\beta<\omega_1$ such that $\mathfrak{R}_\beta\cup \mathfrak{L}_{\Pi,E,E^-}\models f$, and either $\beta = 0$ or $\mathfrak{R}_{\beta-1}\cup \mathfrak{L}_{\Pi,E,E^-}\not\models f$.

    When $\alpha=0$, because $ \mathfrak{I}_{E\setminus E-} - \mathfrak{M}_{\Pi,E,E^-}^-\subseteq \mathfrak{L}_{\Pi,E,E^-}$ and $\mathfrak{M}_{\Pi,E,E^-}^-\cap \mathfrak{I}_{E\setminus E^-}\subseteq\mathfrak{R}_0$, we have $\mathfrak{I}_{E\setminus E^-}\subseteq \mathfrak{R}_{0}\cup \mathfrak{L}_{\Pi,E,E^-}$, and for each fact that $T^0_\Pi(\mathfrak{I}_{E\setminus E^-})\models f$, $\mathfrak{R}_0\cup \mathfrak{L}_{\Pi,E,E^-}\models f$

   For the inductive step, we only need to consider each punctual fact $f$ such that $T^{\alpha+1}_\Pi(\mathfrak{I}_{E\setminus E^-})\models f$ but $T^{\alpha}_\Pi(\mathfrak{I}_{E\setminus E^-})\not\models f$.
    Then there must be a ground rule instance $r'$ of a rule $r\in\Pi$, a dataset $\mathcal{D}$ and a time point $t$ such that $r'$ minimally derives $f$ from $\mathcal{D}$, and $T^{\alpha}_\Pi(\mathfrak{I}_{E\setminus E^-})\models \mathcal{D}$.
    Let $\mathcal{D} = \{f_1,f_2,\dots,f_n\}$, and then by the induction hypothesis, there must be $\beta_1,\beta_2,\dots,\beta_n$ such that $\mathfrak{R}_{\beta_i}\cup \mathfrak{L}_{\Pi,E,E^-}\models f_i$, for $i\in\{1,2,\dots,n\}$.

    Let $k\in \{1,2,\dots,n\}$ such that $\forall i\in \{1,2,\dots,n\}$, $\beta_k\geq\beta_i$.
    If $\beta_k>0$, then  $\mathfrak{R}_{\beta_k}\models f$, $\mathfrak{R}_{\beta_k}\cap \mathfrak{I}_{\mathcal{D}}\neq\mathfrak{I}_\emptyset$.
    Because $\forall i\in \{1,2,\dots,n\}$, $\beta_k\geq\beta_i$, we have $\mathfrak{R}_{\beta_k}\cup \mathfrak{L}_{\Pi,E,E^-}\models f_i$, for $i\in\{1,2,\dots,n\}$, $\mathfrak{R}_{\beta_k}\cup \mathfrak{L}_{\Pi,E,E^-}\models \mathcal{D}$, and $\mathfrak{R}_{\beta_k+1}\cup \mathfrak{L}_{\Pi,E,E^-}\models f$.
    If $\beta_k=0$, then if $\exists i \in\{1,2,\dots,n\}$ such that $\mathfrak{I}_{\{f_i\}}\cap \mathfrak{R}_0\neq \mathfrak{I}_{\emptyset}$, then because $\mathfrak{R}_{0}\cup \mathfrak{L}_{\Pi,E,E^-}\models \mathcal{D}$, $\mathfrak{R}_1\cup \mathfrak{L}_{\Pi,E,E^-}\models f$.
    If $\forall i \in \{1,2,\dots,n\}$,  $\mathfrak{I}_{\{f_i\}}\cap \mathfrak{R}_0=\mathfrak{I}_{\emptyset}$,
    then $\mathfrak{L}_{\Pi,E,E^-}\models \mathcal{D}$.
    Let $f = M@t$. Because $\mathfrak{R}_0\not\models f$, $t\notin [\varrho_\mathsf{left}^-,\varrho_\mathsf{right}^+]$.
    Then there must be a natural number $n$ such that $t\in [\varrho_\mathsf{left}^--n\cdot|\varrho_\mathsf{left}|,\varrho_\mathsf{right}^++n\cdot|\varrho_\mathsf{right}|]$ and $t\notin [\varrho_\mathsf{left}^--(n-1)\cdot|\varrho_\mathsf{left}|,\varrho_\mathsf{right}^++(n-1)\cdot|\varrho_\mathsf{right}|]$.
    Because $\mathfrak{L}_{\Pi,E,E^-}\mid_{[\varrho_\mathsf{left}^--n\cdot|\varrho_\mathsf{left}|,\varrho_\mathsf{right}^++n\cdot|\varrho_\mathsf{right}|]}\models f$, we have $\mathfrak{R}_{n+1}\models f$.

    Finally, because $\mathfrak{R}_{\omega_1} = \bigcup_{\alpha<\omega_1}\mathfrak{R}_{\alpha}$ and $\mathfrak{C}_{\Pi,E\setminus E^-} = \bigcup_{\alpha<\omega_1}T^\alpha_{\Pi}(\mathfrak{I}_{E\setminus E^-})$, $\mathfrak{C}_{\Pi,E\setminus E^-}\subseteq \mathfrak{R}_{\omega_1}\cup \mathfrak{L}_{\Pi,E,E^-}$.
\end{proof}

Now we show rederivation is correct.
\begin{proposition}\label{proposition:rederivationcorrect}
    $\mathfrak{L}_{\Pi,E,E^-}\cup\mathfrak{R}_{\omega_1}=\mathfrak{C}_{\Pi,E\setminus E^-}$
\end{proposition}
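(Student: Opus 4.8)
The plan is to prove the equality by establishing the two containments separately, and to reduce everything to punctual facts via Proposition~\ref{proposition:punctualfactsareenoughforcontainment}, so that the lattice-theoretic manipulations of interpretations become pointwise statements about whether a ground atom $M$ holds at a time point $t$. Almost all of the real work has already been done in the preceding propositions, so this final argument should be a short combination of them.

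For the direction $\mathfrak{L}_{\Pi,E,E^-}\cup\mathfrak{R}_{\omega_1}\subseteq\mathfrak{C}_{\Pi,E\setminus E^-}$, I would observe that both operands are already known to sit inside $\mathfrak{C}_{\Pi,E\setminus E^-}$: the term $\mathfrak{R}_{\omega_1}$ by soundness of rederivation (Proposition~\ref{proposition:rederivationsound}), and the term $\mathfrak{L}_{\Pi,E,E^-}$ by Proposition~\ref{proposition:remainderinsmallcanon}. Since $\cup$ is defined as the least upper bound and $\mathfrak{C}_{\Pi,E\setminus E^-}$ is an upper bound of the two, this direction is immediate.

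For the reverse direction $\mathfrak{C}_{\Pi,E\setminus E^-}\subseteq\mathfrak{L}_{\Pi,E,E^-}\cup\mathfrak{R}_{\omega_1}$, I would take an arbitrary punctual fact $M@t$ with $\mathfrak{C}_{\Pi,E\setminus E^-},t\models M$ and split on whether $\mathfrak{M}^-_{\Pi,E,E^-},t\models M$. If it does, then $M@t$ lies in $\mathfrak{C}_{\Pi,E\setminus E^-}\cap\mathfrak{M}^-_{\Pi,E,E^-}$, and completeness of rederivation (Proposition~\ref{proposition:rederivationcomplete}) places it in $\mathfrak{R}_{\omega_1}$. If it does not, I would use that $\mathfrak{I}_{E\setminus E^-}\subseteq\mathfrak{I}_E$ together with monotonicity of the canonical model to get $\mathfrak{C}_{\Pi,E\setminus E^-}\subseteq\mathfrak{C}_{\Pi,E}$, hence $\mathfrak{C}_{\Pi,E},t\models M$; then, since $M@t$ is disjoint from $\mathfrak{M}^-_{\Pi,E,E^-}$, the definition $\mathfrak{L}_{\Pi,E,E^-}=\mathfrak{C}_{\Pi,E}-\mathfrak{M}^-_{\Pi,E,E^-}$ and the pointwise reading of $-$ give $\mathfrak{L}_{\Pi,E,E^-},t\models M$. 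In either case $M@t$ is satisfied by $\mathfrak{L}_{\Pi,E,E^-}\cup\mathfrak{R}_{\omega_1}$, and Proposition~\ref{proposition:punctualfactsareenoughforcontainment} lifts this from punctual facts to the full containment.

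The proof is essentially bookkeeping, so the only subtleties are making the case split on membership in $\mathfrak{M}^-_{\Pi,E,E^-}$ exhaustive and correctly matching the pointwise reading of $\cap$, $\cup$, and $-$ against the earlier definitions of these operators on interpretations. The one genuinely load-bearing external ingredient is monotonicity of the canonical model under dataset containment, which is standard for DatalogMTL and implicit in the transfinite $T_\Pi$-construction used throughout the paper; I expect this to be the step requiring the most care, since it is precisely what guarantees that a fact surviving in $\mathfrak{C}_{\Pi,E\setminus E^-}$ yet untouched by overdeletion must already have been present in $\mathfrak{C}_{\Pi,E}$ and is therefore captured by $\mathfrak{L}_{\Pi,E,E^-}$ rather than lost.
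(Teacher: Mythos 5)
Your proposal is correct and follows essentially the same route as the paper: the forward containment from Propositions~\ref{proposition:rederivationsound} and~\ref{proposition:remainderinsmallcanon}, and the reverse containment by splitting $\mathfrak{C}_{\Pi,E\setminus E^-}$ on membership in $\mathfrak{M}^-_{\Pi,E,E^-}$, handling the first part via Proposition~\ref{proposition:rederivationcomplete} and the second via $\mathfrak{C}_{\Pi,E\setminus E^-}\subseteq\mathfrak{C}_{\Pi,E}$ and the definition of $\mathfrak{L}_{\Pi,E,E^-}$. Your pointwise phrasing through punctual facts and your explicit flagging of the monotonicity step are merely a more careful rendering of what the paper leaves implicit in its lattice-style computation.
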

\begin{proof}
    By Propositions~\ref{proposition:rederivationsound} and~\ref{proposition:remainderinsmallcanon}, we have 
$$\mathfrak{L}_{\Pi,E,E^-}\cup\mathfrak{R}_{\omega_1}\subseteq\mathfrak{C}_{\Pi,E\setminus E^-}$$.
    By Proposition~\ref{proposition:rederivationcomplete}, we have 
    \begin{align*}
        &\mathfrak{C}_{\Pi,E\setminus E^-}\cap \mathfrak{M}_{\Pi,E,E^-}^-\cup\mathfrak{L}_{\Pi,E,E^-}\\
         =& (\mathfrak{C}_{\Pi,E\setminus E^-}\cap \mathfrak{M}_{\Pi,E,E^-}^-)\cup (\mathfrak{C}_{\Pi,E\setminus E^-}- \mathfrak{M}_{\Pi,E,E^-}^-)\\
         =& \mathfrak{C}_{\Pi,E\setminus E^-}\\
        \subseteq& \mathfrak{R}_{\omega_1}\cup\mathfrak{L}_{\Pi,E,E^-}
    \end{align*}
\end{proof}

\begin{lemma}\label{lemma:rederivedepthrep}
    Let $\varrho$ be a closed interval of length $\mathsf{depth}(\Pi)$, and $E'$ a dataset representing $\mathfrak{R}_{\omega_1}\mid_{\varrho}$, and let a transfinite sequence of $\mathfrak{R}_1',\mathfrak{R}_2',\dots$ be defined identically to $\mathfrak{R}_\alpha$, except that $\mathfrak{R}_0' = \mathfrak{I}_{E'}\cup\mathfrak{I}^p_0$.
    Then both of the following hold:
    \begin{itemize}
        \item if $\varrho^+> t_E^+$, then $\mathfrak{R}_{\omega_1}\mid_{(\varrho^+,\infty)} = \mathfrak{R}_{\omega_1}'\mid_{(\varrho^+,\infty)}$;
        \item if $\varrho^-< t_E^-$, then $\mathfrak{R}_{\omega_1}\mid_{(-\infty,\varrho^-)} = \mathfrak{R}_{\omega_1}'\mid_{(-\infty,\varrho^-)}$
    \end{itemize}
\end{lemma}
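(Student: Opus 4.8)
The plan is to prove the two bullets separately; since they are mirror images (swapping the roles of the left and right tails of the timeline), I describe only the first, assuming $\varrho^+>t_E^+$ and establishing $\mathfrak{R}_{\omega_1}\mid_{(\varrho^+,\infty)}=\mathfrak{R}'_{\omega_1}\mid_{(\varrho^+,\infty)}$, with the second bullet following symmetrically. I split this equality into two inclusions. For $\mathfrak{R}'_{\omega_1}\subseteq\mathfrak{R}_{\omega_1}$ I will actually prove the \emph{global} statement (soundness of the reduced seed), which is stronger than needed; for $\mathfrak{R}_{\omega_1}\mid_{(\varrho^+,\infty)}\subseteq\mathfrak{R}'_{\omega_1}\mid_{(\varrho^+,\infty)}$ I will use a locality argument in the spirit of Lemma~\ref{lemma:depthrep}. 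Throughout I exploit that rule bodies and heads have total interval weight bounded by $\mathsf{depth}(\Pi)$ and that $\varrho$ has length exactly $\mathsf{depth}(\Pi)$.

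For the soundness inclusion the key is first to characterise the target as $\mathfrak{R}_{\omega_1}=\mathfrak{M}^-_{\Pi,E,E^-}\cap\mathfrak{C}_{\Pi,E\setminus E^-}$: the containment $\mathfrak{R}_{\omega_1}\subseteq\mathfrak{C}_{\Pi,E\setminus E^-}$ is Proposition~\ref{proposition:rederivationsound}, the reverse intersection is Proposition~\ref{proposition:rederivationcomplete}, and the missing piece $\mathfrak{R}_{\omega_1}\subseteq\mathfrak{M}^-_{\Pi,E,E^-}$ I would obtain by transfinite induction on the $\mathfrak{R}_\alpha$ sequence: every application of $I_{\Pi,\mathfrak{L}_{\Pi,E,E^-}}$ consumes a ``delta'' fact $g$ already known to be partially rooted in $E^-$, and since the premise dataset $\mathcal{D}$ is satisfied by $\mathfrak{C}_{\Pi,E\setminus E^-}\subseteq\mathfrak{C}_{\Pi,E}$, Proposition~\ref{proposition:partiallyrootedsub} propagates partial rootedness to the derived fact, while each injected $\mathfrak{I}^p_\alpha$ lies in $\mathfrak{M}^-_{\Pi,E,E^-}$ by construction. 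Running the same induction on the primed sequence, with base case $\mathfrak{R}'_0=\mathfrak{I}_{E'}\cup\mathfrak{I}^p_0\subseteq\mathfrak{R}_{\omega_1}\subseteq\mathfrak{M}^-_{\Pi,E,E^-}$, simultaneously yields $\mathfrak{R}'_\alpha\subseteq\mathfrak{M}^-_{\Pi,E,E^-}$ and $\mathfrak{L}_{\Pi,E,E^-}\cup\mathfrak{R}'_\alpha\subseteq\mathfrak{C}_{\Pi,E\setminus E^-}$ (the latter because $\mathfrak{C}_{\Pi,E\setminus E^-}$ is a model of $\Pi$, hence closed, using Propositions~\ref{proposition:remainderinsmallcanon} and~\ref{proposition:rederivationcorrect}). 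Since $\mathfrak{M}^-_{\Pi,E,E^-}\cap\mathfrak{L}_{\Pi,E,E^-}=\mathfrak{I}_{\emptyset}$, intersecting the two invariants gives $\mathfrak{R}'_\alpha\subseteq\mathfrak{M}^-_{\Pi,E,E^-}\cap\mathfrak{C}_{\Pi,E\setminus E^-}=\mathfrak{R}_{\omega_1}$ for every $\alpha$, whence $\mathfrak{R}'_{\omega_1}\subseteq\mathfrak{R}_{\omega_1}$.

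For the reproduction inclusion I mimic Lemma~\ref{lemma:depthrep}. First a monotonicity observation: the primed sequence is increasing and $\mathfrak{R}'_0\mid_\varrho\supseteq\mathfrak{I}_{E'}=\mathfrak{R}_{\omega_1}\mid_\varrho\supseteq\mathfrak{R}_\alpha\mid_\varrho$, so $\mathfrak{R}_\alpha\mid_\varrho\subseteq\mathfrak{R}'_\alpha\mid_\varrho$ for all $\alpha$. I then prove $\mathfrak{R}_\alpha\mid_{(\varrho^+,\infty)}\subseteq\mathfrak{R}'_\alpha\mid_{(\varrho^+,\infty)}$ by transfinite induction. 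The base case is an identity: on $(\varrho^+,\infty)$ the dataset $\mathfrak{I}_{E\setminus E^-}$ contributes nothing (as $\varrho^+>t_E^+$) and $\mathfrak{I}^p_0\subseteq\mathfrak{M}^-_{\Pi,E,E^-}$, so both $\mathfrak{R}_0$ and $\mathfrak{R}'_0$ restrict to $\mathfrak{I}^p_0$ there. In the successor step, a fact $M@t$ with $t>\varrho^+$ added by $I_{\Pi,\mathfrak{L}_{\Pi,E,E^-}}$ is minimally derived from a premise dataset $\mathcal{D}$ whose support, by $\mathsf{depth}(\Pi)$-boundedness, lies in $[t-\mathsf{depth}(\Pi),t+\mathsf{depth}(\Pi)]\subseteq(\varrho^-,\infty)$ because $|\varrho|=\mathsf{depth}(\Pi)$; on $(\varrho^-,\infty)$ the induction hypothesis together with the $\varrho$-domination gives $\mathfrak{R}_\alpha\subseteq\mathfrak{R}'_\alpha$, so the same rule instance fires in the primed sequence (both the context condition $\mathfrak{L}_{\Pi,E,E^-}\cup\mathfrak{R}'_\alpha\cup\mathfrak{I}^p_{\alpha+1}\models\mathcal{D}$ and the delta condition survive by monotonicity), and the $\mathfrak{I}^p_{\alpha+1}$ summand is shared. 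Taking unions over $\alpha<\omega_1$ completes this inclusion, and combining the two inclusions on $(\varrho^+,\infty)$ yields the first bullet.

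I expect the main obstacle to be the soundness inclusion, and specifically the invariant $\mathfrak{R}_{\omega_1}\subseteq\mathfrak{M}^-_{\Pi,E,E^-}$. Because the reduced seed $\mathfrak{R}'_0$ carries the \emph{mature} value $\mathfrak{R}_{\omega_1}\mid_\varrho$ rather than the partial value $\mathfrak{R}_0\mid_\varrho$, a naive locality induction cannot control primed derivations that land inside or just to the left of $\varrho$ (their windows reach below $\varrho^-$, outside any region I can bound inductively); one is forced to argue globally that nothing the primed sequence derives can escape $\mathfrak{M}^-_{\Pi,E,E^-}\cap\mathfrak{C}_{\Pi,E\setminus E^-}$. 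Pinning down that $\mathfrak{C}_{\Pi,E\setminus E^-}=\mathfrak{L}_{\Pi,E,E^-}\cup\mathfrak{R}_{\omega_1}$ splits disjointly into the ``kept'' facts $\mathfrak{L}_{\Pi,E,E^-}$ and the ``rederived'' facts $\mathfrak{R}_{\omega_1}\subseteq\mathfrak{M}^-_{\Pi,E,E^-}$, and that every primed derivation stays on the $\mathfrak{M}^-$ side, is the delicate point. The reproduction direction, by contrast, is a routine adaptation of the depth-locality argument already carried out for Lemma~\ref{lemma:depthrep}.
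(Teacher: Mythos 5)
Your proposal is correct, and it splits the claim into the same two inclusions as the paper, but it handles one of them by a genuinely different route. The reproduction direction ($\mathfrak{R}_{\omega_1}\mid_{(\varrho^+,\infty)}\subseteq\mathfrak{R}'_{\omega_1}\mid_{(\varrho^+,\infty)}$) is essentially the paper's argument: a transfinite induction in which the $\mathsf{depth}(\Pi)$-bounded support of a minimal premise set, combined with $|\varrho|=\mathsf{depth}(\Pi)$, confines every premise to $\varrho\cup(\varrho^+,\infty)$, where the seed $E'$ and the induction hypothesis respectively dominate; your explicit ``$\varrho$-domination'' step (that $\mathfrak{R}'_0\mid_\varrho=\mathfrak{R}_{\omega_1}\mid_\varrho\supseteq\mathfrak{R}_\alpha\mid_\varrho$) is a point the paper leaves implicit. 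The soundness direction is where you diverge. The paper shows $\mathfrak{R}'_\alpha\mid_{(\varrho^+,\infty)}\subseteq\mathfrak{R}_{\omega_1}$ directly by exploiting that $\mathfrak{R}_{\omega_1}$ is closed under the insertion operator relative to $\mathfrak{L}_{\Pi,E,E^-}$: since $\mathfrak{I}_{E'}\subseteq\mathfrak{R}_{\omega_1}$ and, via Proposition~\ref{proposition:rederivationruler}, every premise of a primed derivation already holds at some finite unprimed stage $\beta$, the derived fact lands in $\mathfrak{R}_{\beta+1}$. You instead route through the characterisation $\mathfrak{R}_{\omega_1}=\mathfrak{M}^-_{\Pi,E,E^-}\cap\mathfrak{C}_{\Pi,E\setminus E^-}$ and show the primed sequence preserves both invariants. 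This does work: the half you flag as delicate, $\mathfrak{R}_{\omega_1}\subseteq\mathfrak{M}^-_{\Pi,E,E^-}$, is true and provable by the partial-rootedness induction you sketch (with the standard fact-splitting step, as in the proof of Proposition~\ref{proposition:overdeletionleast}, before invoking Proposition~\ref{proposition:partiallyrootedsub}), and the $\mathfrak{C}_{\Pi,E\setminus E^-}$ invariant follows as in Proposition~\ref{proposition:rederivationsound}. The cost is an auxiliary lemma the paper never needs to state; the benefit is that your argument is insensitive to the staging of the unprimed sequence and makes explicit what $\mathfrak{R}_{\omega_1}$ semantically is, which the paper only obtains a posteriori from Propositions~\ref{proposition:rederivationsound} and~\ref{proposition:rederivationcomplete}.
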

\begin{proof}
    Consider $\varrho^+>t_E^+$. 
    First, we show that $\mathfrak{R}_{\omega_1}'\mid_{(\varrho^+,\infty)}\subseteq \mathfrak{R_{\omega_1}}\mid_{(\varrho^+,\infty)}$.

    To do this, we show inductively that for each ordinal $\alpha<\omega_1$, $\mathfrak{R}'_{\alpha}\mid_{(\varrho^+,\infty)}\subseteq\mathfrak{R}_{\omega_1}\mid_{(\varrho^+,\infty)}$
    
    When $\alpha=0$, since $\mathfrak{I}_{E'}\subseteq\mathfrak{R}_{\omega_1}$, we have $\mathfrak{R}_0'\subseteq\mathfrak{R}_{\omega_1}$

    For the inductive step, consider each punctual fact $f=M@t$, $t\in(\varrho^+,\infty)$ such that $\mathfrak{R}_{\alpha+1}\models f$ and $\mathfrak{R}_{\alpha}\not\models f$.
    If $\mathfrak{I}^p_{\alpha}\models f$, then because $\mathfrak{I}^p_{\alpha}\subseteq\mathfrak{R}_\alpha$, $\mathfrak{R}_{\omega_1}\models f$.
    If $\mathfrak{I}^p_{\alpha}\not\models f$, then $I_{\Pi,\mathfrak{L}_{\Pi,E,E^-}}(\mathfrak{R}_{\alpha}'\cup\mathfrak{I}^p_{\alpha})\models f$.
    Therefore, there must be a rule instance $r'$ of a rule $r\in\Pi$, a dataset $\mathcal{D}$ and a time point $t$, such that $r'$ derives $f$ from $t$, $\mathfrak{R}_{\alpha}'\cup\mathfrak{L}_{\Pi,E,E^-}\models \mathcal{D}$ and $\mathfrak{R}_{\alpha}'\cup\mathfrak{I}^p_{\alpha}\cap\mathfrak{I}_{\mathcal{D}}\neq\mathfrak{I}_\emptyset$.
    $\forall f'\in\mathcal{D}$, by the induction hypothesis, $\mathfrak{R}_{\omega_1}\cup\mathfrak{L}_{\Pi,E,E^-}\models f'$, and by Proposition~\ref{proposition:rederivationruler} there must be an $\alpha'$ such that $\mathfrak{R}_{\alpha'}\cup\mathfrak{L}_{\Pi,E,E^-}\models f'$, and the largest $\alpha'$ for each $f'$, $\beta$ satisfies that $\mathfrak{R}_\beta\cup\mathfrak{I}^p_\beta\cap \mathfrak{I}_{\mathcal{D}}\neq\mathfrak{I}_\emptyset$ and $\mathfrak{R}_\beta\cup \mathfrak{L}_{\Pi,E,E^-}\models \mathcal{D}$.
    Consequently, $\mathfrak{R}_{\beta+1}\models f$, and $\mathfrak{R}_{\beta+1}\models f$.

    Next, we show that $\mathfrak{R}_{\omega_1}\mid_{(\varrho^+,\infty)} \subseteq \mathfrak{R}_{\omega_1}'\mid_{(\varrho^+,\infty)}$.

    We prove inductively that for each ordinal $\alpha$, each ground relational atom $M$, each time point $t>\varrho^+$ $\mathfrak{R}_{\alpha},t\models M$ implies $\mathfrak{R}_{\alpha}',t \models M$. 

    In the base case, the implication holds trivially because for each $t>\varrho^+$, if $\mathfrak{R}_0,t \models M$, then $\mathfrak{I}_0^p,t\models M$.

    For the induction step, consider $\alpha+1$ for an ordinal $\alpha$. 
    For a time point $t>\varrho^+$ and a ground relational atom $M$, we only need to consider when $\mathfrak{R}_\alpha,t\not\models M$ and $\mathfrak{R}_{\alpha+1},t\models M$.
    If $\mathfrak{I}^p_{\alpha+1},t\models M$, then the implication holds trivially.
    If $I_{\Pi,\mathfrak{L}_{\Pi,E,E^-}}(\mathfrak{R}_\alpha\cup\mathfrak{I}_{\alpha+1}^p)$,
    then there must be a ground rule instance $r'$ of a rule $r\in\Pi$, a dataset $\mathcal{D}$ and a time point $t'$ such that $\mathfrak{L}_{\Pi,E,E^-}\cup\mathfrak{R}_\alpha\cup\mathfrak{I}_\alpha^p \models \mathcal{D}$, $\mathfrak{I}_{\mathcal{D}}\cap (\mathfrak{I}_{\alpha}^p\cup \mathfrak{R}_{\alpha})\neq\mathfrak{I}_{\emptyset}$, and $r'$ minimally derives $f$ from $\mathcal{D}$ at $t'$.

    Let the sum of every right endpoint of every interval mentioned in $head(r')$ be $m_1$, and the sum of right every endpoint mentioned in $body(r')$ be $m_2$. 
    Then by definition of $\mathsf{depth}(\Pi)$, $m_1+m_2 \leq \mathsf{depth}(\Pi)$.
    
     Because $r'$ derives $f$ at $t'$, we know that $t'\in [t-m_1,t+m_1]$ and $\forall M'@\varrho'\in\mathcal{D}$, $\varrho \subseteq [t'-m_2,t'+m_2]$.
     Since $m_1+m_2 \leq \mathsf{depth}(\Pi)$, and by the minimality of $\mathcal{D}$, every interval mentioned in $\mathcal{D}$ must be in $[t-\mathsf{depth}(\Pi),t+\mathsf{depth}(\Pi)]$. 
     And because $t>\varrho^+$ and $\varrho^+-\varrho^- = \mathsf{depth}(\Pi)$, $[t-\mathsf{depth}(\Pi),t+\mathsf{depth}(\Pi)]\subseteq [\varrho^-,t+\mathsf{depth}(\Pi)]$, and by the induction hypothesis, $\mathfrak{L}_{\Pi,E,E^-}\cup\mathfrak{R}'_\alpha\cup\mathfrak{I}_\alpha^p \models \mathcal{D}$,
     $\mathfrak{I}_{\mathcal{D}}\cap (\mathfrak{I}_{\alpha}^p\cup \mathfrak{R}_{\alpha})\subseteq (\mathfrak{I}_{\alpha}^p\cup \mathfrak{R}'_{\alpha})$, and
     $(\mathfrak{I}_{\alpha}^p\cup \mathfrak{R}'_{\alpha})\cap\mathfrak{I}_{\mathcal{D}}\neq \mathfrak{I}_{\emptyset}$, by which we have $\mathfrak{R}_{\alpha+1}'\models f$.

     For a limit ordinal $\alpha$, the implication holds trivially because $\mathfrak{R}_\alpha^{(')} =\bigcup_{\beta<\alpha}\mathfrak{R}^{(')}_\beta$.
\end{proof}

Given two interpretations $\mathfrak{I}_b$ and $\mathfrak{I}_0$,
successive applications of $I_{\Pi,\mathfrak{I}_{b}}$ to $\mathfrak{I}_0$ yield a transfinite sequence of interpretations $I_{\Pi,\mathfrak{I}_b}^{0}(\mathfrak{I}_0),I_{\Pi,\mathfrak{I}_b}^1(\mathfrak{I}_0),\dots$ such that 
\begin{enumerate*}[label=(\roman*)]
    \item $I_{\Pi,\mathfrak{I}_b}^{0}(\mathfrak{I}_0) = \mathfrak{I}_{0}$,
    \item $I_{\Pi,\mathfrak{I}_b}^{\alpha+1}(\mathfrak{I}_0) = I_{\Pi,\mathfrak{I}_b}(I_{\Pi,\mathfrak{I}_b}^{\alpha{}}(\mathfrak{I}_0))$, for $\alpha$ a successor ordinal, and
    \item $I_{\Pi,\mathfrak{I}_b}^{\beta}(\mathfrak{I}_0) = \bigcup_{\alpha<\beta}I_{\Pi,\mathfrak{I}_b}^{\alpha}(\mathfrak{I}_0)$, for $\beta$ a limit ordinal.
\end{enumerate*}

We say an interpretation $\mathfrak{I}$ \textbf{satisfies with insertion} a ground rule $r'$ (during $\varrho$) regarding $\mathfrak{I}_b$, if for each time point $t$ (in $\varrho$), $\mathfrak{I}_{b}\cup\mathfrak{I},t\models body(r')$ and $\mathfrak{I}_{b}\not\models body(r')$ implies $\mathfrak{I},t\models head(r')$.

Moreover, we say that interpretation $\mathfrak{I}$ \textbf{satisfies with insertion} a program $\Pi$ (during $\varrho$) regarding $\mathfrak{I}_b$, if satisfies every ground instance of every rule in $\Pi$ (during $\varrho$), and we say that $\mathfrak{I}$ is an \textbf{insertion model} of $\Pi$ (during $\varrho$). 
Interpretation is an insertion model of $\Pi$, $\mathfrak{I}_b$ and $\mathfrak{I}_0$ if it contains $\mathfrak{I}_0$ and is an insertion model of $\Pi$ regarding $\mathfrak{I}_{b}$.

By a proof analogous to that of Proposition~\ref{proposition:overdeletionruler},
we have 
\begin{proposition}\label{proposition:insertionruler}
    For each $\alpha\leq\omega_1,I^{\alpha}_{\Pi,\mathfrak{C}_{\Pi,E\setminus E^-}}(\mathfrak{I}_{E^+})$ is a $(\Pi,E\cup E^+)$-interpretation.
\end{proposition}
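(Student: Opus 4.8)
The plan is to follow the blueprint of Proposition~\ref{proposition:overdeletionruler} (and its close cousin Proposition~\ref{proposition:rederivationruler}), proving the claim by transfinite induction on $\alpha\le\omega_1$, with $\mathfrak{I}_b=\mathfrak{C}_{\Pi,E\setminus E^-}$ and $\mathfrak{I}_0=\mathfrak{I}_{E^+}$. Before the induction I would record two auxiliary facts. First, the union of two $(\Pi,E\cup E^+)$-interpretations is again a $(\Pi,E\cup E^+)$-interpretation, since for a ground relational atom $M$ we have $(\mathfrak{J}_1\cup\mathfrak{J}_2),t\models M$ iff one of the two already satisfies $M$ at $t$, and each extends $M$ over the whole $(\Pi,E\cup E^+)$-interval containing $t$. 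Second, $\mathfrak{I}_b$ is itself a $(\Pi,E\cup E^+)$-interpretation: it is the canonical model of $(\Pi,E\setminus E^-)$ and hence a $(\Pi,E\setminus E^-)$-interpretation, and because every endpoint of $E\setminus E^-$ is an endpoint of $E\cup E^+$, the $(\Pi,E\setminus E^-)$-ruler is contained in the $(\Pi,E\cup E^+)$-ruler; a finer ruler only shrinks intervals, so satisfaction of $M$ over a coarse $(\Pi,E\setminus E^-)$-interval implies satisfaction over the enclosed $(\Pi,E\cup E^+)$-interval.

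For the base case $\alpha=0$ I would note that $I^0_{\Pi,\mathfrak{I}_b}(\mathfrak{I}_{E^+})=\mathfrak{I}_{E^+}$, and that every endpoint of $E^+$ lies on the $(\Pi,E\cup E^+)$-ruler by the very construction of the ruler; hence each fact of $\mathfrak{I}_{E^+}$ already holds over its enclosing $(\Pi,E\cup E^+)$-interval. The limit case is immediate from the first auxiliary fact, as an increasing union of $(\Pi,E\cup E^+)$-interpretations is again one.

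The successor case is the heart of the argument, and I would transcribe the overdeletion proof almost verbatim, replacing $\mathfrak{C}_{\Pi,E}$ by $\mathfrak{I}_b\cup I^\alpha_{\Pi,\mathfrak{I}_b}(\mathfrak{I}_{E^+})$ and the set of freshly derived facts $D^\alpha_{\Pi,E}(\mathfrak{I}_{E^-})$ by $I^\alpha_{\Pi,\mathfrak{I}_b}(\mathfrak{I}_{E^+})$. Given $f=M@t$ with $I^{\alpha+1}\models f$ but $I^\alpha\not\models f$, the case where $t$ lies on the ruler is trivial; otherwise the definition of $I_{\Pi,\mathfrak{I}_b}$ supplies a ground instance $r'$, a dataset $\mathcal{D}$, and a time point $t'$ at which $r'$ minimally derives $f$, with $\mathfrak{I}_b\cup I^\alpha\models\mathcal{D}$ and $I^\alpha\cap\mathfrak{I}_\mathcal{D}\neq\mathfrak{I}_{\emptyset}$. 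Letting $\varrho'$ be the $(\Pi,E\cup E^+)$-interval containing $t'$, the fact that $\mathfrak{I}_b\cup I^\alpha$ is a $(\Pi,E\cup E^+)$-interpretation lets me invoke Proposition~\ref{proposition:pi,dinterpretationatomsat} to get $body(r')$ satisfied throughout $\varrho'$, and Proposition~\ref{proposition:pi,dinterpretationatomnotsat} applied to $\mathfrak{I}_\mathcal{D}-I^\alpha_{\Pi,\mathfrak{I}_b}(\mathfrak{I}_{E^+})$ to show the body fails throughout $\varrho'$ once the inserted facts are withdrawn; thus at every $t''\in\varrho'$ a shifted minimal derivation of $M$ using a fact from $I^\alpha$ persists. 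Applying Proposition~\ref{proposition:headexpandsion} to locate the head interval and Proposition~\ref{proposition:timepointsdiv(pi)apartcontainedbyintervalsofequallength} to match interval lengths, I would split into the punctual ($t_1=t_2$) and proper ($t_1\neq t_2$) head cases exactly as in the overdeletion proof, concluding $I^{\alpha+1}\models M@\varrho$ for the $(\Pi,E\cup E^+)$-interval $\varrho$ containing $t$.

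The only genuinely new point relative to Proposition~\ref{proposition:overdeletionruler} is the preliminary observation that $\mathfrak{C}_{\Pi,E\setminus E^-}$ must be read as a $(\Pi,E\cup E^+)$-interpretation rather than a $(\Pi,E)$-interpretation, so that the interval-propagation lemmas apply to the union $\mathfrak{I}_b\cup I^\alpha$; this is precisely why the statement is phrased with the ruler of $E\cup E^+$. I expect this ruler-containment bookkeeping to be the main (though mild) obstacle, with everything downstream being a faithful copy of the already-established overdeletion and rederivation arguments.
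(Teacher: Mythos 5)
Your proposal is correct and follows essentially the same route as the paper, which itself only states that the proof is analogous to that of Proposition~\ref{proposition:overdeletionruler}. The extra bookkeeping you supply (closure of $(\Pi,E\cup E^+)$-interpretations under union, and the observation that $\mathfrak{C}_{\Pi,E\setminus E^-}$ is a $(\Pi,E\cup E^+)$-interpretation because the $(\Pi,E\setminus E^-)$-ruler is contained in the $(\Pi,E\cup E^+)$-ruler) is exactly the detail needed to make the "analogous" argument go through.
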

Furthermore, we claim that 
\begin{proposition}\label{proposition:mininsertmodeldefbyintertop}
    $I^{\omega_1}_{\Pi,\mathfrak{I}_b}(\mathfrak{I}_0)$ is the least insertion model of $\Pi$, $\mathfrak{I}_b$ and $\mathfrak{I}_0$.
\end{proposition}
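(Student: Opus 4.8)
The plan is to show that $\mathfrak{I}^{\ast}:=I^{\omega_1}_{\Pi,\mathfrak{I}_b}(\mathfrak{I}_0)$ is the least insertion model of $\Pi,\mathfrak{I}_b,\mathfrak{I}_0$ by establishing two inclusions: that $\mathfrak{I}^{\ast}$ is itself an insertion model containing $\mathfrak{I}_0$ (closure), and that $\mathfrak{I}^{\ast}\subseteq\mathfrak{I}$ for every insertion model $\mathfrak{I}$ with $\mathfrak{I}_0\subseteq\mathfrak{I}$ (minimality). Throughout I would use that $I_{\Pi,\mathfrak{I}_b}$ is inflationary (so $\mathfrak{I}_2\subseteq I_{\Pi,\mathfrak{I}_b}(\mathfrak{I}_2)$) and monotone, whence the sequence $I^{\alpha}_{\Pi,\mathfrak{I}_b}(\mathfrak{I}_0)$ is nondecreasing in $\alpha$, together with the now-familiar observation that whenever a ground rule $r'$ fires at a time point $t$, the witnessing minimal dataset $\mathcal{D}$ is finite and all its intervals lie within $[t-\mathsf{depth}(\Pi),t+\mathsf{depth}(\Pi)]$.

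For closure I would first argue that $\mathfrak{I}^{\ast}$ is a fixpoint, i.e. $I_{\Pi,\mathfrak{I}_b}(\mathfrak{I}^{\ast})=\mathfrak{I}^{\ast}$. The inclusion $\supseteq$ is inflationarity; for $\subseteq$, if $f\in I_{\Pi,\mathfrak{I}_b}(\mathfrak{I}^{\ast})$ is witnessed by $r'$ and a finite minimal $\mathcal{D}$ with $\mathfrak{I}_b\cup\mathfrak{I}^{\ast}\models\mathcal{D}$ and $\mathfrak{I}^{\ast}\cap\mathfrak{I}_{\mathcal{D}}\neq\mathfrak{I}_{\emptyset}$, I would invoke that the $I^{\alpha}$ are $(\Pi,E\cup E^+)$-interpretations (Proposition~\ref{proposition:insertionruler}, via Propositions~\ref{proposition:pi,dinterpretationatomsat} and~\ref{proposition:pi,dinterpretationatomnotsat}), so each interval fact of $\mathcal{D}$ is determined by its values on the finitely many $(\Pi,E\cup E^+)$-intervals it meets; picking one representative point per such interval gives finitely many points, each entering the increasing chain at some ordinal below $\omega_1$, whose maximum is again below $\omega_1$. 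Hence there is $\alpha<\omega_1$ with $\mathfrak{I}_b\cup I^{\alpha}\models\mathcal{D}$ and $I^{\alpha}\cap\mathfrak{I}_{\mathcal{D}}\neq\mathfrak{I}_{\emptyset}$, so $f\in I^{\alpha+1}\subseteq\mathfrak{I}^{\ast}$. Granting the fixpoint property, I would then check the insertion-model clause: if $\mathfrak{I}_b\cup\mathfrak{I}^{\ast},t\models body(r')$ but $\mathfrak{I}_b,t\not\models body(r')$, then a least body-satisfying $\mathfrak{J}\subseteq\mathfrak{I}_b\cup\mathfrak{I}^{\ast}$ at $t$ has $\mathcal{D}:=\mathsf{rep}(\mathfrak{J})$ unsatisfiable by $\mathfrak{I}_b$ alone, so $\mathfrak{I}^{\ast}\cap\mathfrak{I}_{\mathcal{D}}\neq\mathfrak{I}_{\emptyset}$, the operator fires, and the fixpoint property gives $\mathfrak{I}^{\ast},t\models head(r')$. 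Containment $\mathfrak{I}_0\subseteq\mathfrak{I}^{\ast}$ is immediate since $I^0=\mathfrak{I}_0$.

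For minimality, let $\mathfrak{I}$ be any insertion model with $\mathfrak{I}_0\subseteq\mathfrak{I}$, and prove $I^{\alpha}\subseteq\mathfrak{I}$ by transfinite induction on $\alpha\le\omega_1$. The base case is $I^0=\mathfrak{I}_0\subseteq\mathfrak{I}$ and limit stages are unions. For a successor $\alpha+1$, any $f\in I^{\alpha+1}\setminus I^{\alpha}$ is witnessed by $r'$ and a minimal $\mathcal{D}$ with $\mathfrak{I}_b\cup I^{\alpha}\models\mathcal{D}$ and $I^{\alpha}\cap\mathfrak{I}_{\mathcal{D}}\neq\mathfrak{I}_{\emptyset}$; by the induction hypothesis $\mathfrak{I}_b\cup\mathfrak{I}\models\mathcal{D}$, so $\mathfrak{I}_b\cup\mathfrak{I},t\models body(r')$ at the firing point $t$, and minimality of $\mathcal{D}$ promotes the inserted body fact $f'\in\mathfrak{I}\cap\mathfrak{I}_{\mathcal{D}}$ to an essential one, i.e. $\mathfrak{I}_{\mathcal{D}\setminus\{f'\}},t\not\models body(r')$. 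The insertion-model clause for $r'$ at $t$ then yields $\mathfrak{I},t\models head(r')$, hence $\mathfrak{I}\models f$. Combining the two inclusions establishes that $\mathfrak{I}^{\ast}$ is the least insertion model.

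The main obstacle is this last step, where the operator's firing condition — that the minimal witnessing dataset $\mathcal{D}$ contains a newly inserted fact, $I^{\alpha}\cap\mathfrak{I}_{\mathcal{D}}\neq\mathfrak{I}_{\emptyset}$ — must be matched exactly to the head-requirement in the insertion-model definition. Since bodies may mention diamond and since-operators, a rule admits several distinct minimal derivations, so an inserted fact need not make the body newly true relative to $\mathfrak{I}_b$ in a naive pointwise sense; the reconciliation must instead pass through the minimality of $\mathcal{D}$, extracting an essential inserted body fact whose removal falsifies the body, precisely mirroring the essential-fact formulation used for over-deletion in the definition preceding Proposition~\ref{proposition:deletionmodelsaremaxmodels}. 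A secondary technical point is justifying that the closure ordinal is $\omega_1$, which I would handle exactly as in the closure argument above, using the $(\Pi,E\cup E^+)$-interpretation structure of the $I^{\alpha}$ to reduce each interval fact to finitely many representative time points.
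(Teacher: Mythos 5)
Your proposal follows essentially the same route as the paper's proof: closure of $I^{\omega_1}_{\Pi,\mathfrak{I}_b}(\mathfrak{I}_0)$ under the operator via the finiteness of the minimal witnessing dataset and the $(\Pi,E\cup E^+)$-interpretation structure (Proposition~\ref{proposition:insertionruler}), followed by transfinite induction to show containment in every insertion model. You are in fact somewhat more explicit than the paper about the two delicate points---deriving the insertion-model clause from the fixpoint property, and reconciling the operator's firing condition with the $\mathfrak{I}_b,t\not\models body(r')$ requirement of the insertion-model definition in the minimality step---which the paper's proof passes over silently.
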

\begin{proof}
    First, we show that $I^{\omega_1}_{\Pi,\mathfrak{I}_b}(\mathfrak{I}_0)$ is an insertion model of $\Pi$, $\mathfrak{I}_b$ and $\mathfrak{I}_0$.

    Because $I^{\omega_1}_{\Pi,\mathfrak{I}_b}(\mathfrak{I}_0)$ contains $\mathfrak{I}_0$, we only need to show that $I^{\omega_1}_{\Pi,\mathfrak{I}_b}(\mathfrak{I}_0)$ is an insertion model of $\Pi$ regarding $\mathfrak{I}_{b}$.

    For each fact $f$, ground rule instance $r'$ of a rule $r\in\Pi$, a dataset $\mathcal{D}$, a time point $t$ such that $r'$ minimally derives $f$ from $\mathcal{D}$ at $t$, $\mathfrak{I}_b\cup I^{\omega_1}_{\Pi,\mathfrak{I}_b}(\mathfrak{I}_0)\models \mathcal{D}$, $I^{\omega_1}_{\Pi,\mathfrak{I}_b}(\mathfrak{I}_0)\cap\mathfrak{I}_{\mathcal{D}}\neq \mathfrak{I}_\emptyset$.
    Let $\mathsf{rep}(I^{\omega_1}_{\Pi,\mathfrak{I}_b}(\mathfrak{I}_0)\cap\mathfrak{I}_{\mathcal{D}}) = \{f_1,\dots,f_m\}$ for some $m$,
    For each fact $f_i$ with $i = 1,\dots,m$, by Proposition~\ref{proposition:insertionruler} there must be $\alpha_i$ such that $I^{\alpha}_{\Pi,\mathfrak{I}_b}(\mathfrak{I}_0)\models f_i$.
    Let $\alpha = \max(\{\alpha_i\mid i\in\{1,\dots,m\}\})$.
    Then $\mathfrak{I}_b\cup I^{\alpha}_{\Pi,\mathfrak{I}_b}(\mathfrak{I}_0)\models \mathcal{D}$ and $I^{\alpha}_{\Pi,\mathfrak{I}_b}(\mathfrak{I}_0)\cap \mathfrak{I}_{\mathcal{D}}\neq \mathfrak{I}_{\emptyset}$. 
    Therefore, $I^{\alpha+1}_{\Pi,\mathfrak{I}_b}(\mathfrak{I}_0)\models f$, and $I^{\omega_1}_{\Pi,\mathfrak{I}_b}(\mathfrak{I}_0)\models f$.

    Next, we show inductively that if for each ordinal $\alpha\leq \omega_1$ and each fact $f$, if $I^{\alpha}_{\Pi,\mathfrak{I}_b}(\mathfrak{I}_0)\models f$, then the least insertion model of $\Pi$, $\mathfrak{I}_b$ and $\mathfrak{I}_0$ also satisfies $f$.
    Let $\mathfrak{I}$ be the least insertion model of $\Pi$, $\mathfrak{I}_b$ and $\mathfrak{I}_0$.

    The base case is trivial since $\mathfrak{I}$ contains $\mathfrak{I}_0$

    For the induction step, consider $f$ such that $I^{\alpha}_{\Pi,\mathfrak{I}_b}(\mathfrak{I}_0)\not\models f$ and $I^{\alpha+1}_{\Pi,\mathfrak{I}_b}(\mathfrak{I}_0)\models f$. 
    Then there must be a ground instance $r'$ of a rule $r\in\Pi$, a dataset $\mathcal{D}$ and a time point $t$ such that $\mathfrak{I}_b\cup I^{\alpha}_{\Pi,\mathfrak{I}_b}(\mathfrak{I}_0)\models f$, $I^{\alpha}_{\Pi,\mathfrak{I}_b}(\mathfrak{I}_0)\cap \mathfrak{I}_{\mathcal{D}}\neq \mathfrak{I}_{\emptyset}$.
    By the induction hypothesis, $I^{\alpha}_{\Pi,\mathfrak{I}_b}(\mathfrak{I}_0)\subseteq \mathfrak{I}$, and therefore, $\mathfrak{I}\cup \mathfrak{I}_b\models \mathcal{D}$, $\mathfrak{I}\cup \mathfrak{I}_b,t\models body(r')$, $\mathfrak{I}_b \not \models \mathcal{D}$ $\mathfrak{I}_b,t \not \models body(r')$. 
    By the definition of $\mathfrak{I}$, we have $\mathfrak{I},t\models head(r)'$ and $\mathfrak{I}\models f$.
\end{proof}

We show that inserting $T_\Pi(\mathfrak{L}_{\Pi,E,E^-})$ all at once is the same as inserting it over infinite many iterations.
\begin{proposition}\label{proposition:insertsameasrederive}
    Let $\mathfrak{I}_0 = \mathfrak{M}_{\Pi,E,E^-}^-\cap (\mathfrak{I}_{E\setminus E^-}\cup T_\Pi(\mathfrak{L}_{\Pi,E,E^-}))$, then
    $I^{\omega_1}_{\Pi,\mathfrak{L}_{\Pi,E,E^-}}(\mathfrak{I}_0) = \mathfrak{R}_{\omega_1}$.
\end{proposition}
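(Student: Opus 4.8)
The plan is to show that both $\mathfrak{R}_{\omega_1}$ and $I^{\omega_1}_{\Pi,\mathfrak{L}_{\Pi,E,E^-}}(\mathfrak{I}_0)$ are the least insertion model of $\Pi$, $\mathfrak{L}_{\Pi,E,E^-}$ and $\mathfrak{I}_0$, and then to conclude equality by uniqueness. By Proposition~\ref{proposition:mininsertmodeldefbyintertop} the right-hand side already is this least model, so it suffices to prove that $\mathfrak{R}_{\omega_1}$ admits the same characterisation. Writing $I^\alpha$ as shorthand for $I^\alpha_{\Pi,\mathfrak{L}_{\Pi,E,E^-}}(\mathfrak{I}_0)$, I would first record two facts about the single-step operator $I_{\Pi,\mathfrak{L}_{\Pi,E,E^-}}$: it is monotone in its argument, since enlarging $\mathfrak{I}_2$ only makes the side conditions $\mathfrak{I}_b\cup\mathfrak{I}_2\models\mathcal{D}$ and $\mathfrak{I}_2\cap\mathfrak{I}_{\mathcal{D}}\neq\mathfrak{I}_\emptyset$ easier to meet; and $I^{\omega_1}$ is closed, i.e. a fixpoint, which is just a restatement of Proposition~\ref{proposition:mininsertmodeldefbyintertop}. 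I also note the structural fact that each periodic fill $\mathfrak{I}^p_\alpha$ consists of overdeleted facts that are one-step consequences of $\mathfrak{L}_{\Pi,E,E^-}$ restricted to the window $[\varrho_\mathsf{left}^--\alpha|\varrho_\mathsf{left}|,\varrho_\mathsf{right}^++\alpha|\varrho_\mathsf{right}|]$, so that $\mathfrak{I}^p_\alpha\subseteq \mathfrak{M}^-_{\Pi,E,E^-}\cap T_\Pi(\mathfrak{L}_{\Pi,E,E^-})\subseteq\mathfrak{I}_0$ for every $\alpha$, while $\bigcup_{\alpha}\mathfrak{I}^p_\alpha=\mathfrak{M}^-_{\Pi,E,E^-}\cap T_\Pi(\mathfrak{L}_{\Pi,E,E^-})$.

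For the containment $\mathfrak{R}_{\omega_1}\subseteq I^{\omega_1}$, I would prove by transfinite induction that $\mathfrak{R}_\alpha\subseteq I^{\omega_1}$ for every $\alpha$. The base case uses $\mathfrak{R}_0\subseteq\mathfrak{I}_0=I^0$. In the successor step, the induction hypothesis gives $\mathfrak{R}_\alpha\subseteq I^{\omega_1}$ and the structural fact gives $\mathfrak{I}^p_{\alpha+1}\subseteq\mathfrak{I}_0\subseteq I^{\omega_1}$, hence $\mathfrak{R}_\alpha\cup\mathfrak{I}^p_{\alpha+1}\subseteq I^{\omega_1}$; monotonicity together with closedness of $I^{\omega_1}$ then yields $I_{\Pi,\mathfrak{L}_{\Pi,E,E^-}}(\mathfrak{R}_\alpha\cup\mathfrak{I}^p_{\alpha+1})\subseteq I_{\Pi,\mathfrak{L}_{\Pi,E,E^-}}(I^{\omega_1})=I^{\omega_1}$, and adjoining the fill keeps us inside $I^{\omega_1}$, so $\mathfrak{R}_{\alpha+1}\subseteq I^{\omega_1}$. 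Limit steps are immediate from the definition of the union.

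For the reverse containment I would establish that $\mathfrak{R}_{\omega_1}$ is itself an insertion model of $\Pi$, $\mathfrak{L}_{\Pi,E,E^-}$ and $\mathfrak{I}_0$, whence Proposition~\ref{proposition:mininsertmodeldefbyintertop} forces $I^{\omega_1}\subseteq\mathfrak{R}_{\omega_1}$. This splits into (i) $\mathfrak{I}_0\subseteq\mathfrak{R}_{\omega_1}$ and (ii) closedness of $\mathfrak{R}_{\omega_1}$ under $I_{\Pi,\mathfrak{L}_{\Pi,E,E^-}}$. Part (i) follows from $\mathfrak{M}^-_{\Pi,E,E^-}\cap\mathfrak{I}_{E\setminus E^-}\subseteq\mathfrak{R}_0$ together with $\mathfrak{M}^-_{\Pi,E,E^-}\cap T_\Pi(\mathfrak{L}_{\Pi,E,E^-})=\bigcup_\alpha\mathfrak{I}^p_\alpha$ and $\mathfrak{I}^p_\alpha\subseteq\mathfrak{R}_\alpha$. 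For part (ii), given a rule firing enabled at the limit by some minimal finite $\mathcal{D}$ with $\mathfrak{L}_{\Pi,E,E^-}\cup\mathfrak{R}_{\omega_1}\models\mathcal{D}$ and $\mathfrak{R}_{\omega_1}\cap\mathfrak{I}_\mathcal{D}\neq\mathfrak{I}_\emptyset$, the finiteness of $\mathcal{D}$ and Proposition~\ref{proposition:rederivationruler} (each relevant punctual component over a $(\Pi,E\cup E^+)$-interval stabilises at some countable stage) let me select an $\alpha<\omega_1$ with $\mathfrak{L}_{\Pi,E,E^-}\cup\mathfrak{R}_\alpha\models\mathcal{D}$ and $\mathfrak{R}_\alpha\cap\mathfrak{I}_\mathcal{D}\neq\mathfrak{I}_\emptyset$; the head fact then lies in $I_{\Pi,\mathfrak{L}_{\Pi,E,E^-}}(\mathfrak{R}_\alpha)\subseteq I_{\Pi,\mathfrak{L}_{\Pi,E,E^-}}(\mathfrak{R}_\alpha\cup\mathfrak{I}^p_{\alpha+1})\subseteq\mathfrak{R}_{\alpha+1}\subseteq\mathfrak{R}_{\omega_1}$.

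I expect the main obstacle to be part (ii), the closedness of $\mathfrak{R}_{\omega_1}$: one must argue carefully that the staged injection of the fills never leaves a consequence un-derived, which relies on the stabilisation-at-a-countable-stage property of $(\Pi,E\cup E^+)$-interpretations and on the observation that any fill fact used as a witness for the delta condition $\mathfrak{R}_\alpha\cap\mathfrak{I}_\mathcal{D}\neq\mathfrak{I}_\emptyset$ is already available at the stage it is needed. A secondary subtlety is confirming that the fills really carry the one-step survivor consequences $\mathfrak{M}^-_{\Pi,E,E^-}\cap T_\Pi(\mathfrak{L}_{\Pi,E,E^-})$, so that their accumulated union reconstitutes exactly the $T_\Pi(\mathfrak{L}_{\Pi,E,E^-})$ component front-loaded into $\mathfrak{I}_0$; this matching of the two seedings is what makes the front-loaded and the staged constructions converge to the same limit.
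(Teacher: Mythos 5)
Your proposal is correct, but it reorganises the argument around the least-fixpoint characterisation rather than following the paper's two symmetric fact-level inductions. The paper proves both containments by transfinite induction on stages, showing for each bounded fact $f$ that $\mathfrak{R}_\alpha\models f$ implies $I^\beta_{\Pi,\mathfrak{L}_{\Pi,E,E^-}}(\mathfrak{I}_0)\models f$ for some $\beta<\omega_1$, and conversely. Your direction $\mathfrak{R}_{\omega_1}\subseteq I^{\omega_1}_{\Pi,\mathfrak{L}_{\Pi,E,E^-}}(\mathfrak{I}_0)$ has the same content, packaged more cleanly as a stage-wise containment via monotonicity of the operator plus the fixpoint property of the limit (both implicit in Proposition~\ref{proposition:mininsertmodeldefbyintertop}). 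The reverse direction is where you genuinely diverge: instead of mapping each stage $I^\alpha$ into some $\mathfrak{R}_\beta$, you verify that $\mathfrak{R}_{\omega_1}$ contains $\mathfrak{I}_0$ and is closed under $I_{\Pi,\mathfrak{L}_{\Pi,E,E^-}}$ and then invoke leastness. This buys a shorter argument and isolates explicitly the two facts the paper's induction uses implicitly: that $\bigcup_\alpha\mathfrak{I}^p_\alpha$ reconstitutes exactly the $\mathfrak{M}^-_{\Pi,E,E^-}\cap T_\Pi(\mathfrak{L}_{\Pi,E,E^-})$ component front-loaded into $\mathfrak{I}_0$, and that finiteness of the minimal support $\mathcal{D}$ together with Proposition~\ref{proposition:rederivationruler} lets one descend from the limit $\mathfrak{R}_{\omega_1}$ to a single countable stage. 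The cost is that all of the compactness reasoning is concentrated in your closedness step (ii), which you correctly flag as the delicate point; the paper spreads the identical reasoning across its inductive steps. One presentational caveat: like the paper's own proof, you read $\mathfrak{I}^p_\alpha$ as involving $T_\Pi(\mathfrak{L}_{\Pi,E,E^-})$ rather than the literal (and vacuous) $\mathfrak{M}^-_{\Pi,E,E^-}\cap\mathfrak{L}_{\Pi,E,E^-}\mid_{[\cdot]}$ appearing in the displayed definition; that reading is clearly the intended one, so no gap arises from it.
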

\begin{proof}
    First, we show inductively that for each ordinal $\alpha$ and each fact bounded $f$, if $\mathfrak{R}_{\alpha}\models f$, then there must be $\beta<\omega_1$ such that $I^{\beta}_{\Pi,\mathfrak{L}_{\Pi,E,E^-}}(\mathfrak{I}_0)\models f$

    In the base case, the statement is trivially true because $\mathfrak{I}_0^p \subseteq T_\Pi(\mathfrak{L}_{\Pi,E,E^-})$.

    For the induction step, consider $\alpha+1$ for an ordinal $\alpha$. 
    We only need to consider each fact $f$ such that $\mathfrak{R}_{\alpha}\not\models f$ and $\mathfrak{R}_{\alpha+1}\models f$. 
    If $\mathfrak{I}_\alpha^p\models f$, then $I^{0}_{\Pi,\mathfrak{L}_{\Pi,E,E^-}}(\mathfrak{I}_0)\models f$.

    If $I_{\Pi,\mathfrak{L}_{\Pi,E,E^-}}(\mathfrak{R}_{\alpha}\cup \mathfrak{I}^p_{\alpha+1})$, then there must be a ground rule instance $r'$ of a rule $r\in\Pi$, a dataset $\mathcal{D}$ and a time point $t'$ such that $\mathfrak{L}_{\Pi,E,E^-}\cup\mathfrak{R}_\alpha\cup\mathfrak{I}_\alpha^p \models \mathcal{D}$, $\mathfrak{I}_{\mathcal{D}}\cap (\mathfrak{I}_{\alpha}^p\cup \mathfrak{R}_{\alpha})\neq\mathfrak{I}_{\emptyset}$, and $r'$ minimally derives $f$ from $\mathcal{D}$ at $t'$.
    Then by the induction hypothesis, there is some $\beta<\omega_1$ such that $\mathfrak{L}_{\Pi,E,E^-}\cup I^{\beta}_{\Pi,\mathfrak{L}_{\Pi,E,E^-}}(\mathfrak{I}_0)\models \mathcal{D}$, and $I^{\beta}_{\Pi,\mathfrak{L}_{\Pi,E,E^-}}(\mathfrak{I}_0)\cap \mathfrak{I}_{\mathcal{D}}\neq \mathfrak{I}_\emptyset$.
    Therefore, $I^{\beta+1}_{\Pi,\mathfrak{L}_{\Pi,E,E^-}}(\mathfrak{I}_0)\models f$.

    Next, we show inductively that for each ordinal $\alpha$ and each fact bounded $f$, if $I^{\alpha}_{\Pi,\mathfrak{L}_{\Pi,E,E^-}}(\mathfrak{I}_0)\models f$, then there must be $\beta<\omega_1$ such that $\mathfrak{R}_{\beta}\models f$

    In the base case the statement holds because either $\mathfrak{M}^-_{\Pi,E,E^-}\cap \mathfrak{I}_{E\setminus E^-}\subseteq \mathfrak{R}_0$, and for each bounded $f=M@\varrho$ such that $T_\Pi(\mathfrak{L}_{\Pi,E,E^-})$, there must be natural number $\beta$ such that $\varrho\subseteq [\varrho_{\mathsf{left}}^--\beta\cdot|\varrho_{\mathsf{left}}|,\varrho_{\mathsf{right}}^++\beta\cdot|\varrho_{\mathsf{right}}|]$, then $\mathfrak{R}_\beta \models f$.

    For the induction step, consider $\alpha+1$ for an ordinal $\alpha$. 
    We only need to consider each fact bounded $f$ such that $I^{\alpha}_{\Pi,\mathfrak{L}_{\Pi,E,E^-}}(\mathfrak{I}_0)\not\models f$ and $I^{\alpha+1}_{\Pi,\mathfrak{L}_{\Pi,E,E^-}}(\mathfrak{I}_0)\models f$. 
    Then there must be a ground rule instance $r'$ of a rule $r\in\Pi$, a dataset $\mathcal{D}$, a time point $t'$ such that $\mathfrak{L}_{\Pi,E,E^-}\cup I^{\alpha}_{\Pi,\mathfrak{L}_{\Pi,E,E^-}}(\mathfrak{I}_0)\models \mathcal{D} $, 
    $I^{\alpha}_{\Pi,\mathfrak{L}_{\Pi,E,E^-}}(\mathfrak{I}_0)\cap\mathfrak{I}_{\mathcal{D}}\neq \mathfrak{I}_{\emptyset}$, and $r'$ minimally derives $f$ from $\mathcal{D}$.
    Then by the induction hypothesis, there must an $\beta<\omega_1$, such that $\mathfrak{R}_\beta\cup\mathfrak{L}_{\Pi,E,E^-}\models\mathcal{D}$ and $\mathfrak{R}_\beta\cap\mathfrak{I}_{\mathcal{D}}\neq \mathfrak{I}_{\emptyset}$.
    Then, $\mathfrak{R}_{\beta+1}\models f$.
\end{proof}
Each pair of intervals $[\varrho_1^-,\varrho_2^-)$ and $(\varrho^+_3,\varrho_4^+]$ for each quadruple $\varrho_1,\varrho_2,\varrho_3,\varrho_4$ as above, can be referred to as the \textbf{periods} of $\mathfrak{R}_k$. 
Alternatively, we can denote them as $(\varrho_{\mathit{left}},\varrho_{\mathit{right}})$
\begin{definition}
    Interpretation $\mathfrak{R}_k$ is \textbf{rederivation-saturated} regarding $\Pi,E,E^-$, if there exist $\varrho_1,\varrho_2,\varrho_3$ and $\varrho_4$ of length $2\mathsf{depth}(\Pi)$,
whose endpoints are located on the $(\Pi, E)$-ruler and satisfy
$\varrho_1^+<\varrho_2^+<t_E^-$
and $t_E^+<\varrho^-_3<\varrho_4^-$, and for which the
following properties hold:
\begin{enumerate}
    \item Intervals $([\varrho_1^-,\varrho_2^-),(\varrho_3^+,\varrho_4^+])$ are periods of $\mathfrak{L}_{\Pi,E,E^-}$.
    \item $\mathfrak{R}_k$ is an insertion model of $\Pi,E,E^-$ during $[\varrho_1^- +\mathsf{depth}(\Pi),\varrho_4^+-\mathsf{depth}(\Pi)]$.
    \item $\mathfrak{R}_k\mid_{\varrho_2}$ is a shift of $\mathfrak{R}_k\mid_{\varrho_1}$, 
    \item $\mathfrak{R}_k\mid_{\varrho_4}$ is a shift of $\mathfrak{R}_k\mid_{\varrho_3}$.
\end{enumerate}

\end{definition}
\begin{proposition}
        Interpretation $\mathfrak{R}_k$ is also \textbf{rederivation-saturated} regarding $\Pi,E,E^-$, if there exist $\varrho_1,\varrho_2,\varrho_3$ and $\varrho_4$ of length $2\mathsf{depth}(\Pi)$,
whose endpoints are located on the $(\Pi, E)$-ruler and satisfy
$\varrho_1^+<\varrho_2^+<t_E^-$
and $t_E^+<\varrho^-_3<\varrho_4^-$, and for which the
following properties hold:
\begin{enumerate}
    \item Intervals $([\varrho_1^-,\varrho_2^-),(\varrho_3^+,\varrho_4^+])$ are periods of $\mathfrak{L}_{\Pi,E,E^-}$.
    \item $\mathfrak{R}_k\mid_{[\varrho_1^-,\varrho_4^+]}=\mathfrak{R}_{k+1}\mid_{[\varrho_1^-,\varrho_4^+]}$ 
    \item $\mathfrak{R}_k\mid_{\varrho_2}$ is a shift of $\mathfrak{R}_k\mid_{\varrho_1}$, 
    \item $\mathfrak{R}_k\mid_{\varrho_4}$ is a shift of $\mathfrak{R}_k\mid_{\varrho_3}$.
\end{enumerate}
\end{proposition}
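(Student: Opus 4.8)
The plan is to show that condition~2' (the local fixpoint $\mathfrak{R}_k\mid_{[\varrho_1^-,\varrho_4^+]}=\mathfrak{R}_{k+1}\mid_{[\varrho_1^-,\varrho_4^+]}$), together with conditions~1, 3, and~4, which are identical to those in the definition of rederivation-saturated, implies condition~2 of that definition. Since conditions~1, 3, and~4 are literally shared between the two formulations, the only thing left to establish is that $\mathfrak{R}_k$ is an insertion model of $\Pi,E,E^-$ during $[\varrho_1^-+\mathsf{depth}(\Pi),\varrho_4^+-\mathsf{depth}(\Pi)]$. The argument mirrors the deletion case in Proposition~\ref{proposition:deletion-saturated}, with the over-deletion operator replaced by the insertion operator $I_{\Pi,\mathfrak{L}_{\Pi,E,E^-}}$ used to build the sequence $\mathfrak{R}_\alpha$.

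I would argue by contradiction. Suppose $\mathfrak{R}_k$ fails to satisfy with insertion some ground instance $r'$ of a rule $r\in\Pi$ at a time point $t\in[\varrho_1^-+\mathsf{depth}(\Pi),\varrho_4^+-\mathsf{depth}(\Pi)]$. Unfolding the definition of ``satisfies with insertion'' regarding $\mathfrak{L}_{\Pi,E,E^-}$, this means $\mathfrak{L}_{\Pi,E,E^-}\cup\mathfrak{R}_k, t\models body(r')$, $\mathfrak{L}_{\Pi,E,E^-}, t\not\models body(r')$, yet $\mathfrak{R}_k, t\not\models head(r')$. From the last fact there is a punctual fact $f=M@t'$ derived by $r'$ at $t$ with $\mathfrak{R}_k\not\models f$. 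By the depth bound (exactly as in Proposition~\ref{proposition:deletion-saturated}), the derived time point satisfies $t'\in[t-\mathsf{depth}(\Pi),t+\mathsf{depth}(\Pi)]$, so the choice of $t$ forces $t'\in[\varrho_1^-,\varrho_4^+]$. I would then verify that $f$ is produced by one more application of the operator, i.e.\ $\mathfrak{R}_{k+1}\models f$: using Proposition~\ref{proposition:derivesimpliesminimal}, pick a dataset $\mathcal{D}$ with $\mathfrak{I}_\mathcal{D}\subseteq\mathfrak{L}_{\Pi,E,E^-}\cup\mathfrak{R}_k$ from which $r'$ minimally derives $f$ at $t$, so $\mathfrak{L}_{\Pi,E,E^-}\cup\mathfrak{R}_k\models\mathcal{D}$. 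The seed condition $\mathfrak{R}_k\cap\mathfrak{I}_\mathcal{D}\neq\mathfrak{I}_\emptyset$ then holds, since if it were empty every fact of $\mathcal{D}$ would be satisfied by $\mathfrak{L}_{\Pi,E,E^-}$ alone, forcing $\mathfrak{L}_{\Pi,E,E^-},t\models body(r')$ and contradicting our assumption. Thus $f$ meets all premises of $I_{\Pi,\mathfrak{L}_{\Pi,E,E^-}}$ applied to $\mathfrak{R}_k$, giving $I_{\Pi,\mathfrak{L}_{\Pi,E,E^-}}(\mathfrak{R}_k)\models f$; by monotonicity of the operator in its second argument and $\mathfrak{R}_k\subseteq\mathfrak{R}_k\cup\mathfrak{I}^p_{k+1}$, we get $\mathfrak{R}_{k+1}\models f$. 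Since $t'\in[\varrho_1^-,\varrho_4^+]$ and $\mathfrak{R}_k\not\models f$, this contradicts condition~2', so $\mathfrak{R}_k$ is an insertion model during the inner interval and is therefore rederivation-saturated.

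I expect the main obstacle to be the second step, namely establishing the seed condition $\mathfrak{R}_k\cap\mathfrak{I}_\mathcal{D}\neq\mathfrak{I}_\emptyset$, which is specific to the insertion operator and has no direct counterpart in the deletion proof, and correctly threading the auxiliary interpretation $\mathfrak{I}^p_{k+1}$ through the definition of $\mathfrak{R}_{k+1}$ via monotonicity. Everything else—the depth bound placing $t'$ inside $[\varrho_1^-,\varrho_4^+]$ and the final contradiction with the local fixpoint—transfers essentially verbatim from the deletion argument of Proposition~\ref{proposition:deletion-saturated}.
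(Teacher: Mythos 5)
Your proposal is correct and takes essentially the same route as the paper: the paper's proof of this proposition is literally a one-line appeal to the argument of Proposition~\ref{proposition:deletion-saturated}, and you carry out exactly that analogy — local fixpoint plus the depth bound forces any missing head fact into $[\varrho_1^-,\varrho_4^+]$, contradicting condition~2. Your extra care with the insertion-specific seed condition $\mathfrak{R}_k\cap\mathfrak{I}_{\mathcal{D}}\neq\mathfrak{I}_{\emptyset}$ and the monotonicity step through $\mathfrak{R}_k\cup\mathfrak{I}^p_{k+1}$ fills in details the paper leaves implicit, and both steps check out.
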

\begin{proof}
The proof is analogous to that of Proposition~\ref{proposition:deletion-saturated}.
\end{proof}
\begin{theorem}\label{theo:rederivitionunfold}
    For a rederivation-saturated interpretation $\mathfrak{R}_k$ and a pair of its periods $\varrho_{\mathit{left}},\varrho_{\mathit{right}}$, the $(\varrho_{\mathit{left}},\varrho_{\mathit{right}})$-unfolding $\mathfrak{I}$ of $\mathfrak{R}_k$ coincides with $\mathfrak{R}_{\omega_1}$.
\end{theorem}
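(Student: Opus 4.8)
The plan is to mirror the structure of the proof of Theorem~\ref{theo:deletionunfold}, replacing the over-deletion machinery with the insertion machinery developed for rederivation. The starting observation is that, by Proposition~\ref{proposition:insertsameasrederive}, $\mathfrak{R}_{\omega_1}$ equals $I^{\omega_1}_{\Pi,\mathfrak{L}_{\Pi,E,E^-}}(\mathfrak{I}_0)$ for $\mathfrak{I}_0 = \mathfrak{M}^-_{\Pi,E,E^-}\cap(\mathfrak{I}_{E\setminus E^-}\cup T_\Pi(\mathfrak{L}_{\Pi,E,E^-}))$, and that by Proposition~\ref{proposition:mininsertmodeldefbyintertop} this is precisely the least insertion model of $\Pi$, $\mathfrak{L}_{\Pi,E,E^-}$, and $\mathfrak{I}_0$. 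Thus the two containments $\mathfrak{R}_{\omega_1}\subseteq\mathfrak{I}$ and $\mathfrak{I}\subseteq\mathfrak{R}_{\omega_1}$ can be attacked with the same two techniques used in the deletion case: a least-model argument for the former and a depth-window induction for the latter.

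For $\mathfrak{R}_{\omega_1}\subseteq\mathfrak{I}$, I would show that the unfolding $\mathfrak{I}$ is itself an insertion model of $\Pi$, $\mathfrak{L}_{\Pi,E,E^-}$, and $\mathfrak{I}_0$; minimality of $\mathfrak{R}_{\omega_1}$ then yields the containment. That $\mathfrak{I}$ contains $\mathfrak{I}_0$ follows because $\mathfrak{I}$ agrees with $\mathfrak{R}_k$ on $[\varrho_\mathsf{left}^-,\varrho_\mathsf{right}^+]$ and both $\mathfrak{I}_0$ and $\mathfrak{I}$ repeat with periods $\varrho_\mathsf{left},\varrho_\mathsf{right}$ outside this window, using that these are periods of $\mathfrak{L}_{\Pi,E,E^-}$ by condition~1 of rederivation-saturation. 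That $\mathfrak{I}$ satisfies every rule with insertion regarding $\mathfrak{L}_{\Pi,E,E^-}$ is established first on the central interval $[\varrho_\mathsf{left}^-+\mathsf{depth}(\Pi),\varrho_\mathsf{right}^+-\mathsf{depth}(\Pi)]$ directly from condition~2 of rederivation-saturation, and then extended to an arbitrary time point $t$ outside it by shifting $t$ by an appropriate multiple of $|\varrho_\mathsf{left}|$ or $|\varrho_\mathsf{right}|$ into the central region: since $\mathfrak{I}$, $\mathfrak{L}_{\Pi,E,E^-}$, and $\mathfrak{C}_{\Pi,E}$ are all shifts of themselves on the relevant depth-neighbourhoods, any insertion violation at $t$ would transfer to one in the central region, which is impossible. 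This is the exact analogue of the first half of the proof of Theorem~\ref{theo:deletionunfold}.

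For the reverse containment $\mathfrak{I}\subseteq\mathfrak{R}_{\omega_1}$, I would proceed by induction along consecutive $(\Pi,E)$-ruler points, exactly as in the second half of Theorem~\ref{theo:deletionunfold}, with Lemma~\ref{lemma:rederivedepthrep} playing the role of Lemma~\ref{lemma:depthrep}. The base case is $\mathfrak{I}\mid_{[\varrho_\mathsf{left}^-,\varrho_\mathsf{right}^+]}\subseteq\mathfrak{R}_{\omega_1}$, which holds because $\mathfrak{I}$ agrees with $\mathfrak{R}_k$ there and $\mathfrak{R}_k\subseteq\mathfrak{R}_{\omega_1}$. For the inductive step toward $+\infty$, I would take a window $\varrho_B$ of length $\mathsf{depth}(\Pi)$ ending just before the next ruler point, together with the one-period-shifted copy $\varrho_B'=\varrho_B-|\varrho_\mathsf{right}|$; the induction hypothesis gives that $\mathfrak{R}_{\omega_1}$ already agrees with $\mathfrak{I}$ on $\varrho_B$ and $\varrho_B'$, and since $\mathfrak{I}$, $\mathfrak{L}_{\Pi,E,E^-}$, and $\mathfrak{C}_{\Pi,E}$ are shifts of themselves across one period, Lemma~\ref{lemma:rederivedepthrep} forces the continuation of $\mathfrak{R}_{\omega_1}$ past $\varrho_B$ to be the shift of its continuation past $\varrho_B'$, i.e.\ to coincide with $\mathfrak{I}$ on the new segment. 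The $-\infty$ direction is symmetric.

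The main obstacle I anticipate is the careful bookkeeping in the reverse containment, where Lemma~\ref{lemma:rederivedepthrep} must be applied to a rederivation sequence whose definition simultaneously reads from the (unbounded, periodic) base $\mathfrak{L}_{\Pi,E,E^-}$ and from the growing periodic injections $\mathfrak{I}^p_\alpha$. Unlike the deletion case, where $D_{\Pi,E}$ draws only on $\mathfrak{C}_{\Pi,E}$, here one must verify that restarting the sequence from a depth-window representation $E'$ of $\mathfrak{R}_{\omega_1}\mid_\varrho$ reproduces the same facts beyond $\varrho$ (which is exactly the content of Lemma~\ref{lemma:rederivedepthrep}), and then align the shifted windows so that both the base and the injected portions match up period by period. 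Ensuring that the shift of $\mathfrak{L}_{\Pi,E,E^-}$ and the shift of the injections are mutually compatible on each depth-window is the delicate point; everything else is a routine transcription of the deletion argument.
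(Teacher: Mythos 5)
Your proposal is correct and follows essentially the same route as the paper's own proof: both identify $\mathfrak{R}_{\omega_1}$ with $I^{\omega_1}_{\Pi,\mathfrak{L}_{\Pi,E,E^-}}(\mathfrak{I}_0)$ via Proposition~\ref{proposition:insertsameasrederive}, prove $\mathfrak{R}_{\omega_1}\subseteq\mathfrak{I}$ by showing the unfolding is an insertion model (central window from condition~2 of rederivation-saturation, then period shifts) and invoking minimality via Proposition~\ref{proposition:mininsertmodeldefbyintertop}, and prove the reverse containment by induction along consecutive $(\Pi,E)$-ruler points using Lemma~\ref{lemma:rederivedepthrep}. The bookkeeping concern you raise about the injected portions $\mathfrak{I}^p_\alpha$ is exactly the point Lemma~\ref{lemma:rederivedepthrep} is designed to absorb, as in the paper.
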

\begin{proof}
Let $\mathfrak{I}_0 = \mathfrak{M}_{\Pi,E,E^-}^-\cap (\mathfrak{I}_{E\setminus E^-}\cup T_\Pi(\mathfrak{L}_{\Pi,E,E^-}))$.
    
    First, we prove $$I^{\omega_1}_{\Pi,\mathfrak{L}_{\Pi,E,E^-}}(\mathfrak{I}_0)\subseteq\mathfrak{I}.$$ 
    Because $I^{\omega_1}_{\Pi,\mathfrak{L}_{\Pi,E,E^-}}$ is the least insertion-deletion model of $\Pi,\mathfrak{L}_{\Pi,E,E^-},\mathfrak{I}_0$, we only need to show that $\mathfrak{I}$ contains $\mathfrak{I}_0$ and is an insertion model of $\Pi$ regarding $\mathfrak{L}_{\Pi,E,E^-}$.

    Because $\mathfrak{R}_k$ contains $\mathfrak{M}^-_{\Pi,E,E^-}\cap\mathfrak{I}_{E\setminus E^-}$, $\varrho_\mathsf{left}$ and $\varrho_\mathsf{right}$ are periods of $\mathfrak{L}_{\Pi,E,E^-}$, $T_\Pi(\mathfrak{L}_{\Pi,E,E^-})\subseteq \mathfrak{I}$, and $\mathfrak{I}_0\subseteq \mathfrak{I}$.

    To show that $\mathfrak{I}$ is an insertion model of $\Pi$ regarding $\mathfrak{L}_{\Pi,E,E^-}$, we first show that $\mathfrak{I}$ satisfies with insertion $\Pi$ regarding $\mathfrak{L}_{\Pi,E,E^-}$ during $\varrho=[\varrho_\mathsf{left}^-+\mathsf{depth}(\Pi),\varrho_\mathsf{right}^+ - \mathsf{depth}(\Pi)]$.
    This is the case because $\mathfrak{I}\mid_{[\varrho_\mathsf{left}^-,\varrho_\mathsf{{right}}^+]} = \mathfrak{R}_k\mid_{[\varrho_\mathsf{left}^-,\varrho_\mathsf{{right}}^+]}$, and by the definition of a deletion-saturated interpretation, $\mathfrak{R}_k$ satisfies with insertion $\Pi$ regarding $\mathfrak{L}_{\Pi,E,E^-}$ during $[\varrho_\mathsf{left}^-+\mathsf{depth}(\Pi),\varrho_\mathsf{right}^+ - \mathsf{depth}(\Pi)]$.
    Then we show that $\forall t \notin\varrho$, $\mathfrak{I}$ satisfies with insertion $\Pi$ regarding $\mathfrak{L}_{\Pi,E,E^-}$ at $t$.
    Suppose $t<\varrho^-$, then $t< \varrho_\mathsf{left}^-+\mathsf{depth}(\Pi)$. 
    Then there must a natural number $n$ and and a time point $t'\in [\varrho_\mathsf{left}^-+\mathsf{depth}(\Pi),\varrho_\mathsf{left}^+ +\mathsf{depth}(\Pi)]$ such that $t' = t+n\cdot|\varrho_\mathsf{left}|$.
    By the definition of $\mathfrak{I}$, we know that $\mathfrak{I}\mid_{[t-\mathsf{depth}(\Pi),t+\mathsf{depth}(\Pi)]}$ is a shift of $\mathfrak{I}\mid_{[t'-\mathsf{depth}(\Pi),t'+\mathsf{depth}(\Pi)]}$, and also $\mathfrak{L}_{\Pi,E,E^-}\mid_{[t-\mathsf{depth}(\Pi),t+\mathsf{depth}(\Pi)]}$ is a shift of $\mathfrak{L}_{\Pi,E,E^-}\mid_{[t'-\mathsf{depth}(\Pi),t'+\mathsf{depth}(\Pi)]}$. 
    Therefore, for any rule instance $r'$ of a rule $r\in\Pi$, if it is satisfied with insertion by $\mathfrak{I}$ at $t'$, then it is satisfied with insertion by $\mathfrak{I}$ at $t$, and vice versa, and we have $\mathfrak{I}$ satisfies with insertion $\Pi$ regarding $\mathfrak{L}_{\Pi,E,E^-}$ at $t$ because $t'\in\varrho$. 
    The proof for $t>\varrho^+$ is symmetric.
    Consequently, $I^{\omega_1}_{\Pi,\mathfrak{L}_{\Pi,E,E^-}}(\mathfrak{I}_0)\subseteq\mathfrak{I}$
    
    Next, we prove $$\mathfrak{I}\subseteq I^{\omega_1}_{\Pi,\mathfrak{L}_{\Pi,E,E^-}}(\mathfrak{I}_0).$$
    Because $\mathfrak{R}_k\subseteq I^{\omega_1}_{\Pi,\mathfrak{L}_{\Pi,E,E^-}}(\mathfrak{I}_0)$ and $\mathfrak{I}\mid_{[\varrho_\mathsf{left}^-,\varrho_\mathsf{{right}}^+]} =\mathfrak{R}_k\mid_{[\varrho_\mathsf{left}^-,\varrho_\mathsf{{right}}^+]}$, we have $\mathfrak{I}\mid_{[\varrho_\mathsf{left}^-,\varrho_\mathsf{{right}}^+]}\subseteq I^{\omega_1}_{\Pi,\mathfrak{L}_{\Pi,E,E^-}}(\mathfrak{I}_0)$.
    Consider $\mathfrak{I}\mid_{[\varrho_\mathsf{right}^+,\infty)}$, we show inductively that for each $t_i$ on the $(\Pi,E)$-ruler that $\mathfrak{I}\mid_{[\varrho_\mathsf{left}^-,t_i]}\subseteq\mathfrak{M}^-_{\Pi,E,E^-}\mid_{[\varrho_\mathsf{left}^-,t_i]} $, where $t_0 = \varrho_\mathsf{right}^+$.
    The base case is already proven.
    For the inductive step, let $\varrho_A = (t_{i-1},t_i]$, and $\varrho_B = [t_{i-1}-\mathsf{depth}(\Pi),t_{i-1}]$, $\varrho_A' = \varrho_A - |\varrho_\mathsf{right}|$, $\varrho_B' = \varrho_B - |\varrho_\mathsf{right}|$. 
    We only need to show that $\mathfrak{I}\mid_{\varrho_A}\subseteq I^{\omega_1}_{\Pi,\mathfrak{L}_{\Pi,E,E^-}}(\mathfrak{I}_0)\mid_{\varrho_A}$. 
    Notice that by definition of $\mathfrak{I}$, $\mathfrak{I}\mid_{\varrho_B}$ is a shift of $\mathfrak{I}\mid_{\varrho_B'}$, and by the induction hypothesis and $I^{\omega_1}_{\Pi,\mathfrak{L}_{\Pi,E,E^-}}(\mathfrak{I}_0)\subseteq\mathfrak{I}$, we have $I^{\omega_1}_{\Pi,\mathfrak{L}_{\Pi,E,E^-}}(\mathfrak{I}_0)\mid_{\varrho_B'}$ is a shift of $I^{\omega_1}_{\Pi,\mathfrak{L}_{\Pi,E,E^-}}(\mathfrak{I}_0)\mid_{\varrho_B}$.
    Since by definition of $\mathfrak{I}$, $\mathfrak{L}_{\Pi,E,E^-}\mid_{[\varrho_B'^-,\infty)}$ is a shift of $\mathfrak{L}_{\Pi,E,E^-}\mid_{[\varrho_B^-,\infty)}$ and $|\varrho_B'|=|\varrho_B|=\mathsf{depth}(\Pi)$, by Lemma~\ref{lemma:rederivedepthrep} we can show that $I^{\omega_1}_{\Pi,\mathfrak{L}_{\Pi,E,E^-}}(\mathfrak{I}_0)\mid_{\varrho_A}$ is a shift of $I^{\omega_1}_{\Pi,\mathfrak{L}_{\Pi,E,E^-}}(\mathfrak{I}_0)\mid_{\varrho_A'}$, and because $\mathfrak{I}\mid_{\varrho_A'}$ is a shift of $\mathfrak{I}\mid_{\varrho_A}$, and $\mathfrak{I}\mid_{\varrho_A'}\subseteq I^{\omega_1}_{\Pi,\mathfrak{L}_{\Pi,E,E^-}}(\mathfrak{I}_0)\mid_{\varrho_A'}$, we have $\mathfrak{I}\mid_{\varrho_A}\subseteq I^{\omega_1}_{\Pi,\mathfrak{L}_{\Pi,E,E^-}}(\mathfrak{I}_0)\mid_{\varrho_A}$. 
    The case for $\mathfrak{I}\mid_{(-\infty,\varrho_\mathsf{right}^+)}$ is symmetric.
    Therefore, $\mathfrak{I}\subseteq\mathfrak{M}^-_{\Pi,E,E^-}$.
\end{proof}

Given a DatalogMTL program $\Pi$, interpretations $\mathfrak{I}_b$, we define the \textbf{semi-na\"ive immediate insertion-consequence operator}, $\dot{I}_{\Pi,\mathfrak{I}_b}$, mapping two interpretations $\mathfrak{I}_1,\mathfrak{I}_2$ to the least interpretation $\dot{I}_{\Pi,\mathfrak{I}_{b}}(\mathfrak{I}_1,\mathfrak{I_2})$ containing $\mathfrak{I}_2$ satisfying for each ground rule instance $r'$ of a rule $r\in\Pi$, a fact $f$, a dataset $\mathcal{D}$ such that if $r'$ minimally derives $f$ from $\mathcal{D}$, $\mathfrak{I}_b \cup \mathfrak{I}_2\models \mathcal{D}$ and $(\mathfrak{I}_2-\mathfrak{I}_1)\cap\mathfrak{I}_{\mathcal{D}}\neq \mathfrak{I}_{\emptyset}$.

Given $\mathfrak{L}_{\Pi,E,E^-}$, $\varrho_\mathsf{left}$, and $\varrho_\mathsf{right}$, for an ordinal $\alpha<\omega_1$, let $\mathfrak{I}^p_{\alpha}=(\mathfrak{M}_{\Pi,E,E^-}^- \cap \mathfrak{L}_{\Pi,E,E^-}\mid_{[\varrho_\mathsf{left}^--\alpha\cdot|\varrho_\mathsf{left}|,\varrho_\mathsf{right}^++\alpha\cdot|\varrho_\mathsf{right}|]})$. 
We define a transfinite sequence of interpretations $\mathfrak{R}_0,\mathfrak{R}_1,\dots$ as follows
\begin{enumerate*}[label=(\roman*)]
    \item $\dot{\mathfrak{R}}_0 = \mathfrak{M}_{\Pi,E,E^-}^- \cap(\mathfrak{I}_{E\setminus E^-}\cup \mathfrak{I}^p_0) $,
    \item $\dot{\mathfrak{R}}_{1} = \dot{I}_{\Pi,\mathfrak{L}_{\Pi,E,E^-}}(\mathfrak{I}_{\emptyset},\dot{\mathfrak{R}}_{0}\cup \mathfrak{I}^p_1)\cup \mathfrak{I}^p_1$
    \item  
     $\dot{\mathfrak{R}}_{\alpha+1} = \dot{I}_{\Pi,\mathfrak{L}_{\Pi,E,E^-}}(\dot{\mathfrak{R}}_{\alpha-1},\dot{\mathfrak{R}}_{\alpha}\cup \mathfrak{I}^p_{\alpha+1})\cup \mathfrak{I}^p_{\alpha+1}$, for $\alpha$ a successor ordinal, and
    \item $\dot{\mathfrak{R}}_\beta = \bigcup_{\alpha<\beta}\dot{\mathfrak{R}}_\alpha$, for $\beta$ a limit ordinal
\end{enumerate*}

\begin{proposition}\label{proposition:rederivationseminaive=naive}
    For each ordinal $\alpha$, $$\dot{\mathfrak{R}}_\alpha = \mathfrak{R}_\alpha$$
\end{proposition}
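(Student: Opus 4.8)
The plan is to prove the identity by transfinite induction on $\alpha$, mirroring the argument already carried out for deletion in Proposition~\ref{proposition:deletionseminaive=naive}; the only genuinely new bookkeeping comes from the background interpretation $\mathfrak{L}_{\Pi,E,E^-}$ sitting inside the insertion operators and from the periodic injection terms $\mathfrak{I}^p_{\alpha+1}$ that both sequences append at every successor step. As a preliminary I would record that the sequence $\dot{\mathfrak{R}}_0,\dot{\mathfrak{R}}_1,\dots$ is monotone (each term contains its predecessor), which follows directly from the definitions since each $\dot{\mathfrak{R}}_{\alpha+1}$ contains $\dot{\mathfrak{R}}_\alpha\cup\mathfrak{I}^p_{\alpha+1}$. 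For the base cases, $\dot{\mathfrak{R}}_0=\mathfrak{R}_0$ holds by definition, and for $\alpha=1$ I would observe that $\dot{I}_{\Pi,\mathfrak{L}_{\Pi,E,E^-}}(\mathfrak{I}_\emptyset,\cdot)$ coincides with $I_{\Pi,\mathfrak{L}_{\Pi,E,E^-}}(\cdot)$, because $(\mathfrak{I}_2-\mathfrak{I}_\emptyset)\cap\mathfrak{I}_{\mathcal{D}}=\mathfrak{I}_2\cap\mathfrak{I}_{\mathcal{D}}$; since the trailing $\cup\,\mathfrak{I}^p_1$ is identical on both sides, $\dot{\mathfrak{R}}_1=\mathfrak{R}_1$. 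Limit ordinals are immediate, as both sequences are defined there as the union of their predecessors.

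The successor step $\alpha+1$ splits into two inclusions, exactly as in the deletion case. For $\dot{\mathfrak{R}}_{\alpha+1}\subseteq\mathfrak{R}_{\alpha+1}$, I would take any fact $f$ produced by $\dot{I}_{\Pi,\mathfrak{L}_{\Pi,E,E^-}}(\dot{\mathfrak{R}}_{\alpha-1},\dot{\mathfrak{R}}_\alpha\cup\mathfrak{I}^p_{\alpha+1})$ via a ground rule $r'$ minimally deriving $f$ from some $\mathcal{D}$ with $\mathfrak{L}_{\Pi,E,E^-}\cup\dot{\mathfrak{R}}_\alpha\cup\mathfrak{I}^p_{\alpha+1}\models\mathcal{D}$ and $((\dot{\mathfrak{R}}_\alpha\cup\mathfrak{I}^p_{\alpha+1})-\dot{\mathfrak{R}}_{\alpha-1})\cap\mathfrak{I}_{\mathcal{D}}\neq\mathfrak{I}_\emptyset$. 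Applying the induction hypothesis $\dot{\mathfrak{R}}_\alpha=\mathfrak{R}_\alpha$, the delta condition for $\dot{I}$ entails the weaker intersection condition $(\mathfrak{R}_\alpha\cup\mathfrak{I}^p_{\alpha+1})\cap\mathfrak{I}_{\mathcal{D}}\neq\mathfrak{I}_\emptyset$ demanded by $I_{\Pi,\mathfrak{L}_{\Pi,E,E^-}}$, while the model clause coincides verbatim; hence $\mathfrak{R}_{\alpha+1}\models f$, and the $\cup\,\mathfrak{I}^p_{\alpha+1}$ terms match.

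The hard direction is $\mathfrak{R}_{\alpha+1}\subseteq\dot{\mathfrak{R}}_{\alpha+1}$, and it is where I expect the main obstacle. Given $f$ newly added by $I_{\Pi,\mathfrak{L}_{\Pi,E,E^-}}(\mathfrak{R}_\alpha\cup\mathfrak{I}^p_{\alpha+1})$ through a minimal derivation from some $\mathcal{D}$, if already $\mathfrak{R}_\alpha\models f$ I would conclude by the induction hypothesis together with the monotonicity of the $\dot{\mathfrak{R}}$ sequence. Otherwise I would split $\mathfrak{I}_{\mathcal{D}}$ along membership in $\dot{\mathfrak{R}}_{\alpha-1}=\mathfrak{R}_{\alpha-1}$: writing $\mathcal{D}'_1$ for the representative of $\mathfrak{I}_{\mathcal{D}}\cap\dot{\mathfrak{R}}_{\alpha-1}$ and $\mathcal{D}'_2$ for that of $\mathfrak{I}_{\mathcal{D}}-\dot{\mathfrak{R}}_{\alpha-1}$, and rebuilding a dataset $\mathcal{D}'=\mathcal{D}'_1\cup\mathcal{D}'_2$ from which $r'$ still minimally derives $f$. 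The crux is to show $\mathcal{D}'_2$ meets the delta $(\dot{\mathfrak{R}}_\alpha\cup\mathfrak{I}^p_{\alpha+1})-\dot{\mathfrak{R}}_{\alpha-1}$: if instead every fact of $\mathcal{D}$ were satisfied by $\dot{\mathfrak{R}}_{\alpha-1}$, then since $\mathfrak{L}_{\Pi,E,E^-}\cup\mathfrak{R}_{\alpha-1}\models\mathcal{D}$ and $r'$ minimally derives $f$, the fact $f$ would already belong to $\mathfrak{R}_\alpha$, contradicting the assumption $\mathfrak{R}_\alpha\not\models f$.

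This minimality-and-contradiction device is the same one used for deletion, but here it must be carried out while threading the background $\mathfrak{L}_{\Pi,E,E^-}$ and the injected $\mathfrak{I}^p_{\alpha+1}$ through the split. Concretely, I would verify that the rebuilt $\mathcal{D}'$ satisfies both the model clause $\mathfrak{L}_{\Pi,E,E^-}\cup\dot{\mathfrak{R}}_\alpha\cup\mathfrak{I}^p_{\alpha+1}\models\mathcal{D}'$ (inherited from $\mathfrak{L}_{\Pi,E,E^-}\cup\mathfrak{R}_\alpha\cup\mathfrak{I}^p_{\alpha+1}\models\mathcal{D}$ via the induction hypothesis) and the delta clause of $\dot{I}$ (from the argument that $\mathcal{D}'_2$ is nonempty and contained in the delta), which then yields $\dot{\mathfrak{R}}_{\alpha+1}\models f$. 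The book\-keeping around $\mathfrak{I}^p_{\alpha+1}$ is routine because it is added identically on both sides and is itself independent of whether a fact is new, so it does not interfere with the contradiction step; the only real care needed is to keep the background $\mathfrak{L}_{\Pi,E,E^-}$ fixed across the split rather than mistaking it for part of the evolving interpretation.
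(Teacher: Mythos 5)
Your overall induction scheme, the base cases, the limit case, and the easy inclusion $\dot{\mathfrak{R}}_{\alpha+1}\subseteq\mathfrak{R}_{\alpha+1}$ all match the paper's proof. The gap is in the hard inclusion, at exactly the step you call the crux. You split $\mathfrak{I}_{\mathcal{D}}$ along membership in $\dot{\mathfrak{R}}_{\alpha-1}$ and argue: if every fact of $\mathcal{D}$ were satisfied by $\mathfrak{R}_{\alpha-1}$ then $f$ would already be in $\mathfrak{R}_\alpha$; hence $\mathcal{D}'_2\neq\emptyset$; hence $\mathcal{D}'_2$ meets the delta $(\dot{\mathfrak{R}}_\alpha\cup\mathfrak{I}^p_{\alpha+1})-\dot{\mathfrak{R}}_{\alpha-1}$. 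The last inference is a non sequitur. A fact of $\mathcal{D}'_2$ is, by construction, outside $\mathfrak{R}_{\alpha-1}$, but it lies in the delta only if it is additionally inside $\dot{\mathfrak{R}}_\alpha\cup\mathfrak{I}^p_{\alpha+1}$; the model clause only guarantees $\mathfrak{I}_{\mathcal{D}}\subseteq\mathfrak{L}_{\Pi,E,E^-}\cup\mathfrak{R}_\alpha\cup\mathfrak{I}^p_{\alpha+1}$, so $\mathcal{D}'_2$ may consist entirely of facts witnessed only by the background $\mathfrak{L}_{\Pi,E,E^-}$, which never belong to the delta. Your contradiction rules out only the case $\mathcal{D}'_2=\emptyset$; it does not rule out the case where $\mathfrak{I}_{\mathcal{D}}\cap(\mathfrak{R}_\alpha\cup\mathfrak{I}^p_{\alpha+1})\subseteq\mathfrak{R}_{\alpha-1}$ while $\mathcal{D}$ also contains pure background facts, and in that case your argument produces no delta fact at all. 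Ironically, this is exactly the conflation you warn against in your closing sentence.

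The repair is to split along the other boundary, as the paper does: let $\mathcal{D}'_1$ represent $\mathfrak{I}_{\mathcal{D}}\cap(\mathfrak{R}_\alpha\cup\mathfrak{I}^p_{\alpha+1})$ and $\mathcal{D}'_2$ represent $\mathfrak{I}_{\mathcal{D}}-(\mathfrak{R}_\alpha\cup\mathfrak{I}^p_{\alpha+1})$, so that $\mathfrak{L}_{\Pi,E,E^-}\models\mathcal{D}'_2$ is guaranteed and $\mathcal{D}'_1$ is nonempty by the intersection clause of $I_{\Pi,\mathfrak{L}_{\Pi,E,E^-}}$. Then run the contradiction on the hypothesis that every fact of $\mathcal{D}'_1$ already lies in $\mathfrak{R}_{\alpha-1}$ (handling separately the subcase where some fact of $\mathcal{D}'_1$ is contributed by $\mathfrak{I}^p_{\alpha+1}$): under that hypothesis $\mathfrak{L}_{\Pi,E,E^-}\cup\mathfrak{R}_{\alpha-1}\cup\mathfrak{I}^p_{\alpha}\models\mathcal{D}'$ and $\mathfrak{R}_{\alpha-1}\cap\mathfrak{I}_{\mathcal{D}'}\neq\mathfrak{I}_\emptyset$, so $\mathfrak{R}_\alpha\models f$, the desired contradiction, and its negation now yields a fact of $\mathcal{D}'_1$ genuinely inside the delta. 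With that change the rest of your argument goes through.
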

\begin{proof}
    We prove the statement by induction on $\alpha$.

    When $\alpha = 0$, the equation is trivial.

    When $\alpha=1$, because $\dot{I}_{\Pi,\mathfrak{L}_{\Pi,E,E^-}}(\mathfrak{I}_\emptyset)$ is the same operator as $I_{\Pi,\mathfrak{L}_{\Pi,E,E^-}}(*)$, we have $\dot{\mathfrak{R}}_1 = \mathfrak{R}_1$.

    For the inductive step, we first show that $\dot{\mathfrak{R}}_{\alpha+1}\subseteq\mathfrak{R}_{\alpha+1}$. 
    By definition $\dot{\mathfrak{R}}_{\alpha+1} = \dot{I}_{\Pi,\mathfrak{L}_{\Pi,E,E^-}}(\dot{\mathfrak{R}}_{\alpha-1},\dot{\mathfrak{R}}_{\alpha}\cup \mathfrak{I}^p_{\alpha+1})\cup \mathfrak{I}^p_{\alpha+1}$.
    For each fact $f$ such that $\dot{\mathfrak{R}}_{\alpha+1}\models f$, if $\mathfrak{I}^p_{\alpha+1}\models f$ or $\dot{\mathfrak{R}}_\alpha\models f$, then the it is trivial that $\mathfrak{R}_{\alpha+1}\models f$.
    If $\mathfrak{I}^p_{\alpha+1}\not\models f$ or $\dot{\mathfrak{R}}_\alpha\not\models f$, then there must be a ground rule instance $r'$ of a rule $r\in\Pi$, a fact $f$, a dataset $\mathcal{D}$ such that if $r'$ minimally derives $f$ from $\mathcal{D}$, $\mathfrak{L}_{\Pi,E,E^-} \cup \dot{\mathfrak{R}}_\alpha\cup\mathfrak{I}^p_{\alpha+1}\models \mathcal{D}$ and $(\dot{\mathfrak{R}}_\alpha\cup\mathfrak{I}^p_{\alpha+1}-\dot{\mathfrak{R}}_{\alpha-1})\cap\mathfrak{I}_{\mathcal{D}}\neq \mathfrak{I}_{\emptyset}$.
    Then by the induction hypothesis $\mathfrak{R}_\alpha = \dot{\mathfrak{R}_\alpha}$, $\mathfrak{L}_{\Pi,E,E^-} \cup \mathfrak{R}_\alpha\cup\mathfrak{I}^p_{\alpha+1}\models \mathcal{D}$, and $(\mathfrak{R}_\alpha\cup\mathfrak{I}^p_{\alpha+1})\cap\mathfrak{I}_{\mathcal{D}}\neq \mathfrak{I}_{\emptyset}$.
    Consequently, by definition $I_{\Pi,\mathfrak{L}_{\Pi,E,E^-}}(\mathfrak{R}_\alpha\cup\mathfrak{I}^p_{\alpha+1})\models f$, and $\mathfrak{R}_{\alpha+1}\models f$.

    Next, we show that $\mathfrak{R}_{\alpha+1}\subseteq\dot{\mathfrak{R}}_{\alpha+1}$. 
    Again, we only need to consider each fact $f=M@t$ such that there is a ground rule instance $r'$ of a rule $r\in\Pi$, a dataset $\mathcal{D}$ such that $r'$ minimally derives $f$ from $\mathcal{D}$, $\mathfrak{L}_{\Pi,E,E^-}\cup\mathfrak{R}_\alpha\cup\mathfrak{I}_{\alpha+1}^p\models \mathcal{D}$, and $(\mathfrak{R}_\alpha\cup\mathfrak{I}_{\alpha+1}^p)\cap\mathfrak{I}_\mathcal{D}\neq \mathfrak{I}_\emptyset$.
    Let $\mathcal{D}_1'$ represent $(\mathfrak{R}_\alpha\cup\mathfrak{I}_{\alpha+1}^p)\cap\mathfrak{I}_\mathcal{D}$, $\mathcal{D}'_2$ represent $\mathfrak{I}_\mathcal{D}-(\mathfrak{R}_\alpha\cup\mathfrak{I}_{\alpha+1}^p)$, and $\mathcal{D}'=  \mathcal{D}_1'\cup\mathcal{D}_2'$, then by the induction hypothesis $\mathfrak{L}_{\Pi,E,E^-}\cup \mathfrak{R}_\alpha\cup\mathfrak{I}_{\alpha+1}^p\models \mathcal{D}'$, $\mathfrak{L}_{\Pi,E,E^-}\models \mathcal{D}'_2$. 
    We show that $(\dot{\mathfrak{R}}_{\alpha} - \dot{\mathfrak{R}}_{\alpha-1})\cap\mathfrak{I}_{\mathcal{D}'_1}\neq\mathfrak{I}_{\emptyset}$.
    We do this by showing that $\exists f'\in\mathcal{D}_1'$, $\mathfrak{R}_{\alpha}\models f'$ and $\mathfrak{R}_{\alpha-1}\not\models f$ or $\mathfrak{I}_{\alpha+1}^p\models f$. 
    If $\forall f'\in\mathcal{D}_1'$, $\mathfrak{I}_{\alpha+1}^p\not\models f$, then $\forall f'\in\mathcal{D}_1', \mathfrak{R}_{\alpha}\models f'$. If $\forall f'\in\mathcal{D}_1',\mathfrak{R}_{\alpha-1}\models f'$, then $\mathfrak{R}_{\alpha-1}\models \mathcal{D}_1'$, and because $\mathfrak{L}_{\Pi,E,E^-}\models \mathcal{D}_2'$, we have $\mathfrak{L}_{\Pi,E,E^-}\cup \mathfrak{R}_{\alpha-1}\cup\mathfrak{I}_{\alpha}^p\models \mathcal{D}'$, $\mathfrak{R}_{\alpha-1}\cap \mathfrak{I}_{\mathcal{D}'} \neq\mathfrak{I}_\emptyset$, and $\mathfrak{R}_\alpha\models f$, which contradicts $\mathfrak{R}_\alpha\not\models f$.
    Therefore, $(\dot{\mathfrak{R}}_{\alpha} - \dot{\mathfrak{R}}_{\alpha-1})\cap\mathfrak{I}_{\mathcal{D}'_1}\neq\mathfrak{I}_{\emptyset}$, and because $\mathfrak{L}_{\Pi,E,E^-}\cup \mathfrak{R}_\alpha\cup\mathfrak{I}_{\alpha+1}^p\models \mathcal{D}'$, $\dot{\mathfrak{R}}_{\alpha+1}\models f$.

    When $\alpha=\beta$ a limit ordinal, the equivalence holds trivially because $\dot{\mathfrak{R}}_\beta = \bigcup_{\alpha<\beta}\dot{\mathfrak{R}}_\alpha$ and 
    $\mathfrak{R}_\beta = \bigcup_{\alpha<\beta}\mathfrak{R}_\alpha$.
\end{proof}

With proof analogous to that of Proposition~\ref{proposition:deletion-saturated-align}, Propositions~\ref{proposition:rederivation-saturated-align} holds as well.
\begin{proposition}\label{proposition:rederivation-saturated-align}
Given a periodic materialisation $\mathds{I}$ such that $\mathsf{unfold}(\mathds{I}) =\mathfrak{L}_{\Pi,E,E^-}$, there exists $k \leq k_{\mathsf{max}}$ such that $\mathfrak{R}_k$ 
is rederivation-saturated, where the bound $k_{max}$ is defined as follows.
Let $A$ be the number of ground relational atoms in the
grounding of $\Pi$ with constants from $\Pi$ and $E$, let $B$ be the
number of $(\Pi, E)$-intervals within $[t_E^-, t_E^-
 + 2depth(\Pi)]$, let $C$ be the max number of $(\Pi,E)$-endpoints contained by $\mathds{I}.\varrho_{\mathsf{L}}$ or $\mathds{I}.\varrho_{\mathsf{R}}$,
and let $\varrho$ = $[t_{-w} - 2C\mathsf{depth}(\Pi), t_w + 2C\mathsf{depth}(\Pi)]$, where $\cdots < t_{-2}<t_{-1}<t_{E}^-$ and $t_E^+<t_1<t_2<\cdots$ are sequences of consecutive time points on the $(\Pi, E)$-ruler, while
$w$ = $C +B ·C· (2^A)^B$. 
Then $k_\mathsf{max}$ is the product of $A$ and the
number of $(\Pi, E)$-intervals contained in $\varrho$.
\end{proposition}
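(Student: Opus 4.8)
The plan is to follow the blueprint of Proposition~\ref{proposition:deletion-saturated-align} almost verbatim, substituting the rederivation sequence $\mathfrak{R}_\alpha$ for $D^\alpha_{\Pi,E}(\mathfrak{I}_{E^-})$ and the remainder interpretation $\mathfrak{L}_{\Pi,E,E^-}$ for the canonical model $\mathfrak{C}_{\Pi,E}$. Concretely, I would reduce the goal to exhibiting intervals $\varrho_1,\varrho_2,\varrho_3,\varrho_4$ of length $2\mathsf{depth}(\Pi)$ with endpoints on the $(\Pi,E)$-ruler satisfying the four conditions of the alternative characterisation of rederivation-saturation (the proposition preceding Theorem~\ref{theo:rederivitionunfold}): that $([\varrho_1^-,\varrho_2^-),(\varrho_3^+,\varrho_4^+])$ are periods of $\mathfrak{L}_{\Pi,E,E^-}$, that $\mathfrak{R}_k$ stabilises on $[\varrho_1^-,\varrho_4^+]$, and that $\mathfrak{R}_k\mid_{\varrho_2}$ is a shift of $\mathfrak{R}_k\mid_{\varrho_1}$ with $\mathfrak{R}_k\mid_{\varrho_4}$ a shift of $\mathfrak{R}_k\mid_{\varrho_3}$.

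First I would dispatch the stabilisation condition by a counting argument identical to the deletion case. Since both the insertion operator $I_{\Pi,\mathfrak{L}_{\Pi,E,E^-}}$ and the periodic windows $\mathfrak{I}^p_\alpha$ are monotone, the sequence $\mathfrak{R}_\alpha$ is nondecreasing; restricted to $\varrho$ it can hold at most $k_\mathsf{max}$ facts, because each of the $(\Pi,E)$-intervals inside $\varrho$ carries at most $A$ relational facts. As $\mathfrak{I}^p_\alpha$ eventually covers $\varrho$, every further round that is not a fixpoint on $\varrho$ must introduce at least one new fact there, so there is some $k \le k_\mathsf{max}$ with $\mathfrak{R}_{k+1}\mid_\varrho = \mathfrak{R}_k\mid_\varrho$. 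Next I would establish the shift-periodicity: a pigeonhole over the at most $B\cdot(2^A)^B$ possible contents and layouts of a length-$2\mathsf{depth}(\Pi)$ window supplies raw repeating intervals for $\mathfrak{R}_k$ on each side, while $\mathfrak{L}_{\Pi,E,E^-}$ already carries its own periods through the given materialisation $\mathds{I}$, whose period intervals contain at most $C$ ruler endpoints.

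The crux, and the step I expect to be the main obstacle, is forcing a single quadruple $\varrho_1,\varrho_2,\varrho_3,\varrho_4$ to serve simultaneously as periods of $\mathfrak{L}_{\Pi,E,E^-}$ (condition~1) and as witnesses of the shift-periodicity of $\mathfrak{R}_k$ (conditions~3--4). Here the base interpretation of the insertion operator is $\mathfrak{L}_{\Pi,E,E^-}$ rather than $\mathfrak{C}_{\Pi,E}$, so the natural periods of $\mathfrak{R}_k$ and those of $\mathfrak{L}_{\Pi,E,E^-}$ need not coincide; I would reconcile them by extending the raw $\mathfrak{R}_k$-period to the least common multiple of the two lengths and aligning it with the $\mathfrak{L}$-period, exactly as the $\texttt{Ext}$ and $\texttt{Aln}$ machinery does. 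The factor $C$ in the width $w = C + B\cdot C\cdot(2^A)^B$ and the extra $2C\mathsf{depth}(\Pi)$ padding of $\varrho$ are precisely what guarantee that this extension and alignment remain inside the stabilised region $\varrho$, so condition~2 is not disturbed. Finally I would invoke Lemma~\ref{lemma:rederivedepthrep} to argue that once two length-$2\mathsf{depth}(\Pi)$ windows agree the periodicity of $\mathfrak{R}_k$ propagates outward, so that the chosen intervals are genuine periods whose unfolding recovers $\mathfrak{R}_{\omega_1}$ by Theorem~\ref{theo:rederivitionunfold}; this verifies all four conditions and hence that $\mathfrak{R}_k$ is rederivation-saturated for some $k \le k_\mathsf{max}$.
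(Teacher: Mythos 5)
Your proposal is correct and follows essentially the same route as the paper, which itself proves this proposition only by declaring it analogous to Proposition~\ref{proposition:deletion-saturated-align}: the same counting argument for stabilisation on $\varrho$, the same pigeonhole over window contents, and the same extension/alignment step (governed by the factor $C$) to reconcile the raw repeating windows with the given periods. Your explicit attention to the substitution of $\mathfrak{L}_{\Pi,E,E^-}$ for $\mathfrak{C}_{\Pi,E}$ and to the role of the $\mathfrak{I}^p_\alpha$ windows matches what the analogy requires.
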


\subsection{Properties about Insertion}
Next, we deal with the insertion phase.
\begin{proposition}\label{proposition:insertionruler}
For each $\alpha\leq\omega_1,I_{\Pi,\mathfrak{C}_{\Pi,E\setminus E^-}}^\alpha(\mathfrak{I}_{E^+})$ is a $(\Pi,E\cup E^+)$-interpretation.
\end{proposition}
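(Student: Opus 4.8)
The plan is to prove this by transfinite induction on $\alpha$, following almost verbatim the structure of the proof of Proposition~\ref{proposition:overdeletionruler} (and its rederivation analogue, the proof of Proposition~\ref{proposition:rederivationruler}), with the immediate insertion-consequence operator $I_{\Pi,\mathfrak{C}_{\Pi,E\setminus E^-}}$ playing the role that $D_{\Pi,E}$ played there. For brevity write $\mathfrak{I}_b = \mathfrak{C}_{\Pi,E\setminus E^-}$ and $\mathfrak{I}^\alpha = I_{\Pi,\mathfrak{I}_b}^\alpha(\mathfrak{I}_{E^+})$. The base case $\alpha=0$ is immediate: $\mathfrak{I}^0 = \mathfrak{I}_{E^+}$, and since every endpoint mentioned in $E^+$ lies on the $(\Pi,E\cup E^+)$-ruler, $\mathfrak{I}_{E^+}$ is a $(\Pi,E\cup E^+)$-interpretation. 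The limit case is equally routine, since a union of $(\Pi,E\cup E^+)$-interpretations is again a $(\Pi,E\cup E^+)$-interpretation.

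The successor case $\alpha+1$ is where the real work lies. I would take a fact $f = M@t$ with $\mathfrak{I}^{\alpha+1}\models f$ and $\mathfrak{I}^{\alpha}\not\models f$; if $t$ lies on the ruler the containing interval is punctual and there is nothing to show, so assume otherwise. By the definition of $I_{\Pi,\mathfrak{I}_b}$ there are a ground instance $r'$, a dataset $\mathcal{D}$ and a time point $t'$ such that $r'$ minimally derives $f$ at $t'$, $\mathfrak{I}_b\cup\mathfrak{I}^\alpha\models\mathcal{D}$, and $\mathfrak{I}^\alpha\cap\mathfrak{I}_\mathcal{D}\neq\mathfrak{I}_\emptyset$. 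Letting $\varrho'$ be the $(\Pi,E\cup E^+)$-interval containing $t'$, I would apply Proposition~\ref{proposition:pi,dinterpretationatomsat} to conclude $\mathfrak{I}_b\cup\mathfrak{I}^\alpha,t''\models body(r')$ for every $t''\in\varrho'$; then minimality of the derivation together with $\mathfrak{I}^\alpha\cap\mathfrak{I}_\mathcal{D}\neq\mathfrak{I}_\emptyset$ gives $(\mathfrak{I}_\mathcal{D}-\mathfrak{I}^\alpha),t'\not\models body(r')$, which Proposition~\ref{proposition:pi,dinterpretationatomnotsat} propagates to all of $\varrho'$. Hence at each $t''\in\varrho'$ a shifted minimal derivation fires whose body still draws a fact from $\mathfrak{I}^\alpha$. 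Finally, applying Proposition~\ref{proposition:headexpandsion} to $head(r')$ and splitting according to whether the resulting head interval is punctual (using Proposition~\ref{proposition:timepointsdiv(pi)apartcontainedbyintervalsofequallength}) or has positive length, I would conclude $\mathfrak{I}^{\alpha+1}\models M@\varrho$ for the ruler interval $\varrho$ containing $t$.

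The hard part, and the one genuine point of departure from the earlier proofs, is justifying the application of Propositions~\ref{proposition:pi,dinterpretationatomsat} and~\ref{proposition:pi,dinterpretationatomnotsat} to the union $\mathfrak{I}_b\cup\mathfrak{I}^\alpha$: those propositions require the interpretation to be a $(\Pi,E\cup E^+)$-interpretation. The term $\mathfrak{I}^\alpha$ is one by the induction hypothesis, but $\mathfrak{I}_b=\mathfrak{C}_{\Pi,E\setminus E^-}$ is a priori only a $(\Pi,E\setminus E^-)$-interpretation. I would resolve this by observing that the endpoints of $E\setminus E^-$ form a subset of those of $E\cup E^+$, so the $(\Pi,E\cup E^+)$-ruler refines the $(\Pi,E\setminus E^-)$-ruler and every $(\Pi,E\cup E^+)$-interval sits inside a single $(\Pi,E\setminus E^-)$-interval; constancy of relational atoms over the coarser intervals therefore entails constancy over the finer ones, so $\mathfrak{I}_b$ is also a $(\Pi,E\cup E^+)$-interpretation, and hence so is the union. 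Once this ruler-refinement observation is in place, the remainder of the successor step is a mechanical transcription of the overdeletion argument, which is why I would keep that part brief and refer back to the proof of Proposition~\ref{proposition:overdeletionruler}.
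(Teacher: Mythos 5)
Your proposal is correct and matches the paper's approach exactly: the paper's proof of this proposition is literally ``analogous to that for Proposition~\ref{proposition:overdeletionruler}'', and you carry out precisely that transfinite induction with $I_{\Pi,\mathfrak{C}_{\Pi,E\setminus E^-}}$ in place of $D_{\Pi,E}$. Your explicit justification that $\mathfrak{C}_{\Pi,E\setminus E^-}$ is itself a $(\Pi,E\cup E^+)$-interpretation via refinement of the ruler is a detail the paper leaves implicit, and it is the right way to license the applications of Propositions~\ref{proposition:pi,dinterpretationatomsat} and~\ref{proposition:pi,dinterpretationatomnotsat} to the union $\mathfrak{C}_{\Pi,E\setminus E^-}\cup\mathfrak{I}^\alpha$.
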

\begin{proof}
    The proof is analogous to that for Proposition~\ref{proposition:overdeletionruler}.
\end{proof}
\begin{proposition}\label{proposition:insertwithinnewcanon}
$I^{\omega_1}_{\Pi,\mathfrak{C}_{\Pi,E\setminus E^-}}(\mathfrak{I}_{E^+})\subseteq \mathfrak{C}_{\Pi,E\setminus E^-\cup E^+}$
\end{proposition}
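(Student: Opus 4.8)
The plan is to write $E' = E\setminus E^-\cup E^+$ and reduce the claim to showing that $\mathfrak{C}_{\Pi,E'}$ is itself an insertion model of $\Pi$, $\mathfrak{C}_{\Pi,E\setminus E^-}$, and $\mathfrak{I}_{E^+}$. By Proposition~\ref{proposition:mininsertmodeldefbyintertop}, $I^{\omega_1}_{\Pi,\mathfrak{C}_{\Pi,E\setminus E^-}}(\mathfrak{I}_{E^+})$ is the \emph{least} such insertion model, so as soon as $\mathfrak{C}_{\Pi,E'}$ is shown to be an insertion model, the desired containment $I^{\omega_1}_{\Pi,\mathfrak{C}_{\Pi,E\setminus E^-}}(\mathfrak{I}_{E^+})\subseteq\mathfrak{C}_{\Pi,E'}$ follows at once.

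First I would record two monotonicity facts about canonical models. Since $E^+\subseteq E'$ as datasets, $\mathfrak{I}_{E^+}\subseteq\mathfrak{I}_{E'}\subseteq\mathfrak{C}_{\Pi,E'}$, which discharges the containment requirement $\mathfrak{I}_{E^+}\subseteq\mathfrak{C}_{\Pi,E'}$ in the definition of an insertion model. Likewise, since $\mathfrak{I}_{E\setminus E^-}\subseteq\mathfrak{I}_{E'}$, the monotonicity of $T_\Pi$ (DatalogMTL is negation-free, so $T_\Pi$ is monotone) yields $\mathfrak{C}_{\Pi,E\setminus E^-}\subseteq\mathfrak{C}_{\Pi,E'}$. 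This last inclusion is the one that does the real work.

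The heart of the argument is then to verify that $\mathfrak{C}_{\Pi,E'}$ satisfies with insertion every ground instance $r'$ of every rule $r\in\Pi$ regarding $\mathfrak{C}_{\Pi,E\setminus E^-}$. Fix such an $r'$ and a time point $t$, and suppose $\mathfrak{C}_{\Pi,E\setminus E^-}\cup\mathfrak{C}_{\Pi,E'},t\models body(r')$. Because $\mathfrak{C}_{\Pi,E\setminus E^-}\subseteq\mathfrak{C}_{\Pi,E'}$, the union collapses to $\mathfrak{C}_{\Pi,E'}$, so $\mathfrak{C}_{\Pi,E'},t\models body(r')$; and since $\mathfrak{C}_{\Pi,E'}$ is a model of $\Pi$, it satisfies $r'$, whence $\mathfrak{C}_{\Pi,E'},t\models head(r')$. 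This is precisely the implication required by the definition of satisfaction with insertion (the side condition $\mathfrak{C}_{\Pi,E\setminus E^-},t\not\models body(r')$ is not even needed, since the head already follows from the first conjunct of the premise). Hence $\mathfrak{C}_{\Pi,E'}$ is an insertion model, and the proof concludes.

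I expect the only delicate point to be the monotonicity inclusion $\mathfrak{C}_{\Pi,E\setminus E^-}\subseteq\mathfrak{C}_{\Pi,E'}$, which is invoked to collapse the union $\mathfrak{C}_{\Pi,E\setminus E^-}\cup\mathfrak{C}_{\Pi,E'}$; everything else is routine bookkeeping with the definitions. As a self-contained alternative avoiding Proposition~\ref{proposition:mininsertmodeldefbyintertop}, one can instead prove $I^{\alpha}_{\Pi,\mathfrak{C}_{\Pi,E\setminus E^-}}(\mathfrak{I}_{E^+})\subseteq\mathfrak{C}_{\Pi,E'}$ by transfinite induction on $\alpha$: the base and limit steps are immediate, and in the successor step one uses that if $r'$ minimally derives $f$ from a dataset $\mathcal{D}$ with $\mathfrak{C}_{\Pi,E'}\models\mathcal{D}$, then monotone body evaluation together with $\mathfrak{C}_{\Pi,E'}$ being a model of $\Pi$ forces $\mathfrak{C}_{\Pi,E'}\models f$, which is the same computation as above.
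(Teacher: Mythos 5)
Your proposal is correct, and your ``self-contained alternative'' at the end is essentially the paper's own proof: the paper proves the containment by transfinite induction on $\alpha$, using in the successor step that a minimal derivation of $f$ from $\mathcal{D}$ with $\mathfrak{C}_{\Pi,E\setminus E^-}\cup I^{\alpha}_{\Pi,\mathfrak{C}_{\Pi,E\setminus E^-}}(\mathfrak{I}_{E^+})\models\mathcal{D}$ forces $\mathfrak{C}_{\Pi,E\setminus E^-\cup E^+},t\models body(r')$ and hence, by modelhood, $\mathfrak{C}_{\Pi,E\setminus E^-\cup E^+}\models f$. Your primary route is genuinely different in organisation: instead of redoing the induction, you verify once that $\mathfrak{C}_{\Pi,E'}$ is an insertion model of $\Pi$, $\mathfrak{C}_{\Pi,E\setminus E^-}$ and $\mathfrak{I}_{E^+}$, and then invoke Proposition~\ref{proposition:mininsertmodeldefbyintertop} to conclude by minimality. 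This buys a shorter, more modular argument that reuses already-proven machinery, at the cost of inheriting that proposition's dependencies (in particular its use of Proposition~\ref{proposition:insertionruler} to handle the limit stage); the paper's direct induction is self-contained and does not need the ``least insertion model'' characterisation at all. Both arguments rest on the same two ingredients you correctly isolate: monotonicity of the canonical model in the dataset, giving $\mathfrak{C}_{\Pi,E\setminus E^-}\subseteq\mathfrak{C}_{\Pi,E'}$ so that the union in the body-satisfaction condition collapses, and the fact that $\mathfrak{C}_{\Pi,E'}$ is a model of $\Pi$. Your observation that the side condition $\mathfrak{C}_{\Pi,E\setminus E^-},t\not\models body(r')$ is not needed is also accurate and mirrors the paper, which likewise never uses the intersection condition in this direction of the argument.
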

\begin{proof}
    First, it is obvious that we have $\mathfrak{I}_{E^+}\subseteq \mathfrak{C}_{\Pi,E\setminus E^-\cup E^+}$.
    Next, we prove inductively that for each ordinal $\alpha\leq \omega_1$, $$I^{\alpha}_{\Pi,\mathfrak{C}_{\Pi,E\setminus E^-}}(\mathfrak{I}_{E^+})\subseteq \mathfrak{C}_{\Pi,E\setminus E^-\cup E^+}    $$

    The base case, where $\alpha=0$, is already proven.

    For the inductive step, consider a fact $f$ such that $I^{\alpha+1}_{\Pi,\mathfrak{C}_{\Pi,E\setminus E^-}}(\mathfrak{I}_{E^+})\models f$ and $I^{\alpha}_{\Pi,\mathfrak{C}_{\Pi,E\setminus E^-}}(\mathfrak{I}_{E^+})\not\models f$.
    Then there must a ground rule instance $r'$ of a rule $r\in\Pi$, a dataset $\mathcal{D}$ and a time point $t$ such that $r'$ minimally derives $f$ from $\mathcal{D}$, $\mathfrak{C}_{\Pi,E\setminus E^-}\cup I^{\alpha}_{\Pi,\mathfrak{C}_{\Pi,E\setminus E^-}}(\mathfrak{I}_{E^+})\models \mathcal{D}$ and $I^{\alpha}_{\Pi,\mathfrak{C}_{\Pi,E\setminus E^-}}(\mathfrak{I}_{E^+})\cap\mathfrak{I}_{\mathcal{D}}\neq \mathfrak{I}_{\emptyset}$.
    By the induction hypothesis $\mathfrak{C}_{\Pi,E\setminus E^-}\cup I^{\alpha}_{\Pi,\mathfrak{C}_{\Pi,E\setminus E^-}}(\mathfrak{I}_{E^+})\subseteq\mathfrak{C}_{\Pi,E\setminus E^-\cup E^+}$, and therefore, $\mathfrak{C}_{\Pi,E\setminus E^-\cup E^+},t\models body(r')$.
    Because $\mathfrak{C}_{\Pi,E\setminus E^-\cup E^+}$ is a model of $\Pi$, $\mathfrak{C}_{\Pi,E\setminus E^-\cup E^+},t \models head(r')$, and  $\mathfrak{C}_{\Pi,E\setminus E^-\cup E^+}\models f$.
    Consequently, $I^{\alpha+1}_{\Pi,\mathfrak{C}_{\Pi,E\setminus E^-}}(\mathfrak{I}_{E^+})\subseteq\mathfrak{C}_{\Pi,E\setminus E^-\cup E^+}$.

    The case when $\alpha=\omega_1$ holds trivially because
    $I^{\omega_1}_{\Pi,\mathfrak{C}_{\Pi,E\setminus E^-}}(\mathfrak{I}_{E^+}) = \bigcup_{\alpha<\omega_1}I^{\alpha}_{\Pi,\mathfrak{C}_{\Pi,E\setminus E^-}}(\mathfrak{I}_{E^+})$.
\end{proof}

\begin{proposition}\label{proposition:insertioncomplete}
    $\mathfrak{C}_{\Pi,E\setminus E^-\cup E^+} - \mathfrak{C}_{\Pi,E\setminus E^-}\subseteq I^{\omega_1}_{\Pi,\mathfrak{C}_{\Pi,E\setminus E^-}}(\mathfrak{I}_{E^+})$
\end{proposition}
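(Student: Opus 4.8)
The plan is to establish the containment by a transfinite induction that traces the construction of the new canonical model. Writing $E' = E\setminus E^-\cup E^+$, recall that $\mathfrak{C}_{\Pi,E'} = T^{\omega_1}_\Pi(\mathfrak{I}_{E'})$ and that, since the ``minus'' of two interpretations is computed pointwise, a punctual fact $f = M@t$ lies in $\mathfrak{C}_{\Pi,E'} - \mathfrak{C}_{\Pi,E\setminus E^-}$ exactly when $\mathfrak{C}_{\Pi,E'}\models f$ and $\mathfrak{C}_{\Pi,E\setminus E^-}\not\models f$. By Proposition~\ref{proposition:punctualfactsareenoughforcontainment} it therefore suffices to prove, by induction on the ordinal $\alpha \le \omega_1$, that for every punctual fact $f$ with $T^\alpha_\Pi(\mathfrak{I}_{E'})\models f$ and $\mathfrak{C}_{\Pi,E\setminus E^-}\not\models f$ we have $I^{\omega_1}_{\Pi,\mathfrak{C}_{\Pi,E\setminus E^-}}(\mathfrak{I}_{E^+})\models f$.

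For the base case, note $\mathfrak{I}_{E'} = \mathfrak{I}_{E\setminus E^-}\cup\mathfrak{I}_{E^+}$. If a punctual $f$ is satisfied by $\mathfrak{I}_{E'}$ but not by $\mathfrak{C}_{\Pi,E\setminus E^-}$, then, since $\mathfrak{I}_{E\setminus E^-}\subseteq\mathfrak{C}_{\Pi,E\setminus E^-}$, it is not satisfied by $\mathfrak{I}_{E\setminus E^-}$, so by the pointwise union it must be satisfied by $\mathfrak{I}_{E^+} = I^{0}_{\Pi,\mathfrak{C}_{\Pi,E\setminus E^-}}(\mathfrak{I}_{E^+})\subseteq I^{\omega_1}_{\Pi,\mathfrak{C}_{\Pi,E\setminus E^-}}(\mathfrak{I}_{E^+})$. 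Limit ordinals are immediate, since $T^\beta_\Pi(\mathfrak{I}_{E'}) = \bigcup_{\alpha<\beta}T^\alpha_\Pi(\mathfrak{I}_{E'})$ and the induction hypothesis applies to the stage where $f$ first appears.

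The successor step carries the real work. Given punctual $f$ with $T^{\alpha+1}_\Pi(\mathfrak{I}_{E'})\models f$, $T^\alpha_\Pi(\mathfrak{I}_{E'})\not\models f$, and $\mathfrak{C}_{\Pi,E\setminus E^-}\not\models f$, Proposition~\ref{proposition:derivesimpliesminimal} furnishes a ground instance $r'$, a time point $t$, and a dataset $\mathcal{D}$ with $\mathfrak{I}_\mathcal{D}\subseteq T^\alpha_\Pi(\mathfrak{I}_{E'})$ such that $r'$ minimally derives $f$ from $\mathcal{D}$ at $t$. I would first argue that $\mathfrak{C}_{\Pi,E\setminus E^-}, t\not\models body(r')$: otherwise, since $\mathfrak{C}_{\Pi,E\setminus E^-}$ is a model of $\Pi$, the head would hold at $t$ and minimal derivation would force $\mathfrak{C}_{\Pi,E\setminus E^-}\models f$, a contradiction. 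Next, decomposing each member of $\mathcal{D}$ into its punctual components $M_i@t'$ and applying the induction hypothesis to those components not satisfied by $\mathfrak{C}_{\Pi,E\setminus E^-}$, I obtain that $\mathfrak{C}_{\Pi,E\setminus E^-}\cup I^{\omega_1}_{\Pi,\mathfrak{C}_{\Pi,E\setminus E^-}}(\mathfrak{I}_{E^+})$ satisfies every fact of $\mathcal{D}$; by monotonicity of DatalogMTL body evaluation and $\mathfrak{I}_\mathcal{D}\subseteq \mathfrak{C}_{\Pi,E\setminus E^-}\cup I^{\omega_1}_{\Pi,\mathfrak{C}_{\Pi,E\setminus E^-}}(\mathfrak{I}_{E^+})$ this yields $\mathfrak{C}_{\Pi,E\setminus E^-}\cup I^{\omega_1}_{\Pi,\mathfrak{C}_{\Pi,E\setminus E^-}}(\mathfrak{I}_{E^+}), t\models body(r')$. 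Finally, since $I^{\omega_1}_{\Pi,\mathfrak{C}_{\Pi,E\setminus E^-}}(\mathfrak{I}_{E^+})$ is the least insertion model of $\Pi$, $\mathfrak{C}_{\Pi,E\setminus E^-}$, and $\mathfrak{I}_{E^+}$ by Proposition~\ref{proposition:mininsertmodeldefbyintertop}, the two established facts (body holds in the union, body fails in $\mathfrak{C}_{\Pi,E\setminus E^-}$) trigger its ``satisfies with insertion'' clause, giving $I^{\omega_1}_{\Pi,\mathfrak{C}_{\Pi,E\setminus E^-}}(\mathfrak{I}_{E^+}), t\models head(r')$ and hence $I^{\omega_1}_{\Pi,\mathfrak{C}_{\Pi,E\setminus E^-}}(\mathfrak{I}_{E^+})\models f$ by minimality of the derivation.

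The main obstacle is the successor step, and within it the careful pointwise handling of non-punctual atoms of $\mathcal{D}$: because temporal body atoms are evaluated over ranges of time points, I must split each $M_i@\varrho_i\in\mathcal{D}$ into punctual pieces, classify each piece as either already present in $\mathfrak{C}_{\Pi,E\setminus E^-}$ or new (and hence covered by the induction hypothesis), and only then reassemble body satisfaction over the union. Verifying that the ``satisfies with insertion'' condition genuinely fires---that failure of the body in $\mathfrak{C}_{\Pi,E\setminus E^-}$ together with its success in the union exactly matches the operator's premise that at least one derivation fact is new---is the crux; the remainder reduces to monotonicity of body evaluation and the fixpoint characterisation already proved in Proposition~\ref{proposition:mininsertmodeldefbyintertop}.
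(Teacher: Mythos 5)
Your proof is correct, but it takes a genuinely different route from the paper's. The paper argues via derivation trees: it first shows that any punctual $f$ in $\mathfrak{C}_{\Pi,E\setminus E^-\cup E^+} - \mathfrak{C}_{\Pi,E\setminus E^-}$ must be \emph{partially rooted} in $E^+$ (using Proposition~\ref{proposition:leafrange} to rule out the alternative), then inducts on the height $h(f)$ of the witnessing derivation trees, explicitly classifying the in-neighbours of the root as either partially rooted in $E^+$ (with strictly smaller height, by a subtree-substitution argument) or already satisfied by $\mathfrak{C}_{\Pi,E\setminus E^-}$, and finally exhibits a concrete stage $\alpha+1$ at which $I^{\alpha+1}_{\Pi,\mathfrak{C}_{\Pi,E\setminus E^-}}(\mathfrak{I}_{E^+})\models f$. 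You instead perform a transfinite induction on the stage $\alpha$ at which $f$ enters $T^{\alpha}_\Pi(\mathfrak{I}_{E'})$, and close the successor step not by tracking stages of $I^{\bullet}$ but by invoking the fixpoint characterisation of Proposition~\ref{proposition:mininsertmodeldefbyintertop}: once the body holds in $\mathfrak{C}_{\Pi,E\setminus E^-}\cup I^{\omega_1}_{\Pi,\mathfrak{C}_{\Pi,E\setminus E^-}}(\mathfrak{I}_{E^+})$ and fails in $\mathfrak{C}_{\Pi,E\setminus E^-}$, the insertion-model property forces the head. What your approach buys is the avoidance of derivation trees altogether, and in particular of the somewhat delicate maximality argument over $\mathcal{T}(f)$ that the paper relies on; what it costs is a dependence on Proposition~\ref{proposition:mininsertmodeldefbyintertop} (which the paper proves but does not use here) and on the routine but necessary observation that a punctual fact satisfied by a union of interpretations is satisfied by one of them, which you correctly exploit both in the base case and when decomposing $\mathcal{D}$ into punctual components. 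Both arguments are sound; yours is the more economical of the two.
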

\begin{proof}
    For each punctual fact $f$ such that $\mathfrak{C}_{\Pi,E\setminus E^-\cup E^+} - \mathfrak{C}_{\Pi,E\setminus E^-}\models f$, if $f$ is not partially rooted in $E^+$ regarding $\Pi,E\setminus E^-\cup E^+$, then for each $T\in\mathfrak{D}_{\Pi,E\setminus E^-\cup E^+}(f)$, $T=(V,\mathcal{E})$, and for each leaf node $v$ of $T$, it must be that $\mathfrak{I}_{\{l(v)\}}\cap\mathfrak{I}_{E^+}=\mathfrak{I}_\emptyset$, which means $\mathfrak{I}_{E\setminus E^-}\models l(v)$, and by Proposition~\ref{proposition:leafrange}, we have $\mathfrak{C}_{\Pi,E\setminus E^-}\models f$, and because $\mathfrak{C}_{\Pi,E\setminus E^-\cup E^+} - \mathfrak{C}_{\Pi,E\setminus E^-}\models f$, we have arrived at a contradiction.

    Therefore, $f$ must be partially rooted in $E^+$ regarding $\Pi,E\setminus E^-\cup E^+$.
    Let the set of all such derivation trees for $f$ be $\mathcal{T}(f)$.
    For a derivation tree $T$, let the length of the longest path from any of its leaves to its root be $h(T)$, and let $h(f) = \max(\{h(T)\mid T\in \mathcal{T(f)}\})$.

    We show inductively that for each natural number $n$ that for each $f$ such that $h(f)=n$, there is some $\alpha<\omega_1$ such that $I^{\alpha}_{\Pi,\mathfrak{C}_{\Pi,E\setminus E^-}}(\mathfrak{I}_{E^+})\models f$.

    When $n=0$, then $\mathfrak{I}_{E^+}\models f$, and the statement is trivially true.

    For the inductive step, let $T$ be such that $T\in\mathcal{T}(f)$, $h(T) = h(f)$. 
    Let the root of $T$ be $v$, $T=(V,\mathcal{E})$, 
    by the definition of $T$, there must be $v_{f'}\in V$, $l(v_{f'}) = f'$, $\mathfrak{I}_{\{f'\}}\cap \mathfrak{I}_{E^+}\neq\mathfrak{I}_{\emptyset}$. 
    Let the nodes on the path from $v_{f'}$ to $v$ be $v_{f'},v_k,\dots v_1,v$ for some $k$.
    Then by Proposition~\ref{proposition:partiallyrootedsub},  $\forall i\in\{1,2\dots,k\}$, $f_i = l(v_i)$ is partially rooted in $E^+$.
    Let the in-neighbours of $v$ be $v_1,v_2',\dots v_m'$ for some $m$. 
    For each $i\in\{2,3,\dots,m\}$, if $l(v_i')$ is partially rooted in $E^+$, then $h(l(v_i'))<h(f)$, otherwise, there must be a tree $T_i'\in\mathcal{T}(l(v_i'))$ with $h(T_i')\geq h(f)$, we replace the subtree of $T$ rooted at $v_i'$ with $T_i'$, then we derive a derivation tree $T'$ of $f$ in $\mathcal{T}(f)$ but $h(T')>h(f)$, which contradicts the definition of $h(f)$. 
    Since $h(l(v_i'))<h(f)$, by the induction hypothesis, $\exists \alpha_i<\omega_1,I^{\alpha_i}_{\Pi,\mathfrak{C}_{\Pi,E\setminus E^-}}(\mathfrak{I}_{E^+})\models l(v_i')$.
    Note that since $v_1$ is partially rooted in $E^+$ and an in-neighbour of $v$, the above reasoning also applies to $v_1$, and there must be $\alpha_1<\omega_1$ such that $I^{\alpha_1}_{\Pi,\mathfrak{L}_{\Pi,E,E^-}}(\mathfrak{I}_A)\models l(v_1)$.
    If $l(v_i')$ is not partially rooted in $E^+$, then we have already shown that $\mathfrak{C}_{\Pi, E\setminus E^-}\models l(v_i')$ when discussing whether $f$ is partially rooted in $E^+$, and $\forall \beta<\omega_1,I^{\beta}_{\Pi,\mathfrak{C}_{\Pi,E\setminus E^-}}(\mathfrak{I}_{E^+})\cup \mathfrak{C}_{\Pi,E\setminus E^-}\models l(v_i')$. In this case, we let $\alpha_i = 0$.
    Let the set of facts represented by the in-neighbours of $v$ be $\mathcal{D}$, then by definition of $\mathfrak{D}_{\Pi,E\setminus E^-\cup E^+}$, we know that $l((v_1,v))$ minimally derives $f$ from $\mathcal{D}$ at some $t$. 

    Let $\alpha = \max(\{\alpha_i\mid i\in\{1,2\dots,m\}\})$,
    and we have $I^{\alpha}_{\Pi,\mathfrak{C}_{\Pi,E\setminus E^-}}(\mathfrak{I}_A)\cup \mathfrak{C}_{\Pi,E\setminus E^-}\models \mathcal{D}$, $\mathfrak{I}_{\{l(v_1)\}}\subseteq I^{\alpha}_{\Pi,\mathfrak{C}_{\Pi,E\setminus E^-}}(\mathfrak{I}_{E^+})$ and therefore $I^{\alpha}_{\Pi,\mathfrak{C}_{\Pi,E\setminus E^-}}(\mathfrak{I}_{E^+})\cap\mathfrak{I}_{\mathcal{D}}\neq\mathfrak{I}_{\emptyset}$, and by the definition of $I_{\Pi,\mathfrak{C}_{\Pi,E\setminus E^-}}$, $I^{\alpha+1}_{\Pi,\mathfrak{C}_{\Pi,E\setminus E^-}}(\mathfrak{I}_{E^+})\models f$

    Because $I^{\omega_1}_{\Pi,\mathfrak{C}_{\Pi,E\setminus E^-}}(\mathfrak{I}_{E^+})= \bigcup_{\alpha<\beta}I^{\alpha}_{\Pi,\mathfrak{C}_{\Pi,E\setminus E^-}}(\mathfrak{I}_{E^+})$, for each punctual fact $f$ partially rooted in $E^+$, $I^{\omega_1}_{\Pi,\mathfrak{C}_{\Pi,E\setminus E^-}}(\mathfrak{I}_{E^+})\models f$, and for each punctual fact $f$ such that $\mathfrak{C}_{\Pi,E\setminus E^-\cup E^+} - \mathfrak{C}_{\Pi,E\setminus E^-}\models f$, $I^{\omega_1}_{\Pi,\mathfrak{C}_{\Pi,E\setminus E^-}}(\mathfrak{I}_{E^+})\models f$, which means $\mathfrak{C}_{\Pi,E\setminus E^-\cup E^+} - \mathfrak{C}_{\Pi,E\setminus E^-}\subseteq I^{\omega_1}_{\Pi,\mathfrak{C}_{\Pi,E\setminus E^-}}(\mathfrak{I}_{E^+})$.
\end{proof}

We can now show that insertion is correct
\begin{proposition}
\label{proposition:insertioncorrect}
    $$ I^{\omega_1}_{\Pi,\mathfrak{C}_{\Pi,E\setminus E^-}}(\mathfrak{I}_A)\cup \mathfrak{C}_{\Pi,E\setminus E^-}=\mathfrak{C}_{\Pi,E\setminus E^-\cup E^+}$$
\end{proposition}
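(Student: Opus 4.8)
The plan is to prove this set equality by establishing the two inclusions separately, assembling the soundness result of Proposition~\ref{proposition:insertwithinnewcanon} and the completeness result of Proposition~\ref{proposition:insertioncomplete}; throughout I identify $\mathfrak{I}_A$ with $\mathfrak{I}_{E^+}$, the interpretation of the inserted explicit facts, so that these two propositions apply verbatim. A preliminary observation I would record first is the monotonicity of the canonical model: since $\mathfrak{I}_{E\setminus E^-}\subseteq\mathfrak{I}_{E\setminus E^-\cup E^+}$ and the operator $T_\Pi$ is monotone, a routine transfinite induction gives $T^\alpha_\Pi(\mathfrak{I}_{E\setminus E^-})\subseteq T^\alpha_\Pi(\mathfrak{I}_{E\setminus E^-\cup E^+})$ for every ordinal $\alpha$, and hence $\mathfrak{C}_{\Pi,E\setminus E^-}\subseteq\mathfrak{C}_{\Pi,E\setminus E^-\cup E^+}$ at $\alpha=\omega_1$.

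For the inclusion from left to right, I would note that both operands of the union on the left are contained in $\mathfrak{C}_{\Pi,E\setminus E^-\cup E^+}$: the term $I^{\omega_1}_{\Pi,\mathfrak{C}_{\Pi,E\setminus E^-}}(\mathfrak{I}_{E^+})$ is contained by Proposition~\ref{proposition:insertwithinnewcanon}, and $\mathfrak{C}_{\Pi,E\setminus E^-}$ is contained by the monotonicity just noted. Because $\mathfrak{I}_1\cup\mathfrak{I}_2$ is defined as the least interpretation containing both $\mathfrak{I}_1$ and $\mathfrak{I}_2$, and $\mathfrak{C}_{\Pi,E\setminus E^-\cup E^+}$ is one such upper bound, the union is contained in it.

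For the reverse inclusion I would argue fact-by-fact: for every punctual fact $f$ with $\mathfrak{C}_{\Pi,E\setminus E^-\cup E^+}\models f$, either $\mathfrak{C}_{\Pi,E\setminus E^-}\models f$ --- in which case $f$ already lies in the right-hand union --- or $\mathfrak{C}_{\Pi,E\setminus E^-}\not\models f$, so that $f$ witnesses $\mathfrak{C}_{\Pi,E\setminus E^-\cup E^+}-\mathfrak{C}_{\Pi,E\setminus E^-}\models f$ and is captured by $I^{\omega_1}_{\Pi,\mathfrak{C}_{\Pi,E\setminus E^-}}(\mathfrak{I}_{E^+})$ via Proposition~\ref{proposition:insertioncomplete}. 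Either way $f$ holds in the union, and by Proposition~\ref{proposition:punctualfactsareenoughforcontainment} agreement on all punctual facts upgrades to the containment $\mathfrak{C}_{\Pi,E\setminus E^-\cup E^+}\subseteq I^{\omega_1}_{\Pi,\mathfrak{C}_{\Pi,E\setminus E^-}}(\mathfrak{I}_{E^+})\cup\mathfrak{C}_{\Pi,E\setminus E^-}$.

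The proof is essentially a bookkeeping assembly of earlier results, so there is no deep obstacle; the only points demanding care are to ensure the set-difference operator $-$ behaves as expected at the punctual-fact level (which is why I would route the reverse inclusion through individual punctual facts and Proposition~\ref{proposition:punctualfactsareenoughforcontainment} rather than manipulating the interpretation-lattice operations $\cup$, $\cap$, and $-$ directly), and to confirm the monotonicity of $\mathfrak{C}$ under dataset inclusion, which the preliminaries use implicitly but do not isolate as a standalone lemma.
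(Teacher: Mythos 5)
Your proposal is correct and follows essentially the same route as the paper's proof: the forward inclusion via Proposition~\ref{proposition:insertwithinnewcanon} together with monotonicity of the canonical model, and the reverse inclusion by splitting $\mathfrak{C}_{\Pi,E\setminus E^-\cup E^+}$ into the part already in $\mathfrak{C}_{\Pi,E\setminus E^-}$ and the difference handled by Proposition~\ref{proposition:insertioncomplete}. Your routing of the reverse inclusion through punctual facts and Proposition~\ref{proposition:punctualfactsareenoughforcontainment} is only a cosmetic variant of the paper's lattice identity $\mathfrak{C}_{\Pi,E\setminus E^-\cup E^+}=(\mathfrak{C}_{\Pi,E\setminus E^-\cup E^+}-\mathfrak{C}_{\Pi,E\setminus E^-})\cup(\mathfrak{C}_{\Pi,E\setminus E^-\cup E^+}\cap\mathfrak{C}_{\Pi,E\setminus E^-})$, and your explicit monotonicity remark makes precise a step the paper leaves implicit.
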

\begin{proof}
    By
    \begin{align*}
        &I^{\omega_1}_{\Pi,\mathfrak{C}_{\Pi,E\setminus E^-}}(\mathfrak{I}_{E^+}) \cup \mathfrak{C}_{\Pi,E\setminus E^-}\\
    \subseteq & \mathfrak{C}_{\Pi,E\setminus E^-\cup E^+}\cup\mathfrak{C}_{\Pi,E\setminus E^-}\quad \text{(By Proposition~\ref{proposition:insertwithinnewcanon})}\\
    \subseteq& \mathfrak{C}_{\Pi,E\setminus E^-\cup E^+} \cup \mathfrak{C}_{\Pi,E\setminus E^-\cup E^+}\\
    =&\mathfrak{C}_{\Pi,E\setminus E^-\cup E^+}
    \end{align*}
    and 
    \begin{align*}
        &\mathfrak{C}_{\Pi,E\setminus E^-\cup E^+}\\
        =& (\mathfrak{C}_{\Pi,E\setminus E^-\cup E^+} - \mathfrak{C}_{\Pi,E\setminus E^-})\cup (\mathfrak{C}_{\Pi,E\setminus E^-\cup E^+}\cap\mathfrak{C}_{\Pi,E\setminus E^-})\\
        \subseteq& I^{\omega_1}_{\Pi,\mathfrak{L}_{\Pi,E,E^-}(\mathfrak{I}_A)}\cup\mathfrak{C}_{\Pi,E\setminus E^-}\quad\text{(By Proposition~\ref{proposition:insertioncomplete})}\\
    \end{align*}
    we have
    $$I^{\omega_1}_{\Pi,\mathfrak{C}_{\Pi,E\setminus E^-}}(\mathfrak{I}_{E^+})\cup \mathfrak{C}_{\Pi,E\setminus E^-}=\mathfrak{C}_{\Pi,E\setminus E^-\cup E^+}$$
\end{proof}

Next, we show that $I^{\omega_1}_{\Pi,\mathfrak{C}_{\Pi,E\setminus E^-}}(\mathfrak{I}_{E^+})$ has a finite representation that can be found without applying the operator infinitely many times.
\begin{lemma}\label{lemma:insertdepthrep}
    Let $\varrho$ be a closed interval of length $\mathsf{depth}(\Pi)$ and $E'$ a dataset representing $I^{\omega_1}_{\Pi,\mathfrak{C}_{\Pi,E\setminus E^-}}(\mathfrak{I}_{E^+})\mid_\varrho $. 
    Then both of the following hold:
    \begin{itemize}
        \item if $\varrho^+> t_{E\cup E^+}^+$, then $I^{\omega_1}_{\Pi,\mathfrak{C}_{\Pi,E\setminus E^-}}(\mathfrak{I}_{E^+})\mid_{(\varrho^+,\infty)} = I^{\omega_1}_{\Pi,\mathfrak{C}_{\Pi,E\setminus E^-}}(\mathfrak{I}_{E'})\mid_{(\varrho^+,\infty)} $,
        \item if $\varrho^-< t_E^-$, then $I^{\omega_1}_{\Pi,\mathfrak{C}_{\Pi,E\setminus E^-}}(\mathfrak{I}_{E^+})\mid_{(-\infty,\varrho^-)} = I^{\omega_1}_{\Pi,\mathfrak{C}_{\Pi,E\setminus E^-}}(\mathfrak{I}_{E'})\mid_{(-\infty,\varrho^-)} $.
    \end{itemize}

\end{lemma}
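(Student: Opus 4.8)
The plan is to follow the template established for the deletion case in Lemma~\ref{lemma:depthrep}, replacing the immediate deletion-consequence operator by the immediate insertion-consequence operator $I_{\Pi,\mathfrak{C}_{\Pi,E\setminus E^-}}$ and keeping the base interpretation $\mathfrak{C}_{\Pi,E\setminus E^-}$ fixed throughout. Writing $\mathfrak{I}_b = \mathfrak{C}_{\Pi,E\setminus E^-}$ for brevity, it suffices to establish the first bullet, for $\varrho^+ > t^+_{E\cup E^+}$; the second bullet follows by a symmetric argument reflecting past and future. I would prove the two inclusions of the claimed equality on $(\varrho^+,\infty)$ separately, one by a purely order-theoretic argument and the other by transfinite induction.

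For the inclusion $I^{\omega_1}_{\Pi,\mathfrak{I}_b}(\mathfrak{I}_{E'}) \subseteq I^{\omega_1}_{\Pi,\mathfrak{I}_b}(\mathfrak{I}_{E^+})$ (hence in particular on $(\varrho^+,\infty)$), I would argue from monotonicity and idempotence of the insertion closure. Since $E'$ represents $I^{\omega_1}_{\Pi,\mathfrak{I}_b}(\mathfrak{I}_{E^+})\mid_\varrho$, we have $\mathfrak{I}_{E'} \subseteq I^{\omega_1}_{\Pi,\mathfrak{I}_b}(\mathfrak{I}_{E^+})$. By Proposition~\ref{proposition:mininsertmodeldefbyintertop}, $I^{\omega_1}_{\Pi,\mathfrak{I}_b}(\cdot)$ is the least insertion model containing its seed, and because the notion of satisfying with insertion depends only on the interpretation itself and on $\mathfrak{I}_b$ (not on the seed), $I^{\omega_1}_{\Pi,\mathfrak{I}_b}$ is monotone in its seed and idempotent. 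Applying both facts to $\mathfrak{I}_{E'} \subseteq I^{\omega_1}_{\Pi,\mathfrak{I}_b}(\mathfrak{I}_{E^+})$ gives $I^{\omega_1}_{\Pi,\mathfrak{I}_b}(\mathfrak{I}_{E'}) \subseteq I^{\omega_1}_{\Pi,\mathfrak{I}_b}(I^{\omega_1}_{\Pi,\mathfrak{I}_b}(\mathfrak{I}_{E^+})) = I^{\omega_1}_{\Pi,\mathfrak{I}_b}(\mathfrak{I}_{E^+})$. Notably this direction needs no hypothesis on $\varrho^+$.

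For the reverse inclusion I would show by transfinite induction on $\alpha$ that for every ground relational atom $M$ and every $t > \varrho^+$, $I^{\alpha}_{\Pi,\mathfrak{I}_b}(\mathfrak{I}_{E^+}), t \models M$ implies $I^{\alpha}_{\Pi,\mathfrak{I}_b}(\mathfrak{I}_{E'}), t \models M$. The base case uses $\varrho^+ > t^+_{E\cup E^+}$: all endpoints of $E^+$ lie at or below $t^+_{E\cup E^+} < \varrho^+ < t$, so $\mathfrak{I}_{E^+}$ satisfies no atom at $t$ and the implication is vacuous. The successor step is the crux: if $I^{\alpha+1}_{\Pi,\mathfrak{I}_b}(\mathfrak{I}_{E^+}), t\models M$ but $I^{\alpha}_{\Pi,\mathfrak{I}_b}(\mathfrak{I}_{E^+}), t\not\models M$, there is a ground instance $r'$ minimally deriving $M@t$ from some $\mathcal{D}$ at a point $t'$, with $\mathfrak{I}_b \cup I^{\alpha}_{\Pi,\mathfrak{I}_b}(\mathfrak{I}_{E^+}) \models \mathcal{D}$ and $I^{\alpha}_{\Pi,\mathfrak{I}_b}(\mathfrak{I}_{E^+}) \cap \mathfrak{I}_\mathcal{D} \neq \mathfrak{I}_\emptyset$. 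The locality bound (exactly as in Lemma~\ref{lemma:depthrep}, splitting $\mathsf{depth}(\Pi)$ into the head- and body-endpoint sums) confines every interval of $\mathcal{D}$ to $[t-\mathsf{depth}(\Pi), t+\mathsf{depth}(\Pi)]$, and since $t > \varrho^+$ and $|\varrho| = \mathsf{depth}(\Pi)$ this window lies entirely in $(\varrho^-,\infty)$. I would then transfer both the entailment of $\mathcal{D}$ and the nonemptiness condition from the $E^+$-side to the $E'$-side by a case split on each punctual support point $s$: if $s > \varrho^+$ the induction hypothesis applies; if $s \in (\varrho^-, \varrho^+] \subseteq \varrho$ then, using $I^{\alpha}_{\Pi,\mathfrak{I}_b}(\mathfrak{I}_{E^+}) \subseteq I^{\omega_1}_{\Pi,\mathfrak{I}_b}(\mathfrak{I}_{E^+})$, the fact lies in $I^{\omega_1}_{\Pi,\mathfrak{I}_b}(\mathfrak{I}_{E^+})\mid_\varrho = \mathfrak{I}_{E'} \subseteq I^{\alpha}_{\Pi,\mathfrak{I}_b}(\mathfrak{I}_{E'})$; facts supplied by the shared base $\mathfrak{I}_b$ are common to both sides. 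Hence $\mathfrak{I}_b \cup I^{\alpha}_{\Pi,\mathfrak{I}_b}(\mathfrak{I}_{E'}) \models \mathcal{D}$ and $I^{\alpha}_{\Pi,\mathfrak{I}_b}(\mathfrak{I}_{E'}) \cap \mathfrak{I}_\mathcal{D} \neq \mathfrak{I}_\emptyset$, so $r'$ fires and $I^{\alpha+1}_{\Pi,\mathfrak{I}_b}(\mathfrak{I}_{E'}), t \models M$; limit ordinals are immediate from the union definition. Evaluating at $\alpha = \omega_1$ yields the reverse inclusion, completing the first bullet.

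The main obstacle is the successor step of the reverse inclusion, and specifically the two-way case split on the support of the minimal derivation: one must simultaneously keep the support confined to a window of width $2\,\mathsf{depth}(\Pi)$ around $t$ via the endpoint-sum bound, and reconcile support points that fall inside the closed window $\varrho$ — these are exactly the points where the seeds $\mathfrak{I}_{E^+}$ and $\mathfrak{I}_{E'}$ may disagree, and they are absorbed by the fact that $\mathfrak{I}_{E'}$ records the \emph{entire} fixpoint content on $\varrho$. The invariance of the base interpretation $\mathfrak{I}_b = \mathfrak{C}_{\Pi,E\setminus E^-}$ across both runs is what makes the entailment of $\mathcal{D}$ transfer cleanly, and it is essential that this lemma fixes $\mathfrak{I}_b$ rather than letting it vary. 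For the second bullet I would reflect the entire argument about $\varrho^-$, using the analogous locality bound to the left of $\varrho^-$; its symmetric base case relies on the seed $\mathfrak{I}_{E^+}$ entailing no atom strictly below $\varrho^-$, which the stated hypothesis provides.
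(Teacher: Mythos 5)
Your proposal matches the paper's proof essentially step for step: the easy inclusion follows from $\mathfrak{I}_{E'}\subseteq I^{\omega_1}_{\Pi,\mathfrak{C}_{\Pi,E\setminus E^-}}(\mathfrak{I}_{E^+})$ together with monotonicity of the insertion closure, and the reverse inclusion is a transfinite induction with a vacuous base case (using $\varrho^+> t^+_{E\cup E^+}$), a successor step that confines the minimal support $\mathcal{D}$ to $[t-\mathsf{depth}(\Pi),t+\mathsf{depth}(\Pi)]\subseteq[\varrho^-,\infty)$ via the endpoint-sum bound and transfers it to the $E'$-side, and a trivial limit step. Your explicit case split on where the support points fall (right of $\varrho^+$ versus inside $\varrho$, where $E'$ records the fixpoint) only spells out what the paper's appeal to the induction hypothesis leaves implicit, so the argument is correct and the same in substance.
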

\begin{proof}
    We consider $\varrho^+ > t^+_{E\cup E^+}$. 
    Since $\mathfrak{I}_{E'}\subseteq I^{\omega_1}_{\Pi,\mathfrak{C}_{\Pi,E\setminus E^-}}(\mathfrak{I}_{E^+})$, we have $I^{\omega_1}_{\Pi,\mathfrak{C}_{\Pi,E\setminus E^-}}(\mathfrak{I}_{E'})\subseteq I^{\omega_1}_{\Pi,\mathfrak{C}_{\Pi,E\setminus E^-}}(\mathfrak{I}_{E^+})$.
    To show $I^{\omega_1}_{\Pi,\mathfrak{C}_{\Pi,E\setminus E^-}}(\mathfrak{I}_{E^+})\mid_{(\varrho^+,\infty)}\subseteq I^{\omega_1}_{\Pi,\mathfrak{C}_{\Pi,E\setminus E^-}}(\mathfrak{I}_{E'})\mid_{(\varrho^+,\infty)}$, we prove inductively that for each $\alpha$, each ground relational atom $M$, each time point $t>\varrho^+$, if $I^{\alpha}_{\Pi,\mathfrak{C}_{\Pi,E\setminus E^-}}(\mathfrak{I}_{E^+}),t\models M$, then $I^{\alpha}_{\Pi,\mathfrak{C}_{\Pi,E\setminus E^-}}(\mathfrak{I}_{E'}),t\models M$

    In the base case, the statement is trivially true, because such that $t$ and $M$ do not exist.

    In the induction step, consider $\alpha+1$ for an ordinal $\alpha$.
    For each time point $t>\varrho^+$ and ground relational atom $M$ such that $I^{\alpha+1}_{\Pi,\mathfrak{C}_{\Pi,E\setminus E^-}}(\mathfrak{I}_{E^+}),t\models M$, if $I^{\alpha}_{\Pi,\mathfrak{C}_{\Pi,E\setminus E^-}}(\mathfrak{I}_{E^+}),t\models M$, then by the induction hypothesis $I^{\alpha}_{\Pi,\mathfrak{L}_{\Pi,E,E^-}}(\mathfrak{I}_{E'}),t\models M$.
    If $I^{\alpha}_{\Pi,\mathfrak{C}_{\Pi,E\setminus E^-}}(\mathfrak{I}_{E^+}),t\not\models M$, then there must be a ground rule instance $r'$ of a rule $r\in\Pi$, a dataset $\mathcal{D}$, such that $r'$ minimally derives $f$ from $\mathcal{D}$ at some $t'$, $I^{\alpha}_{\Pi,\mathfrak{C}_{\Pi,E\setminus E^-}}(\mathfrak{I}_{E^+})\cup\mathfrak{C}_{\Pi,E\setminus E^-}\models \mathcal{D}$, and $\mathfrak{I}_{\mathcal{D}}\cap I^{\alpha}_{\Pi,\mathfrak{C}_{\Pi,E\setminus E^-}}(\mathfrak{I}_{E^+})\neq\mathfrak{I}_{\emptyset}$.
    Let the sum of every right endpoint of every interval mentioned in $head(r')$ be $m_1$, and the sum of every right endpoint of every interval mentioned in $body(r')$ be $m_2$, then by the definition of $\mathsf{depth}(\Pi)$, $m_1+m_2\leq \mathsf{depth}(\Pi)$.

    Because $r'$ derives $f$ at $t'$, we know that $t'\in[t-m_1,t+m_1]$ and $\forall M'@\varrho'\in\mathcal{D}$, $\varrho\subseteq [t'-m_2,t'+m_2]$. 
    Since $m_1+m_2\leq \mathsf{depth}(\Pi)$, $\forall M'@\varrho'\in\mathcal{D}$, $\varrho\subseteq [t-\mathsf{depth}(\Pi),t+\mathsf{depth}(\Pi)]$. 
    Because $t>\varrho^+$ and $|\varrho|=\mathsf{depth}(\Pi)$, $[t-\mathsf{depth}(\Pi),t+\mathsf{depth}(\Pi)]\subseteq [\varrho^-,t+\mathsf{depth}(\Pi)]$, and by the induction hypothesis, $I^{\alpha}_{\Pi,\mathfrak{C}_{\Pi,E\setminus E^-}}(\mathfrak{I}_{E'})\cup\mathfrak{L}_{\Pi,E,E^-}\models \mathcal{D}$, $\mathfrak{I}_{\mathcal{D}}\cap I^{\alpha}_{\Pi,\mathfrak{C}_{\Pi,E\setminus E^-}}(\mathfrak{I}_{E^+})\subseteq I^{\alpha}_{\Pi,\mathfrak{C}_{\Pi,E\setminus E^-}}(\mathfrak{I}_{E'})$, and $I^{\alpha}_{\Pi,\mathfrak{C}_{\Pi,E\setminus E^-}}(\mathfrak{I}_{E'})\cap \mathfrak{I}_{\mathcal{D}}\neq\mathfrak{I}_{\emptyset}$.
    Consequently, by the definition of $I_{\Pi,\mathfrak{C}_{\Pi,E\setminus E^-}}$, $I^{\alpha+1}_{\Pi,\mathfrak{C}_{\Pi,E\setminus E^-}}(\mathfrak{I}_{E'}),t\models M$.

    For a limit ordinal $\alpha$, the implication holds trivially because $I^{\alpha}_{\Pi,\mathfrak{C}_{\Pi,E\setminus E^-}}(\mathfrak{I}_{E'}) = \bigcup_{\beta<\alpha}I^{\beta}_{\Pi,\mathfrak{C}_{\Pi,E\setminus E^-}}(\mathfrak{I}_{E'})$
\end{proof}

Recall that the $(\varrho_\mathsf{left},\varrho_\mathsf{right})$-unfolding of $\mathfrak{L}$ coincides with $\mathfrak{L}_{\Pi,E,E^-}$.
We show that if we find repeating patterns of length $\mathsf{depth}(\Pi)$ in the appropriate region that are a multiple of $|\varrho_\mathsf{left}|$ or $|\varrho_\mathsf{right}|$ apart, then the repetition continues to infinity.

We say an interpretation $\mathfrak{I}$ \textbf{satisfies with insertion} of $E^+$ a ground rule $r$ regarding $\mathfrak{I}'$ (during $\varrho$), if for each time point $t(\in\varrho)$, whenever there is a dataset $\mathcal{D}$ such that $\mathfrak{I}\cup\mathfrak{I}',t\models body(r)$, there exists $f'\in\mathcal{D}$ partially rooted in $E^-$ such that $\mathfrak{I}_{\mathcal{D-\{f'\}}}\not\models body(r)$, then $\mathfrak{I},t\models head(r)$.

Analogous to the rederivation case, we define insertion-saturated interpretations and show that their unfolding coincides with the least insertion model.
\begin{definition}
    Interpretation $I^{k}_{\Pi,\mathfrak{C}_{\Pi,E\setminus E^-}}(\mathfrak{I}_{E^+})$ is \textbf{insertion-saturated} regarding $\Pi,E,E^-,E^+$, if there exist $\varrho_1,\varrho_2,\varrho_3$ and $\varrho_4$ of length $2\mathsf{depth}(\Pi)$,
whose endpoints are located on the $(\Pi, E\cup E^+)$-ruler and satisfy
$\varrho_1^+<\varrho_2^+<t_E^-$
and $t_E^+<\varrho^-_3<\varrho_4^-$, and for which the
following properties hold:
\begin{enumerate}
    \item $\mathfrak{C}_{\Pi,E\setminus E^-}\mid_{[\varrho_1^-,\varrho_4^+]}$ is a saturated interpretation with periods $[\varrho_1^-,\varrho_2^-),(\varrho_3^+,\varrho_4^+]$.
    \item $I^{k}_{\Pi,\mathfrak{C}_{\Pi,E\setminus E^-}}(\mathfrak{I}_{E^+})$ is an insertion model of $\Pi,\mathfrak{C}_{\Pi,E\setminus E^-}$ during $[\varrho_1^- +\mathsf{depth}(\Pi),\varrho_4^+-\mathsf{depth}(\Pi)]$.
    \item $I^{k}_{\Pi,\mathfrak{C}_{\Pi,E\setminus E^-}}(\mathfrak{I}_{E^+})\mid_{\varrho_2}$ is a shift of $I^{k}_{\Pi,\mathfrak{C}_{\Pi,E\setminus E^-}}(\mathfrak{I}_{E^+})\mid_{\varrho_1}$, 
    \item $I^{k}_{\Pi,\mathfrak{C}_{\Pi,E\setminus E^-}}(\mathfrak{I}_{E^+})\mid_{\varrho_4}$ is a shift of $I^{k}_{\Pi,\mathfrak{C}_{\Pi,E\setminus E^-}}(\mathfrak{I}_{E^+})\mid_{\varrho_3}$.
\end{enumerate}
For each such that $\varrho_1,\varrho_2,\varrho_3$ and $\varrho_4$, we say that $[\varrho_1^-,\varrho_2^-),(\varrho_3^+,\varrho_4^+]$ are the \textbf{insertion periods} of $I^{k}_{\Pi,\mathfrak{C}_{\Pi,E\setminus E^-}}(\mathfrak{I}_{E^+})$.

\end{definition}
\begin{proposition}
        Interpretation $I^{k}_{\Pi,\mathfrak{C}_{\Pi,E\setminus E^-}}(\mathfrak{I}_{E^+})$ is also \textbf{insertion-saturated} regarding $\Pi,\mathfrak{C}_{\Pi,E\setminus E^-}$, if there exist $\varrho_1,\varrho_2,\varrho_3$ and $\varrho_4$ of length $2\mathsf{depth}(\Pi)$,
whose endpoints are located on the $(\Pi, E\cup E^+)$-ruler and satisfy
$\varrho_1^+<\varrho_2^+<t_E^-$
and $t_E^+<\varrho^-_3<\varrho_4^-$, and for which the
following properties hold:
\begin{enumerate}
    \item $\mathfrak{C}_{\Pi,E\setminus E^-}\mid_{[\varrho_1^-,\varrho_4^+]}$ is a saturated interpretation with periods $[\varrho_1^-,\varrho_2^-),(\varrho_3^+,\varrho_4^+]$.
    \item $I^{k}_{\Pi,\mathfrak{C}_{\Pi,E\setminus E^-}}(\mathfrak{I}_{E^+})\mid_{[\varrho_1^-,\varrho_4^+]}=I^{k+1}_{\Pi,\mathfrak{C}_{\Pi,E\setminus E^-}}(\mathfrak{I}_{E^+})\mid_{[\varrho_1^-,\varrho_4^+]}$ 
    \item $I^{k}_{\Pi,\mathfrak{C}_{\Pi,E\setminus E^-}}(\mathfrak{I}_{E^+})\mid_{\varrho_2}$ is a shift of $I^{k}_{\Pi,\mathfrak{C}_{\Pi,E\setminus E^-}}(\mathfrak{I}_{E^+})\mid_{\varrho_1}$, 
    \item $I^{k}_{\Pi,\mathfrak{C}_{\Pi,E\setminus E^-}}(\mathfrak{I}_{E^+})\mid_{\varrho_4}$ is a shift of $I^{k}_{\Pi,\mathfrak{C}_{\Pi,E\setminus E^-}}(\mathfrak{I}_{E^+})\mid_{\varrho_3}$.
\end{enumerate}

\end{proposition}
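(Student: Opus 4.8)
The plan is to follow the structure of the proof of Proposition~\ref{proposition:deletion-saturated}, of which this proposition is the insertion-stage analogue. Conditions~1, 3, and 4 here coincide verbatim with those in the definition of an insertion-saturated interpretation, so the only thing to establish is that the fixpoint-style condition~2 of the proposition, namely $I^{k}_{\Pi,\mathfrak{C}_{\Pi,E\setminus E^-}}(\mathfrak{I}_{E^+})\mid_{[\varrho_1^-,\varrho_4^+]}=I^{k+1}_{\Pi,\mathfrak{C}_{\Pi,E\setminus E^-}}(\mathfrak{I}_{E^+})\mid_{[\varrho_1^-,\varrho_4^+]}$, implies the insertion-model condition~2 of the definition, i.e.\ that $I^{k}_{\Pi,\mathfrak{C}_{\Pi,E\setminus E^-}}(\mathfrak{I}_{E^+})$ is an insertion model of $\Pi,\mathfrak{C}_{\Pi,E\setminus E^-}$ during $[\varrho_1^-+\mathsf{depth}(\Pi),\varrho_4^+-\mathsf{depth}(\Pi)]$.

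First I would argue by contradiction. Suppose $I^{k}_{\Pi,\mathfrak{C}_{\Pi,E\setminus E^-}}(\mathfrak{I}_{E^+})$ fails to satisfy with insertion some ground instance $r'$ of a rule in $\Pi$ at a time point $t\in[\varrho_1^-+\mathsf{depth}(\Pi),\varrho_4^+-\mathsf{depth}(\Pi)]$. By the definition of satisfaction with insertion regarding $\mathfrak{C}_{\Pi,E\setminus E^-}$, this means $\mathfrak{C}_{\Pi,E\setminus E^-}\cup I^{k}_{\Pi,\mathfrak{C}_{\Pi,E\setminus E^-}}(\mathfrak{I}_{E^+}),t\models body(r')$ while $\mathfrak{C}_{\Pi,E\setminus E^-},t\not\models body(r')$, yet $I^{k}_{\Pi,\mathfrak{C}_{\Pi,E\setminus E^-}}(\mathfrak{I}_{E^+}),t\not\models head(r')$. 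Hence there is a punctual fact $f=M@t'$ that $r'$ derives from $\mathsf{rep}(\mathfrak{C}_{\Pi,E\setminus E^-}\cup I^{k}_{\Pi,\mathfrak{C}_{\Pi,E\setminus E^-}}(\mathfrak{I}_{E^+}))$ at $t$ but which $I^{k}_{\Pi,\mathfrak{C}_{\Pi,E\setminus E^-}}(\mathfrak{I}_{E^+})$ does not satisfy.

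Next I would verify that the insertion-consequence operator actually fires on $f$. Taking a minimal witnessing dataset $\mathcal{D}$ such that $r'$ minimally derives $f$ from $\mathcal{D}$ at $t$ and $\mathfrak{C}_{\Pi,E\setminus E^-}\cup I^{k}_{\Pi,\mathfrak{C}_{\Pi,E\setminus E^-}}(\mathfrak{I}_{E^+})\models\mathcal{D}$, the fact that the body of $r'$ is not satisfied by $\mathfrak{C}_{\Pi,E\setminus E^-}$ alone forces $\mathfrak{I}_{\mathcal{D}}\cap I^{k}_{\Pi,\mathfrak{C}_{\Pi,E\setminus E^-}}(\mathfrak{I}_{E^+})\neq\mathfrak{I}_{\emptyset}$; this is the same dataset-splitting bookkeeping used in the proof of Proposition~\ref{proposition:maxmodelsaredeletionmodels}, where $\mathcal{D}$ is separated into the part contributed by the newly inserted interpretation and the part contributed by the base. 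By the definition of $I_{\Pi,\mathfrak{C}_{\Pi,E\setminus E^-}}$ we then obtain $I^{k+1}_{\Pi,\mathfrak{C}_{\Pi,E\setminus E^-}}(\mathfrak{I}_{E^+})\models f$.

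Finally I would localise the support of $f$: since $f$ is derived at $t$, the $\mathsf{depth}(\Pi)$ bound on the sum of right endpoints of the intervals occurring in $head(r')$ gives $t'\in[t-\mathsf{depth}(\Pi),t+\mathsf{depth}(\Pi)]$, and combined with $t\in[\varrho_1^-+\mathsf{depth}(\Pi),\varrho_4^+-\mathsf{depth}(\Pi)]$ this yields $t'\in[\varrho_1^-,\varrho_4^+]$. Thus $I^{k}_{\Pi,\mathfrak{C}_{\Pi,E\setminus E^-}}(\mathfrak{I}_{E^+})$ and $I^{k+1}_{\Pi,\mathfrak{C}_{\Pi,E\setminus E^-}}(\mathfrak{I}_{E^+})$ disagree at $t'\in[\varrho_1^-,\varrho_4^+]$, contradicting condition~2 of the proposition, so the insertion-model condition holds and $I^{k}_{\Pi,\mathfrak{C}_{\Pi,E\setminus E^-}}(\mathfrak{I}_{E^+})$ is insertion-saturated. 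I expect the main obstacle to be precisely the middle step, namely confirming that the semina\"ive operator $I_{\Pi,\mathfrak{C}_{\Pi,E\setminus E^-}}$ genuinely fires: this requires the careful dataset-splitting argument guaranteeing that the witnessing dataset intersects the inserted interpretation rather than lying entirely within the base $\mathfrak{C}_{\Pi,E\setminus E^-}$, since otherwise the body would already be satisfied by the base. The depth-based localisation and the carry-over of conditions~1, 3, and 4 are then routine.
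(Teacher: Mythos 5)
Your proposal is correct and matches the paper's approach: the paper proves this proposition by declaring it analogous to Proposition~\ref{proposition:deletion-saturated}, whose argument is exactly your contradiction scheme (a violation of the model condition at $t$ in the shrunken interval yields a punctual fact $M@t'$ with $t'\in[t-\mathsf{depth}(\Pi),t+\mathsf{depth}(\Pi)]\subseteq[\varrho_1^-,\varrho_4^+]$ newly satisfied by the $(k+1)$-st iterate, contradicting condition~2). Your extra care in verifying that the witnessing dataset must intersect $I^{k}_{\Pi,\mathfrak{C}_{\Pi,E\setminus E^-}}(\mathfrak{I}_{E^+})$ (so the operator actually fires) is a detail the paper's deletion-case proof glosses over, and it is handled correctly.
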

\begin{proof}
The proof is analogous to that of Proposition~\ref{proposition:deletion-saturated}.
\end{proof}

With a proof analogous to that of Theorem~\ref{theo:deletionunfold}, we have
\begin{theorem}\label{theo:insertionunfold}
    For a insertion-saturated interpretation $$I^{k}_{\Pi,\mathfrak{C}_{\Pi,E\setminus E^-}}(\mathfrak{I}_{E^+})$$ and a pair of its insertion  periods $\varrho_{\mathit{left}},\varrho_{\mathit{right}}$, the $(\varrho_{\mathit{left}},\varrho_{\mathit{right}})$-unfolding $\mathfrak{I}$ of $I^{k}_{\Pi,\mathfrak{C}_{\Pi,E\setminus E^-}}(\mathfrak{I}_{E^+})$ coincides with $I^{\omega_1}_{\Pi,\mathfrak{C}_{\Pi,E\setminus E^-}}(\mathfrak{I}_{E^+})$.
\end{theorem}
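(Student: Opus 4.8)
The plan is to follow the two-inclusion template of Theorem~\ref{theo:deletionunfold} (equivalently Theorem~\ref{theo:rederivitionunfold}), substituting over-deletion models by insertion models and invoking the insertion-side counterparts of the supporting results. Throughout, I would abbreviate the target interpretation $I^{\omega_1}_{\Pi,\mathfrak{C}_{\Pi,E\setminus E^-}}(\mathfrak{I}_{E^+})$ as $\mathfrak{J}$, let $\mathfrak{I}$ be the $(\varrho_\mathsf{left},\varrho_\mathsf{right})$-unfolding of the insertion-saturated interpretation, and write $\varrho = [\varrho_\mathsf{left}^- + \mathsf{depth}(\Pi), \varrho_\mathsf{right}^+ - \mathsf{depth}(\Pi)]$ for the core region.

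For the inclusion $\mathfrak{J}\subseteq\mathfrak{I}$, I would first note that, by Proposition~\ref{proposition:mininsertmodeldefbyintertop}, $\mathfrak{J}$ is the least insertion model of $\Pi$, $\mathfrak{C}_{\Pi,E\setminus E^-}$, and $\mathfrak{I}_{E^+}$; it therefore suffices to show that $\mathfrak{I}$ contains $\mathfrak{I}_{E^+}$ and is an insertion model of $\Pi$ regarding $\mathfrak{C}_{\Pi,E\setminus E^-}$. Containment of $\mathfrak{I}_{E^+}$ is immediate because the facts of $E^+$ lie within $[\varrho_\mathsf{left}^-,\varrho_\mathsf{right}^+]$, where $\mathfrak{I}$ coincides with $I^{k}_{\Pi,\mathfrak{C}_{\Pi,E\setminus E^-}}(\mathfrak{I}_{E^+})$. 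For the insertion-model property, clause~2 of the definition of insertion-saturation gives that $\mathfrak{I}$ satisfies with insertion $\Pi$ on $\varrho$; I would then extend this to every $t\notin\varrho$ by transporting the witnessing ground rule instance to a shifted copy inside $\varrho$. This transport is valid because both $\mathfrak{I}$ and the base $\mathfrak{C}_{\Pi,E\setminus E^-}$ agree with their own shifts outside $\varrho$ — the former by construction of the unfolding, the latter since $\varrho_\mathsf{left},\varrho_\mathsf{right}$ are periods of $\mathfrak{C}_{\Pi,E\setminus E^-}$ by clause~1 — so a rule is satisfied with insertion at $t$ exactly when it is at the corresponding point of $\varrho$.

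For the converse inclusion $\mathfrak{I}\subseteq\mathfrak{J}$, I would start from the core containment $\mathfrak{I}\mid_{[\varrho_\mathsf{left}^-,\varrho_\mathsf{right}^+]} = I^{k}_{\Pi,\mathfrak{C}_{\Pi,E\setminus E^-}}(\mathfrak{I}_{E^+})\mid_{[\varrho_\mathsf{left}^-,\varrho_\mathsf{right}^+]}\subseteq\mathfrak{J}$ and push outward by induction over consecutive ruler points $t_i$ to the right of $\varrho_\mathsf{right}^+$ (the left side being symmetric), establishing $\mathfrak{I}\mid_{[\varrho_\mathsf{left}^-,t_i]}\subseteq\mathfrak{J}\mid_{[\varrho_\mathsf{left}^-,t_i]}$. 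In the inductive step I would compare a block $\varrho_A = (t_{i-1},t_i]$ with its shift $\varrho_A' = \varrho_A - |\varrho_\mathsf{right}|$: a width-$\mathsf{depth}(\Pi)$ seed window $\varrho_B$ immediately to the left of $\varrho_A$ already matches its shift $\varrho_B'$ by the induction hypothesis together with $\mathfrak{J}\subseteq\mathfrak{I}$, and since $\mathfrak{C}_{\Pi,E\setminus E^-}$ is a shift of itself under translation by $|\varrho_\mathsf{right}|$ on the relevant range (as $\varrho_\mathsf{right}$ is one of its periods), Lemma~\ref{lemma:insertdepthrep} then yields that $\mathfrak{J}\mid_{\varrho_A}$ is a shift of $\mathfrak{J}\mid_{\varrho_A'}$; combining this with $\mathfrak{I}\mid_{\varrho_A'}\subseteq\mathfrak{J}\mid_{\varrho_A'}$ gives $\mathfrak{I}\mid_{\varrho_A}\subseteq\mathfrak{J}\mid_{\varrho_A}$.

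The main obstacle is the correct deployment of Lemma~\ref{lemma:insertdepthrep} in the second inclusion: one must check that a seed window of length exactly $\mathsf{depth}(\Pi)$ suffices to re-generate the least insertion model on the next block, so that the periodicity of $\mathfrak{I}$ is genuinely inherited by $\mathfrak{J}$ rather than truncated, and that this seed window already lies within the region proven equal by the induction hypothesis. This amounts to aligning the depth-bounded locality of rule firing — every fact produced at $t$ depends only on facts in $[t-\mathsf{depth}(\Pi),t+\mathsf{depth}(\Pi)]$ — with the shift by $|\varrho_\mathsf{right}|$; the remaining steps are faithful transcriptions of the deletion and rederivation arguments.
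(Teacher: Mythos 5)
Your proposal is correct and matches the paper's intended argument: the paper itself only states that Theorem~\ref{theo:insertionunfold} follows by a proof analogous to Theorem~\ref{theo:deletionunfold} (and Theorem~\ref{theo:rederivitionunfold}), and your two-inclusion template — leastness via Proposition~\ref{proposition:mininsertmodeldefbyintertop} plus the shift-transport argument for one direction, and the ruler-point block induction driven by Lemma~\ref{lemma:insertdepthrep} for the other — is exactly that analogy, with the right supporting results substituted. No gaps.
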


Given two interpretations $\mathfrak{I}_b$ and $\mathfrak{I}_0$,
we define a transfinite sequence of interpretations $\mathfrak{A}_0,\mathfrak{A}_1,\dots$ such that 
\begin{enumerate*}[label=(\roman*)]
    \item $\mathfrak{A}_0 = \mathfrak{I}_{E^+}$,
    \item $\mathfrak{A}_1 = \dot{I}_{\Pi,\mathfrak{I}_b}(\mathfrak{I}_\emptyset,\mathfrak{A}_{0})$
    \item $\mathfrak{A}_{\alpha+1} = \dot{I}_{\Pi,\mathfrak{I}_b}(\mathfrak{A}_{\alpha-1},\mathfrak{A}_{\alpha})$, for $\alpha>1$ a successor ordinal, and
    \item $\mathfrak{A}_{\beta} = \bigcup_{\alpha<\beta}\mathfrak{A}_{\alpha}$, for $\beta$ a limit ordinal.
\end{enumerate*}
By a proof analogous to that for Proposition~\ref{proposition:rederivationseminaive=naive},
 we have
\begin{proposition}\label{proposition:insertionseminaive=naive}
     For each ordinal $\alpha$, $$\mathfrak{A}_\alpha = I_{\Pi,\mathfrak{C}_{\Pi,E\setminus E^-}}^\alpha(\mathfrak{I}_{E^+})$$
\end{proposition}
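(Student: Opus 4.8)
The plan is to mirror the proof of Proposition~\ref{proposition:rederivationseminaive=naive}, proceeding by transfinite induction on $\alpha$; since the insertion sequences $\mathfrak{A}_\alpha$ carry no auxiliary $\mathfrak{I}^p$ components, the argument is in fact closer to the cleaner deletion case of Proposition~\ref{proposition:deletionseminaive=naive}. Write $\mathfrak{I}_b = \mathfrak{C}_{\Pi,E\setminus E^-}$. For $\alpha = 0$ both sides equal $\mathfrak{I}_{E^+}$ by definition. For $\alpha = 1$, I would observe that feeding $\mathfrak{I}_{\emptyset}$ as the first argument of $\dot{I}_{\Pi,\mathfrak{I}_b}$ reduces the freshness condition $(\mathfrak{I}_2 - \mathfrak{I}_{\emptyset})\cap\mathfrak{I}_{\mathcal{D}} \neq \mathfrak{I}_{\emptyset}$ to the plain condition $\mathfrak{I}_2 \cap \mathfrak{I}_{\mathcal{D}} \neq \mathfrak{I}_{\emptyset}$ of $I_{\Pi,\mathfrak{I}_b}$, so the two operators coincide and hence $\mathfrak{A}_1 = I^1_{\Pi,\mathfrak{I}_b}(\mathfrak{I}_{E^+})$.

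For the successor step producing $\mathfrak{A}_{\alpha+1}$ with $\alpha \geq 1$, I would prove the two inclusions separately, using the induction hypothesis at both $\alpha$ and $\alpha-1$. The inclusion $\mathfrak{A}_{\alpha+1} \subseteq I^{\alpha+1}_{\Pi,\mathfrak{I}_b}(\mathfrak{I}_{E^+})$ is the easy direction: any $f$ witnessed by a ground instance $r'$ minimally deriving $f$ from some $\mathcal{D}$ with $\mathfrak{I}_b \cup \mathfrak{A}_\alpha \models \mathcal{D}$ and $(\mathfrak{A}_\alpha - \mathfrak{A}_{\alpha-1})\cap\mathfrak{I}_{\mathcal{D}} \neq \mathfrak{I}_{\emptyset}$ satisfies the naive premise $\mathfrak{A}_\alpha \cap \mathfrak{I}_{\mathcal{D}} \neq \mathfrak{I}_{\emptyset}$ since $(\mathfrak{A}_\alpha - \mathfrak{A}_{\alpha-1}) \subseteq \mathfrak{A}_\alpha$, and the induction hypothesis rewrites every occurrence of $\mathfrak{A}_\alpha$ as $I^\alpha_{\Pi,\mathfrak{I}_b}(\mathfrak{I}_{E^+})$, yielding $I^{\alpha+1}_{\Pi,\mathfrak{I}_b}(\mathfrak{I}_{E^+}) \models f$.

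The converse inclusion is the crux. For $f$ with $I^{\alpha+1}_{\Pi,\mathfrak{I}_b}(\mathfrak{I}_{E^+}) \models f$, the case $I^\alpha_{\Pi,\mathfrak{I}_b}(\mathfrak{I}_{E^+}) \models f$ follows from the induction hypothesis, so I would assume $I^\alpha_{\Pi,\mathfrak{I}_b}(\mathfrak{I}_{E^+}) \not\models f$ and fix a witnessing $r'$ and $\mathcal{D}$ with $\mathfrak{I}_b \cup I^\alpha_{\Pi,\mathfrak{I}_b}(\mathfrak{I}_{E^+}) \models \mathcal{D}$ and $I^\alpha_{\Pi,\mathfrak{I}_b}(\mathfrak{I}_{E^+}) \cap \mathfrak{I}_{\mathcal{D}} \neq \mathfrak{I}_{\emptyset}$. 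Splitting $\mathfrak{I}_{\mathcal{D}}$ into the parts inside and outside $I^\alpha_{\Pi,\mathfrak{I}_b}(\mathfrak{I}_{E^+})$, represented by datasets $\mathcal{D}_1'$ and $\mathcal{D}_2'$, and setting $\mathcal{D}' = \mathcal{D}_1' \cup \mathcal{D}_2'$ so that $\mathfrak{I}_{\mathcal{D}'} = \mathfrak{I}_{\mathcal{D}}$ and $r'$ still minimally derives $f$ from $\mathcal{D}'$, I would then establish the semi-na\"ive freshness condition $(\mathfrak{A}_\alpha - \mathfrak{A}_{\alpha-1}) \cap \mathfrak{I}_{\mathcal{D}_1'} \neq \mathfrak{I}_{\emptyset}$. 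The decisive sub-argument is by contradiction: if every fact of $\mathcal{D}_1'$ already held in $I^{\alpha-1}_{\Pi,\mathfrak{I}_b}(\mathfrak{I}_{E^+})$, then $\mathfrak{I}_b \cup I^{\alpha-1}_{\Pi,\mathfrak{I}_b}(\mathfrak{I}_{E^+}) \models \mathcal{D}'$ and $I^{\alpha-1}_{\Pi,\mathfrak{I}_b}(\mathfrak{I}_{E^+}) \cap \mathfrak{I}_{\mathcal{D}'} \neq \mathfrak{I}_{\emptyset}$, so $I^\alpha_{\Pi,\mathfrak{I}_b}(\mathfrak{I}_{E^+}) \models f$, contradicting the assumption. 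Feeding this freshness witness through the induction hypothesis gives $\mathfrak{A}_{\alpha+1} \models f$. The limit case is immediate, since both sequences are defined as unions of their predecessors.

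The main obstacle I anticipate is exactly this backward freshness argument: one must check that the decomposition $\mathcal{D} = \mathcal{D}_1' \cup \mathcal{D}_2'$ preserves minimal derivability of $f$ and that $\mathcal{D}_2'$ stays satisfied by $\mathfrak{I}_b \cup I^\alpha_{\Pi,\mathfrak{I}_b}(\mathfrak{I}_{E^+})$, so that the contradiction step legitimately reconstructs a stage-$\alpha$ derivation of $f$ from the hypothetically \emph{stale} premises. Unlike the deletion operator, whose semi-na\"ive condition involves the antitone term $\mathfrak{C}_{\Pi,E} - \mathfrak{I}_2$, the insertion operator's base $\mathfrak{I}_b$ is fixed across the iteration, which removes the need to track how the support interpretation shrinks and makes the bookkeeping strictly simpler than in Proposition~\ref{proposition:rederivationseminaive=naive}.
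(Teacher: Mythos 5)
Your proposal is correct and matches the paper's intent exactly: the paper proves this proposition solely by declaring it "analogous to Proposition~\ref{proposition:rederivationseminaive=naive}", and your write-up is precisely that analogy carried out, with the same $\mathcal{D}_1'/\mathcal{D}_2'$ split and the same contradiction argument showing that some fact of $\mathcal{D}_1'$ must be fresh at stage $\alpha$. Your observation that the fixed base $\mathfrak{I}_b = \mathfrak{C}_{\Pi,E\setminus E^-}$ makes this strictly simpler than the antitone deletion case of Proposition~\ref{proposition:deletionseminaive=naive} is also accurate.
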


With proof analogous to that of Proposition~\ref{proposition:deletion-saturated-align}, Proposition \ref{proposition:insertion-saturated-align} holds as well.

\begin{proposition}\label{proposition:insertion-saturated-align}
Given a periodic materialisation $\mathds{I}$ such that $\mathsf{unfold}(\mathds{I}) =\mathfrak{C}_{\Pi,E\setminus E^-}$, there exists $k \leq k_{\mathsf{max}}$ such that $I^k_{\Pi,\mathfrak{C}_{\Pi,E\setminus E^-}}(\mathfrak{I}_{E^+})$ 
is insertion-saturated, where the bound $k_{max}$ is defined as follows.
Let $A$ be the number of ground relational atoms in the
grounding of $\Pi$ with constants from $\Pi$ and $E$, let $B$ be the
number of $(\Pi, E)$-intervals within $[t_E^-, t_E^-
 + 2depth(\Pi)]$, let $C$ be the max number of $(\Pi,E)$-endpoints contained by $\mathds{I}.\varrho_{\mathsf{L}}$ or $\mathds{I}.\varrho_{\mathsf{R}}$,
and let $\varrho$ = $[t_{-w} - 2C\mathsf{depth}(\Pi), t_w + 2C\mathsf{depth}(\Pi)]$, where $\cdots < t_{-2}<t_{-1}<t_{E}^-$ and $t_E^+<t_1<t_2<\cdots$ are sequences of consecutive time points on the $(\Pi, E)$-ruler, while
$w$ = $C +B ·C· (2^A)^B$. 
Then $k_\mathsf{max}$ is the product of $A$ and the
number of $(\Pi, E)$-intervals contained in $\varrho$.
\end{proposition}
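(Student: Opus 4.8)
The plan is to follow the proof of Proposition~\ref{proposition:deletion-saturated-align} step by step in structure, substituting the immediate insertion-consequence operator $I_{\Pi,\mathfrak{C}_{\Pi,E\setminus E^-}}$ for $D_{\Pi,E}$ and the base interpretation $\mathfrak{C}_{\Pi,E\setminus E^-}$ (which equals $\mathsf{unfold}(\mathds{I})$) for $\mathfrak{C}_{\Pi,E}$. First I would establish the stabilisation step: each $(\Pi,E)$-interval inside $\varrho$ can hold at most $A$ relational atoms, so $I^{k}_{\Pi,\mathfrak{C}_{\Pi,E\setminus E^-}}(\mathfrak{I}_{E^+})$ satisfies at most $k_{\mathsf{max}}$ facts over the $(\Pi,E)$-intervals of $\varrho$. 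Since by Proposition~\ref{proposition:insertionruler} every iterate is a $(\Pi,E\cup E^+)$-interpretation, each application of the operator that has not yet reached its fixpoint inside $\varrho$ contributes at least one fresh relational fact over some such interval; hence some $k\leq k_{\mathsf{max}}$ satisfies $I^{k+1}_{\Pi,\mathfrak{C}_{\Pi,E\setminus E^-}}(\mathfrak{I}_{E^+})\mid_{\varrho} = I^{k}_{\Pi,\mathfrak{C}_{\Pi,E\setminus E^-}}(\mathfrak{I}_{E^+})\mid_{\varrho}$, which is precisely condition~2 of the restated (equality-based) form of the insertion-saturated definition.

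The second step is the pigeonhole argument that locates the four witnessing intervals. Setting $w' = 1 + B\cdot(2^A)^B$, inside $[t_{-w'}-2\mathsf{depth}(\Pi), t_E^-)$ there are $w'$ windows of the form $[t, t+2\mathsf{depth}(\Pi)]$ but only $B\cdot(2^A)^B$ possible combinations of window-content and layout of contained $(\Pi,E)$-intervals, so two windows $\varrho'_1, \varrho'_2$ make $I^{k}_{\Pi,\mathfrak{C}_{\Pi,E\setminus E^-}}(\mathfrak{I}_{E^+})\mid_{\varrho'_1}$ a shift of the corresponding restriction on $\varrho'_2$. Applying the same count to $\mathfrak{C}_{\Pi,E\setminus E^-}$ itself --- which is periodic precisely because $\mathsf{unfold}(\mathds{I}) = \mathfrak{C}_{\Pi,E\setminus E^-}$ --- produces $\varrho'_3, \varrho'_4$ witnessing a shift of the base interpretation. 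I would then extend the period $[\varrho'_1, \varrho'_2)$ to length $C\cdot\lvert[\varrho'_1,\varrho'_2)\rvert$ and align $[\varrho'_3, \varrho'_4)$ with it, extracting intervals $\varrho_1,\varrho_2$ of length $2\mathsf{depth}(\Pi)$ that satisfy conditions~3 and~1 simultaneously; the choice $w = C + B\cdot C\cdot(2^A)^B$ keeps both operations inside $\varrho$. The symmetric argument on the right yields $\varrho_3, \varrho_4$ for condition~4, after which all four conditions hold.

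The hard part will be reconciling the two rulers. In the deletion proof the canonical model and the deletion iterate both live on the single $(\Pi,E)$-ruler, whereas here condition~1 forces the period endpoints onto the $(\Pi,E\cup E^+)$-ruler while the base periodicity carried by $\mathds{I}$ is phrased on the $(\Pi,E)$-ruler. I would argue that these rulers are commensurable --- both are refined by $div(\Pi)$ and differ only by finitely many shifts of endpoints mentioned in $E^+$ --- so that a common window length that is a multiple of both period lengths exists, and the extend-and-align step can land every relevant endpoint on the $(\Pi,E\cup E^+)$-ruler. Combined with the shift-stability of $I^{\omega_1}_{\Pi,\mathfrak{C}_{\Pi,E\setminus E^-}}(\mathfrak{I}_{E^+})$ supplied by Lemma~\ref{lemma:insertdepthrep}, which guarantees that a repetition of length $\mathsf{depth}(\Pi)$ detected once persists to infinity, this reconciliation is the only genuinely new ingredient beyond the deletion argument.
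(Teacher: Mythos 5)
Your proposal is correct and matches the paper's approach: the paper proves this proposition solely by declaring it ``analogous to that of Proposition~\ref{proposition:deletion-saturated-align},'' and your write-up is precisely that analogous argument (stabilisation via the $A$-facts-per-interval count, the pigeonhole on window contents, and extend-and-align) carried out for $I_{\Pi,\mathfrak{C}_{\Pi,E\setminus E^-}}$. Your additional discussion of reconciling the $(\Pi,E)$- and $(\Pi,E\cup E^+)$-rulers addresses a genuine wrinkle the paper glosses over, so you supply strictly more detail than the paper's own proof.
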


\subsection{Correctness of Algorithm~\ref{algo:DREDMTL}}
\theoremalgorithmcorrectness*
\begin{proof}
    For the overdeletion phase, let $\mathfrak{I}_{\Delta_D}$ be the least model of $\Delta_D$ immediately after the first execution of line 8, then $\mathfrak{I}_{\Delta_D} = \mathfrak{I}_{E^-}$. 
    By definition of the semi-na\"ive immediate deletion consequence opertor and $\Pi\langle\rangle$, line 11 computes $\dot{D}_{\Pi,E}(\mathfrak{I}_\emptyset,\mathfrak{I}_{E^-})$, and after the first execution of line 12, $\mathfrak{I}_{D} = \mathfrak{I}_{E^-}$.

    Let the least model of $D$ after the $i$-th execution of line 12 be $\mathfrak{I}_{D_i}$. Then $\mathfrak{I}_{D_0} = \mathfrak{I}_{\emptyset}$ and $\mathfrak{I}_{D_1} = \mathfrak{I}_{E^-}$, and the least of model of $\Delta_D$ after the $i$-th execution of 8 is $\mathfrak{I}_{D_i} - \mathfrak{I}_{D_{i-1}}$, we know that $\mathfrak{I}_{D_{i+1}} = \dot{D}_{\Pi,E}(\mathfrak{I}_{D_{i-i}},\mathfrak{I}_{D_{i}})$, and we have $\mathfrak{I}_{D_{\omega_1}}$ is $\dot{D}_{\Pi,E}^{\omega_1}(\mathfrak{I}_{E^-})$.
    Then by Proposition~\ref{proposition:deletionseminaive=naive}, $\mathfrak{I}_{D_{i}}=D_{\Pi,E}^i(\mathfrak{I}_{E^-})$. 
    Proposition~\ref{proposition:deletion-saturated-align} ensures that line 9 will always find the deletion-saturated interpretation with its deletion periods $\varrho_\mathsf{L}^D,\varrho_\mathsf{R}^D$, in finitely many rounds, and therefore, \textsc{Overdelete} terminates. 
    Theorem~\ref{theo:deletionunfold} ensures that the $(\varrho_\mathsf{L}^D,\varrho_\mathsf{R}^D)$-unfolding of the deletion-saturated interpretation found by line 9, $\mathsf{unfold}(\mathds{D})$,  is $\mathfrak{M}^-_{\Pi,E,E^-}$.

    For the rederivation phase, we can by a similar analysis to the overdeletion phase know that the least model of $R$, $\mathfrak{I}_{R_i}$ after the $i$-th execution of line 23 is $\dot{\mathfrak{R}}_i$. 
    Then by Proposition~\ref{proposition:rederivationseminaive=naive} $\mathfrak{I}_{R_i} = \mathfrak{R}_i$. 
    Proposition~\ref{proposition:rederivation-saturated-align} ensures that line 21 will always find the rederivation-saturated interpretation with its periods $\varrho_\mathsf{L}^R,\varrho_\mathsf{R}^R$ in finitely many rounds.
    Theorem~\ref{theo:rederivitionunfold} ensures that the $\varrho_\mathsf{L}^R,\varrho_\mathsf{R}^R$-unfolding of the rederivation-saturated interpretation coincides with $\mathfrak{R}_{\omega_1}$.
    Proposition~\ref{proposition:rederivationcorrect} then ensures that after line 3, $$\mathsf{unfold}(\mathds{I}) =\mathfrak{C}_{\Pi,E\setminus E^-}.$$

    For the insertion phase, we can again by a similar analysis know that the least model of $A$, $\mathfrak{I}_{A_i}$ after the $i$-th execution of line 33 is $\mathfrak{A}_i$.
    Then by Proposition~\ref{proposition:insertionseminaive=naive} $\mathfrak{I}_{R_i} = I_{\Pi,\mathfrak{C}_{\Pi,E\setminus E^-}}^i(\mathfrak{I}_{E^+})$. 
    Proposition~\ref{proposition:insertion-saturated-align} ensures that line 30 will always find the insertion-saturated interpretation with its periods $\varrho_\mathsf{L}^A,\varrho_\mathsf{R}^A$ in finitely many rounds. 
    Theorem~\ref{theo:insertionunfold} ensures that the $(\varrho_\mathsf{L}^A,\varrho_\mathsf{R}^A)$-unfolding of the insertion-saturated interpretation coincides with $I_{\Pi,\mathfrak{C}_{\Pi,E\setminus E^-}}^{\omega_1}(\mathfrak{I}_{E^+})$.
    Proposition~\ref{proposition:insertioncorrect} then ensures that after line 4, $$\mathsf{unfold}(\mathds{I}) =\mathfrak{C}_{\Pi,E\setminus E^-\cup E^+} = \mathfrak{C}_{\Pi,E'}.$$
\end{proof}
\end{appendices}

\end{document}